\documentclass[11pt,reqno]{amsart}

\usepackage[margin=1in]{geometry}

\usepackage[utf8]{inputenc} %
\usepackage[T1]{fontenc}    %

\usepackage{xcolor}
\definecolor{colblue}{HTML}{0B8BD5}
\definecolor{colred}{HTML}{df2026}
\definecolor{colgreen}{HTML}{079348}

 \usepackage[colorlinks=true,
  linkcolor=colred,
  citecolor=colgreen,
  urlcolor=colblue
]{hyperref}      %
\usepackage{url}            %
\usepackage{booktabs}       %
\usepackage{amsfonts,amssymb,amsthm,amsmath,mathtools}       %
\usepackage{nicefrac}       %
\usepackage{microtype}      %
\usepackage{graphicx}       %
\usepackage{xcolor}         %
\usepackage[capitalize]{cleveref}
 
\usepackage{wrapfig}
\usepackage{tikz}
\usetikzlibrary{calc}
\usepackage[noend]{algpseudocode}
\usepackage{algorithm}
\usepackage[shortlabels]{enumitem}
\usepackage{cite}
\usepackage{caption}
\hypersetup{hypertexnames=false}

\title{Sliced Inner Product Gromov--Wasserstein Distances}

\theoremstyle{plain}
\newtheorem{lemma}{Lemma}[section]
\newtheorem{theorem}[lemma]{Theorem}
\newtheorem{corollary}[lemma]{Corollary}
\newtheorem{proposition}[lemma]{Proposition}
\newtheorem{remark}[lemma]{Remark}

\theoremstyle{definition}
\newtheorem{definition}[lemma]{Definition}

\newtheorem{example}[lemma]{Example}

\DeclareMathOperator*{\argmin}{arg\,min}
\DeclareMathOperator*{\argmax}{arg\,max} 

\DeclareMathOperator{\conv}{conv}

\DeclareMathOperator{\supp}{spt}

\newcommand{\R}{\mathbb{R}}

\newcommand{\dd}{{d}} %

\newcommand{\unitsph}{\mathbb{S}^{d-1}}

\newcommand{\unitsphy}{\mathbb{S}^{d_y-1}}

\newcommand{\cD}{\mathcal{D}}

\newcommand{\cN}{\mathcal{N}}

\newcommand{\cP}{\mathcal{P}}

\DeclareMathOperator{\id}{id}

\newcommand{\bA}{\mathbf{A}}

\newcommand{\bD}{\mathbf{D}}

\newcommand{\bG}{\mathbf{G}}

\newcommand{\bI}{\mathbf{I}}
\newcommand{\bJ}{\mathbf{J}}
\newcommand{\bK}{\mathbf{K}}

\newcommand{\bP}{\mathbf{P}}
\newcommand{\bQ}{\mathbf{Q}}
\newcommand{\bR}{\mathbf{R}}
\newcommand{\bS}{\mathbf{S}}

\newcommand{\bU}{\mathbf{U}}
\newcommand{\bV}{\mathbf{V}}

\newcommand{\bsigma}{\mathbf{\Sigma}}
\newcommand{\bdelta}{\mathbf{\Delta}}
\newcommand{\bgamma}{\mathbf{\Gamma}}
\newcommand{\blambda}{\mathbf{\Lambda}}

\newcommand{\sF}{\mathsf{F}}

\newcommand{\St}{\mathrm{St}}

\newcommand{\EE}{\mathbb{E}}

\newcommand{\RR}{\mathbb{R}}

\newcommand{\IGW}{\mathsf{IGW}}
\newcommand{\W}{\mathsf{W}}
\newcommand{\Wp}{\mathsf{W}_p}

\newcommand{\llangle}{\left\langle}
\newcommand{\rrangle}{\right\rangle}

\newcommand{\op}{\mathrm{op}}
\newcommand{\F}{\mathrm{F}}

\newcommand{\Tr}{\mathrm{Tr}} 
\newcommand{\Var}{\mathrm{Var}}

\newcommand{\aIGW}{\underline{\mathsf{IGW}}}
\newcommand{\aW}{\underline{\mathsf{W}}}

\DeclareMathOperator{\diag}{diag}

\newcommand{\bm}[1]{\mathbf #1}

\newcommand{\mathbbm}[1]{\bm #1}

\thanks{\textsuperscript{\dag} Denots equal contribution to this work.}

\thanks{
Z. Goldfeld is partially supported by NSF grants CCF-1947801,  CCF-2046018, and DMS-2210368, and the 2020 IBM Academic Award.}

\author[X. Gong]{Xiaoyun Gong\textsuperscript{\dag} }
\address[X. Gong]{
Center for Applied Mathematics, Cornell University.}
\email{xg332@cornell.edu}
\author[G. Rioux]{Gabriel Rioux\textsuperscript{\dag} }
\address[G. Rioux]{
Department of Mathematics, Imperial College London.}
\email{g.rioux@ic.ac.uk}

\author[Z. Goldfeld]{Ziv Goldfeld}
\address[Z. Goldfeld]{
School of Electrical and Computer Engineering, Cornell University.
}
\email{goldfeld@cornell.edu}

\begin{document}
\begin{abstract}
  The Gromov--Wasserstein (GW) problem provides a framework for aligning heterogeneous datasets by matching their intrinsic geometry, but its statistical and computational scaling remains an issue for  high-dimensional problems. Slicing techniques offer an appealing route to scalability, but, unlike  Wasserstein distances, GW problems do not generally admit closed-form solutions in one-dimension. We resolve this problem for the GW problem with inner product cost (IGW), propose a sliced IGW distance that enjoys a natural rotational invariance property, and comprehensively study its structural and computational properties.  Numerical experiments validating our theory are presented, followed by  applications to heterogeneous clustering of text data and language model representation comparison.
\end{abstract}

\maketitle

\section{Introduction}
The Gromov--Wasserstein (GW) problem enables aligning heterogeneous probability spaces by matching their intrinsic geometry, without the need for shared coordinates or pointwise labels~\cite{memoli2011gromov}. 
This versatility has made GW central to structure-aware matching and inference problems, including network/graph comparison \cite{chowdhury2019gromov}, latent correspondence learning \cite{le2022entropic,rioux2024entropic}, and graph-isomorphism testing \cite{rioux2024limitlaws}, among others. 
However, scaling GW alignment to modern regimes remains challenging as empirical rates exhibit the curse of dimensionality \cite{zhang2024gromov}, while computation generally involves solving  nonconvex quadratic problems \cite{chowdhury2019gromov,memoli2011gromov}.

An effective route to obtaining scalable statistical distances is via \emph{slicing}, i.e., aggregating distance values across one-dimensional projections. 
For Wasserstein distances, this strategy succeeds as each 1D subproblem admits a simple closed-form solution  \cite{santambrogio2015optimal}. 
This tractability underlies broad applications in computing barycenters \cite{rabin2011wasserstein}, generative modeling \cite{deshpande2019max,deshpande2018generative,li2022sliced,nadjahi2019asymptotic}, and representation learning \cite{adhya2025s2wtm,kolouri2018sliced,naderializadeh2025constrained}, along with a growing statistical and algorithmic theory \cite{boedihardjo2025sharp,bonet2025sliced,manole2022minimax,nadjahi2020statistical,nadjahi2021fast,nietert2022statistical} %
More recent work has explored constructing original transport from sliced transport \cite{liu2024expected,nguyen2025fast},
identifying informative slicing directions \cite{liu2025efficient,naderializadeh2025constrained,nguyen2022amortized,nguyen2023markovian}, as well as leveraging sliced Wasserstein distances as part of the objective in other algorithms/tasks \cite{chen2022unsupervised,lu2024slosh,nadjahi2020approximate,shahbazi2025espformer}.
By contrast, GW lacks a closed-form solution  in one dimension, limiting the development of a parallel slicing framework.%

\subsection{Contributions}
We consider the GW problem with inner product cost (abbreviated IGW), between probability distributions $\mu\in\mathcal P(\mathbb R^{d_x})$ and $\nu\in\mathcal P(\mathbb R^{d_y})$, supported in Euclidean spaces:
\begin{equation}
\label{eq:IGW}
   \mathsf{IGW}(\mu,\nu)\coloneqq \left(\inf_{\pi\in\Pi(\mu,\nu)}\iint \left|\llangle x,x'\rrangle-\llangle y,y'\rrangle\right|^2d\pi\otimes\pi(x,y,x',y')\right)^{1/2},   
\end{equation}
where $\Pi(\mu,\nu)$ is the set of all couplings of $\mu$ and $\nu$. IGW quantifies the least amount of inner product distortion achievable when optimizing over all possible correspondences of the two spaces. As such, it captures a meaningful notion of discrepancy between heterogeneous spaces with a natural inner product structure, which motivated its adoption in practice. When $d_x=d_y=1$, we first establish a closed-form solution for this problem which reduces to sorting if $\mu$ and $\nu$ are uniformly distributed on the same (finite) number of points, akin to the Wasserstein case. This unlocks a slicing paradigm for IGW which is the main focus of this work.

The natural construction of the average-sliced IGW (namely, averaging IGW values across one-dimensional projections), is not necessarily invariant to orthogonal transformations, which is a key property of IGW. To address this, inspired by \cite{vayer2019sliced}, we define  a rotationally invariant sliced IGW, denoted $\aIGW$ (see \cref{def:slicedIGW}), 
by introducing an additional outer minimization over orthogonal transformations of one of the marginals. With this, $\aIGW$ defines a pseudometric on the space of probability measures on $\mathbb R^d$ and is topologically equivalent to IGW under certain structural conditions. %
As $\aIGW$ involves an optimization over the Stiefel manifold of an objective defined in terms of an integral over the unit sphere, we  study Monte Carlo averaging as a proxy for integrating over the sphere and propose a subgradient method for optimizing over the Stiefel manifold. Non-asymptotic guarantees are provided for both approximations, resulting in a principled computation framework for average-sliced IGW. The paper concludes with numerical experiments validating the derived theory and providing applications of this framework to comparing embeddings of LLMS and clustering of heterogeneous datasets.

\subsection{Related Literature} The IGW distance, introduced in \cite{vayer2020contribution}, has been applied for various machine learning tasks, including latent correspondence learning \cite{le2022entropic}, LLM representation comparison and heterogeneous clustering of text data \cite{dandapanthula2025optimal}, cross-lingual alignment \cite{aramayo2023uncovering}, among others. Its popularity stems, in part, from its nice analytic properties. Indeed, in contrast to the standard GW distance with quadratic cost, closed-form solutions for IGW with Gaussian marginals are known \cite{dandapanthula2025optimal,delon2022gromov,le2022entropic} and the existence of solutions induced by maps holds under mild conditions \cite{dumont2024existence}. 

The computational burden associated with solving GW problems (in particular, IGW) motivated the study of computationally efficient approximate solvers. This includes approaches based on entropic regularization \cite{karumanchi2025approximation,peyre2016gromov,rioux2024entropic,scetbon2022linear,solomon2016entropic}, convex/conic relaxations \cite{sejourne2021unbalanced,vincent2021semi}, semi-definite relaxations \cite{chen2023semidefinite}, and direct optimization over bi-directional (parametrized) Gromov--Monge maps \cite{zhang2022cycle}. \cite{vayer2019sliced} proposed a sliced quadratic GW distance and studied its metric and topological properties. However, this approach is contingent on a closed-form solution for quadratic GW (i.e., with squared norms in place of inner products); the formula provided in  \cite{vayer2019sliced} was later shown to be invalid via a counterexample \cite{beinert2022assignment}.

The motivation to unlock slicing for GW stems from the practical success of sliced Wasserstein distances \cite{rabin2011wasserstein}, which  overcome the statistical and computational issues of standard Wasserstein distance \cite{nadjahi2020statistical}. Average-slicing is typically implemented via Monte Carlo integration \cite{kolouri2019generalized,nadjahi2020statistical}, for which quantitative error bounds were provided in \cite{nietert2022statistical}. From the statistical standpoint, sliced Wasserstein distances enjoy fast dimension-independent convergence rates  \cite{lin2021projection,manole2022minimax,nadjahi2020statistical,nietert2022statistical,weed2022estimation}, %
as well as a comprehensive limit distribution theory \cite{goldfeld2024statistical,manole2022minimax,xi2022distributional,xu2022central}. Given their attractive algorithmic and statistical profile, sliced Wasserstein distances have been broadly applied for machine learning tasks such as generative modeling \cite{deshpande2019max,deshpande2018generative,li2022sliced,nadjahi2019asymptotic}, barycenter computation \cite{rabin2011wasserstein}, autoencoders \cite{adhya2025s2wtm,kolouri2018sliced}, locality-sensitive hashing \cite{lu2024slosh}, and representation learning \cite{naderializadeh2025constrained}.

\section{Background and Preliminaries}
We briefly review definitions and preliminary results concerning the IGW problem. Let $\|\cdot\|$ and $\llangle \cdot,\cdot\rrangle$ denote the Euclidean norm and inner product. The operator and Frobenius norms of a matrix $\bA\in\RR^{d\times k}$ are denoted by $\|\bA\|_\op$ and $\|\bA\|_\F$, respectively. %
We write $\mathbf I_d\in\mathbb R^{d\times d}$ for the identity matrix. For $d_x\leq d_y$, $\St(d_x,d_y)\coloneqq \left\{\bdelta \in\mathbb R^{d_y\times d_x}:\bdelta^{\intercal}\bdelta=\mathbf{I}_{d_x}\right\}$ is the Stiefel manifold of orthonormal $d_x$-frames in $\RR^{d_y}$. Write $\cP(\RR^d)$ for the space of Borel probability measures over $\RR^d$, and let $\cP_p(\RR^d)$ be its restriction to distributions with finite $p$-th absolute moments. %
For $\mu\in\cP(\RR^d)$, we denote its support by $\supp(\mu)$, while $\bsigma_\mu$, $\mathbf R_{\mu}$ and $M_p(\mu)$ denote the covariance matrix, second moment matrix, and $p$-th absolute moment of $\mu$, respectively. We write $T_\sharp \mu$ for the pushforward measure of $\mu$ through a measurable map $T$. %
For $\theta\in\RR^d$, we identify $\theta^{\intercal}$ with function $x\mapsto \theta^{\intercal}x$, and write $(\theta^{\intercal})_{\sharp}\mu$ for the corresponding pushforward. Weak convergence of probability measures is  denoted as $\nu_n\stackrel{w}{\to}\nu$. %
We use $\lesssim_x$ to denote inequalities up to constants that only depend on $x$; the subscript is dropped when the constant is universal.

\subsection{Inner Product Gromov--Wasserstein Distance}
Throughout, we analyze the GW distance with inner product cost (IGW) between $\mu\in\mathcal P(\RR^{d_x})$ and  $\nu\in\mathcal P(\RR^{d_y})$ as defined in \eqref{eq:IGW}.
Notably, $\mathsf{IGW}$ is invariant under orthogonal transformations (but is not translation-invariant~\cite{le2022entropic}) and defines a pseudometric in the sense that it is symmetric, satisfies the triangle inequality, and is nonnegative with $\IGW(\mu,\nu)=0$ if and only if there exists a $\mu\otimes\mu$-a.s. unitary isomorphism $T:\supp(\mu)\to\supp(\nu)$ with $T_\sharp\mu=\nu$ \cite[Proposition 3.1]{zhang2024gradient}. %
It is known that the IGW distance can be connected to a certain class of parametrized optimal transport (OT) problems as follows; see 
\cite[Appendix E]{rioux2024entropic} and \cite[Lemma 2.1]{zhang2024gradient}. 

\begin{lemma}[IGW duality]\label{lem:F2Dual}
Fix  $(\mu,\nu)\in\cP_2(\RR^{d_x})\times\cP_2(\RR^{d_y})$, and  $M_{\mu,\nu}:=\sqrt{M_2(\mu)M_2(\nu)}$. Then,  
    \begin{equation}
    \mathsf{IGW}^2(\mu,\nu)=\|\mathbf R_{\mu}\|_{\mathrm{F}}^2+\|\mathbf R_{\nu}\|_{\mathrm{F}}^2+\inf_{\mathbf{A}\in\RR^{d_x\times d_y}} 8\|\mathbf{A}\|_\F^2+\mathsf{IOT}_{\mathbf{A}}(\mu,\nu),\label{eq:F2Decomp}
    \end{equation}
    where $\mathsf{IOT}_{\bA}(\mu,\nu)\coloneqq \inf_{\pi\in\Pi(\mu,\nu)}\int c_\bA d\pi$ is the OT problem with cost function $c_{\mathbf{A}}\mspace{-3mu}:\mspace{-3mu}   
    (x,y)\in\RR^{d_x}\times \RR^{d_y}\mapsto\mspace{-3mu}-8x^{\intercal}\mathbf{A}y$ and the infimum is achieved at some $\mathbf{A}^{\star}\mspace{-3mu}\in\mspace{-3mu}\cD_{M_{\mu,\nu}}\mspace{-3mu}\coloneqq\mspace{-3mu}[-M_{\mu,\nu}/2,M_{\mu,\nu}/2]^{d_x\times d_y}$. 
\end{lemma}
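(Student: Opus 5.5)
Expanding the square in \eqref{eq:IGW} is the starting point: for $\pi\in\Pi(\mu,\nu)$,
\[
\iint |\langle x,x'\rangle-\langle y,y'\rangle|^2 d\pi\otimes\pi = \iint\langle x,x'\rangle^2 d\mu\otimes\mu+\iint \langle y,y'\rangle^2 d\nu\otimes\nu-2\iint \langle x,x'\rangle\langle y,y'\rangle d\pi\otimes\pi .
\]
The first two terms do not depend on $\pi$. Writing $\langle x,x'\rangle^2=\Tr(xx^\intercal x'x'^\intercal)$ and integrating gives $\Tr(\bR_\mu^2)=\|\bR_\mu\|_\F^2$, and similarly for $\nu$. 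All integrals are finite because $\mu,\nu\in\cP_2$, by Cauchy--Schwarz.

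For the cross term, write $\langle x,x'\rangle\langle y,y'\rangle=\sum_{i,j}x_iy_jx'_iy'_j$. This gives
\[
\iint \langle x,x'\rangle\langle y,y'\rangle\, d\pi\otimes\pi=\|\bm{M}_\pi\|_\F^2, \qquad \bm{M}_\pi:=\int xy^\intercal d\pi\in\RR^{d_x\times d_y}.
\]
Hence $\IGW^2=\|\bR_\mu\|_\F^2+\|\bR_\nu\|_\F^2-2\sup_\pi\|\bm M_\pi\|_\F^2$. Next linearize with the elementary identity
\[
-2\|\bm M\|_\F^2=\inf_{\bA}\bigl(8\|\bA\|_\F^2-8\langle \bA,\bm M\rangle_\F\bigr),
\]
where the minimizer is $\bA=\bm M/2$; one checks this by completing the square, $8\|\bA-\bm M/2\|_\F^2-2\|\bm M\|_\F^2$. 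Since $\langle \bA,\bm M_\pi\rangle_\F=\int x^\intercal \bA y\,d\pi$, the term $-8\langle\bA,\bm M_\pi\rangle_\F$ equals $\int c_\bA d\pi$. Swapping the two infima, $\inf_\pi\inf_\bA=\inf_\bA\inf_\pi$, which is always allowed, yields \eqref{eq:F2Decomp}.

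It remains to show the infimum over $\bA$ is attained in $\cD_{M_{\mu,\nu}}$. The set $\Pi(\mu,\nu)$ is weakly compact. The map $\pi\mapsto\bm M_\pi$ is weakly continuous on $\Pi(\mu,\nu)$: the function $xy^\intercal$ has entries bounded by $(\|x\|^2+\|y\|^2)/2$, and the second moments are uniformly integrable because the marginals are fixed. So $\sup_\pi\|\bm M_\pi\|_\F^2$ is attained at some $\pi^\star$. Setting $\bA^\star=\bm M_{\pi^\star}/2$ gives
\[
8\|\bA^\star\|_\F^2+\mathsf{IOT}_{\bA^\star}\le 8\|\bA^\star\|_\F^2-8\langle\bA^\star,\bm M_{\pi^\star}\rangle_\F=-2\|\bm M_{\pi^\star}\|_\F^2,
\]
which equals the infimum. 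Therefore $\bA^\star$ is a minimizer.

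For the box constraint, Cauchy--Schwarz gives
\[
|(\bm M_{\pi^\star})_{ij}|\le \Bigl(\int x_i^2d\mu\int y_j^2d\nu\Bigr)^{1/2}\le \sqrt{M_2(\mu)M_2(\nu)},
\]
so every entry of $\bA^\star$ lies in $[-M_{\mu,\nu}/2,M_{\mu,\nu}/2]$. The main obstacle is this attainment and continuity step, specifically the uniform integrability argument for the unbounded test function $xy^\intercal$. The rest is algebra.
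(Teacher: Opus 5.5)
Your proof is correct, and it is essentially the standard argument behind this lemma, which the paper does not prove itself but imports from \cite[Appendix E]{rioux2024entropic} and \cite[Lemma 2.1]{zhang2024gradient} (and reuses in the proof of \cref{thm:UnivariateIGW}): expand the square to reduce to $-2\sup_{\pi}\|\int xy^{\intercal}\,d\pi\|_{\F}^2$, linearize by completing the square, swap the infima, take $\bA^\star=\frac12\int xy^{\intercal}\,d\pi^\star$ for a maximizing $\pi^\star$, and get the box bound from Cauchy--Schwarz. Your compactness and uniform-integrability argument for the existence of $\pi^\star$ is also sound, since the fixed marginals make the second moments uniformly integrable.
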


\subsection{Comparison and Topological Properties}
The IGW problem is comparable to the 2-Wasserstein distance, $\W_2$, as stated in \cite[Lemma 3.2]{zhang2024gradient}; 
\begin{equation}
    \IGW(\mu,\nu) \leq \big(2 M_2(\mu)+2M_2(\nu) \big)^{\frac{1}{2}} \W_2(\mu,\nu),\quad\forall \mu,\nu\in\cP_2(\RR^d),\label{eq:W2_upperbound}
\end{equation}
where $\W_2(\mu,\nu)^2\coloneqq \inf_{\pi\in\Pi(\mu,\nu)}\int \|x-y\|^2d\pi(x,y)$. 
The cited lemma  also provides a lower bound in terms of $\W_2$ (up to an orthogonal transformation of one of the marginals) when $\bsigma_\mu$ and $\bsigma_\nu$ are nonsingular. Our subsequent derivations require a stronger lower bound that removes this nonsingularity condition; see \cref{appen:lem:equivalence_igw_w_proof} for the~proof. 
\begin{proposition}[IGW and Wasserstein comparison]\label{lem:equivalence_igw_w}
     Suppose that $\mu\in\cP_{2}(\RR^{d_x})$ and $\nu\in\cP_{2}(\RR^{d_y})$ are not point masses. Letting $\mathbf P\mathbf \Lambda_{\pi^{\star}}\mathbf Q^{\intercal}$ be the singular value decomposition of $\int xy^{\intercal}d\pi^{\star}(x,y)$ for some solution $\pi^{\star}$ of $\mathsf{IGW}(\mu,\nu)$ with  $\bP\in\mathbb R^{d_x\times d_x}$, and $\bQ\in\mathbb R^{d_y\times d_y}$, we have that 
     \[
     \sqrt{\kappa}\mathsf{W}_2((\bm\Delta\bP^{\intercal})_{\sharp} \mu,(\bm Q^{\intercal})_{\sharp}\nu)
   \leq \mathsf{IGW}(\mu,\nu),
   \]
    where  $
    \bdelta = \begin{bmatrix}
                \mathbf I_{d_x}&\mathbf 0 
        \end{bmatrix}^{\intercal}\in \mathbb R^{d_y\times d_x} 
$, and 
\[
\begin{aligned}
\kappa \coloneqq \frac 12 \min_{i\in\mathcal N}\left\{\int x_i^2d(\bdelta\bP^{\intercal})_{\sharp}\mu(x)+\int y_i^2d(\bQ^{\intercal})_{\sharp}\nu(y)\right\}>0,\text{  and}
\\
\mathcal N\coloneqq \left\{i\in\{1,\dots, d_y\}:\int x_i^2d(\bdelta\bP^{\intercal})_{\sharp}\mu(x)+\int y_i^2d(\bQ^{\intercal})_{\sharp}\nu(y) \neq 0\right\}.
\end{aligned}
\]
\end{proposition}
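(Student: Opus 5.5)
The plan is to rotate both marginals so that the cross-moment matrix of the optimal coupling becomes diagonal with nonnegative entries, and then to compare the two quantities coordinate by coordinate. A pleasant feature is that we never need the optimality of $\pi^\star$ beyond the fact that it achieves the value $\mathsf{IGW}(\mu,\nu)$. In particular, no first-order condition from \cref{lem:F2Dual} is required.

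\emph{Step 1 (reduction to rotated marginals).} Set $\tilde\mu\coloneqq(\bdelta\bP^{\intercal})_\sharp\mu$ and $\tilde\nu\coloneqq(\bQ^{\intercal})_\sharp\nu$, both in $\cP_2(\RR^{d_y})$, and let $\tilde\pi\coloneqq(\bdelta\bP^{\intercal}\times\bQ^{\intercal})_\sharp\pi^\star\in\Pi(\tilde\mu,\tilde\nu)$. Since $\bdelta\bP^{\intercal}$ is an isometric embedding and $\bQ^{\intercal}$ is orthogonal, all inner products are preserved. Hence the IGW objective of $\tilde\pi$ equals $\mathsf{IGW}^2(\mu,\nu)$, and $\|\bR_{\tilde\mu}\|_\F=\|\bR_\mu\|_\F$, $\|\bR_{\tilde\nu}\|_\F=\|\bR_\nu\|_\F$. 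Moreover,
\[
\int x y^{\intercal}\,d\tilde\pi=\bdelta\bP^{\intercal}\bP\bLambda_{\pi^\star}\bQ^{\intercal}\bQ=\bdelta\bLambda_{\pi^\star},
\]
which is ``diagonal''. Its entries are $\lambda_i\coloneqq\int x_iy_i\,d\tilde\pi\ge 0$ (singular values, with $\lambda_i=0$ for $i>d_x$), and all off-diagonal cross moments vanish.

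\emph{Step 2 (expanding both sides).} Expanding the square in \eqref{eq:IGW} under $\tilde\pi$ gives
\[
\mathsf{IGW}^2(\mu,\nu)=\|\bR_{\tilde\mu}\|_\F^2+\|\bR_{\tilde\nu}\|_\F^2-2\Big\|\int xy^{\intercal}d\tilde\pi\Big\|_\F^2\ \ge\ \sum_{i=1}^{d_y}\big(a_i^2+b_i^2-2\lambda_i^2\big),
\]
where $a_i\coloneqq\int x_i^2d\tilde\mu$ and $b_i\coloneqq\int y_i^2 d\tilde\nu$. The inequality keeps only the diagonal entries of the second moment matrices. On the other side, using $\tilde\pi$ as a (generally suboptimal) coupling,
\[
\W_2^2(\tilde\mu,\tilde\nu)\le\int\|x-y\|^2d\tilde\pi=\sum_{i}(a_i+b_i-2\lambda_i).
\]

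\emph{Step 3 (coordinatewise comparison).} By Cauchy--Schwarz and AM--GM, $0\le\lambda_i\le\sqrt{a_ib_i}\le(a_i+b_i)/2$. Using $a^2+b^2\ge(a+b)^2/2$,
\[
a_i^2+b_i^2-2\lambda_i^2\ \ge\ 2\Big(\tfrac{a_i+b_i}{2}-\lambda_i\Big)\Big(\tfrac{a_i+b_i}{2}+\lambda_i\Big)\ \ge\ (a_i+b_i-2\lambda_i)\,\tfrac{a_i+b_i}{2}.
\]
The last inequality uses $\lambda_i\ge0$ and $a_i+b_i-2\lambda_i\ge0$. For $i\in\mathcal N$ this is at least $\kappa(a_i+b_i-2\lambda_i)$. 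For $i\notin\mathcal N$ we have $a_i=b_i=\lambda_i=0$, so both sides vanish. Summing over $i$ yields $\kappa\,\W_2^2(\tilde\mu,\tilde\nu)\le\mathsf{IGW}^2(\mu,\nu)$, which is the claim.

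Finally, $\kappa>0$ because $\mathcal N$ is finite and nonempty. If $\mathcal N$ were empty, then $\tilde\mu=\tilde\nu=\delta_0$, and hence $\mu=\delta_0$, contradicting the assumption that $\mu$ is not a point mass.

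The only delicate step is Step 3, specifically making sure the nonnegativity of the $\lambda_i$ is used. This is exactly why the SVD rotation is performed: a signed $\lambda_i$ would break the factorization bound. Dropping the off-diagonal terms of $\bR_{\tilde\mu}$ and $\bR_{\tilde\nu}$ in Step 2 is harmless, because the cross-moment matrix has no off-diagonal part to cancel them.
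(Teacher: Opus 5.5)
Your proof is correct, and its skeleton is the paper's. Both rotate by the SVD so that $\int xy^{\intercal}d\tilde\pi$ is diagonal with nonnegative entries, discard the off-diagonal second moments, and bound $\mathsf W_2^2$ by the transport cost of $\tilde\pi$. Both then compare coordinatewise using $(a+b)^2\le 2(a^2+b^2)$ and $2\lambda_i\le a_i+b_i$.

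The difference is in the bookkeeping, and it matters. The paper (\cref{lem:W2boundIGW}) applies the factorization only on the indices $\mathcal I$ where $a_i$ and $b_i$ are both nonzero. It handles the one-sided indices $\mathcal J,\mathcal K$ separately through $\varsigma,\zeta$. It then needs \cref{lem:calI_nonempty}, proved by comparing $\pi^\star$ with an explicitly perturbed coupling, to guarantee $\mathcal I\neq\emptyset$.

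You observe that $a_i^2+b_i^2-2\lambda_i^2\ge \tfrac{a_i+b_i}{2}(a_i+b_i-2\lambda_i)$ holds for every index, including one-sided ones where $\lambda_i=0$. This removes the need for that lemma: all you require is $\mathcal N\neq\emptyset$, which is immediate. Consequently your argument proves the inequality whenever $\mu$ and $\nu$ are not both $\delta_0$. It would also cover directly the point-mass cases that the proof of \cref{cor:1dCase} treats by hand.

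What the paper's split buys is a slightly sharper constant on the one-sided coordinates, $\kappa'=\min\{\iota/2,\varsigma,\zeta\}\ge\kappa$. The stated result does not need this improvement.
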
 
\cref{lem:equivalence_igw_w} and its proof enable us to characterize convergence under IGW. 
To our knowledge, this result is new and stands in analogy to the equivalence between $p$-Wasserstein convergence and weak convergence of measures plus convergence of $p$-th moments, see  \cite[Theorem 6.9]{villani2008optimal}).

\begin{proposition}[IGW and weak convergence]\label{prop:IGW_weak_conv}
Fix $(\mu_n)_{n\in\mathbb N}\subset\cP_2(\RR^d)$ and $\mu\in\mathcal P_2(\RR^d)$. Then, $\IGW(\mu_n,\mu)\to 0$ if and only if there exists $\bdelta_n\in \mathrm{St}(d,d)$ such that $(\bdelta_n)_\sharp\mu_n\stackrel{w}{\to} \mu$ and $M_2\big(\mu_n)\to M_2(\mu)$. Furthermore, there exists $\bdelta\in\mathrm{St}(d,d)$, such that $\bdelta_\sharp\mu_n\stackrel{w}{\to}\mu$ along~a~subsequence.
\end{proposition}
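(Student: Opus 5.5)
The plan is to reduce both directions to $\W_2$-convergence after an orthogonal alignment. Two tools do the work: the upper bound \eqref{eq:W2_upperbound} for sufficiency, and \cref{lem:equivalence_igw_w} (with $d_x=d_y=d$, so $\bdelta=\mathbf I_d$) for necessity. Throughout I use the orthogonal invariance of both $\IGW$ and $\W_2$. I also use the standard fact \cite[Theorem 6.9]{villani2008optimal} that $\W_2$-convergence is equivalent to weak convergence together with convergence of second moments.

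\emph{Sufficiency.} Suppose $(\bdelta_n)_\sharp\mu_n\stackrel{w}{\to}\mu$ and $M_2(\mu_n)\to M_2(\mu)$. Since $M_2((\bdelta_n)_\sharp\mu_n)=M_2(\mu_n)$, the cited theorem gives $\W_2((\bdelta_n)_\sharp\mu_n,\mu)\to0$. By orthogonal invariance and \eqref{eq:W2_upperbound},
\[
\IGW(\mu_n,\mu)=\IGW((\bdelta_n)_\sharp\mu_n,\mu)\le \big(2M_2(\mu_n)+2M_2(\mu)\big)^{1/2}\W_2((\bdelta_n)_\sharp\mu_n,\mu)\to0,
\]
because the moment prefactor stays bounded.

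\emph{Necessity.} Choose $\bdelta_n\in\argmin_{\bdelta\in\St(d,d)}\W_2(\bdelta_\sharp\mu_n,\mu)$; the minimum is attained because $\bdelta\mapsto\W_2(\bdelta_\sharp\mu_n,\mu)$ is continuous and $\St(d,d)$ is compact. It suffices to show that this minimum tends to $0$, and by a subsequence argument it is enough to do so along a further subsequence of an arbitrary subsequence. The degenerate case where $\mu$ is a point mass, or where infinitely many $\mu_n$ are, is handled separately and directly. There $\IGW^2$ reduces to an expression in the second-moment matrices, and only moment bookkeeping is needed.

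Otherwise, apply \cref{lem:equivalence_igw_w} to $(\mu_n,\mu)$ with SVD factors $\bP_n,\bQ_n$. This gives $\sqrt{\kappa_n}\,\W_2((\bP_n^\intercal)_\sharp\mu_n,(\bQ_n^\intercal)_\sharp\mu)\le\IGW(\mu_n,\mu)\to0$. Pass to a subsequence with $\bP_n\to\bP$ and $\bQ_n\to\bQ$. If $\liminf\kappa_n>0$, then $\W_2((\bQ_n\bP_n^\intercal)_\sharp\mu_n,\mu)\to0$ and we are done.

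The main obstacle is that $\kappa_n$ can degenerate. This happens when some coordinate second moment $\int x_i^2\,d(\bP_n^\intercal)_\sharp\mu_n+\int y_i^2\,d(\bQ_n^\intercal)_\sharp\mu$ is nonzero but tends to $0$. To handle it I would reopen the proof of \cref{lem:equivalence_igw_w} and run it coordinatewise. Split the indices into two groups according to whether this coordinate moment has a positive limit or tends to $0$.
\begin{itemize}
\item On the first group, the lemma's argument gives a uniform constant, so the corresponding part of $\W_2^2$ is bounded by a constant times $\IGW^2(\mu_n,\mu)$.
\item On the second group, the contribution to $\W_2^2$ of the trivial coupling restricted to those coordinates is at most $2$ times the sum of the coordinate moments, which tends to $0$ by definition of the group.
\end{itemize}
Gluing the two groups (the squared Euclidean cost splits across coordinates) yields $\W_2((\bP_n^\intercal)_\sharp\mu_n,(\bQ_n^\intercal)_\sharp\mu)\to0$. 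Hence $\W_2((\bdelta_n)_\sharp\mu_n,\mu)\to0$ along the full sequence, and the cited equivalence gives both weak convergence and $M_2(\mu_n)\to M_2(\mu)$.

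\emph{Final claim.} By compactness of $\St(d,d)$, pass to a subsequence with $\bdelta_n\to\bdelta$. Then $\bdelta_\sharp\mu_n=(\bdelta\bdelta_n^\intercal)_\sharp\big((\bdelta_n)_\sharp\mu_n\big)$, where $\bdelta\bdelta_n^\intercal\to\mathbf I_d$ and $(\bdelta_n)_\sharp\mu_n\to\mu$ in $\W_2$. It follows that $\W_2(\bdelta_\sharp\mu_n,\mu)\le\|\bdelta\bdelta_n^\intercal-\mathbf I_d\|_\op\,M_2(\mu_n)^{1/2}+\W_2((\bdelta_n)_\sharp\mu_n,\mu)\to0$, which implies $\bdelta_\sharp\mu_n\stackrel{w}{\to}\mu$ along that subsequence.
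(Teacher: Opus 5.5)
Your proposal is correct. The sufficiency direction and the final subsequence claim are essentially the paper's. Your estimate $\W_2(\bdelta_\sharp\mu_n,(\bdelta_n)_\sharp\mu_n)\le\|\bdelta-\bdelta_n\|_{\op}M_2(\mu_n)^{1/2}$ replaces the paper's bounded-Lipschitz/Portmanteau argument, and it even yields $\W_2$-convergence along the subsequence.

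The necessity direction follows a different route, because the paper never uses the constant $\kappa$ of \cref{lem:equivalence_igw_w}. Write $a_i,b_i$ for the coordinate second moments of $(\bP_n^\intercal)_\sharp\mu_n$ and $(\bQ_n^\intercal)_\sharp\mu$. Write $c_i\ge0$ for the diagonal entries of the cross-moment matrix of the aligned optimal coupling $\tilde\pi_n$. The paper's steps are:
\begin{enumerate}
\item Use \eqref{eq:IGW_lower_bound} in the form $\IGW(\mu_n,\mu)^2\ge\sum_i\big[(a_i-b_i)^2+2(a_ib_i-c_i^2)\big]$, where both brackets are nonnegative by Cauchy--Schwarz.
\item Read off $a_i-b_i\to0$ and $a_ib_i-c_i^2\to0$.
\item Deduce $c_i-b_i\to0$, using $b_i\le M_2(\mu)$.
\item Conclude that the cost $\sum_i(a_i+b_i-2c_i)$ of $\tilde\pi_n$ tends to $0$.
\end{enumerate}
This handles point masses and vanishing coordinates uniformly, with no case split and no subsequences.

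Your reopened argument rests on the same per-coordinate inequality $a_i^2+b_i^2-2c_i^2\ge\frac12(a_i+b_i)(a_i+b_i-2c_i)$, plus a dichotomy on the coordinate moments. So the detour through \cref{lem:equivalence_igw_w} and the separate point-mass case turn out to be unnecessary. What your version buys is a quantitative bound, linear in $\IGW^2$, on the nondegenerate coordinates. The argmin choice of $\bdelta_n$ also gives you a clean subsequence principle.

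Two points need tightening.
\begin{itemize}
\item The moments $a_i^{(n)}$ need not converge along the subsequence where $\bP_n,\bQ_n$ converge. Extract a further subsequence, with limits in $[0,\infty]$, before splitting the indices.
\item On the second group you must bound the cost of the same coupling $\tilde\pi_n$ used on the first group. The crude bound $\int(x_i-y_i)^2\,d\tilde\pi_n\le2(a_i+b_i)$ holds for any coupling. By contrast, gluing two different couplings on different coordinate blocks is not possible in general.
\end{itemize}
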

     \cref{prop:IGW_weak_conv} is proved in \cref{proof:prop:IGW_weak_conv}. Observe that the sequence $\mu_n=\left((-1)^n\mathrm{Id}\right)_{\sharp}\mu$ satisfies $\mathsf{IGW}(\mu_n,\mu)=0$ for every $n\in\mathbb N$, but $\mu_{2n}$ and $\mu_{2n+1}$ converge weakly to $\mu$ and $(-\id)_{\sharp}\mu$ respectively. As such, it is necessary to pass to a subsequence to guarantee that $\bdelta_\sharp\mu_n\stackrel{w}{\to}\mu$ for some $\bdelta\in\mathrm{St}(d,d)$. 

\section{Closed-Form Solutions for Univariate IGW Alignment}\label{sec:univariate_igw}

Computation of a sliced IGW distance requires a means to efficiently solve univariate IGW problems. By comparison, the $p$-Wasserstein distance on the line has $\Wp(\mu,\nu)=\|F_\mu^{-1}-F_\nu^{-1}\|_{L^p([0,1])}$, where $F_\mu^{-1}$ is the quantile function associated with~$\mu\in\mathcal{P}(\R)$. When $\mu$ and $\nu$ are uniform distributions~on $n$ points, this reduces to a sorting problem. While Theorem~4.2.4 in \cite{vayer2020contribution} derives a solution for univariate IGW, their result assumes absolute continuity of one of the marginals, which is violated in the discrete setting. Our next result, proven in \cref{proof:thm:UnivariateIGW}, provides a fully general treatment. %

\begin{theorem}[Univariate IGW]\label{thm:UnivariateIGW}
Fix $\mu,\nu\in\cP_2(\RR)$ and  $F_{\mu}^{-1},F_{\nu}^{-1}$ as their quantile functions. Then,  
\[
\IGW(\mu,\nu)^2
=
M_2(\mu)^2+M_2(\nu)^2
-2\max\left\{\left( \int xy\,\dd\pi^{\star}(x,y)\right)^2,\left(\int xy\,\dd\pi_{\star}(x,y)\right)^2\right\},
\]
where $\pi^{\star} = \big(F_{\mu}^{-1},F_{\nu}^{-1}\big)_{\sharp}\upsilon$ and
$\pi_{\star} = \big(-F_{(-\id)_{\sharp}\mu}^{-1},F_{\nu}^{-1}\big)_{\sharp}\upsilon$,
with $\upsilon=\mathrm{Unif}([0,1])$ and the optimal coupling for $\mathsf{IGW}(\mu,\nu)$ is that which achieves the maximum. 

Furthermore, if $\mu,\nu$ are uniformly distributed on
$\supp(\mu)=\{x^{(i)}\}_{i=1}^{N}$ and $\supp(\nu)=\{y^{(j)}\}_{j=1}^{N}$ with
$x^{(1)}<x^{(2)}<\dots <x^{(N)}$ and $y^{(1)}<y^{(2)}<\dots <y^{(N)}$,
then the optimal coupling is either $\pi_{\id}=(\id,T_{\mathrm{id}})_{\sharp}\mu$ or $\pi_{\check{\mathrm{id}}}=(\id,T_{\check{\mathrm{id}}})_{\sharp}\mu$  where  $T_{\mathrm{id}}:x^{(i)}\mapsto y^{(i)}$ and $T_{\check{\mathrm{id}}}:x^{(i)}\mapsto y^{(N-i+1)}$ are the identity and anti-identity permutations.
\end{theorem}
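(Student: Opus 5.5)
In dimension one the integrand factorizes, and this drives the whole proof. Since $\langle x,x'\rangle-\langle y,y'\rangle = xx'-yy'$, expanding the square under $\pi\otimes\pi$ gives
\[
\iint |xx'-yy'|^2\,d\pi\otimes\pi = \Big(\int x^2d\mu\Big)^2+\Big(\int y^2d\nu\Big)^2-2\Big(\int xy\,d\pi\Big)^2 .
\]
Here we interpret $M_2(\mu)$ as the second moment $\int x^2 d\mu$, consistent with the formula in \cref{thm:UnivariateIGW}. The first two terms do not depend on $\pi$. Hence $\IGW(\mu,\nu)^2 = M_2(\mu)^2+M_2(\nu)^2-2\sup_{\pi\in\Pi(\mu,\nu)}\big(\int xy\,d\pi\big)^2$.

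Next we compute $\sup_\pi (\int xy\,d\pi)^2 = \max\{(\sup_\pi \int xy\,d\pi)^2, (\inf_\pi\int xy\,d\pi)^2\}$. This holds because $\pi\mapsto\int xy\,d\pi$ is linear on the convex set $\Pi(\mu,\nu)$, so its range is an interval $[m,M]$, and $t\mapsto t^2$ on an interval is maximized at an endpoint. Both extremes are attained: $\Pi(\mu,\nu)$ is weakly compact, and $|xy|\le (x^2+y^2)/2$ is uniformly integrable over $\Pi(\mu,\nu)$, so the map is continuous.

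The supremum is the classical one-dimensional OT problem with cost $-xy$, equivalently the quadratic cost $(x-y)^2$ up to constant moment terms. It is therefore attained by the comonotone coupling $\pi^\star=(F_\mu^{-1},F_\nu^{-1})_\sharp\upsilon$, for general (possibly atomic) marginals, by \cite[Theorem 2.9]{santambrogio2015optimal} or the rearrangement inequality. For the infimum, write $\int xy\,d\pi = -\int (-x)y\,d\pi$ and push $\pi$ forward by $(x,y)\mapsto(-x,y)$, which is a bijection $\Pi(\mu,\nu)\to\Pi((-\id)_\sharp\mu,\nu)$. The infimum therefore equals $-\sup_{\tilde\pi\in\Pi((-\id)_\sharp\mu,\nu)}\int \tilde x y\,d\tilde\pi$. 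That supremum is attained by the comonotone coupling $(F^{-1}_{(-\id)_\sharp\mu},F_\nu^{-1})_\sharp\upsilon$, and mapping back gives $\pi_\star$. Taking whichever of $\pi^\star,\pi_\star$ has the larger $(\int xy)^2$ yields the optimal IGW coupling and the claimed formula.

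Finally, suppose both marginals are uniform on $N$ sorted points. Then $F_\mu^{-1}$ and $F_\nu^{-1}$ are step functions that equal $x^{(i)}$ and $y^{(i)}$ on $((i-1)/N,i/N]$, so $\pi^\star=\pi_{\id}$. Also $(-\id)_\sharp\mu$ is uniform on the sorted points $-x^{(N)}<\dots<-x^{(1)}$, so its quantile pairs $-x^{(N-i+1)}$ with $y^{(i)}$. Negating back, $\pi_\star$ pairs $x^{(N-i+1)}$ with $y^{(i)}$, which is $\pi_{\check{\id}}$.

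The main technical point is the attainment and optimality of the comonotone coupling without absolute continuity. I will invoke the standard quantile-coupling theorem for convex costs on $\R$, which needs no regularity, rather than re-proving it; finiteness of second moments ensures that all the integrals are well defined.
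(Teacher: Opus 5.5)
Your proposal is correct and follows essentially the same route as the paper. Both reduce to maximizing $\big(\int xy\,d\pi\big)^2$, split this into the maximum and minimum of the linear functional $\pi\mapsto\int xy\,d\pi$, identify both via one-dimensional quadratic OT and \cite[Theorem~2.9]{santambrogio2015optimal} (using the reflection $(x,y)\mapsto(-x,y)$ for the minimum), and read off $\pi_{\id}$ and $\pi_{\check{\id}}$ from the quantile functions. The differences are cosmetic: you expand the square directly in one dimension instead of citing the decomposition from Appendix~E of \cite{rioux2024entropic}, and you supply the interval/endpoint and attainment justification that the paper leaves implicit. If you want ``the'' optimal coupling to mean that every optimal plan is $\pi^\star$ or $\pi_\star$, you should also invoke the uniqueness part of the cited theorem.
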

To our knowledge, the above result has not appeared previously in the literature and depends strongly on the structure of IGW. Indeed, a similar formula for finitely supported marginals with uniform weights was claimed to hold in \cite{vayer2019sliced} for the quadratic GW problem, but was shown to fail via a counterexample in \cite{beinert2022assignment}.

\section{Sliced Inner Product Gromov--Wasserstein Distance}\label{sec:slicedIGW}

The closed-form solution to the univariate IGW problem enables slicing methods for high-dimensional IGW alignment. To define the average-sliced IGW distance, write $\unitsph$ for the unit sphere in $\RR^d$ and let $\sigma_d$ be the Haar measure on $\mathbb S^{d-1}$. Here and in the sequel we assume, without loss of generality, that $d_x\leq d_y$.

\begin{definition}[Sliced IGW distance]\label{def:slicedIGW}
    The average--sliced IGW distance between $\mu\in \cP_2(\R^{d_x})$ and $\nu\in \cP_2(\R^{d_y})$ is
    $
        \aIGW(\mu,\nu)\coloneqq \inf_{\bdelta\in \St(d_x,d_y)} \left( \int_{\mathbb S^{d_y-1}} \IGW\left((\theta^{\intercal}\bdelta)_{\sharp}\mu,(\theta^{\intercal})_{\sharp}\nu\right)^2  d\sigma_{d_y}(\theta) \right)^{1/2}.%
    $
\end{definition}

The above definition differs from standard sliced Wasserstein distances by the outside minimization over the Stiefel manifold. This step is used to retain the orthogonal invariance of the original IGW distance as noted in \cite{vayer2019sliced} (see \cref{prop:metric}). We underscore that this infimum is always attained.

\begin{proposition}[Attainment]
\label{lem:min_attained_slicing}
   Fix $\mu\in\cP_2(\RR^{d_x})$ and $\nu\in\cP_2(\RR^{d_y})$. Then, there exists $\bar\bdelta\in\St(d_x,d_y)$ for which $\aIGW(\mu,\nu)^2=\int_{\mathbb S^{d_y-1}}\IGW((\theta^{\intercal}\bar\bdelta)_{\sharp}\mu,(\theta^{\intercal})_{\sharp}\nu)^2d\sigma_{d_y}(\theta)$.  
\end{proposition}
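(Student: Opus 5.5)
The plan is to apply the direct method on the compact set $\St(d_x,d_y)$. Define $\Phi(\bdelta)\coloneqq\int_{\mathbb S^{d_y-1}}\IGW((\theta^{\intercal}\bdelta)_{\sharp}\mu,(\theta^{\intercal})_{\sharp}\nu)^2\,d\sigma_{d_y}(\theta)$. Since $\St(d_x,d_y)$ is closed and bounded in $\RR^{d_y\times d_x}$, and hence compact, it suffices to show that $\Phi$ is continuous (lower semicontinuity would already be enough). A minimizing sequence then has a convergent subsequence, and its limit $\bar\bdelta$ attains the infimum. Note also that $\Phi$ is finite: for each $\theta$, \eqref{eq:W2_upperbound} bounds the integrand by $(2M_2(\mu)+2M_2(\nu))\cdot 2(M_2(\mu)+M_2(\nu))$, because $\|\theta^{\intercal}\bdelta x\|\le\|x\|$ and $\W_2^2(\alpha,\beta)\le 2M_2(\alpha)+2M_2(\beta)$.

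For continuity, fix $\theta$ and take $\bdelta_n\to\bdelta$ in $\St(d_x,d_y)$. The key estimate is a Lipschitz-type bound that combines the triangle inequality for $\IGW$ with \eqref{eq:W2_upperbound} in dimension one:
\[
\bigl|\IGW((\theta^{\intercal}\bdelta_n)_\sharp\mu,(\theta^{\intercal})_\sharp\nu)-\IGW((\theta^{\intercal}\bdelta)_\sharp\mu,(\theta^{\intercal})_\sharp\nu)\bigr|\le \IGW((\theta^{\intercal}\bdelta_n)_\sharp\mu,(\theta^{\intercal}\bdelta)_\sharp\mu).
\]
We bound the right-hand side by $(4M_2(\mu))^{1/2}\,\W_2((\theta^{\intercal}\bdelta_n)_\sharp\mu,(\theta^{\intercal}\bdelta)_\sharp\mu)$. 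Using the coupling $(\theta^{\intercal}\bdelta_n x,\theta^{\intercal}\bdelta x)$ with $x\sim\mu$, this $\W_2$ term is at most $\|\bdelta_n-\bdelta\|_{\op}M_2(\mu)^{1/2}$. Therefore
\[
\bigl|\IGW(\cdot)_n-\IGW(\cdot)\bigr|\le 2M_2(\mu)\,\|\bdelta_n-\bdelta\|_\op,
\]
uniformly in $\theta$. The squared integrands are uniformly bounded by the constant above. Hence the integrands converge uniformly on the sphere, and $\Phi(\bdelta_n)\to\Phi(\bdelta)$; dominated convergence would work equally well. Taking square roots does not affect the location of the minimizer, which gives the claimed $\bar\bdelta$.

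The main obstacle is justifying the ingredients used above: that $\IGW$ satisfies the triangle inequality for one-dimensional measures, and that \eqref{eq:W2_upperbound} applies there. Both are stated earlier for general Euclidean spaces (the pseudometric property and \cite[Lemma 3.2]{zhang2024gradient}), so they transfer directly. If one prefers to avoid the triangle inequality, there is an alternative route. Write $\IGW^2$ through \cref{thm:UnivariateIGW} as $M_2^2+M_2^2-2\max\{(\cdot)^2,(\cdot)^2\}$. Then check that $M_2((\theta^{\intercal}\bdelta)_\sharp\mu)=\theta^{\intercal}\bdelta\mathbf R_\mu\bdelta^{\intercal}\theta$ is continuous in $\bdelta$, and that the quantile cross-moments $\int F^{-1}_{\alpha}F^{-1}_{\beta}$ are continuous in $\alpha$ under $\W_2$ convergence, by Cauchy--Schwarz and the identity $\W_2=\|F^{-1}_\alpha-F^{-1}_{\alpha'}\|_{L^2}$.
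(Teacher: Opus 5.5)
Your proof is correct. Its skeleton matches the paper's: compactness of $\St(d_x,d_y)$ plus a continuity estimate in $\bdelta$ that is uniform in $\theta$. The difference is in how you obtain that estimate.

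The paper uses the second part of \cref{lemma:IGW_lip}. That lemma expands the squared slice via $\IGW^2=\|\mathbf R_{\rho}\|_{\mathrm F}^2+\|\mathbf R_{\eta}\|_{\mathrm F}^2-2\sup_{\pi}\|\int xy^{\intercal}d\pi\|_{\mathrm F}^2$. It then bounds the change in the moment term and in the optimal cross-correlation term by hand. This gives Lipschitz continuity of $\bdelta\mapsto\IGW((\theta^{\intercal}\bdelta)_{\sharp}\mu,(\theta^{\intercal})_{\sharp}\nu)^2$ with constant $L'_{\mu,\nu}=4M_2(\mu)^2+4M_2(\mu)M_2(\nu)$.

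You instead work with the unsquared distance and use only its metric structure. The triangle inequality reduces everything to $\IGW((\theta^{\intercal}\bdelta_n)_{\sharp}\mu,(\theta^{\intercal}\bdelta)_{\sharp}\mu)$. Then \eqref{eq:W2_upperbound} and the synchronous coupling bound this by $2M_2(\mu)\|\bdelta_n-\bdelta\|_{\mathrm{op}}$, and a uniform bound on the integrand handles the squaring.

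What each approach buys:
\begin{itemize}
\item Your route is shorter and more portable. It works for any pseudometric that is locally Lipschitz with respect to $\W_2$ on sets of bounded second moment, and it never touches the inner-product decomposition.
\item It also gives an explicit constant. With the slicewise bound $\IGW^2\le M_2(\mu)^2+M_2(\nu)^2$ from \cref{thm:UnivariateIGW}, you get Lipschitz constant $4M_2(\mu)\sqrt{M_2(\mu)^2+M_2(\nu)^2}$, which is at most $L'_{\mu,\nu}$.
\item The paper's hands-on computation is what gets reused later. The constant $L'_{\mu,\nu}$ feeds the covering argument in \cref{prop:MC_slicedIGW}. Its intermediate bound, valid for arbitrary $\bdelta_1,\bdelta_2\in\RR^{d_y\times d_x}$ off the Stiefel manifold, underlies \cref{cor:LipschitzConstant} for the subgradient method.
\end{itemize}
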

\cref{lem:min_attained_slicing} follows by leveraging compactness of $\St(d_x,d_y)$ and continuity of the objective in $\bdelta$ as described in \cref{proof:lem:min_attained_slicing}. As an example, we present a closed-form solution for $\aIGW$ between centered Gaussians, which we denote by $\mathcal N(0,\bsigma)$ where $\bsigma$ is the covariance matrix,  see \cref{app:aIGWGaussians} for details. This characterization provides a  baseline for subsequent numerical~experiments.

\begin{example}[$\aIGW$ between Gaussians]
\label{ex:aIGWGaussian} For $\bsigma_{\mu}\in\mathbb R^{d_x\times d_x}$ and $\bsigma_{\nu}\in\mathbb R^{d_y\times d_y}$, 
\begin{equation}
 \aIGW(\cN(0,\bsigma_\mu),\cN(0,\bsigma_\nu))^2 = \frac{\big(\Tr( \bsigma_\mu) - \Tr( \bsigma_\nu)\big)^2+2\sum_{i=1}^{d_y}\big(\lambda_i(\bsigma_\mu)-\lambda_i(\bsigma_\nu)\big)^2}{d_y(d_y+2)},\label{eq:Gauss_sliced_IGW}
\end{equation}
where $\lambda_i(\bsigma_{\mu})=0$ for $i>d_x$. Furthermore, the infimum over $\St(d_x,d_y)$ is achieved at  
 $\bdelta =\bV \bJ\bU^{\intercal}$ where $\bJ = [\bI_{d_x}\  \mathbf{0}]^\intercal$ and $\bU$, $\bV$ are obtained by diagonalizing  $\bsigma_{\mu}=\bm U\bm D_{\mu}\bm U^{\intercal}$ and $\bsigma_{\nu}=\bm V\bm D_{\nu}\bm V^{\intercal}$.
\end{example} 

We now move to a comprehensive study of metric, topological, and computational properties of the sliced IGW distance.

\subsection{Metric and Topological Structure}\label{subsec:metric_topology}

We first show that, like IGW itself, the sliced variant defines a pseudometric 
which, under primitive conditions, nullifies if and only if the high-dimensional IGW distance is zero. To state this result, we recall that the characteristic function of $\mu\in\mathcal P(\RR^d)$ is given by $\Phi_\mu(t)\coloneqq \EE_{\mu}[e^{it^\intercal X}],\ t\in\RR^d$. %
We say that $\Phi_{\mu}$ is analytic if, for each $t\in \mathbb R^d$ there exists a neighborhood $U$ of $t$ on which $\Phi_{\mu}$ admits an absolutely and uniformly convergent power series representation \cite[Definition 2.3.2]{krantz2001function}. 
\begin{proposition}[Pseudometric]\label{prop:metric}
$\big(\cP_2(\RR^d),\aIGW\big)$ is a pseudometric space. Furthermore 
\begin{enumerate}
    \item  there exist $\mu,\nu\in\cP_2(\RR^d)$ for which $\aIGW(\mu,\nu)=0$, but $\IGW(\mu,\nu)\neq 0$,
    \item if~$\mu,\nu\in\cP_2(\RR^d)$ have analytic characteristic functions, $\aIGW(\mu,\nu)=0\iff \IGW(\mu,\nu)=0$.
\end{enumerate}
\end{proposition}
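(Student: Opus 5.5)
For the pseudometric claim, symmetry and nonnegativity are immediate except for symmetry in $\bdelta$. Here $d_x=d_y=d$, so $\St(d,d)$ is the orthogonal group. By the orthogonal invariance of univariate IGW under $x\mapsto -x$ and of $\sigma_d$ under rotations, we can substitute $\theta\mapsto\bdelta\theta$. This gives $\IGW((\theta^\intercal\bdelta)_\sharp\mu,(\theta^\intercal)_\sharp\nu)$ as a function of $\bdelta^\intercal\theta$, so the objective for $(\mu,\nu,\bdelta)$ equals the one for $(\nu,\mu,\bdelta^\intercal)$, and symmetry follows.

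For the triangle inequality, take minimizers $\bdelta_1$ for $(\mu,\rho)$ and $\bdelta_2$ for $(\rho,\nu)$; they exist by \cref{lem:min_attained_slicing}. Take the candidate $\bdelta_2\bdelta_1$ for $(\mu,\nu)$. After the change of variables $\theta\mapsto\bdelta_2\theta$ in the second integral, apply the pointwise triangle inequality for univariate IGW (a pseudometric on $\cP_2(\RR)$), and then Minkowski's inequality in $L^2(\sigma_d)$. This yields $\aIGW(\mu,\nu)\le\aIGW(\mu,\rho)+\aIGW(\rho,\nu)$.

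For (2), the direction $\IGW(\mu,\nu)=0\Rightarrow\aIGW(\mu,\nu)=0$ uses \cref{lem:equivalence_igw_w}. Suppose $\IGW=0$, so $\kappa>0$ forces $(\bP^\intercal)_\sharp\mu=(\bQ^\intercal)_\sharp\nu$. Hence $\nu=(\bQ\bP^\intercal)_\sharp\mu$, and choosing $\bdelta=\bQ\bP^\intercal$ makes every slice identical. This direction needs no analyticity.

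Conversely, suppose $\aIGW=0$ with minimizer $\bar\bdelta$, and replace $\mu$ by $\bar\bdelta_\sharp\mu$. Then for $\sigma_d$-a.e.\ $\theta$ the one-dimensional projections satisfy $\IGW=0$. In one dimension this means $(\theta^\intercal)_\sharp\nu=(\pm\theta^\intercal)_\sharp\mu$, because a unitary isomorphism on the line is $\pm\id$ on the support; equivalently, \cref{thm:UnivariateIGW} shows IGW vanishes iff $\nu_\theta$ equals $\mu_\theta$ or its reflection. Consequently, for a.e.\ $\theta$ and all $s\in\RR$, either $\Phi_\nu(s\theta)=\Phi_\mu(s\theta)$ or $\Phi_\nu(s\theta)=\Phi_\mu(-s\theta)=\overline{\Phi_\mu(s\theta)}$. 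So the entire set $\RR^d$ (after closure, by continuity of characteristic functions) is covered by the zero sets of the two analytic functions $\Phi_\nu-\Phi_\mu$ and $\Phi_\nu(\cdot)-\Phi_\mu(-\cdot)$. Their product is analytic and vanishes on a set of full measure, hence identically on the connected $\RR^d$. The ring of real-analytic functions on a connected domain is an integral domain, so one factor vanishes identically. Therefore $\nu=\mu$ or $\nu=(-\id)_\sharp\mu$, and both give $\IGW(\mu,\nu)=0$ by orthogonal invariance (undoing $\bar\bdelta$). The main obstacle is rigorously justifying the "$\pm$ on a.e.\ slice" characterization together with the integral-domain step. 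The latter needs the real-analyticity in the sense of the cited definition, and the former uses that $\IGW=0$ in 1D with the coupling of \cref{thm:UnivariateIGW} gives $\int xy\,d\pi=\pm M_2$, hence $y=\pm x$ a.s.

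For (1), I would pick non-analytic examples exploiting the $\pm$ sign freedom varying with $\theta$. Concretely, take $d=2$ and $\mu$ and $\nu$ in the style of Cramér–Wold counterexamples: distributions whose projections agree up to reflection on every direction but which are not orthogonal images of each other. An example is $\mu$ symmetric ($\mu=(-\id)_\sharp\mu$) with $\nu$ having $\nu_\theta=\mu_\theta$ for all $\theta$ up to sign. More simply, choose characteristic functions $\Phi_\nu=\Phi_\mu$ on a double cone and $\Phi_\nu=\overline{\Phi_\mu}$ on the complementary cone, constructed with compactly supported non-analytic bump perturbations (e.g.\ $\Phi_\mu$ real plus odd imaginary perturbations supported in cones). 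Then check that $\aIGW=0$ via $\bdelta=\bI$, while \cref{lem:equivalence_igw_w} shows $\IGW\neq0$ because $\nu$ is not an orthogonal image of $\mu$. Building a valid (positive-definite) such perturbation is the delicate part; I would try perturbations of a Gaussian by small smooth cone-supported odd imaginary parts, verifying positivity of the resulting density.
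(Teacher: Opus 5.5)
Your pseudometric argument is the paper's; the explicit candidate $\bdelta_2\bdelta_1$ just unpacks the orthogonal invariance of $\aIGW$ that the paper invokes. The substitution $\vartheta=\bdelta_2^{\intercal}\theta$ belongs in the $(\mu,\rho)$ term, not the second one.

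For $\aIGW=0\Rightarrow\IGW=0$ in (ii) you take a different and leaner route, and it is correct. The paper uses \cref{lem:continuity_orientation} to find an open cone on which one fixed sign is forced, applies the identity theorem there, and treats separately the degenerate case where both signs work on every slice (via injectivity of sliced $\W_2$). You instead note that $\{\Phi_\nu=\Phi_\mu\}\cup\{\Phi_\nu=\Phi_\mu(-\,\cdot)\}$ is closed and contains the union of the lines $\RR\theta$ over a full-measure set of directions. That union has full Lebesgue measure, so the closed set is all of $\RR^d$, and the identity theorem applied to the product finishes. This dispenses with both the continuity lemma and the case split.

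In the converse direction, \cref{lem:equivalence_igw_w} excludes point masses. Add the one-line check that $\IGW(\delta_a,\nu)=0$ forces $\nu=\delta_b$ with $\|b\|=\|a\|$, or use the unitary-isomorphism characterization recalled in the background section.

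Item (i) is where the proposal has a genuine gap. Your architecture matches the paper's: a radial base density $g$ (so $\hat g$ is real and rotation invariant), plus real odd perturbations with Fourier transforms $i(\eta_1\pm\eta_2)$, where the $\eta_j$ are odd bumps in complementary double cones. But the two steps that make it a counterexample are left open, and your proposed route to positivity cannot work. Because the perturbation $h$ is odd and $g$ is even, nonnegativity of $g+h$ forces $|h|\le g$. If $g$ is Gaussian, the densities are at most $2g$, so the measures have Gaussian tails and entire characteristic functions. Your own item (ii) then yields $\IGW=0$. Equivalently, a function dominated by a Gaussian has an entire Fourier transform, which cannot vanish on an open set unless it vanishes identically. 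The missing idea is a heavy-tailed radial base, e.g.\ $g\propto(1+\|x\|^2)^{-m}$ with $m>(d+2)/2$ for a finite second moment. The inverse Fourier transforms of the bumps are Schwartz, hence bounded by $M(1+\|x\|^2)^{-m}$, so a small multiple of them keeps the density nonnegative. Building densities directly also makes any positive-definiteness check unnecessary.

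Second, you assert without argument that $\nu$ is not an orthogonal image of $\mu$, and this is not automatic. You must exclude every orthogonal $\bdelta$ with $(\eta_1+\eta_2)\circ\bdelta=\eta_1-\eta_2$, and symmetric choices fail. With radial bumps centred at $\pm e_1$ and $\pm e_2$ (the paper's choice), the reflection $\bI_d-2e_2e_2^{\intercal}$ fixes $\eta_1$ and negates $\eta_2$. It therefore maps $\rho_1$ onto $\rho_2$, so $\IGW(\rho_1,\rho_2)=0$. The paper's claim that only the identity preserves $\eta_1$ overlooks rotations about the $e_1$-axis, so you cannot lean on its verification either.

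One fix is to build $\eta_1$ from small disjoint odd pairs of bumps with pairwise distinct radii, distinct also from those in $\eta_2$. Their centres should lie in the interior of the first cone and span $\RR^d$. Comparing positive parts of both sides forces $\bdelta^{\intercal}$ to fix each of these centres, so $\bdelta=\bI_d$ and then $\eta_2=-\eta_2$, a contradiction.

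Finally, your first suggestion with $\mu=(-\id)_\sharp\mu$ is vacuous. Every $\mu_\theta$ is then symmetric, so agreement up to sign is plain agreement, and Cram\'er--Wold gives $\nu=\mu$.
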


It is easy to see that if $\IGW(\mu,\nu)=0$, then $\aIGW(\mu,\nu)=0$. The converse requires more care and is shown to fail without the analyticity assumption. Intuitively, if $\aIGW(\mu,\nu)=0$, this only gives information about (almost every) one-dimensional projection of the measures. Lifting this information to the original measures requires certain structural assumptions such as analyticity of the characteristic function as illustrated in the proof (\cref{proof:prop:metric}). This condition holds, for instance, if $\mu,\nu$ are compactly supported, see \cite[Theorem 7.1.14]{hormander1983analysis}. We further show in \cref{sec:rem:lattice} that when $\mu,\nu$ are supported on a lattice, we again have $\aIGW(\mu,\nu)=0\iff\IGW(\mu,\nu)=0$. 

Next we consider the (quotient) topology induced by the sliced IGW distance and establish its equivalence to that of IGW under suitable conditions.

\begin{proposition}[Topology]\label{prop:topology}
Fix  $\mu\in\cP_2(\RR^d)$ and $(\mu_n)_{n\in\mathbb N}\subset \cP_2(\RR^d)$. If $\IGW(\mu_n,\mu) \rightarrow 0$, then  $\aIGW(\mu_n,\mu) \rightarrow 0$. Further, if  $\aIGW(\mu_n,\mu)\to 0$ and every weak subsequential limit of $\mu_n$ has an analytic characteristic function, then $\mathsf{IGW}(\mu_n,\mu)\to 0$ if $\mu$ has an analytic characteristic function.
\end{proposition}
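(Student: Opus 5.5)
For the forward implication, I will use \cref{prop:IGW_weak_conv}. Since $\IGW(\mu_n,\mu)\to0$, there are $\bdelta_n\in\St(d,d)$ with $(\bdelta_n)_\sharp\mu_n\stackrel{w}{\to}\mu$ and $M_2(\mu_n)\to M_2(\mu)$. Rotations preserve second moments, so $\nu_n\coloneqq(\bdelta_n)_\sharp\mu_n$ converges to $\mu$ weakly with convergent second moments. By \cite[Theorem 6.9]{villani2008optimal} this gives $\W_2(\nu_n,\mu)\to0$.

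I then bound $\aIGW(\mu_n,\mu)$ by taking $\bdelta=\bdelta_n$ in \cref{def:slicedIGW}. For each $\theta$, inequality \eqref{eq:W2_upperbound} in dimension one, together with the fact that projection is $1$-Lipschitz for $\W_2$, gives
\[
\IGW\big((\theta^\intercal\bdelta_n)_\sharp\mu_n,(\theta^\intercal)_\sharp\mu\big)^2\le 2\big(M_2(\mu_n)+M_2(\mu)\big)\W_2(\nu_n,\mu)^2 .
\]
Integrating over $\theta$ shows $\aIGW(\mu_n,\mu)^2\lesssim (M_2(\mu_n)+M_2(\mu))\W_2(\nu_n,\mu)^2\to0$.

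For the converse, I argue by subsequences. It suffices to show that every subsequence of $(\mu_n)$ has a further subsequence along which $\IGW(\mu_n,\mu)\to0$. Let $\bar\bdelta_n$ be the minimizers from \cref{lem:min_attained_slicing}. First I need tightness and moment control for $\nu_n\coloneqq(\bar\bdelta_n)_\sharp\mu_n$. The univariate formula of \cref{thm:UnivariateIGW} gives $\IGW(\alpha,\beta)\ge |M_2(\alpha)-M_2(\beta)|$ for $\alpha,\beta\in\cP_2(\RR)$. Indeed, by Cauchy--Schwarz, $(\int xy\,d\pi)^2\le M_2(\alpha)M_2(\beta)$, and substituting this into the formula yields the bound. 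Applying it to the slices and using
\[
\int_{\unitsph}M_2((\theta^\intercal)_\sharp\nu)\,d\sigma_d(\theta)=M_2(\nu)/d,
\]
I obtain $\int |M_2((\theta^\intercal)_\sharp\nu_n)-M_2((\theta^\intercal)_\sharp\mu)|\,d\sigma_d\to0$, and therefore $M_2(\mu_n)=M_2(\nu_n)\to M_2(\mu)$. Bounded second moments make $(\nu_n)$ tight. Compactness of $\St(d,d)$ then lets me pass to a subsequence with $\nu_n\stackrel{w}{\to}\nu$ for some $\nu$, which by hypothesis has an analytic characteristic function.

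Next I identify the limit. I show $\aIGW(\nu,\mu)=0$, which by \cref{prop:metric}(2) gives $\IGW(\nu,\mu)=0$. The triangle inequality for $\aIGW$ gives $\aIGW(\nu,\mu)\le\aIGW(\nu,\nu_n)+\aIGW(\nu_n,\mu)$. The second term is $\aIGW(\mu_n,\mu)\to0$, since $\aIGW$ is invariant under rotating an argument. This is the main obstacle: $\nu_n\to\nu$ holds only weakly, so I cannot apply $\W_2$ continuity to the first term directly. To handle it, I will prove that the second moments converge, $M_2(\nu_n)\to M_2(\nu)$. Weak convergence gives $M_2(\nu)\le\liminf M_2(\nu_n)=M_2(\mu)$. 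For the reverse inequality I would use lower semicontinuity of $\aIGW$ under weak convergence on slices: the univariate $\IGW$ is weakly lsc via the dual form of \cref{lem:F2Dual} or via couplings, and Fatou's lemma then applies. Combined with the moment-gap bound on slices, which controls uniform integrability, this should yield $\W_2(\nu_n,\nu)\to0$. The first part of the proof then gives $\aIGW(\nu_n,\nu)\to0$.

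If the lsc route gives trouble, I will fall back on a second approach. I apply the first part directly to show $\aIGW(\nu,\mu)\le\liminf\aIGW(\nu_n,\mu)=0$ by lower semicontinuity, and then get moment convergence afterwards from $\IGW(\nu,\mu)=0$, which forces $M_2(\nu)=M_2(\mu)$. In either case the conclusion is the same. Once $\IGW(\nu,\mu)=0$ and $M_2(\nu_n)\to M_2(\nu)$, the sequence $\nu_n$ converges to $\nu$ in $\W_2$, so $\IGW(\nu_n,\nu)\to0$ by \eqref{eq:W2_upperbound}. Since $\IGW$ is a pseudometric and $\IGW(\mu_n,\mu)=\IGW(\nu_n,\mu)$, I conclude $\IGW(\mu_n,\mu)\le\IGW(\nu_n,\nu)+\IGW(\nu,\mu)\to0$ along the subsequence, which completes the proof.
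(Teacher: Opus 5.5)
Your proof is correct. For the converse it takes a genuinely different and more modular route than the paper.

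\textbf{How the paper argues.} The paper fixes minimizers $\bdelta_n$ and a full-measure set of directions on which the slice costs vanish. It then:
\begin{enumerate}
\item uses \cref{cor:1dCase} to extract signs $s_n^\theta$, and shows they are locally constant in $\theta$ when $(\theta^\intercal)_\sharp\mu$ is not symmetric;
\item proves $\|\mathbf R_n\|_{\op}$ is bounded by contradiction;
\item identifies every weak limit of $(s_n^\theta\id)_\sharp(\bdelta_n)_\sharp\mu_n$ with $\mu$ by rerunning the analytic-continuation argument on the cone over $N_\theta$;
\item obtains moment convergence from Theorem 2.1 of \cite{bayraktar2021strong}, and treats the symmetric case separately.
\end{enumerate}

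\textbf{How you argue.} You get $M_2(\mu_n)\to M_2(\mu)$ in one line from $\IGW(\alpha,\beta)\ge|M_2(\alpha)-M_2(\beta)|$ and $\int M_2((\theta^\intercal)_\sharp\rho)\,d\sigma_d(\theta)=M_2(\rho)/d$. You pass to a weak limit $\nu$ of the rotated measures and use lower semicontinuity plus Fatou. Finally you invoke \cref{prop:metric}(2) as a black box, so all the sign bookkeeping is absorbed there. Your forward direction, taking $\bdelta=\bdelta_n$ and bounding slicewise by $\W_2(\nu_n,\mu)$, is likewise a quantitative replacement for the paper's dominated-convergence step.

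\textbf{Points to tighten.}
\begin{itemize}
\item Your two ``routes'' are really one. Fatou plus slicewise lower semicontinuity directly gives $\int\IGW((\theta^\intercal)_\sharp\nu,(\theta^\intercal)_\sharp\mu)^2\,d\sigma_d(\theta)=0$, hence $\aIGW(\nu,\mu)=0$. Applying the moment-gap inequality at the limit then gives $M_2(\nu)=M_2(\mu)$. The uniform-integrability detour and the triangle inequality with $\aIGW(\nu_n,\nu)$ are unnecessary.
\item Prove the lower semicontinuity via couplings, not the dual form. The dual form is awkward because $c_{\bA}$ is unbounded below and $\|\mathbf R_\alpha\|_{\F}^2$ is only lower semicontinuous. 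With couplings: take near-optimal $\pi_n\in\Pi(\alpha_n,\beta)$, which are tight. Along a subsequence $\pi_n\stackrel{w}{\to}\pi\in\Pi(\alpha,\beta)$, so $\pi_n\otimes\pi_n\stackrel{w}{\to}\pi\otimes\pi$. Portmanteau for the nonnegative continuous integrand $|xx'-yy'|^2$ then gives the claim.
\item Spell out why the analyticity hypothesis applies to $\nu$, which is a limit of \emph{rotated} $\mu_n$. Along a further subsequence $\bar\bdelta_n\to\bar\bdelta$, so $\mu_n\stackrel{w}{\to}(\bar\bdelta^\intercal)_\sharp\nu$. Analyticity of characteristic functions is preserved under orthogonal maps.
\end{itemize}

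\textbf{Trade-off.} Your route is shorter, avoids the delicate sign-consistency argument and the separate symmetric case, and handles the rotated-limit issue explicitly. The paper's route is self-contained at the level of characteristic functions and exhibits the aligning signs and rotations explicitly.
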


As in the proof of \cref{prop:metric}, the difficulty stems from  showing that $\aIGW(\mu_n,\mu) \rightarrow 0$ implies that $\IGW(\mu_n,\mu) \rightarrow 0$, see \cref{proof:prop:topology} for complete details.

We conclude  with a comparison inequality between sliced IGW and the sliced $2$-Wasserstein distance, defined as $\aW_2(\alpha,\beta)^2
\coloneqq
\int_{\unitsphy}\W_2\big((\theta^{\intercal})_{\sharp}\alpha,(\theta^{\intercal})_{\sharp}\beta\big)^2 d\sigma_{d_y}(\theta)$ for $\alpha,\beta\in\mathcal P_2(\mathbb R^d)$.
\begin{proposition}[Comparison]
\label{prop:slicedComparison}
 Fix $\mu\in\cP_2(\RR^{d_x})$, $\nu\in\cP_2(\RR^{d_y})$ and  $\bdelta=[
\bI_{d_x}\; \bm 0
]^{\intercal}\in\mathbb R^{d_y\times d_x}$,
\begin{equation}\label{eq:IGW_vs_W2}
\aIGW(\mu,\nu)^2\mspace{-3mu}
\le\mspace{-3mu}
\int_{\unitsph}\IGW\big((\theta^{\intercal}\bdelta)_{\sharp}\mu,(\theta^{\intercal})_{\sharp}\nu\big)^2\,d\sigma_{d_y}(\theta)\mspace{-3mu}\leq\mspace{-3mu} 2\big(M_2(\mu)\mspace{-1mu}+\mspace{-1mu}M_2(\nu)\big)\,\aW_2((\bdelta)_{\sharp}\mu,\nu)^2.\mspace{-1mu}
\end{equation} 
\end{proposition}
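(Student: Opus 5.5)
The first inequality holds because $\bdelta=[\bI_{d_x}\;\bm 0]^{\intercal}$ lies in $\St(d_x,d_y)$: indeed $\bdelta^{\intercal}\bdelta=\bI_{d_x}$. By \cref{def:slicedIGW}, $\aIGW(\mu,\nu)^2$ is the infimum over $\St(d_x,d_y)$ of the sliced objective, so it is bounded above by the objective evaluated at this particular $\bdelta$.

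For the second inequality I work slice by slice and then integrate. Fix $\theta\in\unitsphy$. The two measures $(\theta^{\intercal}\bdelta)_\sharp\mu=(\theta^{\intercal})_\sharp(\bdelta_\sharp\mu)$ and $(\theta^{\intercal})_\sharp\nu$ both lie in $\cP_2(\RR)$. Applying \eqref{eq:W2_upperbound} with $d=1$ gives
\[
\IGW\big((\theta^{\intercal})_\sharp\bdelta_\sharp\mu,(\theta^{\intercal})_\sharp\nu\big)^2\le 2\Big(M_2\big((\theta^{\intercal})_\sharp\bdelta_\sharp\mu\big)+M_2\big((\theta^{\intercal})_\sharp\nu\big)\Big)\,\W_2\big((\theta^{\intercal})_\sharp\bdelta_\sharp\mu,(\theta^{\intercal})_\sharp\nu\big)^2 .
\]
Next I bound the projected moments uniformly in $\theta$. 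By Cauchy--Schwarz and $\|\theta\|=1$, we have $|\theta^{\intercal}\bdelta x|^2\le\|\bdelta x\|^2=\|x\|^2$, where the last equality uses that $\bdelta$ is an isometry. Hence $M_2((\theta^{\intercal}\bdelta)_\sharp\mu)\le M_2(\mu)$, and likewise $M_2((\theta^{\intercal})_\sharp\nu)\le M_2(\nu)$. Substituting these bounds, integrating over $\theta$ against $\sigma_{d_y}$, and recalling the definition of $\aW_2(\bdelta_\sharp\mu,\nu)^2$ yields the claim.

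The main care point is the convention for $M_2$. The displayed form of \eqref{eq:W2_upperbound} suggests that $M_2$ denotes the second absolute moment $\int\|x\|^2d\mu$, not its square root, and the monotonicity argument works for either convention. If instead \eqref{eq:W2_upperbound} is stated for measures on a common $\RR^d$, it still applies here, since both projected measures live on $\RR$. Measurability of $\theta\mapsto\IGW(\cdots)^2$ in $\theta$ is implicit in \cref{def:slicedIGW}, so nothing further is needed there.
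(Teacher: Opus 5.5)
Your proposal is correct and matches the paper's proof: the first inequality comes from evaluating the infimum in \cref{def:slicedIGW} at the particular $\bdelta\in\St(d_x,d_y)$. The second applies \eqref{eq:W2_upperbound} slice-wise, bounds the projected second moments by $M_2(\mu)$ and $M_2(\nu)$ via Cauchy--Schwarz, and integrates over $\unitsphy$.
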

This result follows essentially by applying  \eqref{eq:W2_upperbound} slice-wise and enables transferring known empirical convergence rates for $\aW_2$ to $\aIGW$, as we may control $\mathbb E[\aIGW(\hat \mu_n,\mu)]$ by $\sqrt{\mathbb E [\aW_2(\hat {\mu}_n,\mu)^2]}$,
 where $\hat\mu_n\coloneqq \frac{1}{n}\sum_{i=1}^n \delta_{X_i}$ is the  empirical measure from $n$ independent and identically distributed (i.i.d.) samples 
$X_1,\ldots,X_n \stackrel{i.i.d.}\sim\mu$.
 
 For instance,
the following result follows by applying \cite[Corollary~2]{nadjahi2020statistical}, see \cref{proof:prop:slicedComparison} for the proof of \cref{prop:slicedComparison} and this corollary. 
\begin{corollary}[Empirical convergence rates]\label{prop:emp_rates_aigw}
Let $\mu\in\cP_q(\RR^{d_x})$, $\nu\in\cP_q(\RR^{d_y})$ for some $q>2$. There exists a constant $C_{q}>0$ depending only on $q$ such that 
\[
\begin{aligned}
\EE\big[\aIGW(\hat\mu_n,\mu)\big]
&\leq
C_q\,\sqrt{M_2(\mu)}\,M_q(\mu)^{1/q} r_q(n),\text{ and}
\\
\EE\Big[\big|\aIGW(\hat\mu_n,\hat\nu_n)-\aIGW(\mu,\nu)\big|\Big]
&\leq
C_q\Big(\sqrt{M_2(\mu)}\,M_q(\mu)^{1/q}+\sqrt{M_2(\nu)}\,M_q(\nu)^{1/q}\Big) r_q(n),
\end{aligned}
\]
where $r_q(n)$ is defined by 
\[
r_q(n)=\begin{cases}
  n^{-(q-2)/(2q)}, &\text{if } q\in(2,4),
  \\
  r_q(n)=n^{-1/4}(\log n)^{\mathbbm 1_{\{q=4\}}1/2}, &\text{if }q=4,
  \\
  r_q(n)=n^{-1/4},&\text{if }q>4.
\end{cases}
\]
\end{corollary}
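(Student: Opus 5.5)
We plan to reduce both bounds to sliced Wasserstein estimates via \cref{prop:slicedComparison}, and then invoke \cite[Corollary~2]{nadjahi2020statistical}.

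\textbf{First bound.} Take $\nu=\mu$ in \cref{prop:slicedComparison}, so that $d_x=d_y=d$ and $\bdelta=\bI_d$. This gives
\[
\aIGW(\hat\mu_n,\mu)^2\le 2\big(M_2(\hat\mu_n)+M_2(\mu)\big)\aW_2(\hat\mu_n,\mu)^2 .
\]
We take square roots and apply Cauchy--Schwarz in expectation:
\[
\EE[\aIGW(\hat\mu_n,\mu)]\le \sqrt{2}\,\EE\big[(M_2(\hat\mu_n)+M_2(\mu))\big]^{1/2}\,\EE\big[\aW_2(\hat\mu_n,\mu)^2\big]^{1/2}.
\]
Since $\EE M_2(\hat\mu_n)=M_2(\mu)$, the first factor is at most $2\sqrt{M_2(\mu)}$. (Here $M_2$ denotes the second moment $\int\|x\|^2d\mu$, consistent with \eqref{eq:W2_upperbound}.) For the second factor we use the sliced Wasserstein empirical rate. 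Each slice $(\theta^\intercal)_\sharp\hat\mu_n$ is the empirical measure of $(\theta^\intercal)_\sharp\mu$, whose $q$-th moment is at most $M_q(\mu)$. The one-dimensional rate (Fournier--Guillin, as used in \cite[Corollary~2]{nadjahi2020statistical}) then gives $\EE\,\W_2^2\lesssim_q M_q^{2/q}\,r_q(n)^2$ uniformly in $\theta$. Integrating over the sphere via Fubini yields $\EE[\aW_2(\hat\mu_n,\mu)^2]^{1/2}\lesssim_q M_q(\mu)^{1/q}r_q(n)$. Here we assume $r_q$ in the cited corollary is the rate for $\EE\aW_2^2$ after taking a square root; if the cited statement instead controls $\EE\aW_2$, we will redo the one-dimensional bound directly for the second moment. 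Combining the two factors gives the first bound.

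\textbf{Second bound.} We need a triangle inequality for $\aIGW$, which \cref{prop:metric} provides only for measures on the same space. We intend to prove directly that for $\mu,\mu'$ on $\RR^{d_x}$ and $\nu,\nu'$ on $\RR^{d_y}$,
\[
|\aIGW(\mu',\nu')-\aIGW(\mu,\nu)|\le \aIGW(\mu,\mu')+\aIGW(\nu,\nu').
\]
\begin{itemize}
\item[(a)] For fixed $\bdelta$, the map $\theta\mapsto \IGW((\theta^\intercal\bdelta)_\sharp\mu,(\theta^\intercal)_\sharp\nu)$ satisfies the triangle inequality slice-wise, because $\IGW$ is a pseudometric on $\cP_2(\RR)$. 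The Minkowski inequality in $L^2(\sigma_{d_y})$ then transfers this to the integrated quantity.
\item[(b)] To compare $\mu$ with $\mu'$, we first fix $\bdelta_0\in\St(d_x,d_x)$ optimal for $\aIGW(\mu,\mu')$, which exists by \cref{lem:min_attained_slicing}. We then compose it with the optimal $\bdelta$ for $\aIGW(\mu,\nu)$, giving the Stiefel element $\bdelta\bdelta_0$.
\item[(c)] The composed term involves slices $\theta^\intercal\bdelta$ with $\theta\in\mathbb S^{d_y-1}$, whose pushforwards are not uniform on $\mathbb S^{d_x-1}$. The norm $\|\bdelta^\intercal\theta\|$ has a known distribution, so we rescale using the homogeneity $\IGW((s\,u^\intercal)_\sharp\alpha,(s\,u^\intercal)_\sharp\beta)=s^2\,\IGW((u^\intercal)_\sharp\alpha,(u^\intercal)_\sharp\beta)$. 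The resulting constant is $\EE\|\bdelta^\intercal\theta\|^4$ relative to the uniform case, which is at most $1$.
\end{itemize}

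We expect step (c) to be the main technical obstacle. Should it fail, we will instead bound the perturbation directly through the comparison \eqref{eq:IGW_vs_W2} applied slice-wise, using the same $\bdelta$. Once the triangle inequality is in hand, we apply the first bound to $\hat\mu_n$ and to $\hat\nu_n$ separately, which gives the second estimate.
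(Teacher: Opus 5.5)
Your proposal is correct and follows the paper's proof. The paper likewise uses \cref{prop:slicedComparison} with $\bdelta=\bI_d$, then Cauchy--Schwarz together with $\EE[M_2(\hat\mu_n)]=M_2(\mu)$, then the bound $\EE[\aW_2(\hat\mu_n,\mu)^2]\lesssim_q M_q(\mu)^{2/q}r_q(n)^2$, and finally the triangle inequality; the only difference in the third step is that the paper quotes this bound directly from Eq.~(S40) of \cite{nadjahi2020statistical}, while you derive it slice-wise from the one-dimensional rate.

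The one genuine difference is in the last step. The paper simply asserts $|\aIGW(\hat\mu_n,\hat\nu_n)-\aIGW(\mu,\nu)|\le\aIGW(\hat\mu_n,\mu)+\aIGW(\hat\nu_n,\nu)$, although \cref{prop:metric} only establishes the triangle inequality on a common $\RR^d$. You instead prove the cross-dimensional version that is actually needed when $d_x\neq d_y$. Your step (c) is sound, because for $\theta\sim\sigma_{d_y}$ the direction $\bdelta^{\intercal}\theta/\|\bdelta^{\intercal}\theta\|$ is uniform on $\mathbb S^{d_x-1}$ and independent of $\|\bdelta^{\intercal}\theta\|$. This yields the factor $\EE\|\bdelta^{\intercal}\theta\|^4=\frac{d_x(d_x+2)}{d_y(d_y+2)}\le 1$. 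The $\nu$-side inequality is easier and needs only the rotation $\theta\mapsto\bgamma^{\intercal}\theta$ with $\bgamma\in\St(d_y,d_y)$.
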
 

Faster rates are attainable under additional assumptions. To wit, if  $\mu$ is log-concave, $\mathbb E[\aIGW(\hat \mu_n,\mu)]=O(\sqrt{\log (n)/n})$, which is optimal up to $\log$ factors  \cite[Theorem 1, Remark 11]{nietert2022statistical}. 

\subsection{Averaging via Monte Carlo Integration}\label{subsec:MC_slicedIGW}

Since $\aIGW$ requires integrating the one-dimensional IGW costs over $\unitsphy$, a natural approximation is Monte Carlo (MC) integration over random directions as in the sliced Wasserstein case \cite{nadjahi2019asymptotic,nietert2022statistical}. Fixing $\mu\in\cP_2(\RR^{d_x})$ and $\nu\in\cP_2(\RR^{d_y})$ with $d_x\le d_y$,  we write $\aIGW(\mu,\nu)=\inf_{\bdelta\in\St(d_x,d_y)} \aIGW_\bdelta(\mu,\nu)$, where 
$
    \aIGW_\bdelta(\mu,\nu)^2
    \coloneqq
    \int_{\unitsphy} g_\bdelta(\theta)\,d\sigma_{d_y}(\theta),$ for $
    g_\bdelta(\theta)\coloneqq \IGW\big((\theta^{\intercal}\bdelta)_{\sharp}\mu,(\theta^{\intercal})_{\sharp}\nu\big)^2.
$
To approximate this integral for a fixed $\bdelta\in\St(d_x,d_y)$ given $\Theta_1,\ldots,\Theta_m\stackrel{i.i.d.}{\sim}\sigma_{d_y}$, we characterize the error of the MC estimator of $\aIGW_\bdelta(\mu,\nu)^2$, defined by
$   \widehat{\aIGW}_{\bdelta,m}(\mu,\nu)^2
    \coloneqq
    \frac{1}{m}\sum_{j=1}^m g_\bdelta(\Theta_j).
$ This will allow us to characterize the MC error of $
    \inf_{\bdelta\in\St(d_x,d_y)}  \widehat{\aIGW}_{\bdelta,m}(\mu,\nu)^2$.
\begin{proposition}[MC error]\label{prop:MC_slicedIGW}
Fix $\mu\in\cP_2(\RR^{d_x})$, $\nu\in\cP_2(\RR^{d_y})$ and
 $\bdelta\in\St(d_x,d_y)$. Then,
 \[
\EE\Big[\Big|\widehat{\aIGW}_{\bdelta,m}(\mu,\nu)^2-\aIGW_\bdelta(\mu,\nu)^2\Big|\Big]
    \leq
    \frac{4\sqrt{2}(2\pi)^{1/4}(M_2(\mu)+M_2(\nu))^2}{\sqrt{md_y}},\]
 and, setting  $\widehat{\aIGW}_{m}(\mu,\nu)^2
    \coloneqq
    \inf_{\bdelta\in\St(d_x,d_y)}  \widehat{\aIGW}_{\bdelta,m}(\mu,\nu)^2$ and $L=4M_2(\mu)^2 + 4M_2(\mu)M_2(\nu)$, 
\[\EE\Big[\Big|\widehat{\aIGW}_{m}(\mu,\nu)^2-\aIGW(\mu,\nu)^2\Big|\Big] \lesssim
\big(M_2(\mu)^2+M_2(\nu)^2\big)\sqrt{\left(d_xd_y - \frac{d_x(d_x-1)}{2}\right)\frac{\log \left(L^2 m\right)}{m}}.\]
\end{proposition}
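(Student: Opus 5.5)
For fixed $\bdelta$ the plan is a variance bound. By Cauchy--Schwarz and independence of the $\Theta_j$,
\[
\EE\big|\widehat{\aIGW}_{\bdelta,m}^2-\aIGW_\bdelta^2\big|\le \sqrt{\Var(g_\bdelta(\Theta))/m}\le \sqrt{\EE[g_\bdelta(\Theta)^2]/m}.
\]
From \cref{thm:UnivariateIGW} (or directly from the product coupling), $g_\bdelta(\theta)\le M_2((\theta^\intercal\bdelta)_\sharp\mu)^2+M_2((\theta^\intercal)_\sharp\nu)^2$. Moreover $M_2((\theta^\intercal)_\sharp\nu)=\theta^\intercal\bR_\nu\theta=\int (\theta^\intercal y)^2d\nu(y)$, and similarly for $\mu$ with $\bdelta x$ in place of $y$; note $\|\bdelta x\|=\|x\|$. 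Hence
\[
g_\bdelta(\theta)^2\lesssim \Big(\int (\theta^\intercal\bdelta x)^2d\mu\Big)^4+\Big(\int(\theta^\intercal y)^2d\nu\Big)^4 .
\]
By Jensen (or Minkowski) in $\mu$, $\EE_\theta(\int(\theta^\intercal z)^2d\mu)^4\le (\int \|z\|^2 (\EE_\theta[(\theta^\intercal u)^8])^{1/4}d\mu)^4$ with $u=z/\|z\|$. We then use the spherical moment $\EE[(\theta^\intercal u)^{8}]=\frac{105}{d_y(d_y+2)(d_y+4)(d_y+6)}\lesssim d_y^{-4}$, or a Gaussian-moment comparison, which is where a $(2\pi)^{1/4}$-type constant would arise. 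This gives $\sqrt{\EE g^2}\lesssim (M_2(\mu)+M_2(\nu))^2/d_y$, which is even better than the claimed $1/\sqrt{d_y}$. If tracking the exact constant is awkward, I would settle for the weaker $d_y^{-1/2}$ using only fourth spherical moments, via $g\le (M_2(\mu)+M_2(\nu))\cdot(\text{projected second moments})$. The exact constant $4\sqrt2(2\pi)^{1/4}$ I would match by bounding $\EE[(\theta^\intercal u)^4]^{1/2}$ via the Gaussian representation $\theta=G/\|G\|$.

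For the uniform statement the plan is
\[
|\widehat{\aIGW}_m^2-\aIGW^2|\le\sup_{\bdelta\in\St}|\widehat{\aIGW}_{\bdelta,m}^2-\aIGW_\bdelta^2|,
\]
followed by bounding the expected supremum with a covering argument. First I need a Lipschitz bound in $\bdelta$. Writing $\IGW^2$ via \cref{lem:F2Dual} in the univariate case, the terms $M_2(\cdot)^2$ are smooth in $\bdelta$, and the infimum over $\bA$ of costs affine in the coupling is Lipschitz. More directly, \eqref{eq:W2_upperbound} and the triangle inequality give
\[
|g_\bdelta(\theta)-g_{\bdelta'}(\theta)|\le\big(\IGW_\bdelta+\IGW_{\bdelta'}\big)\,\IGW\big((\theta^\intercal\bdelta)_\sharp\mu,(\theta^\intercal\bdelta')_\sharp\mu\big)\lesssim L\|\bdelta-\bdelta'\|_\op,
\]
using $\W_2\le \|\bdelta-\bdelta'\|_\op\sqrt{M_2(\mu)}$ under the coupling $(\theta^\intercal\bdelta x,\theta^\intercal\bdelta' x)$. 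This holds pointwise in $\theta$, so it applies to both the empirical and population objectives. I expect $L=4M_2(\mu)^2+4M_2(\mu)M_2(\nu)$ to come out of this computation.

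The Stiefel manifold has dimension $D=d_xd_y-d_x(d_x-1)/2$ and diameter $O(\sqrt{d_x})$ in operator norm, so an $\epsilon$-net has cardinality at most $(C/\epsilon)^D$. On each net point, $g_\bdelta(\Theta_j)\le (M_2(\mu)+M_2(\nu))^2$ is bounded. Hoeffding plus a union bound then give
\[
\EE\max_{\text{net}}|\cdot|\lesssim (M_2(\mu)^2+M_2(\nu)^2)\sqrt{D\log(C/\epsilon)/m}.
\]
The discretization error is $2L\epsilon$; choosing $\epsilon=1/(L\sqrt m)$ yields the $\log(L^2m)$ factor. The main obstacle is keeping the constants and the log argument exactly as stated, in particular absorbing the $2L\epsilon$ term and the net-size constant into $\lesssim$. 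I expect this to be routine but fiddly.
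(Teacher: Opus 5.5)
Your proof is correct, but both halves take a different route from the paper's.

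\textbf{Fixed $\bdelta$.} The paper does not use the size of $g_\bdelta$ here. It combines the $\theta$-Lipschitz constant $L_{\mu,\nu}=4(M_2(\mu)+M_2(\nu))^2$ from \cref{lemma:IGW_lip} with Lipschitz concentration on $\unitsphy$ to get $\Var(g_\bdelta(\Theta))\le 2\sqrt{2\pi}L_{\mu,\nu}^2/d_y$. The constant $4\sqrt2(2\pi)^{1/4}$ comes from that tail bound, so there is nothing to ``match'' through $\theta=G/\|G\|$. Your ingredients are $\Var\le\EE[g_\bdelta^2]$, $g_\bdelta\le M_2((\theta^\intercal\bdelta)_\sharp\mu)^2+M_2((\theta^\intercal)_\sharp\nu)^2$, Minkowski in $L^4(\sigma_{d_y})$, and $\EE[\theta_1^8]=105/(d_y(d_y+2)(d_y+4)(d_y+6))$. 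They avoid the Lipschitz lemma and already close the argument with an explicit constant:
\[
\EE\Big[\Big|\widehat{\aIGW}_{\bdelta,m}(\mu,\nu)^2-\aIGW_\bdelta(\mu,\nu)^2\Big|\Big]\le\sqrt{\frac{210}{d_y(d_y+2)(d_y+4)(d_y+6)}}\;\frac{M_2(\mu)^2+M_2(\nu)^2}{\sqrt m}.
\]
This decays like $d_y^{-2}$, not $d_y^{-1}$ as you wrote. Since $105\le(d_y+2)(d_y+4)(d_y+6)$, it is at most $\sqrt2(M_2(\mu)+M_2(\nu))^2/\sqrt{md_y}$, well below the stated bound. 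The gain comes from using that $g_\bdelta$ is typically of size $(M_2(\mu)^2+M_2(\nu)^2)/d_y^2$, which a Lipschitz constant in $\theta$ cannot detect.

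\textbf{Uniform bound.} The paper symmetrizes with Rademacher signs and invokes Dudley's entropy integral. It then bounds that integral by its integrand at the lower endpoint, so chaining buys nothing. Your single-scale argument (finite net, Hoeffding, union bound, plus the reduction $|\inf-\inf|\le\sup|\cdot|$ that the paper also uses) is more elementary. It gets the same rate from the same two inputs: a Lipschitz bound in $\bdelta$ uniform in $\theta$, and a covering bound for $\St(d_x,d_y)$.

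Your Lipschitz estimate via the triangle inequality for $\IGW$ and \eqref{eq:W2_upperbound} is valid and shorter than the computation in \cref{appen:IGW_lip_proof}. It gives $4M_2(\mu)\sqrt{M_2(\mu)^2+M_2(\nu)^2}\le L$, in operator norm.

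On the net, two small corrections. First, $d_xd_y-d_x(d_x-1)/2$ is the exponent of the paper's column-by-column construction, not $\dim\St(d_x,d_y)=d_xd_y-d_x(d_x+1)/2$. Second, the operator-norm diameter of $\St(d_x,d_y)$ is $2$, not $O(\sqrt{d_x})$. Because your Lipschitz bound is in $\|\cdot\|_\op$, the volumetric bound $(3/\epsilon)^{d_xd_y}$ for subsets of the operator-norm unit ball suffices, since $d_xd_y\le 2\big(d_xd_y-d_x(d_x-1)/2\big)$. This even avoids the dimension-dependent $C_{d_x,d_y}$ in the paper's proof.

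One adjustment is needed. With $\epsilon=1/(L\sqrt m)$ the discretization term is $2/\sqrt m$. It does not scale like $M_2(\mu)^2+M_2(\nu)^2$, so no universal constant absorbs it; the paper's $8\epsilon$ with $\epsilon\sim m^{-1/2}$ has the same defect. Instead take $\epsilon=(M_2(\mu)^2+M_2(\nu)^2)/(L\sqrt m)$ and use $L\le 6(M_2(\mu)^2+M_2(\nu)^2)$. This yields a scale-free $\log m$, which implies the stated form whenever $L\ge 1$. The stated $\log(L^2m)$ is not scale-invariant and degenerates for $L^2m\le 1$, so some such normalization is implicit in the statement anyway.
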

The proof of this result relies on Lipschitz continuity in $\theta$ and $\bm \Delta$ of $\IGW\big((\theta^{\intercal}\bdelta)_{\sharp}\mu,(\theta^{\intercal})_{\sharp}\nu\big)^2$ and employs a Rademacher symmetrization argument. Complete details are included in \cref{proof:prop:MC_slicedIGW}. 

To account for both the empirical error and the MC error, note that the fully data-driven estimator
$
    \widehat{\aIGW}_{m,n}
    \coloneqq
    \left(\inf_{\bdelta\in\St(d_x,d_y)}
    \frac{1}{m}\sum_{j=1}^m \IGW\big((\Theta_j^{\intercal}\bdelta)_{\sharp}\hat\mu_n,(\Theta_j^{\intercal})_{\sharp}\hat\nu_n\big)^2\right)^{1/2}
$
approximates $\aIGW(\mu,\nu)$ with error at most $\big|\widehat{\aIGW}_{m,n}-\widehat{\aIGW}(\hat \mu_n,\hat \nu_n)\big|
+
\big|\widehat{\aIGW}(\hat \mu_n,\hat \nu_n)-\aIGW(\mu,\nu)\big|$ by the triangle inequality. These terms can be controlled (in expectation) by appealing to \cref{prop:emp_rates_aigw} and
  \cref{prop:MC_slicedIGW}, noting that $|a-b|\leq \sqrt{|a^2-b^2|}$ for $a,b\geq 0$ and applying Jensen's inequality.  

\begin{remark}[Comparison to sliced Wasserstein]
Compared with the Wasserstein case \cite{nietert2022statistical}, whose the rate matches that where $\bdelta$ is fixed   in \cref{prop:MC_slicedIGW}, the bound for IGW incurs an additional $\sqrt{\log(m)}$ factor in the MC error. This extra logarithmic term arises from the covering argument used to control the discrepancy between infima over the Stiefel manifold.
\end{remark}

\subsection{Optimization over the Stiefel Manifold}\label{subsec:Stiefel_slicedIGW}

To this point, we have quantified the MC error in \cref{prop:MC_slicedIGW} and derived a closed-form solution for univariate IGW in \cref{thm:UnivariateIGW}. %
To account for the minimization over $\St(d_x,d_y)$, suppose we are given fixed $(\theta^{(i)})_{i=1}^R$ (realizations of samples from $ \sigma_{d_y}$) and aim to solve
\begin{equation}
\label{eq:stiefelOptimization}
\inf_{\bdelta\in\mathbb R^{d_y\times d_x}} F(\bdelta)\coloneqq  \frac{1}{R}\sum_{i=1}^R\mathsf{IGW}\left(((\theta^{(i)})^{\intercal}\bdelta)_{\sharp}\mu,((\theta^{(i)})^{\intercal})_{\sharp}\nu\right)^2 \text{ s.t. } \left\| \bdelta^{\intercal}\bdelta - \mathbf I_{d_x}\right\|_{\mathrm{F}}^2 = 0, 
\end{equation}
where the sum is a proxy for the integral, following \cref{prop:MC_slicedIGW}.
Letting $\mathbf C_{\pi}= \int xy^{\intercal}d\pi(x,y)$ for $\pi\in\Pi(\mu,\nu)$, \cref{lem:pushforwardMeasures} yields that
$
\mathsf{IGW}\left((\theta^{\intercal}\bdelta)_{\sharp}\mu,(\theta^{\intercal})_{\sharp}\nu\right)^2 =\left(\theta^{\intercal}\bm \Delta \bR_{\mu}\bm \Delta^{\intercal} \theta\right)^2+\left(\theta^{\intercal}\bR_{\nu} \theta\right)^2
+\inf_{\pi\in \Pi(\mu,\nu)}-2\left(\theta^{\intercal}\bdelta\mathbf C_{\pi}\theta\right)^2, 
$
so that 
the objective in \eqref{eq:stiefelOptimization} is a sum of nonsmooth and nonconvex functions (in $\bdelta$). Indeed, $\bm \Delta \mapsto \inf_{\pi\in \Pi(\mu,\nu)}-2\left(\theta^{\intercal}\bdelta\mathbf C_{\pi}\theta\right)^2$ is not necessarily differentiable (see \cref{prop:subgradient}) and is concave (see Proposition 2.9 and Exercise 2.20 (a) in \cite{rockafellar1998variational}). 

Given the particular structure of $F$ (nonconvex, nonsmooth, and not clearly weakly-convex), standard approaches to  optimization over the Stiefel manifold   \cite{zhu2019linearly,li2021weakly,huang2022riemannian} are not subject to formal convergence guarantees. To obtain provably convergent algorithms in the current setting, we thus follow the alternative approach of \cite{xiao2024dissolving,hu2024constraint} which propose to bypass the intricacies of Riemannian optimization methods via a  ``constraint dissolving'' approach. Simply put, these works establish a correspondence between solving a problem over a Riemannian manifold and solving an unconstrained optimization problem. In the particular case of the Stiefel manifold, \cite{hu2024constraint} proposes to minimize 
\begin{equation}
    H(\mathbf \Delta)=  F(A(\mathbf{\Delta})) + \frac{\beta}{4}\left\| \mathbf{\Delta}^{\intercal}\mathbf{\Delta}-\mathbf{I}_{d_x}\right\|_{\mathrm{F}}^2,\text{ for } 
    A(\mathbf{\Delta}) = \frac{1}{8}\mathbf{\Delta}\left(  15\mathbf{I}_{d_x} - 10\mathbf{\Delta}^{\intercal}\mathbf{\Delta}+3(\mathbf{\Delta}^{\intercal}\mathbf{\Delta})^2 \right)\label{eq:cont_disslve_proxy}
\end{equation}
over the entire space $\mathbb R^{d_y\times d_x}$ for a suitable choice of $\beta>0$. The composition with $A(\cdot)$ aims to correct for the curvature of the manifold and the added term penalizes deviations from $\St(d_x,d_y)$. As the new objective, $H$, inherits the nonconvexity and nonsmoothness of $F$, we minimize it via a standard subgradient method, \cref{alg:sgm}, whose convergence properties were studied in \cite{davis2020stochastic}. Throughout, the subdifferential of a locally Lipschitz function is understood in the sense of Clarke \cite{clarke1990optimization}.  
\begin{proposition}[Subdifferential]
\label{prop:subgradient}
    For any $\bm \Delta\in\mathbb R^{d_y\times d_x}$, the Clarke subdifferential of $H$ is
    \[
\begin{gathered}
 \partial H(\mathbf \Delta)=  DA_{[\bdelta]}\left(\partial F(A(\mathbf{\Delta}))\right) + \beta\bdelta\left( \mathbf{\Delta}^{\intercal}\mathbf{\Delta}-\mathbf{I}_{d_x}\right)\text{ for }   
\\
DA_{[\mathbf \Delta]}(\bm \Xi)
\mspace{-2mu}=\mspace{-2mu}\frac{1}{8}\bm \Xi\left(15 \bm I_{d_x}\mspace{-2mu}-\mspace{-2mu}10\bm\Delta^{\intercal}\bm \Delta\mspace{-2mu}+\mspace{-2mu} 3(\bm \Delta^{\intercal}\bm \Delta)^2\right)\mspace{-2mu}-\mspace{-2mu}\bm \Delta\Phi\left( \mathbf \Delta^{\intercal}\mathbf \Xi \right)\mspace{-2mu} +\mspace{-2mu} \frac 32 \mathbf \Delta\Phi\left(\Phi(\mathbf \Delta^{\intercal}\mathbf \Xi) \left(\mathbf \Delta^{\intercal}\mathbf \Delta\mspace{-2mu}-\mspace{-2mu}\mathbf I_{d_x}\right)\right)
\end{gathered}
\]
 where, for a matrix $\bm M\in\mathbb R^{d_y\times d_x}$, we set $\Phi(\bm M)=\frac{1}2 (\mathbf M+\mathbf {M}^{\intercal})$, and 
 \[
        \partial F(\bdelta)= \frac{1}{R}\sum_{i=1}^{R}\left(4\left((\theta^{(i)})^{\intercal} \bdelta \mathbf R_{\mu}\bdelta^{\intercal} \theta^{(i)}\right) \left(\mathbf{R}_{\mu}\bdelta^{\intercal} \theta^{(i)}(\theta^{(i)})^{\intercal}\right)^{\intercal} + \partial G^{\theta^{(i)}}(\bdelta)\right),
    \]
    where %
$        \partial G^{\theta}(\bdelta) \mspace{-2mu}=\mspace{-2mu} 
            \conv\mspace{-2mu}\left\{ -4  \theta^{\intercal}\bdelta \mathbf C_{\pi_{\theta,\bdelta}}\theta  \left(\mathbf C_{\pi_{\theta,\bdelta}}\theta\theta^{\intercal}\right)^{\intercal}\mspace{-2mu}:\mspace{-2mu}
            \pi_{\theta,\bdelta}\in\argmin_{\pi\in \Pi(\mu,\nu)}\mspace{-2mu}-2\left( \theta^{\intercal}\bdelta \mathbf C_{\pi} \theta \right)^2 \right\}
            $
           for each $\theta\in\mathbb S^{d_y-1}$ and $\bdelta\in\mathbb R^{d_y\times d_x}$, 
            with $\conv(\cdot)$ denoting the closed convex hull of a set.
\end{proposition}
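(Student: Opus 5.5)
The plan is to compute the Clarke subdifferential of $H$ by splitting it into three layers: the smooth penalty, the chain rule through the smooth map $A$, and the subdifferential of $F$ itself.

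\emph{Penalty term.} The term $\frac{\beta}{4}\|\bdelta^{\intercal}\bdelta-\bI_{d_x}\|_\F^2$ is smooth. Its gradient is $\beta\bdelta(\bdelta^{\intercal}\bdelta-\bI_{d_x})$, obtained by differentiating $\Tr\big((\bdelta^{\intercal}\bdelta-\bI)^2\big)$. Since one summand is $C^1$, the Clarke sum rule holds with equality, so $\partial H(\bdelta)=\partial (F\circ A)(\bdelta)+\beta\bdelta(\bdelta^{\intercal}\bdelta-\bI_{d_x})$.

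\emph{Chain rule through $A$.} The map $A$ is a polynomial, hence $C^1$. For the composition $F\circ A$, the Clarke chain rule gives $\partial(F\circ A)(\bdelta)\subseteq DA_{[\bdelta]}^{*}\partial F(A(\bdelta))$, and equality needs a regularity assumption. To get equality I would show that $-F$ is Clarke regular. By \cref{lem:pushforwardMeasures}, $F$ is a smooth quadratic in $\bdelta$ plus $\frac1R\sum_i G^{\theta^{(i)}}$, where $G^\theta(\bdelta)=\inf_{\pi}-2(\theta^{\intercal}\bdelta\mathbf C_\pi\theta)^2$. Each $G^\theta$ is a pointwise infimum of smooth functions over the compact set $\Pi(\mu,\nu)$ with weak topology (assuming $\mu,\nu$ have finite second moments, so $\pi\mapsto \mathbf C_\pi$ is continuous on it), and it is concave. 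Hence $-G^\theta$ is a max of smooth functions and is regular. For functions $f$ with $-f$ regular, the chain rule holds with equality because $\partial(-f)=-\partial f$. The smooth parts do not affect this.

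I then need to identify the adjoint $DA_{[\bdelta]}^*$ under the Frobenius inner product, and this is where the formula for $DA_{[\bdelta]}(\bm\Xi)$ comes from. Differentiating $A$ in direction $\bm E$ gives
\[
\tfrac18\big[\bm E(15\bI-10\bdelta^{\intercal}\bdelta+3(\bdelta^{\intercal}\bdelta)^2)+\bdelta(-10 S+3(S\bdelta^{\intercal}\bdelta+\bdelta^{\intercal}\bdelta S))\big],\qquad S=\bm E^{\intercal}\bdelta+\bdelta^{\intercal}\bm E.
\]
Taking adjoints uses that the adjoint of $\bm E\mapsto \bm E^{\intercal}\bdelta+\bdelta^{\intercal}\bm E$ maps $\bm M$ to $2\bdelta\Phi(\bm M)$. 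After regrouping, the symmetric pieces should give exactly $-\bdelta\Phi(\bdelta^{\intercal}\bm\Xi)+\frac32\bdelta\Phi(\Phi(\bdelta^{\intercal}\bm\Xi)(\bdelta^{\intercal}\bdelta-\bI))$. To match the stated form, I would add and subtract $\bI$ inside $3(\cdots)$, combining the $-10$ and $+6$ terms. This bookkeeping is routine but error-prone.

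\emph{Subdifferential of $F$.} The smooth part $(\theta^{\intercal}\bdelta\bR_\mu\bdelta^{\intercal}\theta)^2$ has gradient $4(\theta^{\intercal}\bdelta\bR_\mu\bdelta^{\intercal}\theta)\theta\theta^{\intercal}\bdelta\bR_\mu$, which matches the stated transpose expression. The term $(\theta^{\intercal}\bR_\nu\theta)^2$ is constant in $\bdelta$. For $G^\theta$ I would apply Danskin's theorem for the Clarke subdifferential of a max of a family of $C^1$ functions $\bdelta\mapsto 2(\theta^{\intercal}\bdelta\mathbf C_\pi\theta)^2$ indexed by a compact set, with jointly continuous gradients. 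This gives $\partial(-G^\theta)(\bdelta)=\conv\{\nabla_\bdelta 2(\theta^{\intercal}\bdelta\mathbf C_\pi\theta)^2:\pi \text{ optimal}\}$. Negating yields the stated $\partial G^\theta$, since $\nabla_\bdelta(\theta^{\intercal}\bdelta\mathbf C_\pi\theta)^2=2(\theta^{\intercal}\bdelta\mathbf C_\pi\theta)\theta\theta^{\intercal}\mathbf C_\pi^{\intercal}$. Finally, summing over $i$ is exact because all summands share the same regularity (each is the negative of a regular function), so the sum rule holds with equality.

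The main obstacle is ensuring equality rather than mere inclusion in both the sum and chain rules, which hinges on the regularity of $-F$. I would handle it by working throughout with $-H$ restricted to the nonsmooth part, so that every rule is applied to regular functions.
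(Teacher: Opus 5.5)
Your proposal is correct. It uses the same three-layer decomposition as the paper (smooth penalty, chain rule through $A$, subdifferential of $F$), but the two substantive steps are justified differently.

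\textbf{Chain rule through $A$.} The paper does not prove this step. It imports both the identity $\partial H(\bdelta)=DA_{[\bdelta]}(\partial F(A(\bdelta)))+\beta\bdelta(\bdelta^{\intercal}\bdelta-\bI_{d_x})$ and the formula for $DA_{[\bdelta]}$ from \cite{hu2024constraint}. You instead obtain equality from Clarke's chain rule for a strictly differentiable inner map, using regularity of $-F$. You then compute the Frobenius adjoint by hand.

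Your bookkeeping is right: after splitting $\bdelta^{\intercal}\bdelta=(\bdelta^{\intercal}\bdelta-\bI_{d_x})+\bI_{d_x}$, the coefficients combine as $-\tfrac{10}{4}+\tfrac{6}{4}=-1$. The derivative-versus-adjoint distinction is moot anyway. Indeed, $A=\nabla\phi$ with $\phi(\bdelta)=\tfrac1{16}\Tr\big(15\bdelta^{\intercal}\bdelta-5(\bdelta^{\intercal}\bdelta)^2+(\bdelta^{\intercal}\bdelta)^3\big)$, so $DA_{[\bdelta]}$ is self-adjoint.

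You also correctly identify $-F$, not $F$, as the regular function. Since $F$ is smooth plus concave, it is $-F$ that is regular in general. The paper elsewhere asserts regularity of $F$ itself, which your argument does not need.

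\textbf{Subdifferential of $G^{\theta}$.} The paper uses that $\mu,\nu$ are finitely supported. Then $\Pi(\mu,\nu)$ is a polytope, the concave objective is minimized at vertices, and $G^{\theta}$ is a finite minimum of smooth functions. Clarke's finite max rule (Proposition 2.3.12 of \cite{clarke1990optimization}, applied to $-G^{\theta}$) then gives the result. You replace this with a Danskin-type theorem over the weakly compact set $\Pi(\mu,\nu)$, using finite second moments to get continuity of $\pi\mapsto\mathbf C_\pi$.

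\textbf{What each route buys.} Your route needs only $\mu,\nu\in\cP_2$; the statement itself imposes no finite-support assumption, though the paper's proof uses one. It also directly produces the convex hull over all optimal couplings, exactly as stated. The vertex argument yields the hull over optimal vertices. These coincide, but only after a small extra observation: on the face where $\theta^{\intercal}\bdelta\mathbf C_\pi\theta=\pm\max|\cdot|$, the gradient is affine in $\pi$. In exchange, the paper's route is more elementary. It also provides the finite-min representation of $F$ that the paper reuses for semialgebraicity in the convergence proof.
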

The proof of \cref{prop:subgradient} follows by applying a known chain rule for $H$, Proposition 3.6 in \cite{hu2024constraint}, and by computing the subdifferential of $F$, see  \cref{proof:prop:subgradient} for details.

\begin{algorithm}[!t]
\caption{Subgradient method for sliced IGW}\label{alg:sgm}
\begin{algorithmic}[1]
\Statex Fix $\bm{\Delta}_1$ with $\|\bm{\Delta}_1 ^{\intercal}\bm{\Delta}_1-\bm I_{d_x}\|_{\mathrm{F}}\leq 1/6$, $0\leq K\in\mathbb N$, $\beta>0$, and a step sequence $(\eta_k)_{k\in\mathbb N}$,
\For{$k=1,\dots, K$} 
\State $\mathbf G_k\gets DA_{[\bm \Delta_k]}(\bm S_k)+\beta \bdelta_k(\bdelta_k^{\intercal}\bdelta_k-\bI_{d_x})$\text{ for some }$\mathbf S_k\in\partial F(A(\bm \Delta_k))$ 
\State $\bm \Delta_{k+1}\gets\bm \Delta_k-\eta_k\bm G_k$
\EndFor
\end{algorithmic}
\end{algorithm}

Having characterized the subdifferential, we employ \cref{alg:sgm} to obtain  critical points of \eqref{eq:cont_disslve_proxy}. Namely, the algorithm aims to recover a point $\bar \bdelta\in\mathbb R^{d_y\times d_x}$ for which $0\in\partial H(\bar \bdelta)$ which, by Proposition 2.3.2 in \cite{clarke1990optimization}, is a necessary first order condition for optimality. By contrast, a Riemannian critical point for the original problem \eqref{eq:stiefelOptimization}  is a point $\bdelta\in \St(d_x,d_y)$ for which $0\in\partial_R F( \bdelta)$, where $\partial_R F( \bdelta)$ is the Riemannian Clarke subdifferential \cite[Theorem 5.1]{yang2014optimality},
$
\partial_{{R}}F(\mathbf \Delta)=
\left\{ \mathbf Z-\frac{1}{2}\mathbf \Delta(\mathbf \Delta^{\intercal}\mathbf Z +\mathbf Z^{\intercal} \mathbf \Delta ) : \mathbf Z \in \partial F(\mathbf \Delta)\right\}.$
Here, the term $\mathbf Z-\frac{1}{2}\mathbf \Delta(\mathbf \Delta^{\intercal}\mathbf Z +\mathbf Z^{\intercal} \mathbf \Delta)$ corresponds to the projection of $\mathbf Z\in\partial F(\mathbf \Delta)$ onto the tangent space to $\St(d_x,d_y)$ at $\bm\Delta$   
(cf. e.g., \cite{absil2009optimization} Example 3.6.2), and the above formula follows from Theorem 5.1 in \cite{yang2014optimality}, noting that $F$ is regular\footnote{$F$ is regular if, for every $\bdelta\in\mathbb R^{d_y\times d_x}$,
 $F'(\bdelta;\bm\Xi)\coloneqq\lim_{t\downarrow 0}\frac{F(\bdelta+t\bm\Xi)-F(\bdelta)}{t}$ exists for every $\bm \Xi\in\mathbb R^{d_y\times d_x}$ and
    $F'(\bdelta;\bm\Xi)=\max_{\bm G\in \partial F(\bm \Delta)}\langle\bm G,\bm \Xi\rangle_{\mathrm{F}}$, see Definition 2.3.4 in \cite{clarke1990optimization}.} as follows from the proof of \cref{prop:subgradient} and Proposition 2.3.6 in \cite{clarke1990optimization}. Remarkably, despite this difference, the cluster points of \cref{alg:sgm} correspond to Riemannian critical points for $F$.

\begin{theorem}[Convergence of \cref{alg:sgm}]
\label{thm:convergenceSGM}
Suppose that the step sequence $\eta_k$ satisfies $\sum_{k=1}^{\infty}\eta_k=\infty$, $\sum_{k=1}^{\infty}\eta_k^2<\infty$, and $0\leq \eta_k\leq \frac{1}{2\beta}$ where $\beta\geq \max\{162\alpha,2L_1\}$ for $\alpha$ and $L_1$ as in \cref{prop:boundedIterates} and \cref{cor:LipschitzConstant}. Then, the iterates $(\bdelta_k)_{k\in\mathbb N}$ generated by \cref{alg:sgm} are bounded and each cluster point of $(\bdelta_k)_{k\in\mathbb N}$ is a Riemannian critical point of $F$ over the Stiefel manifold.   
\end{theorem}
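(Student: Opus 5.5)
The argument has three parts. We first confine the iterates to a bounded set. We then invoke the convergence theory of \cite{davis2020stochastic} for subgradient methods on locally Lipschitz, Whitney-stratifiable functions, which gives that cluster points are Clarke critical for $H$. Finally, we transfer Clarke criticality of $H$ to Riemannian criticality of $F$ on $\St(d_x,d_y)$ via the constraint-dissolving theory of \cite{hu2024constraint}.

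\textbf{Boundedness.} We use \cref{prop:boundedIterates}. Together with \cref{cor:LipschitzConstant}, it should give the following invariance: if $\|\bdelta_k^{\intercal}\bdelta_k-\bI_{d_x}\|_\F\le 1/6$, $\beta\ge\max\{162\alpha,2L_1\}$ and $\eta_k\le 1/(2\beta)$, then $\|\bdelta_{k+1}^{\intercal}\bdelta_{k+1}-\bI_{d_x}\|_\F\le 1/6$. The mechanism is as follows.
\begin{itemize}
\item The penalty gradient $\beta\bdelta(\bdelta^{\intercal}\bdelta-\bI_{d_x})$ contracts the Gram matrix toward the identity when $\eta_k\beta\le 1/2$.
\item The term $DA_{[\bdelta_k]}(\bm S_k)$ has norm bounded by a constant times $L_1$, since $F$ is Lipschitz with constant $L_1$ on a neighborhood of the Stiefel manifold.
\item Its contribution to the change in $\bdelta^{\intercal}\bdelta$ is dominated by the penalty term because $\beta\ge 2L_1$.
\end{itemize}
Induction from $\bdelta_1$ then keeps all iterates in the compact set $\{\|\bdelta^{\intercal}\bdelta-\bI_{d_x}\|_\F\le 1/6\}$. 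On this set the subgradients $\bG_k$ are uniformly bounded.

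\textbf{Criticality for $H$.} We check the hypotheses of Davis et al.'s theorem. Local Lipschitzness of $H$ follows from that of $F$ and smoothness of $A$. For stratifiability, $F$ is built from finitely many terms of the form $\inf_\pi -2(\theta^{\intercal}\bdelta\bC_\pi\theta)^2$. In the univariate case, \cref{thm:UnivariateIGW} shows this infimum is a minimum over just two couplings $\pi^\star,\pi_\star$ (the sorted coupling of $\theta^{\intercal}\bdelta$-projections, or its reversal). Two couplings suffice because the optimal coupling depends on $\bdelta$ only through its projected ordering. For finitely supported $\mu,\nu$, $F$ is therefore a piecewise polynomial (min over finitely many polynomials), hence semialgebraic and Whitney stratifiable; in general one can argue definability. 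The step conditions, the bounded iterates, and the bounded subgradients are the remaining hypotheses, and they hold by the previous step. The theorem then yields that every cluster point $\bar\bdelta$ satisfies $0\in\partial H(\bar\bdelta)$, with $\partial H$ given by \cref{prop:subgradient}. This stratifiability check is the most delicate point.

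\textbf{Transfer to $F$.} We show that $0\in\partial H(\bar\bdelta)$ with $\|\bar\bdelta^{\intercal}\bar\bdelta-\bI\|_\F\le 1/6$ and $\beta\ge 162\alpha$ forces $\bar\bdelta\in\St(d_x,d_y)$; this is the main obstacle. To do so, we take the Frobenius inner product of $0=DA_{[\bar\bdelta]}(\bm S)+\beta\bar\bdelta(\bar\bdelta^{\intercal}\bar\bdelta-\bI)$ with $\bar\bdelta(\bar\bdelta^{\intercal}\bar\bdelta-\bI)$. The penalty term gives at least $\beta c\|\bar\bdelta^{\intercal}\bar\bdelta-\bI\|_\F^2$. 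The $DA$ term is $O(\alpha\|\bar\bdelta^{\intercal}\bar\bdelta-\bI\|_\F^2)$, because $DA_{[\bdelta]}$ annihilates normal directions up to first order in the distance to the manifold. This is the key property of the polynomial $A$ in \cite[Prop.~3.6 and Thm.~4.x]{hu2024constraint}, which I would cite. The largeness of $\beta$ then forces $\bar\bdelta^{\intercal}\bar\bdelta=\bI$.

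On the manifold we have $A(\bar\bdelta)=\bar\bdelta$ and $DA_{[\bar\bdelta]}(\bm\Xi)=\bm\Xi-\bar\bdelta\Phi(\bar\bdelta^{\intercal}\bm\Xi)$, which is exactly the tangent projection. So $0\in\partial H(\bar\bdelta)$ reads $0\in\{\bm Z-\frac12\bar\bdelta(\bar\bdelta^{\intercal}\bm Z+\bm Z^{\intercal}\bar\bdelta):\bm Z\in\partial F(\bar\bdelta)\}=\partial_R F(\bar\bdelta)$. The identification of this set with $\partial_R F$ uses regularity of $F$ as noted before the theorem.
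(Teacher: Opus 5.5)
Your argument is correct and has the same skeleton as the paper's proof. Boundedness comes from the invariance of $\{\|\bdelta^{\intercal}\bdelta-\bI_{d_x}\|_{\F}\le 1/6\}$ in \cref{prop:boundedIterates}. Criticality of cluster points for $H$ comes from Corollary 5.9 of \cite{davis2020stochastic}. A constraint-dissolving step then transfers this to $\partial_R F$. Two sub-steps are argued differently.

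\textbf{Semialgebraicity.} The paper (\cref{lem:semialgebraic}) writes $G^{\theta}$ as a minimum over the finitely many vertices of the polytope $\Pi(\mu,\nu)$. This uses only concavity of $\pi\mapsto-2(\theta^{\intercal}\bdelta\mathbf C_{\pi}\theta)^2$. Your route through \cref{thm:UnivariateIGW} also works, but you should make it precise in two ways. First, take as the candidate family the (anti)monotone couplings attached to \emph{all} orderings of the supports, with ties broken arbitrarily. Second, note that each of these lies in $\Pi(\mu,\nu)$ for every $\bdelta$, so the minimum over the family never drops below $G^{\theta}$. The vertex argument avoids this bookkeeping.

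\textbf{Transfer to $\partial_R F$.} The paper simply invokes \cref{thm:stationaryPoints}. Its hypothesis $1/6\le\beta/(2\beta+8L_1)$ is exactly $\beta\ge 2L_1$. Your energy argument is a valid and more self-contained substitute. Set $\mathbf E=\bdelta^{\intercal}\bdelta-\bI_{d_x}$. Then $DA_{[\bdelta]}$ is self-adjoint, and a direct computation gives $DA_{[\bdelta]}(\bdelta\mathbf E)=\frac{15}{8}\bdelta\mathbf E^{3}$. Since $\|\bS\|_{\F}\le\alpha$ on the relevant set, the cross term is at most $\frac{15}{8}\sqrt{7/6}\,\alpha\|\mathbf E\|_{\F}^{3}$. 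That is stronger than the $O(\alpha\|\mathbf E\|_{\F}^2)$ you claimed. The penalty term is at least $\frac56\beta\|\mathbf E\|_{\F}^2$. So any $\beta\ge\alpha/2$ already forces $\mathbf E=0$, and your route shows how little of the hypothesis on $\beta$ this step needs.

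\textbf{Two small corrections.}

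\emph{Which condition gives boundedness.} In your boundedness sketch the operative condition is $\beta\ge162\alpha$. It enters through Lemma 4.12 of \cite{hu2024constraint} together with $\|DA_{[\bdelta_k]}(\bS_k)\|_{\F}\le 2.7\alpha$. You credit $\beta\ge 2L_1$ instead, and you have not shown that it suffices. This is harmless here, because you ultimately cite \cref{prop:boundedIterates} and both conditions are hypotheses; in fact $2L_1\le 6\alpha$, so the maximum is just $162\alpha$. Still, the roles of the two constants are swapped relative to the paper, which uses $\beta\ge 2L_1$ only for the transfer.

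\emph{Definability.} Drop the remark that definability can be argued for general $\mu,\nu$. It is unsupported, and it is not needed: \cref{prop:subgradient}, and hence \cref{alg:sgm}, is only established for finitely supported measures.
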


The proof of this result relies on the convergence of subgradient methods for certain generic classes of functions \cite{davis2020stochastic} and the properties of the modified objective $H$; see \cref{sec:alg} for details. In particular, we leverage the fact that if $0\in\partial H(\bdelta)$ for some $\bdelta\in\St(d_x,d_y)$, then $0\in\partial_{R} F(\bdelta)$. This framework is compatible with the constraint-dissolving approach we adopt because of the correspondence it provides between stationary points of $H$ and Riemannian stationary points of~$F$. Extending our analysis to the nonasymptotic regime is of interest, but convergence rate results for subgradient methods \cite{zhang2020complexity,davis2022gradient,kong2024cost} only guarantee a weaker version of stationarity which forfeits the correspondence to Riemannian stationary~points.  The per-iteration complexity of \cref{alg:sgm} is characterized in the case of uniform distributions on the same (finite) number of points in \cref{app:sec:validation} under the assumption that the projected distributions are still uniformly supported on the same number of points; beyond this case the complexity of computing $\partial F$ appears difficult to characterize.

\section{Experiments}\label{sec:experiments}

\subsection{Validation of Monte-Carlo error and convergence rate.}\label{sec:validation}

\begin{figure}[!t]
   \centering
    \includegraphics[width=0.9\linewidth]{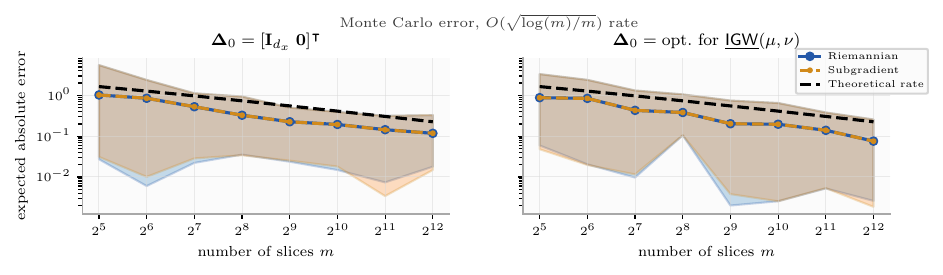}
\vspace{-1em}
    \caption{Validation of the MC error from \cref{prop:MC_slicedIGW}. The estimated value of $\EE[|\widehat{\aIGW}_{m}(\mu,\nu)^2-\aIGW(\mu,\nu)^2|]$ follows the theoretical rate $O(\sqrt{\log(m)/m})$ across optimization methods and initializations. The shaded area corresponds to the maximum and minimum values from the $25$ realizations.}
    \vspace{-.8em}
    \label{fig:MCOptim}
\end{figure}
\begin{figure}[!t]
    \centering
    \includegraphics[width=0.9\linewidth]{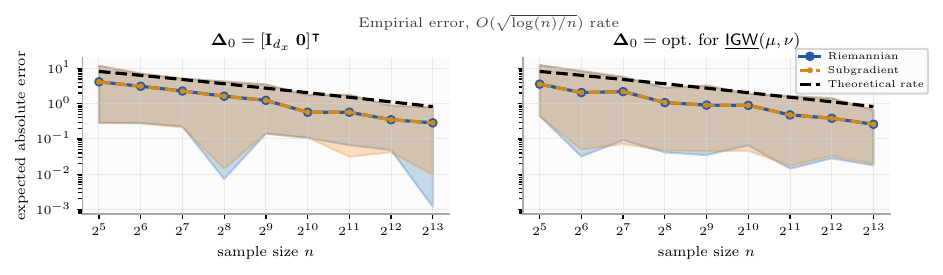}
\vspace{-1em}
    \caption{Validation of the empirical convergence rate from \cref{prop:emp_rates_aigw}. For fixed $m$, the estimated value of $\EE[|\widehat{\aIGW}_{m,n}-\aIGW(\mu,\nu)|]$ scales as $C_1+C_2\sqrt{\log(n)/n}$. The shaded area corresponds to the maximum and minimum values from the $25$ realizations.}
    \vspace{-.8em}
    \label{fig:sampleCompOptim}
\end{figure}
These experiments empirically validate \cref{prop:emp_rates_aigw,prop:MC_slicedIGW} up to the optimization error. Since this requires access to the population value of $\aIGW(\mu,\nu)$, we use the Gaussian setting from \cref{ex:aIGWGaussian}. Specifically, we take $\mu=\mathcal N(0,\bsigma_{\mu})$ and $\nu=\mathcal N(0,\bsigma_{\nu})$ with $d_x=5$ and $d_y=10$, where the covariance matrices are randomly generated. In this setting, both $\aIGW(\mu,\nu)$ and the projected one-dimensional costs can be computed directly from covariance matrices, without discretizing $\mu$ and $\nu$, see Appendix \ref{app:sec:validation} for details and the expressions for $\bsigma_{\mu},\bsigma_{\nu}$.

We first illustrate the $\sqrt{\log(m)/m}$ MC rate for
$
    \EE\big[\big|\widehat{\aIGW}_{m}(\mu,\nu)^2-\aIGW(\mu,\nu)^2\big|\big],$
where the expectation is over the i.i.d. projection directions $\Theta_1,\ldots,\Theta_m\sim\sigma_{d_y}$, recall \cref{prop:MC_slicedIGW}. For each $m\in\{2^r\}_{r=5}^{13}$, we draw a fresh collection of $m$ directions, approximately compute
$
\widehat{\aIGW}_{m}(\mu,\nu)^2
=
\inf_{\bdelta\in\St(d_x,d_y)}
\widehat{\aIGW}_{\bdelta,m}(\mu,\nu)^2
$,
and repeat this procedure $25$ times. We compare the subgradient method from \cref{alg:sgm} with a practical Riemannian subgradient method described in \cref{sec:Riemannian}; only the former is covered by \cref{thm:convergenceSGM}, but the two methods produce similar values in this experiment. We also compare two initializations: $\bdelta_0=[\mathbf I_{d_x}\;\mathbf 0]^{\intercal}$ and the population optimizer from \cref{ex:aIGWGaussian}. The results in \cref{fig:MCOptim} are consistent with the theoretical rate.

We next validate the empirical convergence rate from \cref{prop:emp_rates_aigw}. Fixing $m=3000$ and $n\in\{2^r\}_{r=5}^{12}$, we construct $\hat\mu_n,\hat\nu_n$ from $n$ samples, sample $m$ projection directions, compute $\widehat{\aIGW}_{m,n}$, and average $\big|\widehat{\aIGW}_{m,n}-\aIGW(\mu,\nu)\big|$ over $25$ repetitions with new samples. Since $\mu$ and $\nu$ are Gaussian, the discussion after \cref{prop:emp_rates_aigw} yields a rate of $C_1+C_2\sqrt{\log(n)/n}$ for $C_1,C_2>0$. The results in \cref{fig:sampleCompOptim} again agree with the theory. See Appendix \ref{app:sec:validation} for further details including runtime.

\subsection{Comparison of LLM representations.}

\begin{figure}[t]
    \centering
    \includegraphics[width=0.95\linewidth]{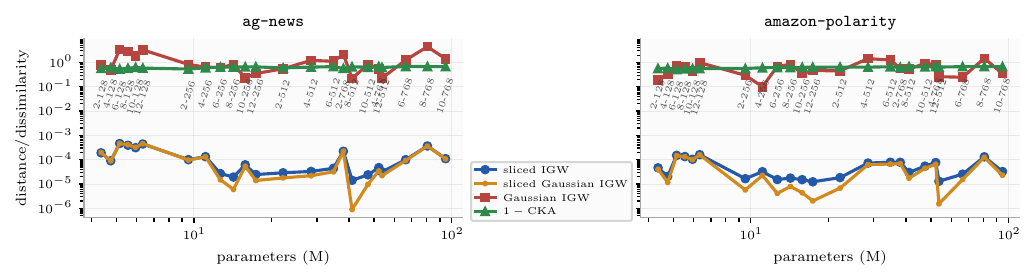}
    \vspace{-1em}
    \caption{Comparison of distances between embeddings of sentences from \texttt{ag-news} and \texttt{amazon-polarity}. Each point corresponds to the distance between the embeddings generated by the student and teacher models with the various metrics. The labels indicate the number of layers $L$ and embedding dimension $H$ for the student architecture, abbreviated as $L$-$H$.}
    \label{fig:LLM}
\end{figure}

Following the representation-comparison framework of~\cite{dandapanthula2025optimal}, we compare embeddings generated by a teacher model, \texttt{bert-base-uncased}, and the $23$ distilled BERT student models from \cite{turc2020wellread}. The teacher has $L=12$ transformer layers and embedding dimension $H=768$, while the students vary over depths $L\in\{2,4,6,8,10,12\}$ and embedding dimensions $H\in\{128,256,512,768\}$, excluding the teacher-sized model. Intuitively, the students are trained to recover the teacher's predictions, and hence are expected to learn representations with similar geometry, although these representations need not be expressed in a common basis. This renders invariant distances like IGW and sliced IGW particularly suitable for this comparison.

We first embed $800$ text passages from the \texttt{amazon-polarity} \cite{MTEB} and \texttt{ag-news} \cite{agnews} datasets using the teacher and student models. For a student-teacher pair, $\mu^S$ denotes the centered empirical distribution over the student embeddings and $\nu^T$ is defined similarly for the teacher. We next compute $\aIGW(\mu^S,\nu^T)$ using the Riemannian subgradient method with $m=3000$ slices initialized at the Gaussian alignment from \cref{ex:aIGWGaussian}. We compare this with $\aIGW(\mathcal N(0,\bsigma_S),\mathcal N(0,\bsigma_T))$ and $\IGW(\mathcal N(0,\bsigma_S),\mathcal N(0,\bsigma_T))$ (which is available in closed form, see \cref{app:sec:LLM}), where $\bsigma_S,\bsigma_T$ are empirical covariance matrices, and $1-\mathrm{CKA}(X^S,Y^T)$ \cite{kornblith2019similarity}, the centered kernel alignment of the embeddings $X^S$ and $Y^T$ from the student and teacher, see \eqref{eq:CKA} for a definition. The results are shown in \cref{fig:LLM}, demonstrating that the empirical sliced IGW distance is consistent with the trends of the other metrics while forgoing restrictive Gaussian assumptions. {Dataset sources, licenses, and runtime details are reported in \cref{app:sec:LLM}.}

\subsection{Clustering heterogeneous user text data.}

\begin{figure}[t]
    \centering
    \includegraphics[width=.95\linewidth]{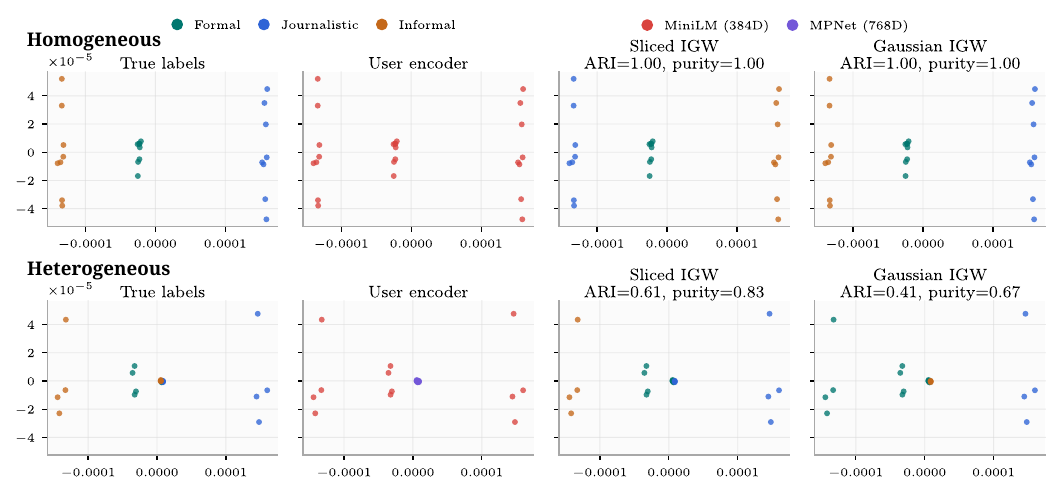}
    \vspace{-1em}
    \caption{Results of user clustering experiment. The 2d representations of the users are obtained via 2d MDS applied to the sliced IGW distance matrix. The top row corresponds to the homogeneous experiment whereas the bottom row is the heterogeneous one. The columns are organized as follows: \textbf{Left:} ground truth clusters, \textbf{Center left:} model used to encode the user texts, \textbf{Center right:} result of clustering based on sliced IGW pairwise distances, \textbf{Right:} clustering from Gaussian IGW pairwise distances. Cluster colors in the two right columns are for illustrative purposes.}      
    \label{fig:hetero_clustering}
    \vspace{-1em}
\end{figure}

Our final application is to  clustering users which are represented by a distribution of text embeddings. Precisely, we form $8$ synthetic users for each of the primary styles: Formal (\texttt{arXiv-summarization} \cite{arxiv}), Journalistic (\texttt{ag-news} \cite{agnews}), and Informal (\texttt{IMDB} \cite{imdb}) based on $120$ sentence embeddings per user for a total of $24$ users. We consider both a homogeneous setting, where all user text are encoded with \texttt{all-MiniLM-L6-v2} in dimension $384$, and a heterogeneous setting, where half of the users of each style are encoded with \texttt{all-MiniLM-L6-v2} and the rest with \texttt{all-mpnet-base-v2} in dimension $768$. Each user is built from $110$ texts in their primary style and $10$ documents from the $2$ non-primary styles (the number of each of the $2$ styles is chosen at random) to complicate the task.

To cluster the users according to their primary style, we compute the pairwise IGW between Gaussian approximations of the user data embeddings as in the previous experiment and the pairwise sliced IGW between empirical measures (again with the Riemannian subgradient method). Importantly, sliced IGW forgoes the restrictive modeling assumptions of Gaussian IGW. 
For each distance matrix $\bD$, we form a self-tuning affinity matrix with entries $\bK_{ij}=\exp(-\bD_{ij}^2/(\sigma_i\sigma_j))$, where $\sigma_i$ is the distance from the $i$-th user $i$ to its third nearest neighbor and then apply spectral clustering \cite{zelnik2004self} with $3$ {clusters} to estimate the true clusters. 

To visualize the results of the clustering in  \cref{fig:hetero_clustering}, we apply multidimensional scaling (MDS) to obtain 2d embeddings of the users using the sliced-IGW pairwise distance matrix.  %
We repeat the process described above three times with new users generated by randomly permuting the embdedded source documents. In all three repetitions, both methods  recover the true clusters exactly in the homogeneous setting, achieving an Adjusted Rand Index (ARI) \cite{hubert1985comparing} of $1$ and purity score of $1$. In the heterogeneous setting, the clustering from sliced IGW achieves ARI values of $(0.61,0.44,0.52)$  and purity scores of $(0.83,0.75,0.79)$, compared with an ARI values of $(0.41,0.17,0.17)$  and purity scores of $(0.67,0.67,0.67)$ for Gaussian IGW where the $i$-th coordinate corresponds to the $i$-th experiment. %
We attribute this improved perfomance to the fact that sliced IGW captures the underlying structure of the data without enforcing a Gaussianity modeling assumption. Dataset sources, licenses, and runtime details are reported in \cref{app:sec:clustering}.

\section{Proofs of Main Results}
\subsection{Proof of \texorpdfstring{\cref{lem:equivalence_igw_w}}{Proposition 2.2}}\label{appen:lem:equivalence_igw_w_proof}
We divide the proof of \cref{lem:equivalence_igw_w} into a number of auxiliary lemmas whose proofs can be found in \cref{proof:lem:W2boundIGW,proof:lem:calI_nonempty} 
\begin{lemma}
\label{lem:W2boundIGW}
    Fix $\mu\in\cP_{2}(\RR^{d_x}),\nu\in\cP_{2}(\RR^{d_y})$  and let $\pi^{\star}$ solve $\mathsf{IGW}(\mu,\nu)$. Then, if $\mathbf P\mathbf \Lambda_{\pi^{\star}}\mathbf Q^{\intercal}$ is the singular value decomposition of $\int xy^{\intercal}d\pi^{\star}(x,y)$  for $\bP\in\mathbb R^{d_x\times d_x}$ and $\bQ\in\mathbb R^{d_y\times d_y}$,
    \[
      \begin{aligned}   \mathsf{W}_2( \tilde \mu,\tilde \nu)^2 
   &\leq \frac{2} {\iota}%
   \left(\mathsf{IGW}(\mu,\nu)^2-\sum_{k\in\mathcal K}\left(\int x_k^2d\tilde \mu(x)\right)^2- \sum_{j\in\mathcal J}\left(\int y_j^2d\tilde \nu(y)\right)^2 \right)
  \\
  &
   + \sum_{j\in\mathcal J} \int y_j^2d\tilde \nu(y) +\sum_{k\in\mathcal K} \int x_k^2d\tilde \mu(x),
      \end{aligned}  
    \]
    where $\tilde \mu =(\bdelta \bm P^{\intercal})_{\sharp}\mu$, and $\tilde \nu =(\bm Q^{\intercal})_{\sharp}\nu$
 for $
    \bdelta = \begin{bmatrix}
                \mathbf I_{d_x}&\mathbf 0 
        \end{bmatrix}^{\intercal}\in \St(d_x,d_y)\subset \mathbb R^{d_y\times d_x},
$ 
\[
\begin{gathered}
    \mathcal I \coloneqq \left\{i\in\{1,\dots, d_x\}:\int x_i^2d \tilde \mu(x)\neq 0,\int y_i^2d \tilde \nu(y) \neq 0\right\},
    \\ 
    \mathcal J\coloneqq \left\{i\in\{1,\dots, d_y\}\cap \mathcal I^c :\int y_i^2d \tilde \nu(y) \neq 0\right\},
        \mathcal K\coloneqq \left\{k\in\{1,\dots, d_y\}\cap \mathcal I^c :\int x_k^2d \tilde \mu(x) \neq 0\right\},
\end{gathered}
\]
and $\iota\coloneqq \min_{i\in\mathcal I}\left\{ \int x_i^2d\tilde \mu(x)+\int x_i^2d\tilde \nu(y) \right\}$ provided that $\mathcal I\neq\emptyset$. 
\end{lemma}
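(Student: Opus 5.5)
First, reduce to diagonal form. By orthogonal invariance of $\IGW$, the coupling $\tilde\pi\coloneqq(\bdelta\bP^{\intercal}\times\bQ^{\intercal})_\sharp\pi^{\star}$ is optimal for $\IGW(\tilde\mu,\tilde\nu)=\IGW(\mu,\nu)$, and it lives in $\RR^{d_y}\times\RR^{d_y}$. Its cross moment is $\int xy^{\intercal}d\tilde\pi=\bdelta\bP^{\intercal}\bP\bLambda_{\pi^\star}\bQ^{\intercal}\bQ=\bdelta\bLambda_{\pi^\star}$, which is diagonal with nonnegative entries $\lambda_i$. Expanding the square in \eqref{eq:IGW} gives
\[
\IGW(\mu,\nu)^2=\|\bR_{\tilde\mu}\|_\F^2+\|\bR_{\tilde\nu}\|_\F^2-2\|\bdelta\bLambda_{\pi^\star}\|_\F^2
\geq \sum_i \big(a_i^2+b_i^2-2\lambda_i^2\big),
\]
where $a_i=\int x_i^2d\tilde\mu$ and $b_i=\int y_i^2d\tilde\nu$. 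The inequality holds because the Frobenius norm of a PSD matrix dominates the norm of its diagonal.

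Next, separate the index classes. For $i\notin\mathcal I$ we have $a_i=0$ or $b_i=0$, so by Cauchy--Schwarz $\lambda_i=|\int x_iy_id\tilde\pi|\le\sqrt{a_ib_i}=0$. These indices therefore contribute exactly $a_i^2$ (for $i\in\mathcal K$) or $b_i^2$ (for $i\in\mathcal J$). Moving these terms to the left-hand side yields
\[
\IGW^2-\sum_{\mathcal K}a_k^2-\sum_{\mathcal J}b_j^2\;\ge\;\sum_{i\in\mathcal I}(a_i^2+b_i^2-2\lambda_i^2).
\]
For $i\in\mathcal I$, use $\lambda_i\le\sqrt{a_ib_i}$ to write
\[
a_i^2+b_i^2-2\lambda_i^2=(a_i^2+b_i^2-2a_ib_i)+2(a_ib_i-\lambda_i^2)\ge (a_i-b_i)^2+2(a_ib_i-\lambda_i^2).
\]
The target is the per-coordinate bound
\[
a_i+b_i-2\lambda_i\;\le\;\frac{2}{\iota}\big(a_i^2+b_i^2-2\lambda_i^2\big),
\]
which follows from the factorization
\[
a_i^2+b_i^2-2\lambda_i^2=\tfrac12(a_i+b_i)^2+\tfrac12(a_i-b_i)^2-2\lambda_i^2\ \ge\ \tfrac12\big[(a_i+b_i)^2-4\lambda_i^2\big]=\tfrac12(a_i+b_i-2\lambda_i)(a_i+b_i+2\lambda_i).
\]
Since $a_i+b_i+2\lambda_i\ge a_i+b_i\ge\iota$, we get $a_i+b_i-2\lambda_i\le\frac{2}{\iota}(a_i^2+b_i^2-2\lambda_i^2)$, noting that $a_i+b_i-2\lambda_i\ge0$ by AM-GM.

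Finally, bound the Wasserstein distance using the coupling $\tilde\pi$:
\[
\W_2(\tilde\mu,\tilde\nu)^2\le\int\|x-y\|^2d\tilde\pi=\sum_i(a_i+b_i-2\lambda_i).
\]
For $i\in\mathcal I$, sum the per-coordinate bound to obtain the term with $\frac2\iota$. For $i\notin\mathcal I$ we have $\lambda_i=0$, so these indices contribute $a_k$ on $\mathcal K$ and $b_j$ on $\mathcal J$, while indices where both moments vanish contribute zero. This gives exactly the stated inequality. The main delicate point is the diagonalization step: verifying that the transported coupling remains optimal and has diagonal cross-covariance, and, in the case $d_x<d_y$, checking that the padded coordinates behave correctly (they have $a_i=0$, so they fall into $\mathcal J$ or are null).
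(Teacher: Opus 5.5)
Your proof is correct and follows essentially the same route as the paper. You push $\pi^{\star}$ forward by $(\bdelta\bP^{\intercal},\bQ^{\intercal})$ to get a same-cost coupling with diagonal cross-moment, discard the off-diagonal second-moment entries, use Cauchy--Schwarz to kill the cross terms on $\mathcal I^c$, bound each $\mathcal I$-coordinate by $a_i+b_i-2\lambda_i\le\frac{2}{\iota}(a_i^2+b_i^2-2\lambda_i^2)$, and sum against $\W_2(\tilde\mu,\tilde\nu)^2\le\int\|x-y\|^2d\tilde\pi$; the only difference is cosmetic, since you obtain the per-coordinate bound by factoring $(a_i+b_i)^2-4\lambda_i^2$ as a difference of squares whereas the paper uses $2\lambda_i^2\le\lambda_i(a_i+b_i)$ together with $(a_i+b_i)^2\le 2(a_i^2+b_i^2)$.
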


\begin{lemma}    \label{lem:calI_nonempty} In the setting of \cref{lem:W2boundIGW}, if $\mathcal I= \emptyset$, then $\mu$ or $\nu$ is a point mass.  
\end{lemma}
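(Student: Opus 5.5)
The plan is to show the contrapositive: if neither $\mu$ nor $\nu$ is a point mass, then $\mathcal I\neq\emptyset$. The argument compares the cross-moment matrix of the optimal coupling $\pi^\star$ with that of a suitably constructed competitor coupling.

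First, rewrite the objective in a usable form. Expanding the square in \eqref{eq:IGW} gives, for every $\pi\in\Pi(\mu,\nu)$,
\[
\iint\left(\langle x,x'\rangle-\langle y,y'\rangle\right)^2d\pi\otimes\pi=\|\mathbf R_\mu\|_\F^2+\|\mathbf R_\nu\|_\F^2-2\|\mathbf C_\pi\|_\F^2,\qquad \mathbf C_\pi=\int xy^{\intercal}d\pi .
\]
So $\pi^\star$ maximizes $\|\mathbf C_\pi\|_\F$ over $\Pi(\mu,\nu)$.

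Second, show that $\mathcal I=\emptyset$ forces $\mathbf C_{\pi^\star}=0$. Let $\tilde\pi=(\bdelta\bP^{\intercal},\bQ^{\intercal})_\sharp\pi^\star$, which couples $\tilde\mu$ and $\tilde\nu$. Its cross-moment matrix is $\bdelta\bP^{\intercal}\mathbf C_{\pi^\star}\bQ=\bdelta\bP^{\intercal}\bP\mathbf\Lambda_{\pi^\star}\bQ^{\intercal}\bQ=\bdelta\mathbf\Lambda_{\pi^\star}$. Hence, for $i\le d_x$, its $(i,i)$ entry is the singular value $\lambda_i$, and all other entries vanish. By Cauchy--Schwarz,
\[
|\lambda_i|=\Big|\int x_iy_i\,d\tilde\pi\Big|\le\Big(\int x_i^2d\tilde\mu\int y_i^2d\tilde\nu\Big)^{1/2}.
\]
If $\mathcal I=\emptyset$, then for every $i\le d_x$ one of the two factors is zero, so every $\lambda_i=0$. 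Therefore $\mathbf C_{\pi^\star}=0$, and optimality implies $\mathbf C_\pi=0$ for every $\pi\in\Pi(\mu,\nu)$.

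Third, which I expect to be the main step, build a coupling with $\mathbf C_\pi\neq0$ whenever neither marginal is a point mass. If $\mu$ is not a point mass, then some direction $u$ makes $U=u^{\intercal}X$, $X\sim\mu$, non-degenerate, i.e.\ not a.s.\ constant. Likewise some $v$ makes $V=v^{\intercal}Y$, $Y\sim\nu$, non-degenerate. Consider the comonotone coupling $\rho^+=(F_U^{-1},F_V^{-1})_\sharp\upsilon$ and the antitone coupling $\rho^-=(F_U^{-1},F_V^{-1}\circ(1-\cdot))_\sharp\upsilon$ of the laws of $U$ and $V$. For two non-constant nondecreasing functions on $[0,1]$, the covariance under $\rho^+$ is strictly positive and under $\rho^-$ strictly negative (a Chebyshev-type rearrangement inequality), so
\[
\int uv\,d\rho^+>\EE[U]\EE[V]>\int uv\,d\rho^- .
\]
Hence at least one of $\rho^\pm$ has nonzero mixed moment $\int uv\,d\rho^\pm$. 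Gluing $\rho^\pm$ with the conditional laws of $X$ given $U$ and of $Y$ given $V$ yields $\pi\in\Pi(\mu,\nu)$ with $u^{\intercal}\mathbf C_\pi v=\int uv\,d\rho^\pm\neq0$. This contradicts the conclusion of the second step, so one of $\mu,\nu$ must be a point mass.

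The points needing care are the strictness of the Chebyshev inequality for non-constant quantile functions, which one can verify via $\iint (f(s)-f(t))(g(s)-g(t))\,ds\,dt>0$, and the measurability of the gluing step, which is standard via disintegration.
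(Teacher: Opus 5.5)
Your proof is correct and follows the paper's argument. Cauchy--Schwarz on the diagonalized cross-moment $\int xy^{\intercal}d\tilde\pi$ forces $\mathbf C_{\pi^\star}=0$, and optimality then gives $\mathbf C_\pi=0$ for every $\pi\in\Pi(\mu,\nu)$. The contradiction comes from projecting onto non-degenerate directions and lifting a one-dimensional coupling back via \cref{lem:pushforwardMeasures}.

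The only difference is the one-dimensional competitor. You compare the comonotone and antitone quantile couplings through a strict Chebyshev inequality. The paper instead perturbs the product coupling $\mu_a\otimes\nu_b$ by moving $\epsilon$-mass between the four ``quadrants'' determined by $x_1<x_2$ and $y_1<y_2$, which pushes its cross-moment strictly below $\mathbb{E}_{\mu_a}[X]\,\mathbb{E}_{\nu_b}[Y]$.
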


\begin{proof}[Proof of \cref{lem:equivalence_igw_w}] \cref{lem:equivalence_igw_w} is a direct consequence of \cref{lem:W2boundIGW,lem:calI_nonempty}.  Indeed, as $\mu$ and $\nu$ are not point masses, $\mathcal I$ is nonempty by \cref{lem:calI_nonempty}, so that  \cref{lem:W2boundIGW} is applicable. Under the  convention that the minimum over an empty set is $+\infty$, the sum over an empty set is $0$, and $+\infty \cdot 0=0$, we set  $\varsigma\coloneqq \min_{j\in\mathcal J}\left\{ \int y_j^2d\tilde \nu(y)\right\}$, $\zeta \coloneqq \min_{k\in\mathcal K}\left\{ \int x_i^2d\tilde\mu(x)\right\}$, and obtain that 
\[
\sum_{j\in\mathcal J} \left(\int y_j^2d\tilde\nu(y)\right)^2\geq \varsigma\sum_{j\in\mathcal J} \int y_j^2d\tilde\nu(y),\; \sum_{k\in\mathcal K} \left(\int x_i^2d\tilde\mu(x)\right)^2\geq \zeta\sum_{k\in\mathcal K}\int x_i^2d\tilde\mu(x),
\]
where we have again used the notation $\tilde \mu =(\bdelta \bm P^{\intercal})_{\sharp}\mu$, and $\tilde \nu =(\bm Q^{\intercal})_{\sharp}\nu$.
Applying these bounds in   
 \cref{lem:W2boundIGW} yields 
 \[
      \begin{aligned}   \mathsf{W}_2(\tilde \mu,\tilde\nu)^2 
   &\leq \frac{2} {\iota}%
   \left(\mathsf{IGW}(\mu,\nu)^2-\sum_{k\in\mathcal K}\left(\int x_k^2d\tilde \mu(x)\right)^2- \sum_{j\in\mathcal J}\left(\int y_j^2d\tilde \nu(y)\right)^2 \right)
  \\
  &
   +\varsigma^{-1} \sum_{j\in\mathcal J} \left(\int y_j^2d\tilde \nu(y)\right)^2 +\zeta^{-1}\sum_{k\in\mathcal K} \left(\int x_k^2d\tilde \mu(x)\right)^2,
      \end{aligned}  
    \]
    where $1/\infty$ is understood as $0$.
Using \eqref{eq:non_negative_integral} and \eqref{eq:IGW_lower_bound}, 
we have that  
$
\mathsf{IGW}(\mu,\nu)^2- \sum_{j\in\mathcal J}M_2(\tilde \nu)_j^2-\sum_{k\in\mathcal K}M_2(\tilde\mu)^2_k\geq 0$ so that, setting $\kappa' =\frac 12\min\{\iota,2\varsigma,2\zeta\}$, which is finite since $\mathcal I$ is nonempty,
$
      \begin{aligned}   \mathsf{W}_2(\tilde \mu,\tilde\nu)^2
   &\leq (\kappa')^{-1}%
   \mathsf{IGW}(\mu,\nu)^2.
      \end{aligned}  
    $
    The claimed result then follows by noting that $\kappa' \geq \frac 12\min\{\iota,\varsigma,\zeta\} = \kappa$ where $\kappa$ is the constant from \cref{lem:equivalence_igw_w}.
\end{proof}

\subsection{Proof of \texorpdfstring{\cref{prop:IGW_weak_conv}}{Proposition 2.3}}
\label{proof:prop:IGW_weak_conv}
If $\mathsf{IGW}(\mu_n,\mu)\to 0$, we have from \eqref{eq:IGW_lower_bound} that\footnote{We highlight that this step in the proof does not require the assumption that $\mu_n$ and $\mu$ not be point masses.}
 \[
    \begin{aligned}
       \mathsf{IGW}(\mu_n,\mu)^2 
        &\geq \sum_{i\in\mathcal I} \left(\!\left(\int \!x_i^2d(\bdelta\bP^{\intercal}_n)_{\sharp}\mu_n(x)\right)^2\!\!+\left(\int \!y_i^2d(\bQ_n^{\intercal})_{\sharp}\mu(y)\right)^2 
        \!\!-2 \left( \int \!x_iy_id\tilde \pi_n(x,y)\right)^2 \right) \\
        &+ \sum_{k\in\mathcal K}\left(\int x_k^2d(\bdelta\bP_n^{\intercal})_{\sharp}\mu_n(x)\right)^2+ \sum_{j\in\mathcal J}\left(\int y_j^2d(\bQ^{\intercal}_n)_{\sharp}\mu(y)\right)^2 
       \\
       &= \sum_{i\in\mathcal I} \left(\!\left(\int\! x_i^2d(\bdelta\bP^{\intercal}_n)_{\sharp}\mu_n(x)\right)^2\!\!+\left(\int \!y_i^2d(\bQ_n^{\intercal})_{\sharp}\mu(y)\right)^2 
        \!\!-2 \left( \int \!x_iy_id\tilde \pi_n(x,y)\right)^2 \right) \\
        &+ \sum_{k\in\mathcal K}\left(\!\left(\int \!x_k^2d(\bdelta\bP^{\intercal}_n)_{\sharp}\mu_n(x)\right)^2\!\!+\left(\int \!y_k^2d(\bQ_n^{\intercal})_{\sharp}\mu(y)\right)^2 
        \!\!-2 \left( \int \!x_ky_kd\tilde \pi_n(x,y)\right)^2 \right)
        \\
        &+ \sum_{j\in\mathcal J}\left(\!\left(\int \!x_j^2d(\bdelta\bP^{\intercal}_n)_{\sharp}\mu_n(x)\right)^2\!\!+\left(\int \!y_j^2d(\bQ_n^{\intercal})_{\sharp}\mu(y)\right)^2 
        \!\!-2 \left( \int\! x_jy_jd\tilde \pi_n(x,y)\right)^2 \right)
        \\
        &=\sum_{i=1}^{d}  \left(\left(\!\int \!x_i^2d(\bdelta\bP^{\intercal}_n)_{\sharp}\mu_n(x)\right)^2\!\!+\left(\int \!y_i^2d(\bQ_n^{\intercal})_{\sharp}\mu(y)\right)^2 
        \!\!-2 \left( \int \!x_iy_id\tilde \pi_n(x,y)\right)^2 \right)
        \\
        &\geq \sum_{i=1}^{d}\left(\int x_i^2d(\bdelta\bP^{\intercal}_n)_{\sharp}\mu_n(x)-\int y_i^2d(\bQ_n^{\intercal})_{\sharp}\mu(y)\right)^2 
        \geq 0,
    \end{aligned}
\]
where the first equality is due to the fact that $\int y_k^2d(\mathbf Q_n^{\intercal})_{\sharp}\mu(y)=0$ and $\int x_ky_k d\tilde\pi_n(x,y)=0$ for every $k\in\mathcal K$ and a similar conclusion applies to $\mathcal J$ (recall the proof of \cref{lem:W2boundIGW}). The before last inequality follows by  the Cauchy-Schwarz inequality; 
\[
\left(\int x_iy_i d\tilde\pi_n(x,y)\right)^2\leq {
 \int x_i^2d\tilde \pi_n(x,y) \int y_i^2d\tilde \pi_n(x,y)} = \int x_i^2 d (\bdelta\bP^{\intercal}_n)_{\sharp}\mu_n(x) \int y_i^2 d(\bQ^{\intercal}_n)_{\sharp}\mu(y).     
\] It follows that, for every $i\in\{1,\dots,d\}$ ,
$
\int x_i^2d(\bdelta\bP^{\intercal}_n)_{\sharp}\mu_n(x)-\int y_i^2d(\bQ_n^{\intercal})_{\sharp}\mu(y)\to 0$ so that
\[
    \sum_{i=1}^{d} \int x_i^2d(\bdelta\bP^{\intercal}_n)_{\sharp}\mu_n(x)-\sum_{i=1}^{d}\int y_i^2d(\bQ_n^{\intercal})_{\sharp}\mu(y) = M_2(\mu_n)-M_2(\mu)\to 0.
\]
Furthermore, our previous application of the Cauchy-Schwarz  inequality is asymptotically tight, that is, for each $i\in\{1,\dots, d\}$, 
\[
0\leq  \int x_i^2d(\bdelta\bP^{\intercal}_n)_{\sharp}\mu_n(x)\int y_i^2d(\bQ_n^{\intercal})_{\sharp}\mu(y)-\left(\int x_iy_id\tilde \pi_n(x,y)\right)^2\to 0,
\]
whereby 
\[
\begin{aligned}
&\left(\int x_iy_id\tilde \pi_n(x,y)\right)^2 - \left(\int y_i^2d(\bQ_n^{\intercal})_{\sharp}\mu(y)\right)^2\\
&=\left(\int x_iy_id\tilde \pi_n(x,y)\right)^2 - \int y_i^2d(\bdelta\bP_n^{\intercal})_{\sharp}\mu(y)\int y_i^2d(\bQ_n^{\intercal})_{\sharp}\mu(y)
\\
&+\left(\int y_i^2d(\bdelta\bP_n^{\intercal})_{\sharp}\mu(y)-\int y_i^2d(\bQ_n^{\intercal})_{\sharp}\mu(y)\right)\int y_i^2d(\bQ_n^{\intercal})_{\sharp}\mu(y)
\to 0,
\end{aligned}
\]
i.e, there exists a sequence $(\alpha_n)_{n\in\mathbb N}$ such that $\alpha_n\in\{-1,1\}^d$ and $R_{n,i}\coloneqq (\alpha_{n})_i\int x_iy_id\tilde \pi_n(x,y) -\int y_i^2d(\bQ_n^{\intercal})_{\sharp}\mu(y)\to 0$ for each $i\in \{1,\dots, d\}$. Conclude that   
\[
\begin{aligned}
0 &\leq\mathsf W_2\left((\diag(\alpha_n))_{\sharp} (\mathbf \Delta\mathbf{P}^{\intercal}_n)_{\sharp}\mu_n,(\mathbf Q^{\intercal}_n)_{\sharp}\mu\right)^2\\&\leq M_2\left((\mathbf \Delta\mathbf{P}^{\intercal}_n)_{\sharp}\mu_n\right)+M_2((\mathbf Q^{\intercal}_n)_{\sharp}\mu)-2\sum_{i=1}^{d}(\alpha_n)_i\int x_iy_id\tilde \pi_n(x,y)
\to 2M_2(\mu)-2M_2(\mu)=0.
\end{aligned}
\]
 It follows from Theorem 6.9 in \cite{villani2008optimal} that  $(\diag(\alpha_n))_{\sharp}(\mathbf Q_n\mathbf \Delta\mathbf{P}^{\intercal}_n)_{\sharp}\mu_n\stackrel{w}{\to}\mu$ so that the claimed result follows by noting that $\bdelta_n\coloneqq (\diag(\alpha_n) )_{\sharp}\bQ_n\bdelta\bP_n^{\intercal}\in\St(d,d)$. 

   On the other hand, if there exists $\bdelta_n\in \mathrm{St}(d,d)$ such that $(\bdelta_n)_\sharp\mu_n\stackrel{w}{\to} \mu$ and $M_2\big(\mu_n)\to M_2(\mu)$, it holds that 
    \[
    \mathsf{IGW}(\mu_n,\mu) = \mathsf{IGW}((\bdelta_n)_{\sharp}\mu_n,\mu)\leq \left(2M_2(\mu_n)+2M_2(\mu)\right)^{\frac 12}\mathsf W_2((\bdelta_n)_{\sharp}\mu_n,\mu)\to 0, 
    \]
    as follows from invariance of $\mathsf{IGW}$ to unitary isomorphisms and \eqref{eq:IGW_vs_W2}. 

    Now, assume that $\left(\mathbf \Delta_n\right)_{\sharp}\mu_n\stackrel{w}\to \mu$ for some  $\mathbf \Delta_n\in\St(d,d)$ and that the second moments converge.  Note that the Stiefel manifold $\mathrm{St}(d,d)$ is compact in the sense that it forms a closed and bounded subset of $\mathbb R^{d\times d}$. Consequently, there exists a subsequence $n'$ along which $\bdelta_{n'}\to \bdelta\in \St(d,d)$ as matrices. Now, for any bounded Lipschitz function $f:\mathbb R^d\to\mathbb R$ with Lipschitz constant $L$, 
    \[
    \begin{aligned}
        \left| \int fd(\bdelta_{\sharp}\mu_{n'}-\mu)\right|&\leq \left| \int f(\bdelta (\cdot))        -f(\bdelta_{n'} (\cdot))d\mu_{n'}\right|+ \left| \int fd\left((\bdelta_{n'})_{\sharp}\mu_{n'}-\mu\right)\right|
        \\
        &\leq L\|\bdelta-\bdelta_{n'}\|_{\mathrm F}M_1(\mu_{n'})+ \left| \int fd\left((\bdelta_{n'})_{\sharp}\mu_{n'}-\mu\right)\right|.
    \end{aligned}
    \]
    As $\left(\bdelta_{n}\right)_{\sharp}\mu_n\stackrel{w}{\to} \mu$, the leftmost term converges to $0$ by the Portmanteau theorem. As for the first term, we have that $\|x\|\leq 1+\|x\|^2$ for all $x\in\mathbb R^{d}$ so that Definition 6.8 in \cite{villani2008optimal} asserts that $M_1(\mu_{n})\to M_1(\mu)$ as $n\to \infty$ (recall that $\left(\bdelta_{n}\right)_{\sharp}\mu_n\stackrel{w}{\to} \mu$ and convergence of second moments implies convergence in $2$-Wasserstein distance). As $\|\bdelta-\bdelta_{n'}\|_{\mathrm{F}}\to 0$, it follows that $\int f d(\bdelta_{\sharp}\mu_{n'})\to\int f d\mu$ as $n\to \infty$. Since $f$ was an arbitrary bounded Lipschitz function, we conclude that $\bdelta_{\sharp}\mu_{n'}\stackrel{w}{\to} \mu$ by the Portmanteau theorem.
\qed

We remark here that \cref{prop:topology} admits a simpler form in the case that $\mu$ and $\nu$ are distributions on the real line. We record the result here, as it will be used in the proof of \cref{prop:topology} ahead. 

\begin{corollary}
\label{cor:1dCase}
    If $\mu,\nu\in\mathcal P_2(\mathbb R)$, 
    \begin{equation}
    \label{eq:1dComparison}
        \frac 12 ({M_2(\mu)+M_2(\nu)})\min\left\{\mathsf W_2(\mu,\nu)^2,\mathsf W_2((-\id)_{\sharp}\mu,\nu)^2\right\}\leq \mathsf{IGW}(\mu,\nu)^2.
    \end{equation}
    If $(\mu_n)_{n\in\mathbb N}\subset \mathcal P_2(\mathbb R)$ is such that $\mathsf{IGW}(\mu_n,\mu)\to 0$, then  $\left(s_n\id\right)_{\sharp}\mu_n\stackrel{w}{\to} \mu$ and $M_2(\mu_n)\to M_2(\mu)$ where \[
        s_n=\begin{cases}
            1,&\text{if }\mathsf W_2(\mu_n,\mu)\leq \mathsf W_2((-\id)_{\sharp}\mu_n,\mu),
            \\
            -1,&\text{if }\mathsf W_2(\mu_n,\mu)> \mathsf W_2((-\id)_{\sharp}\mu_n,\mu).
        \end{cases}
    \]
    If $(a_n\id)_{\sharp}\mu_n\to \mu$ for some $a_n\in\{-1,1\}$ and $M_2(\mu_n)\to M_2(\mu)$ then $\mathsf{IGW}(\mu_n,\mu)\to 0$. 
\end{corollary}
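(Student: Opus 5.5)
The inequality \eqref{eq:1dComparison} will come from the closed form in \cref{thm:UnivariateIGW}. The two convergence statements then follow from it, together with \eqref{eq:W2_upperbound} and the invariance of $\mathsf{IGW}$ under $-\id$.

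\emph{Step 1: the inequality.} Let $\pi^\star$ and $\pi_\star$ be as in \cref{thm:UnivariateIGW}. The first is the monotone coupling, hence $\mathsf W_2$-optimal for $(\mu,\nu)$. The second is the pushforward of the $\mathsf W_2$-optimal coupling of $((-\id)_\sharp\mu,\nu)$ under $(x,y)\mapsto(-x,y)$. Write $a=M_2(\mu)$, $b=M_2(\nu)$ and $c=\max\{|\int xy\,d\pi^\star|,|\int xy\,d\pi_\star|\}$. The choice $\pi_\star$ is designed so that the map $x\mapsto -x$ flips the sign of $\int xy$. Hence
\[
\min\{\mathsf W_2(\mu,\nu)^2,\mathsf W_2((-\id)_\sharp\mu,\nu)^2\}\le a+b-2c,
\]
and in fact equality holds. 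By \cref{thm:UnivariateIGW}, $\mathsf{IGW}(\mu,\nu)^2=a^2+b^2-2c^2$. Since $c\le\sqrt{ab}\le (a+b)/2$ by Cauchy--Schwarz, it suffices to show
\[
a^2+b^2-2c^2\ \ge\ \tfrac12(a+b)(a+b-2c).
\]
Expanding, this is $\tfrac12(a-b)^2+(a+b)c-2c^2\ge0$. The last two terms satisfy $(a+b)c-2c^2=c(a+b-2c)\ge 0$, which finishes Step 1. A further check is needed that the sign bookkeeping between $\pi_\star$ and the coupling of $(-\id)_\sharp\mu$ is correct, using $F^{-1}_{(-\id)_\sharp\mu}(u)=-F_\mu^{-1}(1-u)$ almost everywhere.

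\emph{Step 2: forward direction.} Suppose $\mathsf{IGW}(\mu_n,\mu)\to0$. Moment convergence follows from the bound $\mathsf{IGW}(\mu_n,\mu)^2\ge (M_2(\mu_n)-M_2(\mu))^2$. This bound holds because $a^2+b^2-2c^2\ge a^2+b^2-2ab$. If $M_2(\mu)>0$, the prefactor in \eqref{eq:1dComparison} is eventually bounded below, so $\mathsf W_2((s_n\id)_\sharp\mu_n,\mu)\to0$ by the definition of $s_n$ (using $\mathsf W_2((-\id)_\sharp\mu_n,\mu)=\mathsf W_2(\mu_n,(-\id)_\sharp\mu)$ if needed). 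Weak convergence then follows from \cite[Theorem 6.9]{villani2008optimal}. If $M_2(\mu)=0$, then $\mu=\delta_0$, and $M_2(\mu_n)\to0$ directly gives weak convergence to $\delta_0$.

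\emph{Step 3: converse.} Suppose $(a_n\id)_\sharp\mu_n\stackrel{w}{\to}\mu$ and $M_2(\mu_n)\to M_2(\mu)$; note $M_2((a_n\id)_\sharp\mu_n)=M_2(\mu_n)$. Theorem 6.9 of \cite{villani2008optimal} then gives $\mathsf W_2((a_n\id)_\sharp\mu_n,\mu)\to0$. By invariance of $\mathsf{IGW}$ under the orthogonal map $a_n\id$ and \eqref{eq:W2_upperbound},
\[
\mathsf{IGW}(\mu_n,\mu)\le\bigl(2M_2(\mu_n)+2M_2(\mu)\bigr)^{1/2}\,\mathsf W_2((a_n\id)_\sharp\mu_n,\mu)\to0 .
\]

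I expect the main obstacle to be the algebraic inequality in Step 1 together with the identification of $\pi_\star$ with the anti-monotone $\mathsf W_2$ coupling. Both are straightforward once set up carefully.
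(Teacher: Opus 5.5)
Your proof is correct. Steps 2 and 3 are essentially the paper's argument: apply \eqref{eq:1dComparison} to $(\mu_n,\mu)$ and use \cite[Theorem 6.9]{villani2008optimal}, and for the converse combine orthogonal invariance with \eqref{eq:W2_upperbound}, which is the reverse direction of \cref{prop:IGW_weak_conv}.

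Step 1 takes a different route. The paper obtains \eqref{eq:1dComparison} for non-degenerate measures by specializing the multivariate comparison \cref{lem:equivalence_igw_w} to $d_x=d_y=1$. There $\kappa=\frac12(M_2(\mu)+M_2(\nu))$ and the singular vectors reduce to signs. It then treats the point-mass case separately via the product coupling. You instead work directly from the closed form in \cref{thm:UnivariateIGW}. Set $c^+=\int xy\,d\pi^{\star}$ and $c^-=\int xy\,d\pi_{\star}$; then $c^+\ge c^-$ because $\pi^\star$ maximizes and $\pi_\star$ minimizes $\int xy$ over $\Pi(\mu,\nu)$. This gives $\mathsf W_2(\mu,\nu)^2=a+b-2c^+$ and $\mathsf W_2((-\id)_\sharp\mu,\nu)^2=a+b+2c^-$. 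The ordering $c^-\le c^+$ also yields $\max\{c^+,-c^-\}=\max\{|c^+|,|c^-|\}=c$, which settles the sign bookkeeping you flagged. You thus get the exact identities $\mathsf{IGW}(\mu,\nu)^2=a^2+b^2-2c^2$ and $\min\{\mathsf W_2(\mu,\nu)^2,\mathsf W_2((-\id)_\sharp\mu,\nu)^2\}=a+b-2c$.

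Your route has several advantages. It is self-contained and bypasses the lengthy \cref{lem:W2boundIGW,lem:calI_nonempty}. It handles point masses uniformly; the paper's point-mass computation is precisely your algebra with $c=|\mathbb E_\mu[X]\mathbb E_\nu[Y]|$. It also yields the sharper bound $\mathsf{IGW}(\mu,\nu)^2\ge\frac12(a-b)^2+\frac12(a+b)\min\{\mathsf W_2(\mu,\nu)^2,\mathsf W_2((-\id)_\sharp\mu,\nu)^2\}$. The paper's route has the merit of presenting the one-dimensional statement as an instance of its general comparison result.

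In Step 2, your explicit bound $\mathsf{IGW}(\mu_n,\mu)^2\ge(M_2(\mu_n)-M_2(\mu))^2$ and the split between $M_2(\mu)>0$ and $\mu=\delta_0$ make rigorous the paper's somewhat loose dichotomy ``either $M_2(\mu_n)+M_2(\mu)\to0$ or the minimum tends to $0$''.
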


The proof of \cref{cor:1dCase} is included in \cref{proof:cor:1dCase}.

\subsection{Proof of \texorpdfstring{\cref{thm:UnivariateIGW}}{Theorem 3.1}} 
\label{proof:thm:UnivariateIGW}
We have from Appendix~E in \cite{rioux2024entropic} that, for any $\rho\in\cP_2(\RR^{d_x})$ and $\eta\in\cP_2(\RR^{d_y})$,
\[
\begin{aligned}
\IGW(\rho,\eta)^2
&=
\| \mathbf R_{\rho}\|_{\mathrm {F}}^2 +\| \mathbf R_{\eta}\|_{\mathrm {F}}^2
+\inf_{\pi\in\Pi(\rho,\eta)}-2 \left\|\int xy^{\intercal} \dd\pi(x,y)\right\|_{\F}^2.
\end{aligned}
\]
Instantiating this result with the univariate distributions $\mu,\nu$, the infimum can be written as
\begin{equation}\label{eq:IGWcouplingeqn}
\inf_{\pi\in\Pi(\mu,\nu)}-2 \left(\int xy\,\dd\pi(x,y)\right)^2
=
-2\max\left\{\left( \int xy\,\dd\pi^{\min}(x,y)\right)^2,\left(\int xy\,\dd\pi^{\max}(x,y)\right)^2\right\},
\end{equation}
where $\pi^{\min}\in \argmin_{\pi\in\Pi(\mu,\nu)} \int xy\,\dd\pi(x,y)$ and
$\pi^{\max}\in \argmax_{\pi\in\Pi(\mu,\nu)} \int xy\,\dd\pi(x,y)$.
It is easy to see that
\[
\begin{aligned}
\argmax_{\pi\in\Pi(\mu,\nu)} \int xy\,\dd\pi(x,y)
&=
\argmin_{\pi\in\Pi(\mu,\nu)} \int -xy\,\dd\pi(x,y),\\
\argmin_{\pi\in\Pi(\mu,\nu)} \int xy\,\dd\pi(x,y)
&=
\left\{(-\id,\id)_{\sharp}\check\pi:\check\pi\in\argmin_{\pi\in\Pi((-\id)_{\sharp}\mu,\nu)} \int- xy\,\dd\pi(x,y) \right\}.
\end{aligned}
\]
With the objectives in this form, we note that, for any $\pi\in\Pi(\mu,\nu)$,
\[
\int -2xy\,\dd\pi(x,y)
=
\int (x-y)^2\,\dd\pi(x,y) - \int x^2\,\dd\mu(x)-\int y^2\,\dd\nu(y),
\]
so that $\pi^{\max}$ corresponds to an optimal coupling for the quadratic OT  problem between $\mu$ and $\nu$ and, likewise,
$\pi^{\min}$ is related to a solution of the quadratic OT problem between $(-\id)_{\sharp}\mu$ and $\nu$.

It is well known (cf.\,e.g.,\ \cite[Theorem~2.9]{santambrogio2015optimal}) that the unique optimal coupling for the quadratic OT problem between univariate measures $\alpha,\beta$ is
$\pi^{\star}= \big(F_{\alpha}^{-1},F_{\beta}^{-1}\big)_{\sharp}(\mathrm{Unif}([0,1]))$.
Applying this fact with $(\alpha,\beta)=(\mu,\nu)$ and $(\alpha,\beta)=((-\id)_{\sharp}\mu,\nu)$ proves the general claim.
The statement for uniform distributions on the same number of points follows immediately by inserting the expressions for the relevant CDFs into the derived formula. In this case $\pi^\star$ and $\pi_\star$ are induced by the identity and anti-identity permutations, respectively as claimed.
\qed

\subsection{Proof of \texorpdfstring{\cref{lem:min_attained_slicing}}{Proposition 4.2}}
\label{proof:lem:min_attained_slicing}
As shown in \cref{lemma:IGW_lip} ahead, for each fixed $\theta \in\mathbb S^{d_y-1}$, the function $\mathbf \Delta\in \St(d_x,d_y)\mapsto \mathsf{IGW}((\theta^{\intercal}\bdelta)_{\sharp}\mu,(\theta^{\intercal})_{\sharp}\nu)^2$ is Lipschitz continuous relative to $\|\cdot\|_{\mathrm{F}}$ with constant at most $L'_{\mu,\nu}=4M_2(\mu)^2+4M_2(\mu)M_2(\nu)$. It follows that, for any $\mathbf \Delta,\mathbf \Delta'\in\St(d_x,d_y)$, 
\[
\begin{aligned}
    \left|\int_{\mathbb S^{d_y-1}}\mathsf{IGW}((\theta^{\intercal}\bdelta)_{\sharp}\mu,(\theta^{\intercal})_{\sharp}\nu)^2-\mathsf{IGW}((\theta^{\intercal}\bdelta')_{\sharp}\mu,(\theta^{\intercal})_{\sharp}\nu)^2 d\sigma_{d_y}(\theta) \right|&
    \\
    &\hspace{-12em}\leq L'_{\mu,\nu}\int_{\mathbb S^{d_y-1}}\|\mathbf \Delta-\mathbf \Delta'\|_{\mathrm{F}} d\sigma_{d_y}(\theta) = L'_{\mu,\nu} \|\mathbf \Delta-\mathbf \Delta'\|_{\mathrm{F}}, 
    \end{aligned}
\]
so that $\mathbf \Delta\in\St(d_x,d_y)\mapsto \int_{\mathbb S^{d_y-1}}\mathsf{IGW}((\theta^{\intercal}\bdelta)_{\sharp}\mu,(\theta^{\intercal})_{\sharp}\nu)^2 d\sigma_{d_y}(\theta)$ is Lipschitz continuous with the same constant. As $\St(d_x,d_y)$ is compact relative to $\|\cdot\|_{\mathrm{F}}$, each continuous function over $\St(d_x,d_y)$ achieves its minimum, proving the claim.
\qed

\subsection{\texorpdfstring{Proof of \cref{prop:metric}}{Proof of metric properties}}
\label{proof:prop:metric}
The proof of this result relies upon the following technical lemma, whose proof is provided in \cref{proof:lem:continuity_orientation} ahead.
\begin{lemma}
\label{lem:continuity_orientation}
    For any $\mu\in\mathcal P_2(\mathbb R^{d_x}),\nu\in\mathcal P_2(\mathbb R^{d_y})$, and $\mathbf \Delta\in\St(d_x,d_y)$, the functions $a\in\mathbb R^{d_y}\mapsto \mathsf W_2({(a^{\intercal}\bdelta)_{\sharp}\mu},{(a^{\intercal})_{\sharp}\nu})$ and $a\in\mathbb R^{d_y}\mapsto \mathsf W_2((-\id)_{\sharp}{(a^{\intercal}\bdelta)_{\sharp}\mu},{(a^{\intercal})_{\sharp}\nu})$ are continuous. 
\end{lemma}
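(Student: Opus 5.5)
Fix $\bdelta\in\St(d_x,d_y)$. Continuity of $a\mapsto \mathsf W_2((a^{\intercal}\bdelta)_{\sharp}\mu,(a^{\intercal})_{\sharp}\nu)$ will follow from the triangle inequality for $\mathsf W_2$ together with an explicit Lipschitz estimate for the projected measures as functions of $a$. For $a,b\in\RR^{d_y}$,
\[
\big|\mathsf W_2((a^{\intercal}\bdelta)_{\sharp}\mu,(a^{\intercal})_{\sharp}\nu)-\mathsf W_2((b^{\intercal}\bdelta)_{\sharp}\mu,(b^{\intercal})_{\sharp}\nu)\big|\leq \mathsf W_2((a^{\intercal}\bdelta)_{\sharp}\mu,(b^{\intercal}\bdelta)_{\sharp}\mu)+\mathsf W_2((a^{\intercal})_{\sharp}\nu,(b^{\intercal})_{\sharp}\nu).
\]
Each term on the right will be bounded by evaluating the cost of a specific coupling.

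For the first term I will use the coupling $(a^{\intercal}\bdelta x,b^{\intercal}\bdelta x)_{\sharp}\mu$, which gives
\[
\mathsf W_2((a^{\intercal}\bdelta)_{\sharp}\mu,(b^{\intercal}\bdelta)_{\sharp}\mu)^2\leq\int ((a-b)^{\intercal}\bdelta x)^2d\mu(x)\leq \|a-b\|^2\|\bdelta\|_{\op}^2M_2(\mu)=\|a-b\|^2M_2(\mu).
\]
The last step uses Cauchy--Schwarz together with $\|\bdelta\|_{\op}=1$ for $\bdelta\in\St(d_x,d_y)$. Treating the second term the same way with the coupling $(a^{\intercal}y,b^{\intercal}y)_{\sharp}\nu$ bounds it by $\|a-b\|\sqrt{M_2(\nu)}$. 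Combining the two estimates, the first function is Lipschitz with constant $\sqrt{M_2(\mu)}+\sqrt{M_2(\nu)}$. All quantities involved are finite because $\mu,\nu\in\cP_2$.

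For the second function, note that $(-\id)_{\sharp}(a^{\intercal}\bdelta)_{\sharp}\mu=((-a)^{\intercal}\bdelta)_{\sharp}\mu$. The same argument then applies verbatim: the coupling $(-a^{\intercal}\bdelta x,-b^{\intercal}\bdelta x)_{\sharp}\mu$ has cost $\int((a-b)^{\intercal}\bdelta x)^2d\mu(x)$, exactly as before. Hence the second function is Lipschitz with the same constant. There is no real obstacle in this argument; the only point that needs care is checking that each displayed coupling actually has the claimed marginals.
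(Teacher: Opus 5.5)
Your proof is correct, and it takes a different route from the paper's.

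The paper argues sequentially. For $a_n\to a$, the linear maps $x\mapsto a_n^{\intercal}\bdelta x$ converge to $x\mapsto a^{\intercal}\bdelta x$ uniformly on compact sets, so the projected measures converge weakly by a lemma of Ambrosio--Gigli--Savar\'e. The second moments $a_n^{\intercal}\bdelta\bR_{\mu}\bdelta^{\intercal}a_n$ converge by direct computation. Weak convergence plus convergence of second moments then gives $\mathsf W_2$-convergence of each projected measure, hence convergence of the $\mathsf W_2$ values.

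You bypass the weak-convergence machinery entirely. The triangle inequality together with the synchronous couplings $(a^{\intercal}\bdelta,b^{\intercal}\bdelta)_{\sharp}\mu$ and $(a^{\intercal},b^{\intercal})_{\sharp}\nu$ gives the stronger, quantitative statement that both functions are globally Lipschitz with constant $\sqrt{M_2(\mu)}+\sqrt{M_2(\nu)}$. The identity $(-\id)_{\sharp}(a^{\intercal}\bdelta)_{\sharp}\mu=((-a)^{\intercal}\bdelta)_{\sharp}\mu$ then reduces the second function to the same estimate. The paper's argument is softer and only yields continuity. Your bound $\mathsf W_2(T_{\sharp}\mu,S_{\sharp}\mu)\le\|T-S\|_{L^2(\mu)}$ applies just as easily to any family of pushforwards of a fixed measure, and it gives a rate.

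Your quantitative version is also exactly the kind of estimate the paper later imports by citation in the proof of \cref{prop:topology}, when it compares the slices at $\theta$ and $\vartheta$. Your computation shows that the factor multiplying $\|\theta-\vartheta\|$ there should be $\sqrt{M_2(\mu_n)}+\sqrt{M_2(\mu)}$ rather than $M_2(\mu_n)+M_2(\mu)$; a scaling argument rules out the latter. This is harmless for that proof, which only uses that the factor stays bounded in $n$.
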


We first prove the pseudometric property. To see that $\mathsf{IGW}(\mu,\nu)\geq 0$,  for any $\mu,\nu\in\cP_2(\mathbb R^d)$, \cref{lem:min_attained_slicing} implies that 
$
\aIGW(\mu,\nu)^2=\int
\mathsf{IGW}((\theta^{\intercal}\bdelta)_{\sharp}\mu,(\theta^{\intercal})_{\sharp}\nu)^2d\sigma_{d}(\theta)$
for some choice of $\bdelta\in\St(d,d)$. Since $
\mathsf{IGW}((\theta^{\intercal}\bdelta)_{\sharp}\mu,(\theta^{\intercal})_{\sharp}\nu)^2\geq 0$ for every $\theta\in\mathbb S^{d-1}$, it follows that $\aIGW(\mu,\nu)\geq 0$. If $\mu=\nu$, $0\leq \aIGW(\mu,\nu)^2\leq \int
\mathsf{IGW}((\theta^{\intercal})_{\sharp}\mu,(\theta^{\intercal})_{\sharp}\mu)^2d\sigma_{d}(\theta)=0$ so that $\aIGW(\mu,\nu)=0$. 
To show symmetry, notice that, for any $\bdelta\in\St(d,d)$, after the change of variables $\vartheta = \bdelta^{\intercal}\theta$,
$$\int
\mathsf{IGW}((\theta^{\intercal}\mathbf \bdelta)_{\sharp}\mu,(\theta^{\intercal})_{\sharp}\nu)^2d\sigma_{d}(\theta) = \int
\mathsf{IGW}((\vartheta^{\intercal})_{\sharp}\mu,((\bdelta\vartheta)^{\intercal})_{\sharp}\nu)^2d\sigma_{d}(\vartheta).$$ Take  the infimum over $\bdelta$ on both sides, we get 
\begin{align*}
    \aIGW(\mu,\nu)^2 &= \inf_{\Delta\in\St(d,d)} \int
\mathsf{IGW}((\vartheta^{\intercal})_{\sharp}\mu,((\bdelta\vartheta)^{\intercal})_{\sharp}\nu)^2d\sigma_{d}(\vartheta)\\
&= \inf_{\Delta\in\St(d,d)} \int
\mathsf{IGW}(((\bdelta\vartheta)^{\intercal})_{\sharp}\nu,(\vartheta^{\intercal})_{\sharp}\mu)^2d\sigma_{d}(\vartheta) = \aIGW(\nu,\mu)^2.
\end{align*}
The last equality follows by noting that $\St(d,d)$ is the set of all orthonormal matrices whereby $\{\bdelta:\bdelta\in  \St(d,d)\} = \{\bdelta^{\intercal}:\bdelta\in  \St(d,d)\}$. 

As for the triangle inequality, if $\mu,\nu,\rho\in\mathcal P_2(\mathbb R^d)$, Minkowski's inequality in $L^2(\mathbb S^{d-1},\sigma_{d})$ and the triangle inequality for  IGW yield that 
\[
\begin{aligned}
\aIGW(\mu,\nu)&=    \min_{\bdelta\in\St(d,d)}\left(\int 
\mathsf{IGW}((\theta^{\intercal}\bdelta)_{\sharp}\mu,(\theta^{\intercal})_{\sharp}\nu)^2d\sigma_{d}(\theta)\right)^{1/2}
\\
&\leq \min_{\bdelta\in\St(d,d)}\left(\int 
\left(\mathsf{IGW}((\theta^{\intercal}\bdelta)_{\sharp}\mu,(\theta^{\intercal}\bm\Gamma)_{\sharp}\rho)+\mathsf{IGW}((\theta^{\intercal}\bm\Gamma)_{\sharp}\rho,(\theta^{\intercal})_{\sharp}\nu)\right)^2d\sigma_{d}(\theta)\right)^{1/2}
\\
&\leq \min_{\bdelta\in\St(d,d)}\left(\int 
\mathsf{IGW}((\theta^{\intercal}\bdelta)_{\sharp}\mu,(\theta^{\intercal}\bm\Gamma)_{\sharp}\rho)^2d\sigma_{d}(\theta)\right)^{1/2}
\\
&\hspace{7em}+\left(\int\mathsf{IGW}((\theta^{\intercal}\bm\Gamma)_{\sharp}\rho,(\theta^{\intercal})_{\sharp}\nu)^2d\sigma_{d}(\theta)\right)^{1/2},
\end{aligned}
\]
where $\bgamma\in \St(d,d)$ is arbitrary. The first term satisfies 
\[
\min_{\bdelta\in\St(d,d)}\int 
\mathsf{IGW}((\theta^{\intercal}\bdelta)_{\sharp}\mu,(\theta^{\intercal}\bm\Gamma)_{\sharp}\rho)^2d\sigma_{d}(\theta)=\aIGW(\mu,\bm\Gamma_{\sharp}\rho)^2=\aIGW(\mu,\rho)^2,
\]
where we recall that $\aIGW$ is invariant to orthogonal transformations.  
We then tighten the upper bound in the previous display by  minimizing over the choice of $\bm \Gamma$; 
\[
\begin{aligned}
\aIGW(\mu,\nu)
&\leq \aIGW(\mu,\rho)+\min_{\bgamma\in\St(d,d)}\left(\int\mathsf{IGW}((\theta^{\intercal}\bm\Gamma)_{\sharp}\rho,(\theta^{\intercal})_{\sharp}\nu)^2d\sigma_{d}(\theta)\right)^{1/2}
\\
&=\aIGW(\mu,\rho)+\aIGW(\rho,\nu),
\end{aligned}
\]
as desired. With that, we move to establish Items (i) and (ii).

\textbf{Item (ii): Equivalence under analyticity condition.} We show that when $\mu,\nu$ have analytic characteristic functions, $\aIGW(\mu,\nu)=0$ if and only if $\IGW(\mu,\nu)=0$.
  Suppose first that $\aIGW(\mu,\nu)=0$. By \cref{lem:min_attained_slicing}, there exists $\bdelta\in\St(d,d)$ for which $\mathsf{IGW}((\theta^{\intercal}\bdelta)_{\sharp}\mu,(\theta^{\intercal})_{\sharp}\nu)=0$  for every $\theta\in \mathcal O$ where $\mathcal O\subset \mathbb S^{d-1}$ has full $\sigma_{d}$ measure. Thus, for all such $\theta\in\mathcal O$ there exists $\bm D _{\theta}\in\St(1,1)=\{-1,1\}$ for which $\mu_{\theta}\coloneqq(\bm D_{\theta})_{\sharp}(\theta^{\intercal}\bdelta)_{\sharp}\mu=(\theta^{\intercal})_{\sharp}\nu\eqqcolon \nu_{\theta}$.

  Suppose first that there exists some $\theta \in\mathcal O$ at which $(-\id)_{\sharp}(\theta^{\intercal}\bdelta)_{\sharp}\mu\neq(\theta^{\intercal})_{\sharp}\nu$. By \cref{lem:continuity_orientation}, there exists a neighborhood $N_\theta\subset \mathbb R^{d}$ of $\theta$ on which $\mathsf{W}_2((-\id)_{\sharp}((\cdot)^{\intercal}\bdelta)_{\sharp}\mu,((\cdot)^{\intercal})_{\sharp}\nu)>0$ so that this inequality holds, in particular, at every $\vartheta \in N_{\theta}\cap \mathcal O$ which has positive measure relative to $\sigma_{d}$. 

  \begin{minipage}{0.53\textwidth}
  Conclude that $\mathsf W_2((\vartheta^{\intercal}\bdelta)_{\sharp}\mu,(\vartheta^{\intercal})_{\sharp}\nu)=0$  for every  $\vartheta \in N_{\theta}\cap \mathcal O$ and, by continuity,  on the entire set $\mathbb S^{d-1}\cap N_{\theta}$ so that, for every $\vartheta \in \mathbb S^{d-1}\cap N_{\theta}$, $
  \Phi_{(\vartheta^{\intercal}\bdelta)_{\sharp}\mu}(t)=\Phi_{(\vartheta^{\intercal})_{\sharp}\nu}(t)$ for each $t\in \mathbb R$, i.e., 
  $
    \int e^{i ts}d(\vartheta^{\intercal}\bdelta)_{\sharp}\mu(s)= \int e^{i (t\vartheta)^{\intercal}x}d\bdelta_{\sharp}\mu(x) = \int e^{i (t\vartheta)^{\intercal}y}d\nu(y).$
In sum,  $\Phi_{\bdelta_{\sharp}\mu}(t)=\Phi_{\nu}(t)$ 
  for every $t\in\mathcal T\coloneqq\left\{s\vartheta:s\in\mathbb R\text{ and }\vartheta\in \mathbb S^{d-1} \cap N_{\theta}\right\}$. It is easy to see that $\mathcal T$ contains an open ball centered at $\theta$ for instance, see \cref{fig:metric-proof-cone} for an illustration. However, since $\Phi_{\bdelta_{\sharp}\mu}$ and  $\Phi_{\nu}$ are analytic and coincide on some open set, Corollary 2.3.8 in \cite{krantz2001function} guarantees that $\Phi_{\bdelta_{\sharp}\mu}=\Phi_{\nu}$ on $\mathbb R^d$ and so $\bdelta_{\sharp}\mu=\nu$. 
  
  By the same token, if there exists $\theta\in\mathcal O$ at which $(\theta^{\intercal}\bdelta)_{\sharp}\mu\neq (\theta^{\intercal})_{\sharp}\nu$, then $(-\bdelta)_{\sharp}\mu= \nu$. Finally, if $(-\id)_{\sharp}(\theta^{\intercal}\bdelta)_{\sharp}\mu= (\theta^{\intercal})_{\sharp}\nu$ and $(\theta^{\intercal}\bdelta)_{\sharp}\mu= (\theta^{\intercal})_{\sharp}\nu$ for every $\theta\in\mathcal O$, the squared sliced $2$-Wasserstein distances, 

  \[
    \begin{gathered} 
    \int\mathsf W_2((-\id)_{\sharp}(\theta^{\intercal}\bdelta)_{\sharp}\mu, (\theta^{\intercal})_{\sharp}\nu)^2d\sigma_{d}(\theta)\\\text{ and }\\\int\mathsf W_2((\theta^{\intercal}\bdelta)_{\sharp}\mu, (\theta^{\intercal})_{\sharp}\nu)^2d\sigma_{d}(\theta)  
\end{gathered}
  \]
  are both $0$ so that Proposition 2.2 in \cite{bayraktar2021strong} implies that $(-\bdelta)_{\sharp}\mu= \nu$ and $(\bdelta)_{\sharp}\mu= \nu$. 
    \end{minipage}
\hfill 
\begin{minipage}[H]{0.46\textwidth}
\vspace{-5em}
\centering
\begin{tikzpicture}[scale=0.79, every node/.style={font=\small}]
    \coordinate (O) at (0,0);
    \def\R{2.15}
    \def\ThetaAng{50}
    \def\CapA{0.72}
    \def\CapB{0.37}
    \def\CapRot{-48}
    \colorlet{CapRed}{red!50!black}
    \coordinate (Theta) at ({(\R-0.58)*cos(\ThetaAng)},{(\R-0.58)*sin(\ThetaAng)});
    \coordinate (CapC) at (Theta);

    \coordinate (CapL) at ($(CapC)+({-\CapA*cos(\CapRot)},{-\CapA*sin(\CapRot)})$);
    \coordinate (CapR) at ($(CapC)+({\CapA*cos(\CapRot)},{\CapA*sin(\CapRot)})$);
    \coordinate (ExtL) at ($(O)!1.82!(CapL)$);
    \coordinate (ExtR) at ($(O)!1.82!(CapR)$);
    \coordinate (TopC) at ($(ExtL)!0.5!(ExtR)+(0.06,0.04)$);

    \fill[gray!35, fill opacity=0.09] (O) circle (\R);
    \draw[thick] (O) circle (\R);
    \draw[black!35, dashed] (O) ellipse [x radius=\R, y radius=0.62];
    \draw[black!20] (-0.26,-2.02) .. controls (-0.95,-0.45) and (-0.82,0.92) .. (-0.10,2.02);
    \draw[black!10, dashed] (0.10,-2.02) .. controls (0.84,-0.90) and (0.96,0.45) .. (0.28,2.02);

    \fill[purple!24, opacity=0.28] (O) -- (ExtL) -- (ExtR) -- cycle;
    \fill[CapRed, opacity=0.16] (O) -- (ExtL) -- (ExtR) -- cycle;
    \draw[CapRed, line width=0.85pt] (O) -- (ExtL);
    \draw[CapRed, line width=0.85pt] (O) -- (ExtR);
    \fill let \p1 = ($(ExtR)-(ExtL)$),
               \n1 = {0.98838*veclen(\x1,\y1)/2},
               \n2 = {\n1*\CapB/\CapA} in
        [CapRed!10, rotate around={\CapRot:(TopC)}] (TopC) ellipse [x radius=\n1, y radius=\n2];
    \draw let \p1 = ($(ExtR)-(ExtL)$),
               \n1 = {0.98838*veclen(\x1,\y1)/2},
               \n2 = {\n1*\CapB/\CapA} in
        [CapRed, draw opacity=0.5, line width=0.85pt, rotate around={\CapRot:(TopC)}] (TopC) ellipse [x radius=\n1, y radius=\n2];

    \begin{scope}
        \clip (O) circle (\R);
        \fill[CapRed, opacity=0.30, rotate around={\CapRot:(CapC)}] (CapC) ellipse [x radius=\CapA, y radius=\CapB];
        \draw[CapRed, line width=1.7pt, rotate around={\CapRot:(CapC)}] (CapC) ellipse [x radius=\CapA, y radius=\CapB];
    \end{scope}

    \fill[CapRed] (Theta) circle (2.5pt);
    \node[CapRed, font=\large, anchor=west] at ($(Theta)+(-0.01,-0.22)$) {$\theta$};
    \fill[black] (O) circle (1.7pt);
\end{tikzpicture}
{\captionsetup{width=.9\linewidth}\captionof{figure}{Illustration of the proof of Item (ii) in Proposition~\ref{prop:metric}. For any $\theta\in\unitsph$ where the identity mapping is strictly optimal, there exists a spherical cap around $\theta$ on which the same mapping remains optimal. This optimality extends to the entire cone extending from the origin through that cap. As the characteristic functions of the projected measures coincide along all rays in that cone, they also coincide on the cone itself. Analyticity then forces the characteristic functions to coincide everywhere, hence $\IGW(\mu,\nu)=0$.}
}
\label{fig:metric-proof-cone} \vspace{-3em}
\end{minipage}

  For the other direction, if $\mathsf{IGW}(\mu,\nu)=0$, there exists $\bdelta\in\St(d,d)$ for which $\bdelta_{\sharp}\mu=\nu$, so 
  \[
  \begin{aligned}
  \aIGW(\mu,\nu)^2 &\leq \int \IGW((\theta^{\intercal}\bdelta)_{\sharp}\mu,(\theta^{\intercal})_{\sharp}\nu)^2d\sigma_{d}(\theta)\\
  &\leq  \int 2\left(M_2((\theta^{\intercal}\bdelta)_{\sharp}\mu)+M_2((\theta^{\intercal})_{\sharp}\nu)\right)\mathsf W_2((\theta^{\intercal}\bdelta)_{\sharp}\mu,(\theta^{\intercal})_{\sharp}\nu)^2d\sigma_{d}(\theta), 
  \end{aligned}
  \]
  where the upper bound is due to \eqref{eq:W2_upperbound}.
  Since $\bdelta_{\sharp}\mu=\nu$, the right hand side is $0$ by  Proposition 2.2 in \cite{bayraktar2021strong}  whereby $\aIGW(\mu,\nu)=0$ proving the claim.

\textbf{Item (i): Necessity of analyticity.}
We show by a direct construction that there exist  measures $\rho_1,\rho_2$ with $\aIGW(\rho_1,\rho_2)=0$ but $\IGW(\rho_1,\rho_2) > 0$.

Let $\kappa\in \cP_2(\RR^d)$ ($d>2$) be the base measure with density 
$g(x)=\frac{K}{(1+\|x\|^2)^{m}}$, where $K$ is the  normalizing constant. We take $m>\frac{d+2}{2}$ so that $\kappa$ has bounded second moment. Now, let
\[
\chi(x)=\begin{cases}\exp(-\frac{1}{1-\|x\|^2}),&\text{if }\|x\|<1,
\\
0,&\text{otherwise}.
\end{cases}
\]
Notice that $\chi(\frac{x-z}{r})$ is a smooth compactly supported function on $B(z,r)\coloneqq \left\{x\in\mathbb R^d:\|x-z\|\leq r\right\}$.
Let $\eta_1(x)=\chi\left(\frac{x-e_1}{1/2}\right)-\chi\left(\frac{x+e_1}{1/2}\right)$ and $\eta_2(x)=\chi\left(\frac{x-e_2}{1/4}\right)-\chi\left(\frac{x+e_2}{1/4}\right)$, where $e_1$ and $e_2$ are the standard unit vectors. Note that  $\eta_1$ is supported in $C_1\coloneqq \{x\in\mathbb R^d:|x_1|\geq|x_2|\}$ and $\eta_2$ is supported in $C_2\coloneqq \{x\in\mathbb R^d:|x_2|\geq|x_1|\}$ so that $\eta_1$ and $\eta_2$ have disjoint support and are both odd functions. Furthermore, $\eta_1$ and $\eta_2$ are Schwartz functions, that is, for every multi-index $\alpha,\beta$, 
$
\sup_{x\in\mathbb R^d}|x^{\beta}\partial^{\alpha}\eta_i(x)|<\infty \text{ for $i=1,2$}. 
$
As the Fourier transform defines an isomorphism on the set of all Schwartz functions (cf. e.g., Theorem 7.1.5 in \cite{hormander1983analysis}), $\eta_1$ and $\eta_2$ have inverse Fourier transforms, $\check \eta_1,\check\eta_2$, which are also Schwartz functions and, for $j=1,2$, we have that
\[
\begin{aligned}
\check{\eta_j}(x)=(2\pi)^{-d}\int_{\RR^d} e^{it^{\intercal}x}\eta_j(t)dt&=(2\pi)^{-d}\int_{\RR^d} \left(\cos(t^{\intercal}x)+i\sin(t^{\intercal}x)\right)\eta_j(t)dt
\\
&=(2\pi)^{-d}i\int_{\RR^d} \sin(t^{\intercal}x)\eta_j(t)dt,
\end{aligned}
\]
where the final equality follows from the fact that $\eta_j$ is odd. Conclude that $h_j\coloneqq i\check \eta_j$ is a real-valued odd Schwartz function so that  $h_1$ and $h_2$ integrate to $0$ and satisfy $|h_j(x)|\leq M(1+\|x\|^2)^{-m}$ for some choice of $M>0$.

Defining $f_1,f_2$ via
\begin{equation}
\label{eq:fDefinitions}
f_1(x) \coloneqq g(x)+\frac{K}{2M}h_1(x)+\frac{K}{2M}h_2(x)\geq 0,\quad f_2(x) \coloneqq g(x)+\frac{K}{2M}h_1(x)-\frac{K}{2M}h_2(x)\geq 0, 
\end{equation}
we see that $f_1,f_2$ integrate to $1$ so that they correspond to probability density functions for some distributions which we denote by $\rho_1$ and $\rho_2$. Moreover, their characteristic functions are
\begin{equation}
\label{eq:charFunctions}
\Phi_{\rho_1}(t) = \hat g(t)+\frac{iK}{2M}\left(\eta_1(t)+\eta_2(t)\right),\quad \Phi_{\rho_2}(t) = \hat g(t)+\frac{iK}{2M}\left(\eta_1(t)-\eta_2(t)\right).
\end{equation}
Notice for any $t\in \mathrm{Int}(C_1)$, the interior of $C_1$,
$
\Phi_{\rho_1}(t) = \hat g(t)+\frac{iK}{2M}\eta_1(t)=\Phi_{\rho_2}(t)$ whereas if  $t\in \mathrm{Int}(C_2)$, 
$\Phi_{\rho_1}(t)=\hat{g}(t)+\frac{iK}{2M}\eta_2(t)$ which coincides with the complex conjugate, $\overline{\Phi_{\rho_2}}(t)$, of $\Phi_{\rho_2}(t)$ since $\hat g$ is real-valued as $g$ is even and $\eta_1,\eta_2$ are also real-valued. We conclude that, if $\theta \in\mathbb S^{d-1}$ is such that $\theta \in \mathrm{Int}(C_1)$, then $\left\{s\theta:s\in\mathbb R\right\}\subset \mathrm{Int}(C_1)$ and $\Phi_{\rho_1}(s\theta)=\Phi_{\rho_2}(s\theta)$ for all $s\in\mathbb R$, that is, $(\theta^{\intercal})_{\sharp}\rho_1=(\theta^{\intercal})_{\sharp}\rho_2$. If $\theta \in \mathrm{Int}(C_2)$, we obtain similarly that $(\theta^{\intercal})_{\sharp}\rho_1=(\theta^{\intercal})_{\sharp}(-\id)_{\sharp}\rho_2=(-\id)_{\sharp}(\theta^{\intercal})_{\sharp}\rho_2$. This means that $\mathsf{IGW}((\theta^{\intercal})_{\sharp}\rho_1,(\theta^{\intercal})_{\sharp}\rho_2)=0$ for $\sigma_{d}$-almost every $\theta$ and so $\aIGW(\rho_1,\rho_2)=0$, noting that $C_1\cup C_2=\mathbb R^d$. 

We next show  that $\IGW(\rho_1,\rho_2)\neq 0$, that is, there does not exist some $\bm \Delta \in \St(d,d)$ with $(\bdelta^{\intercal})_{\sharp}\rho_1=\rho_2$.  Assume to the contrary that such a $\bdelta$ exists so that $\Phi_{(\bm\Delta^{\intercal})_{\sharp}\rho_1}(t)=\Phi_{\rho_1}(\bm\Delta t)=\Phi_{\rho_2}(t)$ for every $t\in\mathbb R^d$. Observe that 
\[
\begin{aligned}
\Phi_{\rho_1}(\bdelta t) 
&
= \int_{\RR^d} e^{i(\bdelta t)^{\intercal}x}\left(g(x)+\frac{K}{2M}\left(h_1(x)+h_2(x)\right)\right)dx
\\
&
=\int_{\RR^d} e^{i(\bdelta t)^{\intercal}\bm\Delta x}g(\bm \Delta x)dx+\frac{iK}{2M}\left(\eta_1(\bm \Delta t)+\eta_2(\bm \Delta t)\right)
\\
&
=\hat g(t)+\frac{iK}{2M}\left(\eta_1(\bm \Delta t)+\eta_2(\bm \Delta t)\right)
\end{aligned}
\]
where we have used the fact that $\bm \Delta^{\intercal}\bm \Delta=\id$ and that, by construction, $g$ is invariant to orthogonal transformations. It follows that $\Phi_{(\bdelta^{\intercal})_{\sharp}\rho_1}(t)=\Phi_{\rho_2}(t)$ if and only if 
\begin{equation}\label{eqn:counterexample_etas}
    \eta_1(\bm \Delta t)+\eta_2(\bm\Delta t)=\eta_1(t)-\eta_2(t) \quad \text{for all } t\in \mathbb R^{d}.
\end{equation}
We now illustrate that  no such $\bdelta$ exists. For $t\in B(e_1,1/2)\subseteq C_1$, we have $\eta_1(\bm \Delta t)+\eta_2(\bm\Delta t)=\eta_1(t)$ and, since $\eta_1$ and $\eta_2$ have disjoint supports,  either
$\eta_1(\bm \Delta t)=\eta_1(t) \text{ or } \eta_2(\bm\Delta t)=\eta_1(t).$ The latter case cannot occur since  
$\eta_1$ and $\eta_2$ are supported on balls  of different radius, so that no orthogonal transformation can make them coincide.  In the former case, $\bdelta=\mathrm{Id}$ since $\eta_1$ is an odd function with no symmetries. However, this implies that $\eta_1(t)+\eta_2(t)= \eta_1(t)-\eta_2(t)$ for each $t\in\mathbb R^{d}$, that is, $2\eta_2(t)=0$, yielding a clear contradiction.  
It follows that $\IGW(\rho_1,\rho_2) > 0$, proving the claim. 
\qed

\subsection{\texorpdfstring{Proof of \cref{prop:topology}}{Proof of topology result}}
\label{proof:prop:topology}
   Suppose first that $\IGW(\mu_n,\mu)\to 0$ as $n\to \infty$. By \cref{prop:IGW_weak_conv}, there exists $\bm \Delta_n\in\St(d,d)$ for which $(\bdelta_n^{\intercal})_{\sharp}\mu_n\stackrel{w}{\to}\mu$ and $M_2(\mu_n)\to M_2(\mu)$. By \cref{lem:continuity_orientation}, for any $\theta \in\mathbb S^{d-1}$,  $(\theta^{\intercal}\bdelta_n)_{\sharp}\mu_n\stackrel{w}{\to}(\theta^{\intercal})_{\sharp}\mu$ and $M_2((\theta^{\intercal})_{\sharp}\mu_n)\to M_2((\theta^{\intercal})_{\sharp}\mu)$ so that $\IGW((\theta^{\intercal}\bdelta_n)_{\sharp}\mu_n,(\theta^{\intercal})_{\sharp}\mu)\to 0$. Observe now that   
    \[
\begin{aligned}
0\leq \mathsf{IGW}((\theta^{\intercal}\bdelta_n)_{\sharp}\mu_n,(\theta^{\intercal})_{\sharp}\mu)^2
&\leq (\theta^{\intercal}\bdelta_n \mathbf R_{\mu_n} \bdelta_n^{\intercal} \theta)^2 +  (\theta^{\intercal} \mathbf R_{\mu} \theta)^2\leq M_2(\mu_n)^4+M_2(\mu)^4,
\end{aligned}
\] 
where we have used the Cauchy-Schwarz inequality to obtain that $|\theta^{\intercal}\bdelta_n \int zz^{\intercal}d\mu_n(z) \bdelta_n^{\intercal} \theta|\leq \int \|z\|^2d\mu_n(z) \|\bdelta_n^{\intercal} \theta\|^2=M_2(\mu_n)^2$ since $\St(d,d)$ is the set of $d\times d$ orthogonal matrices.   
Since $M_2(\mu_n)\to M_2(\mu)$, $\mathsf{IGW}((\theta^{\intercal}\bdelta_n)_{\sharp}\mu_n,(\theta^{\intercal})_{\sharp}\mu)^2$ is dominated by a constant so that we may apply the dominated convergence theorem to obtain that  
\[
0\leq \aIGW(\mu_n,\mu)^2\leq \int \IGW((\theta^{\intercal}\bdelta_n)_{\sharp}\mu_n,(\theta^{\intercal})_{\sharp}\mu)^2d\sigma_{d}(\theta)\to 0,  
\]
proving that $\aIGW(\mu_n,\mu)\to 0$. 

On the other hand, if $\aIGW(\mu_n,\mu)\to 0$ there exists a sequence $(\bdelta_n)_{n\in\mathbb N}\subset \St(d,d)$ along which $\int \IGW((\theta^{\intercal}\bdelta_n)_{\sharp}\mu_n,(\theta^{\intercal})_{\sharp}\mu)^2d\sigma_{d}(\theta)\to 0$. Consequently, there exists a set $A$ of full $\sigma_d$ measure for which, after extracting subsequences, $\IGW((\theta^{\intercal}\bdelta_n)_{\sharp}\mu_n,(\theta^{\intercal})_{\sharp}\mu)^2\to 0$ for every $\theta \in A$. (To simplify notation, we relabel the subsequence and continue to denote it by the same index)

Now, fix some $\theta\in A$. From \cref{cor:1dCase},
$\mathsf W_2\left((s_n^{\theta}\id)_{\sharp}(\theta^{\intercal}\mathbf \Delta_n)_{\sharp}\mu_n, (\theta^{\intercal})_{\sharp}\mu\right)\to 0$ where  
  \[s_n^{\theta}=\begin{cases}
            1,&\text{if }\mathsf W_2((\theta^{\intercal}\mathbf \Delta_n)_{\sharp}\mu_n, (\theta^{\intercal})_{\sharp}\mu)\leq \mathsf W_2((-\id)_{\sharp}(\theta^{\intercal}\mathbf \Delta_n)_{\sharp}\mu_n, (\theta^{\intercal})_{\sharp}\mu),
            \\
            -1,&\text{if }\mathsf W_2((\theta^{\intercal}\mathbf \Delta_n)_{\sharp}\mu_n, (\theta^{\intercal})_{\sharp}\mu)> \mathsf W_2((-\id)_{\sharp}(\theta^{\intercal}\mathbf \Delta_n)_{\sharp}\mu_n, (\theta^{\intercal})_{\sharp}\mu).
\end{cases}
\]

Assume that $\theta$ is such that $\mathsf W_2((\theta^{\intercal})_{\sharp}\mu,(-\id)_{\sharp}(\theta^{\intercal})_{\sharp}\mu)\neq 0$ (we leave the case where it nullifies to the end of the proof). We show that there exist some neighborhood of $\theta$ on which  $s_n^{\vartheta} = s_n^{\theta}$ for all sufficiently large $n$. We know that 
\[
\begin{aligned}
&\liminf_{n\to\infty}\mathsf W_2\left((s_n^{\theta}\id)_{\sharp}(\theta^{\intercal}\mathbf \Delta_n)_{\sharp}\mu_n, (-\id)_{\sharp}(\theta^{\intercal})_{\sharp}\mu\right)
\\
&\geq \mathsf W_2((\theta^{\intercal})_{\sharp}\mu,(-\id)_{\sharp}(\theta^{\intercal})_{\sharp}\mu)-\lim_{n\to\infty}\mathsf W_2\left((s_n^{\theta}\id)_{\sharp}(\theta^{\intercal}\mathbf \Delta_n)_{\sharp}\mu_n, (\theta^{\intercal})_{\sharp}\mu\right)
\\
&=\mathsf W_2\left((\theta^{\intercal})_{\sharp}\mu,(-\id)_{\sharp} (\theta^{\intercal})_{\sharp}\mu\right).
\end{aligned}
\]
Hence, for all $n$ sufficiently large, we have that 
\[
\begin{gathered}
\mathsf W_2\left((s_n^{\theta}\id)_{\sharp}(\theta^{\intercal}\mathbf \Delta_n)_{\sharp}\mu_n, (-\id)_{\sharp} (\theta^{\intercal})_{\sharp}\mu\right)>\frac34\mathsf W_2\left((\theta^{\intercal})_{\sharp}\mu,(-\id)_{\sharp} (\theta^{\intercal})_{\sharp}\mu\right) 
\text{ and}\\\mathsf W_2\left((s_n^{\theta}\id)_{\sharp}(\theta^{\intercal}\mathbf \Delta_n)_{\sharp}\mu_n, (\theta^{\intercal})_{\sharp}\mu\right)<\frac14\mathsf W_2\left((\theta^{\intercal})_{\sharp}\mu,(-\id)_{\sharp} (\theta^{\intercal})_{\sharp}\mu\right).
\end{gathered}
\]

Moreover, for all such $n$, we have
\[
\begin{aligned}
&\mathsf W_2\left((s_n^{\theta}\id)_{\sharp}(\vartheta^{\intercal}\mathbf \Delta_n)_{\sharp}\mu_n, (\vartheta^{\intercal})_{\sharp}\mu\right)
\\
&\leq \mathsf W_2\left((s_n^{\theta}\id)_{\sharp}(\theta^{\intercal}\mathbf \Delta_n)_{\sharp}\mu_n, (\theta^{\intercal})_{\sharp}\mu\right)+\|\theta-\vartheta\|\left(M_2((s_n^{\theta}\id)_{\sharp}(\mathbf \Delta_n^{\intercal})_{\sharp}\mu_n)+M_2(\mu)\right)
\\
&=\mathsf W_2\left((s_n^{\theta}\id)_{\sharp}(\theta^{\intercal}\mathbf \Delta_n)_{\sharp}\mu_n, (\theta^{\intercal})_{\sharp}\mu\right)+\|\theta-\vartheta\|\left(M_2(\mu_n)+M_2(\mu)\right)
\\&
<\frac14\mathsf W_2\left((\theta^{\intercal})_{\sharp}\mu,(-\id)_{\sharp} (\theta^{\intercal})_{\sharp}\mu\right)+\|\theta-\vartheta\|\left(M_2(\mu_n)+M_2(\mu)\right),
\end{aligned}
\]
and, likewise, 
\[
\begin{aligned}
\mathsf W_2\left((s_n^{\theta}\id)_{\sharp}(\vartheta^{\intercal}\mathbf \Delta_n)_{\sharp}\mu_n, (-\id)_{\sharp}(\vartheta^{\intercal})_{\sharp}\mu\right)&
\\
&\hspace{-4em}>\frac34\mathsf W_2\left((\theta^{\intercal})_{\sharp}\mu,(-\id)_{\sharp} (\theta^{\intercal})_{\sharp}\mu\right)-\|\theta-\vartheta\|\left(M_2(\mu_n)+M_2(\mu)\right),
\end{aligned}
\]
as follows from Proposition 2.2 in \cite{bayraktar2021strong}. We show later in the proof that $M_2(\mu_n)$ is uniformly bounded so that, for all sufficiently large $n$, there is a neighborhood $N_{\theta}$ of $\theta$ on which  
\[
\mathsf W_2\left((s_n^{\theta}\id)_{\sharp}(\vartheta^{\intercal}\mathbf \Delta_n)_{\sharp}\mu_n, (-\id)_{\sharp} (\vartheta^{\intercal})_{\sharp}\mu\right)>\mathsf W_2\left((s_n^{\theta}\id)_{\sharp}(\vartheta^{\intercal}\mathbf \Delta_n)_{\sharp}\mu_n,  (\vartheta^{\intercal})_{\sharp}\mu\right)\text{ for every }\vartheta\in N_{\theta}.
\]
By applying \cref{lem:pushforwardMeasures}, we obtain that   
\[
\begin{aligned}
\mathsf W_2\left((s_n^{\theta}\id)_{\sharp}(\vartheta^{\intercal}\mathbf \Delta_n)_{\sharp}\mu_n, (-\id)_{\sharp} (\vartheta^{\intercal})_{\sharp}\mu\right)^2&= \inf_{\pi\in\Pi(\mu_n,\mu)} \int \|s_n^{\theta}\vartheta^{\intercal}\bdelta_n x - (-\vartheta^{\intercal} y)\|^2 d\pi(x,y) 
\\
&=\inf_{\pi\in\Pi(\mu_n,\mu)} \int \|(-s_n^{\theta}\vartheta^{\intercal}\bdelta_n x)-  \vartheta^{\intercal} y\|^2 d\pi(x,y)
\\
&=\mathsf W_2\left((-s_n^{\theta}\id)_{\sharp}(\vartheta^{\intercal}\mathbf \Delta_n)_{\sharp}\mu_n,  (\vartheta^{\intercal})_{\sharp}\mu\right)
\end{aligned}
\]
so that $s_n^{\vartheta}$ agrees with $s_n^{\theta}$ at all $\vartheta\in N_{\theta}$ for all such $n$ sufficiently large. Therefore,  \begin{equation}
\label{eq:conv_proof_weak}
\mathsf W_2((s_n^{\theta}\id)_{\sharp}(\vartheta^{\intercal}\bdelta_n)_{\sharp}\mu_n,(\vartheta^{\intercal})_{\sharp}\mu)\to 0\text{ for each }\vartheta\in N_{\theta}\cap A.
\end{equation}
It remains to show that $(s_n^{\theta}\id)_{\sharp}(\bdelta_n)_{\sharp}\mu_n$ admits a limit in $\mathsf W_2$ so that we can apply the reasoning from the proof of \cref{prop:metric}.

We first show that the operator norms of $\mathbf{R}_{n}\coloneqq \int zz^{\intercal} d(\bdelta_n)_{\sharp}\mu_n(z)$ are uniformly bounded. Suppose to the contrary that $\|\mathbf R_n\|_{\mathrm{op}}$ is unbounded and let $\lambda_{1}^{n},\dots,\lambda_d^{n}$ denote the eigenvalues of  $\mathbf R_{{n}}$ ordered from largest to smallest and  $v_{1}^{n},\dots,v_d^{n}$ be the corresponding (orthonormal) eigenvectors noting that $\mathbf R_{n}$ is positive semidefinite. As $\|\mathbf R_n\|_{\mathrm{op}}$ is unbounded, there exists a subsequence $n'$ along which $\|\mathbf R_{n'}\|_{\mathrm{op}}\to\infty$. As $\mathbb S^{d-1}$ is compact,  $v_{1}^{n''}\to v\in\mathbb S^{d-1}$ along a further subsequence $n''$. Now, let $S\coloneqq \left\{\theta \in \mathbb S^{d-1}:|\theta^{\intercal}v|^2>\delta\right\}$ for some $1>\delta>0$. As $S$ has positive surface measure, there exists $\theta \in S\cap A$ for which 
  \[
        \theta^{\intercal}\mathbf{R}_{n''}\theta=\sum_{i=1}^{d}\lambda_i^{n''}\left|\theta^{\intercal}v_{i}^{n''}\right|^2\geq \lambda_1^{n''} \left|\theta^{\intercal}v_{1}^{n''}\right|^2.
  \]
  Since $v_1^{n''}\to v$, $\left|\theta^{\intercal}v_{1}^{n''}\right|^2>0$ for all $n''$ sufficiently large and, as  
$\lambda_1^{n''}=\|\mathbf R_{n''}\|_{\mathrm{op}}$ it follows that $\theta^{\intercal}\mathbf R_{n''}\theta \to \infty$ as $n''\to\infty$.
However, since   $\mathsf{IGW}((\theta^{\intercal}\bdelta_n)_{\sharp}\mu_n,(\theta^{\intercal})_{\sharp}\mu)\to 0$ for all $\theta\in A$, $M_2((\theta^{\intercal}\bdelta_n)_{\sharp}\mu_n)=\theta^{\intercal}\mathbf R_{n}\theta\to \theta^{\intercal}\int zz^{\intercal}d\mu(z)\theta$ contradicting the previous line. As such, we must have that that  $\|\mathbf R_{n}\|_{\mathrm{op}}$ is uniformly bounded.

By Markov's inequality,  $\mathbb P_{(s_n^{\theta}\id)_{\sharp}(\bdelta_n)_{\sharp}\mu_{n}}(\|X\|>t)\leq \frac{M_2((s_n^{\theta}\id)_{\sharp}(\bdelta_n)_{\sharp}\mu_{n})}{t^2}\leq \frac{d\|\mathbf R_n\|_{\mathrm{op}}}{t^2}$  which can be made arbitrarily small by taking $t$ sufficiently large. Conclude that $((s_n^{\theta}\id)_{\sharp}(\bdelta_n)_{\sharp}\mu_n)_{n\in\mathbb N}$ is a tight family and hence for any subsequence $n'$ of $n$ there exists a further subsequence, which we denote by $n''$, along which $(s_{n''}^{\theta}\id)_{\sharp}(\bdelta_{n''})_{\sharp}\mu_{n''}\stackrel{w}{\to}\gamma$ for some probability measure $\gamma$ by Prokhorov's theorem. By assumption, $\gamma$ admits an analytic characteristic function and, for any $\vartheta \in N_{\theta}\cap A$, 
\[
    \Phi_{(s_{n\!'\!'}^{\theta}\!\id)_{\sharp}(\!\vartheta^{\intercal}\bdelta_{n\!'\!'}\!)_{\sharp}\mu_{n\!'\!'}}\!(t)\!=\!\Phi_{(s_{n\!'\!'}^{\theta}\!\id)_{\sharp}(\!\bdelta_{n\!'\!'}\!)_{\sharp}\mu_{n\!'\!'}}\!(t \vartheta)\to\Phi_{\gamma}(t \vartheta)\text{ and }\Phi_{(s_{n\!'\!'}^{\theta}\!\id)_{\sharp}(\!\vartheta^{\intercal}\bdelta_{n\!'\!'}\!)_{\sharp}\mu_{n\!'\!'}}(t)\to \Phi_{\mu}(t\vartheta), 
\]
where the second limit follows from \eqref{eq:conv_proof_weak}. Conclude that, for each $t\in\mathbb R$ and $\vartheta\in N_{\theta}\cap A$, $\Phi_{\gamma}(t\vartheta)=\Phi_{\mu}(t\vartheta)$. By continuity of the characteristic functions, this equality extends to $\vartheta \in N_{\theta}\cap \mathbb S^{d-1}$ so that $\gamma=\mu$ by applying the same reasoning as in the proof of \cref{prop:metric}. With this, $(s_{n''}^{\theta}\id)_{\sharp}(\bdelta_{n''})_{\sharp}\mu_{n''}\stackrel{w}{\to}\mu$. It remains, therefore, to prove that $M_2((\bdelta_{n''})_{\sharp}\mu_{n''})\to M_2(\mu)$ to conclude that $\mathsf{IGW}(\mu_{n''},\mu)\to 0$ (recall \cref{prop:IGW_weak_conv}). 

Since $(s_{n''}^{\theta}\id)_{\sharp}(\bdelta_{n''})_{\sharp}\mu_{n''}\stackrel{w}{\to}\mu$, we have that $(s_{n''}^{\theta}\id)_{\sharp}(\theta^{\intercal}\bdelta_{n''})_{\sharp}\mu_{n''}\stackrel{w}{\to}(\theta^{\intercal})_{\sharp}\mu$ for 
 every $\theta \in\mathbb S^{d-1}$. As    $\mathsf{IGW}((\theta^{\intercal}\bdelta_n)_{\sharp}\mu_n,(\theta^{\intercal})_{\sharp}\mu)\to 0$ for every $\theta \in A$, $M_2\left((s_{n}^{\theta}\id)_{\sharp}(\theta^{\intercal}\bdelta_n)_{\sharp}\mu_n\right)=M_2\left((\theta^{\intercal}\bdelta_n)_{\sharp}\mu_n\right)\to M_2\left((\theta^{\intercal})_{\sharp}\mu\right)$ so that \[
 \int\mathsf W_2((s_{n''}^{\theta}\id)_{\sharp}(\theta^{\intercal}\bdelta_{n''})_{\sharp}\mu_{n''},(\theta^{\intercal})_{\sharp}\mu)d\sigma_{d}(\theta)\to 0.
 \]
 Theorem 2.1 in \cite{bayraktar2021strong} asserts that $\mathsf W_2((s_{n''}^{\theta}\id)_{\sharp}(\bdelta_{n''})_{\sharp}\mu_{n''},\mu)\to 0$ so that $\mathsf{IGW}(\mu_{n''},\mu)\to 0$. As this limit is independent of the choice of subsequence $n''$ it follows that $\mathsf{IGW}(\mu_{n''},\mu)\to 0$. 

 Finally, assume that $\mathsf W_2((\theta^{\intercal})_{\sharp}\mu,(-\id)_{\sharp}(\theta^{\intercal})_{\sharp}\mu)= 0$ for every $\theta\in A$.  Fix $\theta \in A$ and let $s_{n}^{\theta}\in\{-1,1\}$ be as in the first part of the proof so that $\mathsf W_2\left((s_{n}^{\theta}\id)_{\sharp}(\theta^{\intercal}\bdelta_n)_{\sharp}\mu_n,(\theta^{\intercal})_{\sharp}\mu\right)\to 0$. As 
 \[
\mathsf W_2\!\left((s_{n}^{\theta}\id)_{\sharp}(\theta^{\intercal}\!\bdelta_n)_{\sharp}\mu_n,(\theta^{\intercal})_{\sharp}\mu\right)\!=\!\mathsf W_2\!\left((\theta^{\intercal}\!\bdelta_n)_{\sharp}\mu_n,(s_{n}^{\theta}\id)_{\sharp}(\theta^{\intercal})_{\sharp}\mu\right)\!=\!\mathsf W_2\!\left((\theta^{\intercal}\!\bdelta_n)_{\sharp}\mu_n,(\theta^{\intercal})_{\sharp}\mu\right),
 \]
 $\mathsf W_2\left((\theta^{\intercal}\bdelta_n)_{\sharp}\mu_n,(\theta^{\intercal})_{\sharp}\mu\right)\to 0$ for every $\theta \in A$. It follows, as in the previous case, that the sliced Wasserstein distance between $(\bdelta_n)_{\sharp}\mu_n$ and $\mu$ converges to $0$ and hence that their unsliced Wasserstein distance also converges to $0$, proving the claimed result.
\qed

\subsection{\texorpdfstring{Proof of \cref{prop:slicedComparison} and \cref{prop:emp_rates_aigw}}{Proof of sliced comparison and empirical rates}}
\label{proof:prop:slicedComparison}

\begin{proof}[Proof of \cref{prop:slicedComparison}]
Applying \eqref{eq:W2_upperbound}, for each $\theta\in\mathbb S^{d_y-1}$ and $\bdelta\in\St(d_x,d_y)$, 
\begin{equation}
\label{eq:slicewise}
\mathsf{IGW}( (\theta^{\intercal}\bdelta)_{\sharp}\mu,(\theta^{\intercal})_{\sharp}\nu)^2 \leq {2\left(M_2((\theta^{\intercal}\bdelta)_{\sharp}\mu)+M_2((\theta^{\intercal})_{\sharp}\nu)\right)} \mathsf W_2((\theta^{\intercal}\bdelta)_{\sharp}\mu,(\theta^{\intercal})_{\sharp}\nu))^2. 
\end{equation}
Setting $\bdelta= [\mathbf I_{d_x}\;\mathbf 0]^{\intercal}$, we have that
\[
M_2((\theta^{\intercal}\bdelta)_{\sharp}\mu) = \int (\theta^{\intercal}\bdelta z)^2 d\mu(z)= \int \left(\sum_{i=1}^{d_x} \theta_iz_i\right)^2 d\mu(z)\leq\|\theta\|^2\int\|z\|^2d\mu(z)=M_2(\mu).
\]
Similarly, it holds that $M_2((\theta)^{\intercal}_{\sharp} \nu)\leq M_2(\nu)$ so that integrating both sides of \eqref{eq:slicewise}, it holds that 
\[
\aIGW(\mu,\nu)^2 \leq \int \mathsf{IGW}( (\theta^{\intercal}\bdelta)_{\sharp}\mu,(\theta^{\intercal})_{\sharp}\nu)^2d\sigma_{d_y}(\theta) \leq {2\left(M_2(\mu)+M_2(\nu)\right)} \aW_2((\bdelta)_{\sharp}\mu,\nu)^2,
\]
which proves the claimed result.
\end{proof}

\begin{proof}[Proof of \cref{prop:emp_rates_aigw}] 
We have from \cref{prop:slicedComparison} that  
\[
 \aIGW(\hat \mu_n,\mu)\leq {\sqrt 2\sqrt{M_2(\hat \mu_n)+M_2(\mu)}} \aW_2(\hat \mu_n,\mu),   
\]
noting that $\hat \mu_n$ and $\mu$ are distributions on the same Euclidean space and hence the particular alignment, $\bdelta$, from \cref{prop:slicedComparison} is the identity matrix. Taking expectations on both sides of the display and applying the Cauchy-Schwarz inequality, it holds that 
\[
\begin{aligned}
 \mathbb E\Big[\aIGW(\hat \mu_n,\mu)\Big]&\leq {\sqrt 2 \mathbb E\Big[\sqrt{M_2(\hat \mu_n)+M_2(\mu)}} \aW_2(\hat \mu_n,\mu)\Big]
 \\
 &\leq {\sqrt 2 \sqrt{\mathbb E\Big[{M_2(\hat \mu_n)+M_2(\mu)}\Big]}}\sqrt{\mathbb E\Big[ \aW_2(\hat \mu_n,\mu)^2\Big]}.   
\end{aligned}
\]
Note that $\mathbb E\Big[M_2(\hat \mu_n)\Big]= \mathbb E\Big[\frac 1n\sum_{i=1}^{n}\|X_i\|^2\Big] = \mathbb E[\|X_1\|^2]=M_2(\mu)$ since the samples are i.i.d. from $\mu$. Conclude that $\mathbb E\Big[\aIGW(\hat \mu_n,\mu)\Big]\leq 2 \sqrt{M_2(\mu)}\sqrt{\mathbb E\Big[ \aW_2(\hat \mu_n,\mu)^2\Big]}$.  

It remains to bound $\mathbb E\Big[ \aW_2(\hat \mu_n,\mu)^2\Big]$ using bounds from the literature. For instance, if $\mu$ has finite $q$-th moment for $q>2$, Equation (S40) in \cite{nadjahi2020statistical} asserts that there exists a constant  $C_q>0$ depending only on $q$ for which
\[
\mathbb E\Big[ \aW_2(\hat \mu_n,\mu)^2\Big]\leq C_qM_q^{2/q}(\mu)\begin{cases}
    n^{-1/2},&\text{if } q>4\\
       n^{-1/2}\log n,&\text{if } q=4\\
       n^{-(q-2)/q},&\text{if } q\in(2,4),
\end{cases}
\]
which proves the one-sample result upon taking the square root. 
The two-sample case follows from the one-sample setting via the triangle inequality; 
\[
\begin{aligned}
\aIGW(\hat\mu_n,\hat\nu_n)-\aIGW(\mu,\nu)&\leq \aIGW(\hat\mu_n,\mu) +\aIGW(\hat\nu_n,\nu) + \aIGW(\mu,\nu)-\aIGW(\mu,\nu)
\\
\aIGW(\hat\mu_n,\hat\nu_n)-\aIGW(\mu,\nu)&\geq \aIGW(\hat\mu_n,\hat\nu_n) -\aIGW(\hat\nu_n,\nu) - \aIGW(\hat \mu_n,\nu)-\aIGW(\hat\mu_n,\hat\nu_n)
\end{aligned}
\]
so that $\big|\aIGW(\hat\mu_n,\hat\nu_n)-\aIGW(\mu,\nu)\big|\leq  
\aIGW(\hat\mu_n,\mu)+\aIGW(\hat\nu_n,\nu)$. 
\end{proof}

\subsection{\texorpdfstring{Proof of \cref{prop:MC_slicedIGW}}{Proof of Monte Carlo sliced IGW error}}
\label{proof:prop:MC_slicedIGW}
We start by noting the following Lipschitz continuity properties, see \cref{appen:IGW_lip_proof} for a proof.

\begin{lemma}[Lipschitz continuity of projected IGW]\label{lemma:IGW_lip}
Let $\mu\in\mathcal P(\mathbb R^{d_x}),\nu\in\mathcal P(\mathbb R^{d_y})$ have finite second moment. Then,
\begin{enumerate}[leftmargin=*]
    \item 
    For any fixed $\bdelta\in \St(d_x,d_y)$, the function $\theta \in\mathbb S^{d_y-1}\mapsto  \IGW\left((\theta^{\intercal}\bdelta)_{\sharp}\mu,(\theta^{\intercal})_{\sharp}\nu\right)^2$ is Lipschitz continuous with constant at most $L_{\mu,\nu}\coloneqq 4(M_2(\mu)+M_2(\nu))^2$.
    \item 
    For any fixed $\theta \in\mathbb S^{d_y-1}$, the function $\bdelta\in \St(d_x,d_y)\mapsto  \IGW\left((\theta^{\intercal}\bdelta)_{\sharp}\mu,(\theta^{\intercal})_{\sharp}\nu\right)^2$ is Lipschitz continuous with respect to $\|\cdot\|_{\mathrm{F}}$ with constant at most $L'_{\mu,\nu}\coloneqq 4M_2(\mu)^2 + 4M_2(\mu)M_2(\nu)$. 
\end{enumerate} 
\end{lemma}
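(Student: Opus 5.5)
Both parts rest on the explicit univariate formula from \cref{thm:UnivariateIGW}, written through the coupling-level decomposition used in its proof. For a coupling $\pi\in\Pi(\mu,\nu)$ set $\mathbf C_\pi=\int xy^{\intercal}d\pi$. The pushforward $(\theta^{\intercal}\bdelta,\theta^{\intercal})_\sharp\pi$ is a coupling of the projected marginals, and every coupling of the projections arises this way by gluing. Hence
\[
g(\theta,\bdelta)\coloneqq \IGW\big((\theta^{\intercal}\bdelta)_{\sharp}\mu,(\theta^{\intercal})_{\sharp}\nu\big)^2=(\theta^{\intercal}\bdelta\mathbf R_\mu\bdelta^{\intercal}\theta)^2+(\theta^{\intercal}\mathbf R_\nu\theta)^2-2\sup_{\pi\in\Pi(\mu,\nu)}(\theta^{\intercal}\bdelta\mathbf C_\pi\theta)^2 .
\]
This is exactly the identity quoted before \eqref{eq:stiefelOptimization}. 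The plan is to bound the Lipschitz constant of each of the three terms separately. For the last term I will use the elementary fact that $|\sup_\pi a_\pi-\sup_\pi b_\pi|\le\sup_\pi|a_\pi-b_\pi|$, which lets me work with a fixed coupling $\pi$ and never differentiate through the argmax.

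The basic inequalities are as follows. For unit $\theta$ and $\bdelta\in\St(d_x,d_y)$, Cauchy--Schwarz gives $|\theta^{\intercal}\bdelta\mathbf C_\pi\theta|\le\int|\theta^{\intercal}\bdelta x||\theta^{\intercal}y|d\pi\le\sqrt{M_2(\mu)M_2(\nu)}$. Similarly $0\le\theta^{\intercal}\bdelta\mathbf R_\mu\bdelta^{\intercal}\theta\le M_2(\mu)$, since $\|\bdelta^{\intercal}\theta\|\le1$, and $\theta^{\intercal}\mathbf R_\nu\theta\le M_2(\nu)$. To each term I apply $|a^2-b^2|=|a-b||a+b|$.

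\emph{Item 2 ($\theta$ fixed).} The $\nu$-term does not depend on $\bdelta$. For the $\mu$-term, $|\theta^{\intercal}\bdelta\mathbf R_\mu\bdelta^{\intercal}\theta-\theta^{\intercal}\bdelta'\mathbf R_\mu\bdelta'^{\intercal}\theta|\le\int|\theta^{\intercal}(\bdelta-\bdelta')x|\,|\theta^{\intercal}(\bdelta+\bdelta')x|d\mu\le 2M_2(\mu)\|\bdelta-\bdelta'\|_{\op}$. Multiplying by the sum bound $2M_2(\mu)$ yields $4M_2(\mu)^2\|\bdelta-\bdelta'\|_\F$. For the coupling term, $|\theta^{\intercal}(\bdelta-\bdelta')\mathbf C_\pi\theta|\le\|\bdelta-\bdelta'\|_\F\sqrt{M_2(\mu)M_2(\nu)}$ and the sum is at most $2\sqrt{M_2(\mu)M_2(\nu)}$. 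With the factor $2$ in front this gives $4M_2(\mu)M_2(\nu)\|\bdelta-\bdelta'\|_\F$. Adding the two contributions gives $L'_{\mu,\nu}$.

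\emph{Item 1 ($\bdelta$ fixed).} I run the same argument with $\theta\neq\theta'$. For the $\mu$-term, writing $u=\bdelta^{\intercal}\theta$ and $u'=\bdelta^{\intercal}\theta'$ with $\|u-u'\|\le\|\theta-\theta'\|$, the difference is at most $2M_2(\mu)\|\theta-\theta'\|\cdot 2M_2(\mu)$. The $\nu$-term contributes analogously $4M_2(\nu)^2\|\theta-\theta'\|$. For the coupling term I split $\theta^{\intercal}\bdelta\mathbf C_\pi\theta-\theta'^{\intercal}\bdelta\mathbf C_\pi\theta'=(\theta-\theta')^{\intercal}\bdelta\mathbf C_\pi\theta+\theta'^{\intercal}\bdelta\mathbf C_\pi(\theta-\theta')$, which is at most $2\sqrt{M_2(\mu)M_2(\nu)}\|\theta-\theta'\|$. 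This gives a contribution of $2\cdot2\sqrt{M_2M_2}\cdot2\sqrt{M_2M_2}=8M_2(\mu)M_2(\nu)$. The total is $4(M_2(\mu)^2+2M_2(\mu)M_2(\nu)+M_2(\nu)^2)=4(M_2(\mu)+M_2(\nu))^2=L_{\mu,\nu}$.

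The step needing the most care is the first one: establishing the decomposition of the projected IGW with the supremum taken over $\Pi(\mu,\nu)$ rather than over couplings of the projections. Once that is in place, the sup-difference bound removes any dependence on optimal couplings, and everything else is routine Cauchy--Schwarz bookkeeping to match the stated constants.
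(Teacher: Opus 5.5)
Your proposal is correct and follows essentially the same route as the paper. The paper likewise splits the projected $\IGW^2$ into the self-interaction terms and a coupling term written as an optimization over $\Pi(\mu,\nu)$ (via \cref{lem:pushforwardMeasures}), then uses $|\inf a-\inf b|\le\sup|a-b|$, the factorization $a^2-b^2=(a-b)(a+b)$ and Cauchy--Schwarz, obtaining the same constants you get (your self-terms are handled through the quadratic form $\theta^{\intercal}\bdelta\mathbf R_\mu\bdelta^{\intercal}\theta$ rather than pointwise inside the $\mu\otimes\mu$ integral, which is only a cosmetic difference).
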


\begin{proof}[Proof of \cref{prop:MC_slicedIGW}]
Fix $\bdelta\in\St(d_x,d_y)$ and write $f(\theta)\coloneqq g_\bdelta(\theta)$, so that by Lemma~\ref{lemma:IGW_lip} we have that 
\[
    |f(\theta)-f(\theta')|\le L_{\mu,\nu}\|\theta-\theta'\|,\text{ for each }\theta,\theta'\in\unitsphy.
\]
It follows that $f/L_{\mu,\nu}$ is $1$-Lipschitz so that Equation 3.41 in \cite{wainwright2019high} yields that 
\[
\mathbb P\left(|f(\Theta)-\mathbb E[f(\Theta)]|^2\geq t\right)= \mathbb P\left(\frac{|f(\Theta)-\mathbb E[f(\Theta)]|}{L_{\mu,\nu}}\geq \frac{\sqrt{t}}{L_{\mu,\nu}}\right)\leq 2\sqrt{2\pi} e^{-\frac{d_yt}{L^2_{\mu,\nu}}}.
\]
Integrating the above inequality with respect to $t$, it holds that 
\begin{equation}
\label{eq:sphere_poincare}
   \Var\left(f(\Theta)\right)\leq 2\sqrt{2\pi}\int_0^{\infty} e^{-\frac{d_yt}{L^2_{\mu,\nu}}}dt  =2\sqrt{2\pi}\frac{L^2_{\mu,\nu}}{d_y}.
\end{equation} 
Since $\Theta_1,\ldots,\Theta_m$ are i.i.d.\ with law given by $\Theta$,
\[
    \widehat{\aIGW}_{\bdelta,m}^2(\mu,\nu)-\aIGW_\bdelta^2(\mu,\nu)
    =
    \frac{1}{m}\sum_{j=1}^m \Big(f(\Theta_j)-\EE[f(\Theta)]\Big),
\] so that $\widehat{\aIGW}_{\bdelta,m}^2(\mu,\nu)-\aIGW_\bdelta^2(\mu,\nu)$ is mean-zero 
and, applying \eqref{eq:sphere_poincare}, we obtain that
\[
    \Var\Big(\widehat{\aIGW}_{\bdelta,m}^2(\mu,\nu)-\aIGW_\bdelta^2(\mu,\nu)\Big)
    =
    \frac{1}{m}\Var\big(f(\Theta)\big)
    \le
    \frac{2\sqrt{2\pi}L_{\mu,\nu}^2}{md_y}.
\]
the result for fixed $\bm \Delta$ thus follows by applying Jensen's inequality as below,
\[
    \EE\Big[\big|\widehat{\aIGW}_{\bdelta,m}^2(\mu,\nu)\!-\!\aIGW_\bdelta^2\!(\mu,\nu)\big|\Big]
    \le
    \sqrt{\EE\Big[\big(\widehat{\aIGW}_{\bdelta,m}^2(\mu,\nu)\!-\!\aIGW_\bdelta^2\!(\mu,\nu)\big)^2\Big]}
    \le
    \frac{\sqrt{2}(2\pi)^{1/4}L_{\mu,\nu}}{\sqrt{md_y}}.
\]

To prove the result with the infimum, we apply a Rademacher symmetrization argument. Let $\{\sigma_j\}_j$ be i.i.d. Rademacher random variables that take values $\pm1$ with equal probability which are independent of the $\Theta_j$'s. Then,
\begin{align*}
    \EE\Big[\Big|\widehat{\aIGW}_{m}(\mu,\nu)^2\!-\!\aIGW(\mu,\nu)^2\Big|\Big]
    & = \EE_\Theta\!\!\left[\left|\inf_{\bdelta_1\in\St(d_x,d_y\!)}
    \frac{1}{m}\sum_{j=1}^m g_{\bdelta_1}(\Theta_j)-\!\!\!\!\!\inf_{\bdelta_2\in\St(d_x,d_y)} \EE[g_{\bdelta_2}(\Theta_j)]\right|\right]\\
    &\leq \EE_\Theta\!\!\left[\sup_{\bm \Delta\in\St(d_x,d_y)}  \left|\frac{1}{m}\sum_{j=1}^m g_\bdelta(\Theta_j)-\EE[g_\bdelta(\Theta_j)]\right|\right]\\
    &\leq 2\EE_{\Theta,\sigma}\left[\sup_{\bm \Delta\in\St(d_x,d_y)}  \left|\frac{1}{m}\sum_{j=1}^m \sigma_j g_\bdelta(\Theta_j)\right|\right]\\
    &\leq 2\EE_{\Theta,\sigma}\left[\sup_{f\in\mathcal{F}}  \frac{1}{m}\sum_{j=1}^m \sigma_j f(\Theta_j)\right],
\end{align*}
where $\mathcal{F} = \{\pm g_\bdelta:\bdelta\in\St(d_x,d_y)\}\cup\{0\}$. For any vector $\vartheta \stackrel{d}{=} (\Theta_1,\dots,\Theta_m)$, define the metric $\|f-g\|_{p,\vartheta} = \left(\frac{1}{m}\sum_{j=1}^m |f(\Theta_j)-g(\Theta_j)|^p\right)^{1/p}$. 
For each fixed $\vartheta$, equip $\mathcal{F}$ with the metric $\|\cdot\|_{2,\vartheta}$. Let $X_f^\vartheta = \frac{1}{m}\sum_{j=1}^m \sigma_j f(\Theta_j)$ and notice that the random process $\{X_f^\vartheta\}_{f\in\mathcal F}$ has zero mean and sub-Gaussian increments. In particular, we have that 
$$
\EE[e^{\lambda (X_f^\vartheta-X_g^\vartheta)}] = \EE\left[e^{\frac{\lambda}{m}\sum_{j=1}^m \sigma_j (f(\Theta_j)-g(\Theta_j))}\right]\leq e^{\frac{\lambda^2}{m^2} \sum_{j=1}^m \frac{|f(\Theta_j)-g(\Theta_j)|^2}{2}} = e^{\frac{\lambda^2}{2m}\|f-g\|_{2,\vartheta}^2},
$$
as follows from the proof of Theorem~2.2.1 in \cite{vershynin2026hdp}.

Next, we consider an $\epsilon$-net for the metric space $(\mathcal{F},\|\cdot\|_{2,\vartheta})$; a set $\mathcal{F}_\epsilon \subseteq \mathcal{F}$ is called an $\epsilon$-net for $\mathcal F$ if, for every $f\in \mathcal{F}$, there exists $f_\epsilon \in \mathcal{F}_\epsilon$ such that $\|f-f_\epsilon\|_{2,\vartheta}\leq \epsilon$. The $\epsilon$-covering number is then
$$
\mathcal{N}(\mathcal{F},\|\cdot \|_{2,\vartheta},\epsilon) = \inf \{ \mathrm{card}\left(\mathcal{F}_\epsilon\right): \mathcal{F}_\epsilon\text{ is an }\epsilon \text{-net of }\mathcal{F} \}.
$$
Recall that $0\in\mathcal{F}$ and, applying Dudley’s entropy integral bound with the  pseudometric being $\rho(f,g) = \frac{1}{\sqrt{m}}\|f-g\|_{2,\vartheta}$ \cite[Theorem~5.22]{wainwright2019high}, we obtain that
\begin{align*}
    \EE_\sigma\left[\sup_{f\in \mathcal{F}} X_f^{\vartheta}\right] &\leq 2\EE_\sigma\left[\sup_{f,g\in \mathcal{F},\rho(f,g)\leq \epsilon} \left|X_f^{\vartheta}-X_g^{\vartheta}\right|\right] + 32\int_{\epsilon/4}^{\sup_{f,g \in \mathcal{F}}\rho(f,g)} \sqrt{\log \mathcal{N}(\mathcal{F},\rho,\delta)} \ d\delta\\
    &\leq 2\EE_\sigma\left[\sup_{f,g\in \mathcal{F},\|f-g\|_{2,\vartheta}\leq \epsilon\sqrt{m}} \frac{1}{m}\sum_{j=1}^m  |\sigma_j||f(\Theta_j)-g(\Theta_j)|\right]
    \\&\hspace{7em}+32\int_{\epsilon/4}^{\sup_{f,g \in \mathcal{F}}\frac{1}{\sqrt{m}}\|f-g\|_{2,\vartheta}} \sqrt{\log \mathcal{N}\left(\mathcal{F},\frac{1}{\sqrt{m}}\|\cdot\|_{2,\vartheta},\delta\right)} \ d\delta\\
    &\leq 2\epsilon \sqrt{m}+ \frac{32}{\sqrt{m}}\int_{\sqrt{m}\epsilon/4}^{\sup_{f,g \in \mathcal{F}}\|f-g\|_{2,\vartheta}} \sqrt{\log \mathcal{N}(\mathcal{F},\|\cdot\|_{2,\vartheta},\delta)} \ d\delta\\
    &\leq 2\epsilon \sqrt{m}+ 64m^{-1/2}\sup_{f\in\mathcal{F}}\|f\|_{2,\vartheta}\sqrt{\log \mathcal{N}(\mathcal{F},\|\cdot\|_{1,\vartheta},\sqrt{m}\epsilon/4)}.
\end{align*}
Since $\epsilon$ is arbitrary, we may scale $\epsilon$ by $\frac{4}{\sqrt m}$ to obtain that 
$$
\EE_\sigma\left[\sup_{f\in \mathcal{F}} X_f^{\vartheta}\right]\leq \inf_{\epsilon>0} \left\{64m^{-1/2} \sup_{f\in\mathcal{F}}\|f\|_{2,\vartheta} \sqrt{\log \mathcal{N}(\mathcal{F},\|\cdot \|_{1,\vartheta},\epsilon)} + 8\epsilon\right\}.
$$ 
From the expression for univariate IGW in Theorem~\ref{thm:UnivariateIGW}, we know that, for any $\theta$, 
$$
g_\bdelta(\theta)= \IGW\big((\theta^{\intercal}\bdelta)_{\sharp}\mu,(\theta^{\intercal})_{\sharp}\nu\big)^2 \leq M_2((\theta^{\intercal}\bdelta)_{\sharp}\mu)^2+M_2((\theta^{\intercal})_{\sharp}\nu)^2\leq M_2(\mu)^2+M_2(\nu)^2.
$$ 
Therefore,
$$
\sup_{f\in\mathcal{F}}\|f\|_{2,\vartheta} = \sup_{\bm \Delta\in\St(d_x,d_y)} \|g_\bdelta\|_{2,\vartheta}= \sup_{\bm \Delta\in\St(d_x,d_y)}\left(
\frac{1}{m}\sum_{j=1}^m g_\bdelta(\Theta_j)^2
\right)^{1/2}\leq M_2(\mu)^2+M_2(\nu)^2.
$$
Combined, we have that
\begin{equation}\label{eqn:rademacher}
    \begin{aligned} \EE_{\Theta,\sigma}\!\!\left[\sup_{f\in\mathcal{F}}  \frac{1}{m}\sum_{j=1}^m \sigma_j f(\Theta_j)\right]
\!&\leq \inf_{\epsilon>0} \left\{8\epsilon+\EE_\Theta\!\left[\frac{64 (M_2(\mu)^2+M_2(\nu)^2)}{\sqrt m}\!\sqrt{\log \mathcal{N}(\mathcal{F},\|\cdot \|_{1,\vartheta},\epsilon)}\right]\right\}.
\end{aligned}
\end{equation}

We now bound the covering number $\mathcal{N}(\mathcal{F},\|\cdot \|_{1,\vartheta},\epsilon)$. By Lemma~\ref{lemma:IGW_lip} the function $\bdelta \mapsto g_\bdelta(\theta)$ is $L'_{\mu,\nu}$-Lipschitz uniformly in $\theta$ %
so that, for each $\bdelta_1,\bdelta_2\in\St(d_x,d_y)$,
$$
\|g_{\bdelta_1}-g_{\bdelta_2}\|_{1,\vartheta}= 
\frac{1}{m}\sum_{j=1}^m |g_{\bdelta_1}(\Theta_j)-g_{\bdelta_2}(\Theta_j)|
\leq L'_{\mu,\nu}\|\bdelta_1-\bdelta_2\|_{\mathrm{F}}.
$$
Thus, $\mathcal{N}(\mathcal{F},\|\cdot \|_{1,\vartheta},\epsilon)\leq 2\mathcal{N}(\St(d_x,d_y),\|\cdot \|_{\mathrm{F}},\epsilon/L'_{\mu,\nu})+1$ and it suffices to bound the covering number of the Stiefel manifold. To bound $\mathcal{N}(\St(d_x,d_y),\|\cdot \|_{\mathrm{F}},\epsilon/L'_{\mu,\nu})$, we construct a $\epsilon/L'_{\mu,\nu}$-net column by column. We begin with the first column, which lies in the unit sphere $\unitsphy$. Let $\delta_1>0$ and choose a $\delta_1$-net of $\unitsphy$. By Lemma 5.2
 in \cite{vershynin2010introduction}, $\mathcal{N}(\unitsphy, \|\cdot\|_2,\delta_1)\leq 3^{d_y} \delta_1^{-d_y}$ for each $\delta_1<1$. For each vector in this $\delta_1$-net, we pick a $\delta_2$-net for the set vectors orthogonal to it, which is isometric to $\mathbb{S}^{d_y-2}$. This can result in at most $\delta_1+\delta_2$ error in the second vector by the triangle inequality. Following the same procedure, the error in the $i^{th}$ column is at most $\sum_{j=1}^{i}\delta_j$. To obtain a $\epsilon/L'_{\mu,\nu}$-net under $\|\cdot\|_{\mathrm{F}}$, we need 
\begin{equation}\label{eqn:cover_constraint}
    \sum_{i=1}^{d_x} \left(\sum_{j=1}^{i}\delta_j\right)^2 \leq \left(\epsilon/L'_{\mu,\nu}\right)^2,
\end{equation}
so that the covering number will be bounded by
\begin{equation}\label{eqn:cover_count}
    \mathcal{N}(\St(d_x,d_y),\|\cdot \|_{\mathrm{F}},\epsilon/L'_{\mu,\nu}) \leq \prod_{i=1}^{d_x} (\delta_i/3)^{-(d_y-i+1)}.
\end{equation}
Setting $\delta_i=\delta M^{i-1}$ for some $0<M<1$, we bound the left hand side of \eqref{eqn:cover_constraint} as
$$
\sum_{i=1}^{d_x} \left(\sum_{j=1}^{i}\delta_j\right)^2 = \sum_{i=1}^{d_x} \delta^2 \left(\sum_{j=1}^{i}M^{j-1}\right)^2 = \sum_{i=1}^{d_x} \delta^2 \left(\frac{1-M^{i}}{1-M}\right)^2\leq \delta^2 d_x \left(\frac{1}{1-M}\right)^2
$$
So that the choice $\delta_i=\delta M^{i-1}$ is valid for \eqref{eqn:cover_constraint} for the choice  
$\delta = d_x^{-1/2}(1-M)\epsilon/L'_{\mu,\nu}$, where $M\in(0,1)$ is arbitrary. With this choice of $\delta_i$'s, \eqref{eqn:cover_count} can be bounded as
\begin{align*}
    \mathcal{N}\left(\St(d_x,d_y),\|\cdot \|_{\mathrm{F}},\epsilon/L'_{\mu,\nu}\right) &\leq \prod_{i=1}^{d_x} \left(d_x^{-1/2} \frac{(1-M)M^{i-1}\epsilon }{3L'_{\mu,\nu}}\right)^{-(d_y-i+1)}\\
    &\leq C_{d_x,d_y} \left(\epsilon/L'_{\mu,\nu}\right)^{-(d_xd_y - \frac{d_x(d_x-1)}{2})}.
\end{align*}
Therefore, plugging this expression into \eqref{eqn:rademacher}, we have for any $\epsilon>0$,
\begin{align*}
&\EE_{\Theta,\sigma}\left[\sup_{\bm \Delta\in\St(d_x,d_y)}  \left|\frac{1}{m}\sum_{j=1}^m \sigma_j g_\bdelta(\Theta_j)\right|\right]\\ &\leq 8\epsilon + \frac{64(M_2(\mu)^2+M_2(\nu)^2)}{\sqrt{m}}\sqrt{\left(d_xd_y - \frac{d_x(d_x-1)}{2}\right)\log \left(L'_{\mu,\nu}/\epsilon\right)+\log 4C_{d_x,d_y} }.
\end{align*}
Picking $\epsilon\sim m^{-1/2}$, it holds that
$$
\EE\Big[\Big|\widehat{\aIGW}_{m}(\mu,\nu)^2-\aIGW(\mu,\nu)^2\Big|\Big] \lesssim
\frac{(M_2(\mu)^2+M_2(\nu)^2}{\sqrt{m}}\sqrt{\left(d_xd_y \!-\! \frac{d_x(d_x-1)}{2}\right)\!\log\! \left(L'_{\mu,\nu} m ^{1/2}\right)}.
$$
This proves the claim.
\end{proof}

\subsection{\texorpdfstring{Proof of \cref{prop:subgradient}}{Proof of subgradient formula}}
\label{proof:prop:subgradient} 
   We begin with the proof of the first point. Note that $G^{\theta}$ is the optimal value function of a concave minimization problem over the polyhedron $\Pi(\mu,\nu)$, recalling that $\mu$ and $\nu$ are finitely supported so that the marginal constraints simplify to $2N$ equality constraints. Letting $\mathcal V = \{\pi_1,\dots,\pi_{|\mathcal V|}\}$ denote the set of vertices of $\Pi(\mu,\nu)$, concavity of the objective implies that 
    \[ 
    \inf_{\pi\in \Pi(\mu,\nu)}-2\left(\int \theta^{\intercal}\bdelta xy^{\intercal} \theta d\pi(x,y)\right)^2= \min_{i=1}^{|\mathcal V|}-2\left(\int \theta^{\intercal}\bdelta xy^{\intercal} \theta d\pi_i(x,y)\right)^2\coloneqq \min_{i=1}^{|\mathcal V|}\ell_{i,\theta}(\bdelta),
    \]
    where $\ell_{i,\theta}(\bdelta) = -2\left(\left\langle\left(\int xy^{\intercal} \theta \theta^{\intercal}  d\pi_i(x,y)\right)^{\intercal},\bdelta\right\rangle_{\mathrm{F}}\right)^2$, it follows from the chain rule that  
    \[
        D_{[\bdelta]}\ell_{i,\theta}(\Xi) = -4 \left\langle\left(\int xy^{\intercal} \theta \theta^{\intercal}  d\pi_i(x,y)\right)^{\intercal},\bdelta\right\rangle_{\mathrm{F}}\left\langle \left(\int xy^{\intercal} \theta \theta^{\intercal}  d\pi_i(x,y)\right)^{\intercal}, \Xi\right\rangle_{\mathrm F}, 
    \]
    for each $i\in \{1,\dots,|\mathcal V|\}$ and each $\Xi\in\mathbb R^{d_y\times d_x}$. Local Lipschitz continuity and the formula for the subdifferential of $G^{\theta}$ then follows from Proposition 2.3.12  and the surrounding discussion in \cite{clarke1990optimization} (see also Proposition 2.3.6 in the same reference). %

    The second point is a consequence of the sum rule for the Clarke subdifferential (see Corollary 2 and 3 on p.39-40 as well as Proposition 2.3.12 of \cite{clarke1990optimization}) noting that the function $\bdelta \mapsto M_2^2((\theta^{\intercal}\bdelta)_{\sharp}\mu)$ is Fr{\'e}chet differentiable at $\bdelta\in\mathbb R^{d_y\times d_x}$ with derivative
    \[
        \Xi\!\in\!\mathbb R^{d_y\!\times\! d_x}\mapsto 2\theta^{\intercal}\!\bdelta\bm{M}_{\mu}\bdelta^{\!\intercal}\theta\left(\theta^{\intercal} \Xi\bm{M}_{\mu}\bdelta^{\!\intercal}\theta\!+\!\theta^{\intercal} \bdelta\bm{M}_{\mu}\Xi^{\intercal}\theta\right) \!=\! 4 \theta^{\intercal}\bdelta\bm{M}_{\mu}\bdelta^{\!\intercal}\theta\left\langle(\bm{M}_{\mu}\bdelta^{\!\intercal}\theta\theta^{\intercal})^{\intercal}, \Xi\right\rangle_{\mathrm F}.
    \]
\qed

\section{Conclusion and Future Directions} 
This work has demonstrated that the IGW problem between one-dimensional distributions admits a closed form solution, enabling the development of a slicing paradigm for IGW. The metric and topological properties of the average-sliced IGW were derived and shown to coincide with those of the standard IGW under structural conditions and, surprisingly, may differ if those conditions are not met. In addition, the rate of convergence for empirical sliced IGW was provided along as an analysis of the MC error induced when approximating the integral over the sphere by a finite number of slices. A subgradient method was provided to account for the minimization over the Stiefel manifold which was shown to provably converge to a Rimenannian critical point. Synthetic numerical experiments were provided to empirically validate the theoretical results in addition to experiments on  comparing learned embeddings from LLMs and clustering heterogeneous data. 

A number of important open questions persist in the study of sliced GW problems. Chief among them is to establish other costs for which the univariate GW problem can be solved exactly. Another important direction is to develop more efficient algorithms for handling the optimization over the Stiefel manifold. In effect, the proposed approach based on the constraint dissolving technique was considered as it can be shown to converge to a Riemannian critical point whereas methods based on direct Riemannian optimization techniques do not appear to be provably convergent for the objective at hand, but are expected to be more computationally efficient. 
Furthermore, as noted previously, if $\aIGW(\mu,\nu)=0$,   $\IGW(\mu,\nu)$ is not necessarily zero unless additional conditions are met (e.g., $\mu,\nu$ have analytic characteristic functions or are supported on lattices). It is thus of interest to characterize more classes of distributions under which this implication does hold.   

{

\bibliographystyle{amsplain}
\bibliography{ref}

}

\newpage 
\appendix

\section{Proofs of Auxiliary Results} 
We begin this section with a useful technical lemma; the remainder of the section compiles the proofs of the lemmas used to prove the main results.
\begin{lemma}
\label{lem:pushforwardMeasures}
   Given measurable functions $T:\mathbb R^{d_0}\to \mathbb R^{d_0'}$ and $S:\mathbb R^{d_1}\to \mathbb R^{d_1'}$, 
   \[
        \Pi(T_{\sharp}\mu,S_{\sharp}\nu) = \left\{ (T,S)_{\sharp}\pi:\pi\in\Pi(\mu,\nu)\right\}.
   \]
\end{lemma}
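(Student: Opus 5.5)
The statement is a set equality, so I will prove the two inclusions separately. The inclusion ``$\supseteq$'' is a direct computation with marginals. Fix $\pi\in\Pi(\mu,\nu)$ and let $p_1,p_2$ be the coordinate projections. Then $p_1\circ(T,S)=T\circ p_1$, which gives $(p_1)_\sharp (T,S)_\sharp\pi = T_\sharp (p_1)_\sharp\pi = T_\sharp\mu$. The same argument gives $(p_2)_\sharp(T,S)_\sharp\pi=S_\sharp\nu$, so $(T,S)_\sharp\pi\in\Pi(T_\sharp\mu,S_\sharp\nu)$.

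For ``$\subseteq$'', fix $\gamma\in\Pi(T_\sharp\mu,S_\sharp\nu)$. The goal is to lift it to a coupling of $\mu$ and $\nu$, and I will do this with a gluing (disintegration) construction. All spaces are Polish, so $\mu$ disintegrates along $T$: there is a measurable kernel $x'\mapsto \mu_{x'}$ on $\R^{d_0}$ such that $\mu_{x'}$ is concentrated on $T^{-1}(\{x'\})$ for $T_\sharp\mu$-a.e.\ $x'$ and $\mu=\int \mu_{x'}\,d(T_\sharp\mu)(x')$. Equivalently, $\mu_{x'}$ is the conditional law of $X$ given $T(X)=x'$ when $X\sim\mu$. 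In the same way I take $y'\mapsto\nu_{y'}$ for $\nu$ along $S$.

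I then define
\[
\pi(A\times B)\coloneqq\int \mu_{x'}(A)\,\nu_{y'}(B)\,d\gamma(x',y').
\]
The first marginal of $\pi$ is $\int\mu_{x'}\,d(T_\sharp\mu)(x')=\mu$, and similarly the second is $\nu$, so $\pi\in\Pi(\mu,\nu)$. To identify $(T,S)_\sharp\pi$ with $\gamma$, I use that $\mu_{x'}$ charges only $T^{-1}(\{x'\})$ and $\nu_{y'}$ charges only $S^{-1}(\{y'\})$. This gives
\[
(T,S)_\sharp\pi(A'\times B')=\int \mu_{x'}(T^{-1}A')\,\nu_{y'}(S^{-1}B')\,d\gamma=\int \mathbbm 1_{A'}(x')\mathbbm 1_{B'}(y')\,d\gamma=\gamma(A'\times B').
\]
Rectangles form a $\pi$-system generating the Borel $\sigma$-algebra, so $(T,S)_\sharp\pi=\gamma$.

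An equivalent probabilistic route is to take $(X',Y')\sim\gamma$, sample $X$ from the conditional law of $\mu$ given $T=X'$ and $Y$ from the conditional law of $\nu$ given $S=Y'$, independently given $(X',Y')$, and set $\pi$ to be the law of $(X,Y)$.

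The main obstacle is the measure-theoretic justification of the disintegration, namely existence of the kernels and the concentration property $\mu_{x'}(T^{-1}\{x'\})=1$ almost everywhere. I would cite the disintegration theorem for Borel probability measures on Polish spaces (e.g.\ Ambrosio--Gigli--Savar\'e, Theorem 5.3.1), applied to the joint law of $(X,T(X))$. The concentration property is exactly what makes the computation of $(T,S)_\sharp\pi$ valid, so once the kernels are in hand the rest is routine.
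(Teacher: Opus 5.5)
Your proposal is correct and follows the paper's proof essentially step for step. The paper also checks the easy inclusion via marginals, disintegrates $\mu$ along $T$ and $\nu$ along $S$ (citing the same Theorem 5.3.1 of Ambrosio--Gigli--Savar\'e), glues the kernels as $\int \mu_{x'}\otimes\nu_{y'}\,d\gamma(x',y')$, and uses the concentration of $\mu_{x'}$ on $T^{-1}(\{x'\})$ and of $\nu_{y'}$ on $S^{-1}(\{y'\})$ to recover $\gamma$ on rectangles; your explicit $\pi$-system step makes precise what the paper leaves implicit.
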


\begin{proof}
    It is clear that, for any $\pi'\in\Pi(\mu,\nu)$, $(T,S)_{\sharp}\pi'\in\Pi({T}_{\sharp}\mu,S_{\sharp}\nu)$. On the other hand, if we fix some $\pi\in\Pi(T_{\sharp}\mu,S_{\sharp}\nu)$, we may disintegrate $\mu$ and $\nu$ as $\left( \mu_{x'}  \right)_{x'\in\mathbb R^{d_0'}}\subset \mathcal P(\mathbb R^{d_0})$ and $\left( \nu_{y'}  \right)_{y'\in\mathbb R^{d_1
    '}}\subset \mathcal P(\mathbb R^d_1)$ for which 
\[
    \begin{aligned}
        \mu(A) &=   \int_{\mathbb R^{d_0'}}\left( \int_{T^{-1}(x')}\mathbbm 1_{A}(x)d\mu_{x'}(x)  \right) d\left(T_{\sharp}\mu  \right)(x'),
    \\
    \nu(B) &=  \int_{\mathbb R^{d_1'}}\left( \int_{S^{-1}(y')}\mathbbm 1_{B}(y)d\nu_{y'}(y)  \right) d\left(S_{\sharp}\nu  \right)(y'
    ),
    \end{aligned}
\]
for any Borel sets $A\subset \mathbb R^{d_0},B\subset \mathbb R^{d_1}$ and
\[
\begin{aligned}
    \mu_{x'}\left( T^{-1}(x')  \right)=1\quad \text{for }T_{\sharp}\mu\text{-a.e. } x',\quad 
    \nu_{y'}\left( S^{-1}(y')  \right)=1\quad \text{for }S_{\sharp}\nu\text{-a.e. } y',
\end{aligned}
\]
see Theorem 5.3.1 in \cite{ambrosio2008gradient}.
Define $\pi'\in\mathcal P(\mathbb R^{d_0}\times \mathbb R^{d_1})$ via 
\[
    \pi'(A\times B) = \int_{\mathbb R^{d_0'}\times\mathbb R^{d_1'}}\left( \int_{T^{-1}(x')}\int_{ S^{-1}(y')} \mathbbm 1_{A\times B}(x,y)d\nu_{y'}(y)d\mu_{x'}(x)  \right)d\pi(x',y'), 
\]
for each pair of Borel sets $A\subset\RR^{d_0},B\subset \RR^{d_1}$.
Then, 
\[
    \begin{aligned}
    &\pi'(A\times \mathbb R^{d_1}) = \int_{\mathbb R^{d_0'}\times\mathbb R^{d_1'}}\int_{T^{-1}(x')}\mathbbm 1_{A}(x) d\mu_{x'}(x)\nu_{y'}(S^{-1}(y'))d\pi(x',y')=\mu(A),
    \end{aligned}
\]
so that $\pi'$ has first marginal given by $\mu$; a similar argument shows that its second marginal is $\nu$. Finally, for any Borel sets $C\subset \mathbb R^{d_0'},D\in\mathbb R^{d_1'}$, 
\[
    \begin{aligned}
    (T,S)_{\sharp}\pi'(C\times D) &=\int_{\mathbb R^{d_0'}\times\mathbb R^{d_1'}}\mu_{x'}\left(T^{-1}(x')\cap T^{-1}(C)  \right)\nu_{y'}\left(S^{-1}(y')\cap S^{-1}(D)  \right) d\pi(x',y')
    \\
    &= \int_{\mathbb R^{d_0'}\times\mathbb R^{d_1'}}\mathbbm 1_{C\times D}(x',y') d\pi(x',y') = \pi(C\times D),
    \end{aligned}
\]
proving the desired claim.
\end{proof}

\subsection{\texorpdfstring{Proof of \cref{lem:W2boundIGW}}{Proof of Wasserstein bound for IGW}}
\label{proof:lem:W2boundIGW}
Let $\pi^{\star}$ be an arbitrary solution of $\mathsf{IGW}(\mu,\nu)$ and consider the SVD of the matrix $\int xy^{\intercal}d\pi^{\star}(x,y) = \bm P\blambda_{\pi^{\star}}\bm Q^{\intercal}$ where $\bm P\in\mathbb R^{d_x\times d_x}$ and $\bm Q\in\mathbb R^{d_y\times d_y}$ are orthogonal and $\blambda_{\pi^{\star}}\in\mathbb R^{d_x\times d_y}$ contains the singular values ordered from largest to smallest. Now, let  
$
    \bdelta = \begin{bmatrix}
                \mathbf I_{d_x}&\mathbf 0 
        \end{bmatrix}^{\intercal}\in \St(d_x,d_y)\subset \mathbb R^{d_y\times d_x}.
$
Since $\mathsf{IGW}$ is invariant to orthogonal transformations, $\mathsf{IGW}( (\bdelta\bP^{\intercal})_{\sharp}\mu,(\bQ^{\intercal})_{\sharp}\nu) =\mathsf{IGW}( \mu,\nu)$ and, 
by virtue of \cref{lem:pushforwardMeasures},  $\tilde \pi \coloneqq (\bdelta\mathbf P^{\intercal},\mathbf Q^{\intercal})_{\sharp} \pi^{\star} \in\Pi(\bdelta\bP^{\intercal})_{\sharp}\mu,(\bQ)^{\intercal}_{\sharp}\nu)$ with   
\[
\begin{aligned}
    \mathsf{IGW}( (\bdelta\bP^{\intercal})_{\sharp}\mu,(\bQ)^{\intercal}_{\sharp}\nu)^2&\leq \iint |\langle x,x'\rangle-\langle y,y'\rangle|^2 d \tilde \pi\otimes \tilde \pi (x,y,x',y')
    \\
    &= \iint |\langle \bdelta \bP^{\intercal}x,\bdelta \bP^{\intercal}x'\rangle-\langle \bQ^{\intercal}y,\bQ^{\intercal}y'\rangle|^2 d \pi^{\star}\otimes \pi^{\star} (x,y,x',y')
    \\
    &=\iint |\langle x,x'\rangle-\langle y,y'\rangle|^2 d \pi^{\star}\otimes \pi^{\star} (x,y,x',y')=\mathsf{IGW}(\mu,\nu)^2,
    \end{aligned}
\]
where we have used the fact that $\bdelta \bP^{\intercal}$ and $ \bQ^{\intercal}$ are orthogonal matrices. It follows that $\tilde \pi$ solves $\mathsf{IGW}( (\bdelta\bP^{\intercal})_{\sharp}\mu,(\bQ)^{\intercal}_{\sharp}\nu)$.

Observe that 
\[
\begin{aligned}
   \mathsf{W}_2((\bdelta \bm P^{\intercal})_{\sharp}\mu,(\bm Q^{\intercal})_{\sharp}\nu)^2 &\leq \int \| x  - y\|^2d\tilde \pi(x,y) = \int \|x\|^2 -2 \langle x,y\rangle  +\|y\|^2d\tilde \pi(x,y)
   \\
   &= M_2((\bdelta \bm P^{\intercal})_{\sharp}\mu) + M_2((\bm Q^{\intercal})_{\sharp}\nu) -2\Tr\left( \int xy^{\intercal}d\tilde \pi(x,y) \right). 
   \end{aligned}
\]
On the other hand,  setting $\tilde \mu=(\bdelta \bm P^{\intercal})_{\sharp}\mu$ and $\tilde \nu =(\bm Q^{\intercal})_{\sharp}\nu$
\[
\begin{aligned}
\mathsf{IGW}(\mu,\nu)^2=
   \mathsf{IGW}(\tilde \mu,\tilde \nu)^2 &=\left\|\int xx^{\intercal} d\tilde\mu(x)\right\|^2_{\mathrm F}+ \left\|\int yy^{\intercal} d\tilde\nu(y)\right\|^2_{\mathrm F} -2 \left\|\int xy^{\intercal}d\tilde \pi(x,y)\right\|^2_{\mathrm F}. 
   \end{aligned}
\]
Note that 
\[
\int xy^{\intercal}d\tilde \pi(x,y) = \bdelta \bP^{\intercal}\int xy^{\intercal}d\pi^{\star}(x,y) \bQ =\bdelta \blambda_{\pi^{\star}}  
\]
and observe that $\bdelta \blambda_{\pi^{\star}}=\diag\begin{pmatrix} (\blambda_{\pi^{\star}})_{11} & (\blambda_{\pi^{\star}})_{22}&\cdots &(\blambda_{\pi^{\star}})_{d_xd_x}&0&\cdots&0 \end{pmatrix}\in\mathbb R^{d_y\times d_y}$ so that $\int xy^{\intercal}d\tilde \pi(x,y)$ is a diagonal matrix. Conclude that    
\[
\left\|\int xy^{\intercal}d\tilde \pi(x,y)\right\|^2_{\mathrm F} =\sum_{i=1}^{d_x} (\blambda_{\pi^{\star}})_{ii}^2 \text{ and } \Tr\left(\int xy^{\intercal}d\tilde \pi(x,y)\right) = \sum_{i=1}^{d_x} ( \blambda_{\pi^{\star}})_{ii}
\]
and from the above, $(\blambda_{\pi^{\star}})_{ii} = \int x_iy_id\tilde\pi(x,y)$, where we observe that
\begin{equation}
\label{eq:non_negative_integral}
    0\leq \int (x_i-y_i)^2d\tilde \pi(x,y) = \int x_i^2 d\tilde\mu(x)+\int y_i^2 d\tilde\nu(y)- 2\int x_iy_id\tilde \pi(x,y).  
\end{equation}
Conclude that $0\leq 2\left(\blambda_{\pi^{\star}}\right)_{ii} = 2\int x_iy_id\tilde\pi(x,y)\leq M_2(\tilde\mu)_i +M_2(\tilde \nu)_i$, where we have set  $M_2(\tilde\mu)_i = \int x_i^2 d\tilde \mu(x)$ for $i=1,\dots,d_x$ and similarly for $\tilde \nu$ so that, for each $i=1,\dots, d_x$,   
\begin{equation}
\label{eq:cross_corr_lower_bound}
-2\left(\int x_iy_id\tilde \pi(x,y)\right)^2 \geq - \left(\int x_iy_id\tilde \pi(x,y)\right) \left(M_2(\tilde \mu)_i+M_2(\tilde \nu)_i\right).
\end{equation}
 
 By the Cauchy-Schwarz inequality,  we further have that 
 \begin{equation}
 \label{eq:CSmathcalI}
 0\leq \int x_iy_id\tilde\pi(x,y)\leq \sqrt{M_2(\tilde \mu)_iM_2(\tilde \nu)_i}
 \end{equation}
 so that  $\int x_iy_id\tilde\pi(x,y)=0$ at each $i\in \mathcal I^c$ where $\mathcal I\coloneqq \{i=1,\dots, d_x: M_2(\tilde \mu)_i\text{ and }M_2(\tilde\nu)_i\neq 0\}$ which we assume to be nonempty.    
 Setting 
$\mathcal J\coloneqq \{j=1,\dots,d_y:M_2(\tilde \nu)_j\neq 0\}\cap \mathcal I^c$ and $\mathcal K\coloneqq \{k=1,\dots,d_y:M_2(\tilde \mu)_k\neq 0\}\cap \mathcal I^c$,
\begin{equation}
\label{eq:IGW_lower_bound}
    \begin{aligned}
       \mathsf{IGW}(\mu,\nu)^2 &= \sum_{i,j=1}^{d_y}\left(\int x_i x_jd\tilde\mu(x)\right)^2 + \sum_{i,j=1}^{d_y}\left(\int y_i y_jd\tilde \nu(y)\right)^2 -2 \sum_{i=1}^{d_x}\left( \int x_iy_id\tilde \pi(x,y)\right)^2
        \\
        &\geq \sum_{i\in\mathcal I} \left(M_2(\tilde\mu)_i^2+M_2(\tilde\nu)_i^2-2 \left( \int x_iy_id\tilde \pi(x,y)\right)^2 \right) + \sum_{j\in\mathcal J}M_2(\tilde\nu)^2_j+ \sum_{k\in\mathcal K}M_2(\tilde\mu)^2_k
        \\
        &\geq  \frac{1}{2}\sum_{i\in\mathcal I} \left(M_2(\tilde\mu)_i+M_2(\tilde\nu)_i\right)\left(M_2(\tilde\mu)_i+M_2(\tilde\nu)_i-2\int x_iy_id\tilde \pi(x,y)\right) 
        \\
        &+ \sum_{j\in\mathcal J}M_2(\tilde\nu)^2_j+\sum_{k\in\mathcal K}M_2(\tilde\mu)^2_k,  
    \end{aligned}
\end{equation}
where we have used the standard inequality $(a+b)^2\leq 2(a^2+b^2)$ and \eqref{eq:cross_corr_lower_bound} in the final lower bound. By \eqref{eq:non_negative_integral} we further have that   
\[
\begin{aligned}
    \sum_{i\in\mathcal I} \left(M_2(\tilde \mu)_i+M_2(\tilde \nu)_i\right)\underbrace{\left(M_2(\tilde \mu)_i+M_2(\tilde \nu)_i-2\int x_iy_id\tilde \pi(x,y)\right)}_{\geq 0}&
    \\
    &\hspace{-17em}\geq \sum_{i\in\mathcal I} \min_{i\in\mathcal I}\left\{M_2(\tilde\mu)_i+M_2(\tilde \nu)_i\right\}\!\left(\!M_2(\tilde\mu)_i+M_2(\tilde\nu)_i-2\!\!\int\! x_iy_id\tilde \pi(x,y)\!\right)
    \\
&\hspace{-17em}= \iota \sum_{i\in\mathcal I} \left(M_2(\tilde\mu)_i+M_2(\tilde\nu)_i-2\int x_iy_id\tilde \pi(x,y)\right),
\end{aligned}
\]
for $\iota \coloneqq \min_{i\in\mathcal I}\left\{ M_2(\tilde \mu)_i+M_2(\tilde \nu)_i\right\}$. Applying this bound yields
\[
\begin{aligned}
   &\mathsf{W}_2((\bdelta \bm P^{\intercal})_{\sharp}\mu,(\bm Q^{\intercal})_{\sharp}\nu)^2
   \\
   &\hspace{4em}\leq \sum_{i=1}^{d_y} M_2(\tilde \mu)_i  +\sum_{j=1}^{d_y} M_2(\tilde\nu)_j-2 \sum_{i=1}^{d_x} \int x_iy_id\tilde \pi(x,y)
   \\
   &\hspace{4em}=\sum_{i\in\mathcal I} \left(M_2(\tilde \mu)_i +M_2(\tilde \nu)_i-2 \int x_iy_id\tilde \pi(x,y)\right) +\sum_{j\in\mathcal J} M_2(\tilde \nu)_j +\sum_{k\in\mathcal K} M_2(\tilde \mu)_k
   \\
   &\hspace{4em}\leq \frac{2} {\iota}%
   \left(\mathsf{IGW}(\mu,\nu)^2- \sum_{j\in\mathcal J}M_2(\tilde \nu)_j^2-\sum_{k\in\mathcal K}M_2(\tilde\mu)^2_k\right)
   + \sum_{j\in\mathcal J} M_2(\tilde \nu)_j +\sum_{k\in\mathcal K} M_2(\tilde \mu)_k,
   \end{aligned}
\]
proving the claim.

\subsection{\texorpdfstring{Proof of \cref{lem:calI_nonempty}}{Proof of nonempty index set lemma}}
\label{proof:lem:calI_nonempty}
We assume the notation of the proof of \cref{lem:equivalence_igw_w} above, letting $\pi^{\star}$ solve $\mathsf{IGW}(\mu,\nu)$ and $\tilde \pi$ solve $\mathsf{IGW}(\tilde \mu,\tilde \nu)$. From \eqref{eq:CSmathcalI}, if $\mathcal I=\emptyset$, then $\int x_iy_id\tilde \pi(x,y)=0$ for every $i\in\{1,\dots,d_x\}$ so that all singular values of $\int xy^{\intercal}d\pi^{\star}(x,y)$ are $0$, i.e., $\int xy^{\intercal}d\pi^{\star}(x,y)$ is the $0$ matrix. From the decomposition 
 \[
 \begin{aligned}
 \mathsf{IGW}(\mu,\nu)^2=
    \left\|\int xx^{\intercal} d\mu(x)\right\|^2_{\mathrm F} + \left\|\int yy^{\intercal} d\nu(y)\right\|^2_{\mathrm F}+\inf_{\pi\in\Pi(\mu,\nu)} -2 \left\|\int xy^{\intercal}d \pi(x,y)\right\|^2_{\mathrm F}, 
   \end{aligned}
\] 
we see that every coupling $\pi\in\Pi(\mu,\nu)$ must, therefore, satisfy $\int xy^{\intercal}d\pi(x,y)=0$.

Applying \cref{lem:pushforwardMeasures} it follows that, for any $a\in\mathbb R^{d_0},b\in\RR^{d_1}$, every  coupling, $\pi_{a,b}$, between $(a^{\intercal})_{\sharp}\mu$ and $(b^{\intercal})_{\sharp}\nu$ satisfies $\int xy d\pi_{a,b}(x,y)=0$.  
Assuming to the contrary that $\mu$ and $\nu$ are not point masses, there exist $a,b$ for which $\mu_a = (a^{\intercal})_{\sharp}\mu$ and $\nu_b=(b^{\intercal})_{\sharp}\nu$ are not point masses. Hence, there exist $x_1<x_2$ and $y_1<y_2$ for which $\mu_a((-\infty,x_1]),\mu_a([x_2,\infty)),\nu_b((-\infty,y_1]),\nu_b([y_2,\infty))>0$. Fix some $\epsilon>0$ and consider the measure
\[
\begin{aligned}
\pi' = \mu_a\otimes \nu_b+\epsilon\left(\mu_a\vert_{(-\infty,x_1]}\otimes \nu_b\vert_{[y_2,\infty)}+\mu_a\vert_{[x_2,\infty)}\otimes \nu_b\vert_{(-\infty,y_1]}\right.&\\&\hspace{-10em}\left.-\mu_a\vert_{(-\infty,x_1]}\otimes \nu_b\vert_{(-\infty,y_1]}-\mu_a\vert_{[x_2,\infty)}\otimes \nu_b\vert_{[y_2,\infty)}\right),
\end{aligned}
\]
where $\mu_a\vert_{(-\infty,x_1])}$ is the restriction of $\mu_a$ to $(-\infty,x_1]$ normalized by $\mu_a((-\infty,x_1])$ and so on.
It is easy to see that $\pi'$ is a coupling of $(\mu_a,\nu_b)$ and, for any $A\subset (-\infty,x_1],B\subset (-\infty,y_1]$, 
\[
\begin{aligned}
    \pi'(A\times B)&= \mu_a(A)\nu_b(B)-\epsilon \mu_a\vert_{(-\infty,x_1]}\left(A\right)\nu_b\vert_{(-\infty,y_1]}\left(B\right)
    \\
    &=\mu_a(A)\nu_b(B)\left(1-\frac{\epsilon}{\mu_a((-\infty,x_1])\nu_b((-\infty,y_1])}\right)
\end{aligned}
\]
so that we set $0<\epsilon< \mu_a((-\infty,x_1])\nu_b((-\infty,y_1])$ to ensure that $\pi'(A\times B)\geq 0$. Similarly, if $C\subset [x_2,\infty)$ and $D\subset [y_2,\infty)$, 
\[
\begin{aligned}
    \pi'(C\times D)
    &=\mu_a(C)\nu_b(D)\left(1-\frac{\epsilon}{\mu_a([x_2,\infty))\nu_b([y_2,\infty))}\right)\geq 0
\end{aligned}
\]
if we set $0<\epsilon< \mu_a([x_2,\infty))\nu_b([y_2,\infty))$. We conclude that $\pi'$ is a nonnegative measure with marginals $\mu_a$ and $\nu_b$ once $\epsilon$ is sufficiently small and hence is an element of $\Pi(\mu_a,\nu_b)$. Finally, 
\[
\begin{aligned}
    \int\! xy d\pi'(x,y)\!&=\!\EE_{\mu_a}[X]\mathbb E_{\nu_b}[Y]+\epsilon \EE_{\mu_a\vert_{[x_2,\infty)}}[X]\mathbb E_{\nu_b\vert_{(-\infty,y_1]}}[Y]+\epsilon \EE_{\mu_a\vert_{(-\infty,x_1]}}[X]\mathbb E_{\nu_b\vert_{[y_2,\infty)}}[Y] 
    \\
    &-\epsilon\mathbb E_{\mu_a\vert_{(-\infty,x_1]}}[X]\mathbb E_{\nu_b\vert_{(-\infty,y_1]}}[Y]-\epsilon \EE_{\mu_a\vert_{[x_2,\infty)}}[X]\mathbb E_{\nu_b\vert_{[y_2,\infty)}}[Y]
    \\
    &=\!\EE_{\mu_a}\![X]\mathbb E_{\nu_b}\![Y]\!+\!\epsilon \!\left(\!\EE_{\mu_a\vert_{[x_2,\infty)}}\![X]\!-\!\EE_{\mu_a\vert_{(-\infty,x_1]}}\![X]\!\right)\!\!\left(\!\mathbb E_{\nu_b\vert_{(-\infty,y_1]}}\![Y]\!-\!\mathbb E_{\nu_b\vert_{[y_2,\infty)}}\![Y]\!\right)\!. 
\end{aligned}
\]
Evidently, $\EE_{\mu_a\vert_{[x_2,\infty)}}[X]> \EE_{\mu_a\vert_{(-\infty,x_1]}}[X]$ and $\mathbb E_{\nu_b\vert_{(-\infty,y_1]}}[Y]< \mathbb E_{\nu_b\vert_{[y_2,\infty)}}[Y]$ so that 
$\int xyd\pi'(x,y)<\int xy d\mu_a\otimes \nu_b(x,y)$, contradicting the fact that all couplings must have 0 cross-correlation.

\subsection{\texorpdfstring{Proof of \cref{cor:1dCase}}{Proof of one-dimensional case}}
\label{proof:cor:1dCase}
  If neither $\mu$ nor $\nu$ is a point mass, \eqref{eq:1dComparison} follows from \cref{lem:equivalence_igw_w}. Suppose that $\mu$ is a point mass and $\nu$ is arbitrary. In this case, $\mu\otimes \nu$ is the unique coupling of $\mu$ and $\nu$ so that 
   \[
   \begin{aligned}
    \mathsf W_2(\mu,\nu)^2 &= M_2(\mu)+M_2(\nu)-2\mathbb E_{\mu}[X]\mathbb E_{\nu}[Y]
    \\
    \mathsf W_2(-(\id)_{\sharp}\mu,\nu)^2 &= M_2(\mu)+M_2(\nu)+2\mathbb E_{\mu}[X]\mathbb E_{\nu}[Y]
    \\
    \mathsf{IGW}(\mu,\nu)^2 &= M_2(\mu)^2+M_2(\nu)^2-2\left(\mathbb E_{\mu}[X]\mathbb E_{\nu}[Y]\right)^2.
    \end{aligned}
   \]
   If $\mathbb E_{\mu}[X]\mathbb E_{\nu}[Y]\geq 0$, $\mathsf W_2(\mu,\nu)\leq \mathsf W_2((-\id)_{\sharp}\mu,\nu)$ and  
   \[
        0\leq \int (x-y)^2d\mu\otimes \nu(x,y)= M_2(\mu)+M_2(\nu)-2\mathbb E_{\mu}[X]\mathbb E_{\nu}[Y] 
   \]
   so that $0\leq 2\mathbb E_{\mu}[X]\mathbb E_{\nu}[Y]\leq M_2(\mu)+M_2(\nu)$ whereby 
   \[
   \begin{aligned}
        \mathsf{IGW}(\mu,\nu)^2 &\geq M_2(\mu)^2+M_2(\nu)^2-\mathbb E_{\mu}[X]\mathbb E_{\nu}[Y](M_2(\mu)+M_2(\nu))
        \\
        &=M_2(\mu)^2+M_2(\nu)^2+\frac 12 \left(\mathsf W_2(\mu,\nu)^2-M_2(\mu)-M_2(\nu)\right)(M_2(\mu)+M_2(\nu))
        \\
        &= \underbrace{M_2(\mu)^2+M_2(\nu)^2-\frac{1}{2}\left(M_2(\mu)+M_2(\nu)\right)^2}_{=\frac{1}{2}\left(M_2(\mu)-M_2(\nu)\right)^2\geq 0}+\frac{1}{2}\mathsf W_2(\mu,\nu)^2(M_2(\mu)+M_2(\nu))
    \end{aligned} 
   \]
   proving the claim. If $\mathbb E_{\mu}[X]\mathbb E_{\nu}[Y]\leq 0$, then we work with the expression for $\mathsf W_2((-\id)_{\sharp}\mu,\nu)$ and the argument follows analogously. 

   With \eqref{eq:1dComparison} in hand, if $\mathsf{IGW}(\mu_n,\mu)\to 0$, then 
   \[
        \frac{1}{2}(M_2(\mu_n)+M_2(\mu))\min \left\{\mathsf W_2(\mu_n,\mu)^2,\mathsf W_2((-\id)_{\sharp}\mu_n,\mu)^2\right\}\to 0 
   \]
   so that either $M_2(\mu_n)+M_2(\mu)\to 0$ or $\min \left\{\mathsf W_2(\mu_n,\mu)^2,\mathsf W_2((-\id)_{\sharp}\mu_n,\mu)^2\right\}\to 0$. In the former case, this mean that $\mu_n$ converges weakly to a point mass at $0$ and that $\mu$ is a point mass at $0$. In the later case, we see that $\mathsf W_2\left((s_n\id)_{\sharp}\mu_n,\mu\right)\to 0$ proving the claim. The final assertion is a direct consequence of \cref{prop:IGW_weak_conv}.
\qed

\subsection{\texorpdfstring{Proof of \cref{lem:continuity_orientation}}{Proof of orientation-continuity lemma}}
\label{proof:lem:continuity_orientation}
  Let $a \in \mathbb R^{d_y}$ be arbitrary and $(a_n)_{n\in\mathbb N}\subset \mathbb R^{d_y} $ be a sequence converging to $a$. It is easy to see that the functions $g_n:x\in\mathbb R^{d_x}\mapsto a_n^{\intercal}\bdelta x$ converge uniformly on any compact set to $g:x\in\mathbb R^{d_x}\mapsto a^{\intercal}\bdelta x$ so that Lemma 5.2.1 in \cite{ambrosio2008gradient} yields that $((a_n)^{\intercal}\bdelta)_{\sharp}\mu\stackrel{w}{\to}(a^{\intercal}\bdelta)_{\sharp}\mu$. Similarly, we obtain that $(-\id)_{\sharp}((a_n)^{\intercal}\bdelta)_{\sharp}\mu\stackrel{w}{\to}(-\id)_{\sharp}(a^{\intercal}\bdelta)_{\sharp}\mu$ and that $((a_n)^{\intercal})_{\sharp}\nu\stackrel{w}{\to}(a^{\intercal})_{\sharp}\nu$. Furthermore, 
    \[
        M_2(((a_n)^{\intercal}\bdelta)_{\sharp}\mu)=a_n^{\intercal}\bdelta\int xx^{\intercal}d\mu(x){\bdelta^{\intercal}a_n}\to a^{\intercal}\bdelta\int xx^{\intercal}d\mu(x){\bdelta^{\intercal}a}=M_2((a^{\intercal}\bdelta)_{\sharp}\mu),
    \]
    and the second moments of the other distributions converge similarly. Conclude that 
    \[
    \mathsf W_2(((a_n)^{\intercal}\bdelta)_{\sharp}\mu,((a_n)^{\intercal})_{\sharp}\nu)\to\mathsf W_2((a^{\intercal}\bdelta)_{\sharp}\mu,(a^{\intercal})_{\sharp}\nu)
    \]%
 and, by the same logic,   
 \[
 \mathsf W_2((-\id)_{\sharp}((a_n)^{\intercal}\bdelta)_{\sharp}\mu,((a_n)^{\intercal})_{\sharp}\nu)\to\mathsf W_2((-\id)_{\sharp}(a^{\intercal}\bdelta)_{\sharp}\mu,(a^{\intercal})_{\sharp}\nu).
 \]
    As the choice of $a$ and sequence was arbitrary, the result follows.
\qed

\subsection{\texorpdfstring{Proof of \cref{lemma:IGW_lip}}{Proof of IGW Lipschitz lemma}} 
\label{appen:IGW_lip_proof}
Recall that $\mathsf{IGW}(\mu,\nu)^2=\sF_1(\mu,\nu)+\sF_2(\mu,\nu)$, where 
\begin{align*}
    \sF_1(\mu,\nu)&=\int |\llangle x,x'\rrangle|^2d\mu\otimes \mu(x,x')+\int |\llangle y,y'\rrangle|^2d\nu\otimes \nu(y,y'),
    \\
    \sF_2(\mu,\nu)&=\inf_{\pi\in\Pi(\mu,\nu)} -2\int \llangle x,x'\rrangle\llangle y,y'\rrangle d\pi\otimes \pi(x,y,x',y').
\end{align*}

We begin with Lipschitz continuity in $\theta$. 
For any $\theta,\vartheta\in \mathbb S^{d_y-1}$, we have that 
\begin{align*}
&|\IGW\big((\theta^{\intercal}\bdelta)_{\sharp}\mu,(\theta^{\intercal})_{\sharp}\nu\big)^2 - \IGW\big((\vartheta^{\intercal}\bdelta)_{\sharp}\mu,(\vartheta^{\intercal})_{\sharp}\nu\big)^2\\
&\leq \left| \sF_1((\theta^{\intercal}\bdelta)_{\sharp}\mu,(\theta^{\intercal})_{\sharp}\nu) - \sF_1((\vartheta^{\intercal}\bdelta)_{\sharp}\mu,(\vartheta^{\intercal})_{\sharp}\nu)\right| 
\\
&+ \left| \sF_2((\theta^{\intercal}\bdelta)_{\sharp}\mu,(\theta^{\intercal})_{\sharp}\nu) - \sF_2((\vartheta^{\intercal}\bdelta)_{\sharp}\mu,(\vartheta^{\intercal})_{\sharp}\nu)\right|.
\end{align*}
We bound the two terms separately starting with the second,
\[
    \begin{aligned}
    &\left| \sF_2((\theta^{\intercal}\bdelta)_{\sharp}\mu,(\theta^{\intercal})_{\sharp}\nu) - \sF_2((\vartheta^{\intercal}\bdelta)_{\sharp}\mu,(\vartheta^{\intercal})_{\sharp}\nu)\right|\\
    &=\left|\inf_{\pi\in\Pi(\mu,\nu)}-2 \left(\int \theta^{\intercal}\bdelta xy^{\intercal}\theta d\pi(x,y)\right)^2 - \inf_{\pi\in\Pi(\mu,\nu)}-2 \left(\int \vartheta^{\intercal}\bdelta xy^{\intercal}\vartheta d\pi(x,y)\right)^2\right| 
    \\
    &\leq 2\sup_{\pi\in\Pi(\mu,\nu)}\left|\left(\int \theta^{\intercal}\bdelta xy^{\intercal}\theta d\pi(x,y)\right)^2 - \left(\int \vartheta^{\intercal}\bdelta xy^{\intercal}\vartheta d\pi(x,y)\right)^2\right| 
    \\
    &= 2\sup_{\pi\in\Pi(\mu,\nu)}\left|\int \theta^{\intercal}\bdelta xy^{\intercal}\theta d\pi(x,y) - \int \vartheta^{\intercal}\bdelta xy^{\intercal}\vartheta d\pi(x,y)\right|\times\\
    &\hspace{12em}\left|\int \theta^{\intercal}\bdelta xy^{\intercal}\theta d\pi(x,y) + \int \vartheta^{\intercal}\bdelta xy^{\intercal}\vartheta d\pi(x,y)\right|.
\end{aligned}
\]
Observe that 
\[
    \begin{aligned}
        \theta^{\intercal}\bdelta xy^{\intercal}\theta- \vartheta^{\intercal}\bdelta xy^{\intercal}\vartheta&= (\theta-\vartheta)^{\intercal}\bdelta xy^{\intercal}\theta +\vartheta^{\intercal}\bdelta xy ^{\intercal}(\theta-\vartheta),  
    \end{aligned}
\]
so that 
\[
    \begin{aligned}
    &\left|\int \theta^{\intercal}\bdelta xy^{\intercal}\theta -  \vartheta^{\intercal}\bdelta xy^{\intercal}\vartheta d\pi(x,y)\right|
    \\
    &\leq \int \left|(\theta-\vartheta)^{\intercal}\bdelta xy^{\intercal}\theta  \right| d\pi(x,y) + \int \left|\vartheta^{\intercal}\bdelta xy ^{\intercal}(\theta-\vartheta) \right| d\pi(x,y)\\
    &\leq \int \|\theta-\vartheta\| \|x\| \|y\|   d\pi(x,y) + \int \|x\|\|y\| \|\theta-\vartheta\|  d\pi(x,y)\\
    &\leq 2\|\theta-\vartheta\| \sqrt{M_2(\mu)M_2(\nu)} ,
    \end{aligned}
\]
Applying a similar approach, we have that 
\[
    \left|\int \theta^{\intercal}\bdelta xy^{\intercal}\theta d\pi(x,y) + \int \vartheta^{\intercal}\bdelta xy^{\intercal}\vartheta d\pi(x,y)\right| \leq 2\sqrt{M_2(\mu)M_2(\nu)}.
\]
Overall, 
\[
    \begin{aligned}
    &\left| \sF_2((\theta^{\intercal}\bdelta)_{\sharp}\mu,(\theta^{\intercal})_{\sharp}\nu) - \sF_2((\vartheta^{\intercal}\bdelta)_{\sharp}\mu,(\vartheta^{\intercal})_{\sharp}\nu)\right|
\leq 8M_2(\mu)M_2(\nu)\|\theta-\vartheta\|. 
\end{aligned}
\]
We now bound the remaining term using a similar approach,
\begin{align*}
    &\left| \sF_1((\theta^{\intercal}\bdelta)_{\sharp}\mu,(\theta^{\intercal})_{\sharp}\nu) - \sF_1((\vartheta^{\intercal}\bdelta)_{\sharp}\mu,(\vartheta^{\intercal})_{\sharp}\nu)\right| \\&\hspace{8em}= \left|\int \left(\theta^{\intercal}\bdelta x'x^{\intercal}\bdelta^{\intercal}\theta  \right)^2d\mu\otimes \mu(x,x')-\int \left(\vartheta^{\intercal}\bdelta x'x^{\intercal}\bdelta^{\intercal}\vartheta  \right)^2d\mu\otimes \mu(x,x')\right|\\
    &\hspace{8em}+\left|\int \left(\theta^{\intercal}y'y^{\intercal}\theta  \right)^2d\nu\otimes \nu(y,y')-\int \left(\vartheta^{\intercal}y'y^{\intercal}\vartheta  \right)^2d\nu\otimes \nu(y,y')\right|.
\end{align*}
Observe that 
\[
\begin{aligned}
&\left|\left(\theta^{\intercal}\bdelta x'x^{\intercal}\bdelta^{\intercal}\theta  \right)^2- \left(\vartheta^{\intercal}\bdelta x'x^{\intercal}\bdelta^{\intercal}\vartheta  \right)^2\right|
\\
&=\left|\left(\theta^{\intercal}\bdelta x'x^{\intercal}\bdelta^{\intercal}\theta  \right)^2- \left(\vartheta^{\intercal}\bdelta x'x^{\intercal}\bdelta^{\intercal}\vartheta  \right)^2\right|
\\
&=\left|(\theta-\vartheta)^{\intercal}\bdelta x'x^{\intercal}\bdelta^{\intercal}\theta  + \vartheta^{\intercal}\bdelta x'x^{\intercal}\bdelta^{\intercal}(\theta-\vartheta)  \right|\left|\theta^{\intercal}\bdelta x'x^{\intercal}\bdelta^{\intercal}\theta  + \vartheta^{\intercal}\bdelta x'x^{\intercal}\bdelta^{\intercal}\vartheta  \right|,
\end{aligned}
\]
so that $\left|\left(\theta^{\intercal}\bdelta x'x^{\intercal}\bdelta^{\intercal}\theta  \right)^2- \left(\vartheta^{\intercal}\bdelta x'x^{\intercal}\bdelta^{\intercal}\vartheta  \right)^2\right|\leq \|x\|^2\|x'\|^2\left(\|\theta\|^2+\|\vartheta\|^2\right)(\|\theta\|+\|\vartheta\|)\|\theta-\vartheta\|$, and 
 $\left|\int \left(\theta^{\intercal}\bdelta x'x^{\intercal}\bdelta^{\intercal}\theta  \right)^2d\mu\otimes \mu(x,x')-\int \left(\vartheta^{\intercal}\bdelta x'x^{\intercal}\bdelta^{\intercal}\vartheta  \right)^2d\mu\otimes \mu(x,x')\right|\leq 4 \|\theta-\vartheta\|M_2(\mu)^2$. Similarly,   
$
\left|\int \left(\theta^{\intercal}y'y^{\intercal}\theta  \right)^2d\nu\otimes \nu(y,y')-\int \left(\vartheta^{\intercal}y'y^{\intercal}\vartheta  \right)^2d\nu\otimes \nu(y,y')\right| \leq 4 \|\theta-\vartheta\|M_2(\nu)^2,
$
so that,
\[
\begin{aligned}
\left| \sF_1((\theta^{\intercal}\bdelta)_{\sharp}\mu,(\theta^{\intercal})_{\sharp}\nu) - \sF_1((\vartheta^{\intercal}\bdelta)_{\sharp}\mu,(\vartheta^{\intercal})_{\sharp}\nu)\right| \leq 4 \|\theta-\vartheta\|(M_2(\mu)^2+M_2(\nu)^2).
\end{aligned}
\] 
By compiling these results, we obtain the desired Lipschitz constant.

For the second part of the statement, we have similarly that, for any $\bdelta_1,\bdelta_2\in\mathbb R^{d_y\times d_x}$, 
\begin{align*}
&|\IGW\big((\theta^{\intercal}\bdelta_1)_{\sharp}\mu,(\theta^{\intercal})_{\sharp}\nu\big)^2 - \IGW\big((\theta^{\intercal}\bdelta_2)_{\sharp}\mu,(\theta^{\intercal})_{\sharp}\nu\big)^2\\
&\leq \left| \sF_1((\theta^{\intercal}\bdelta_1)_{\sharp}\mu,(\theta^{\intercal})_{\sharp}\nu) - \sF_1((\theta^{\intercal}\bdelta_2)_{\sharp}\mu,(\theta^{\intercal})_{\sharp}\nu)\right| 
\\
&+ \left| \sF_2((\theta^{\intercal}\bdelta_1)_{\sharp}\mu,(\theta^{\intercal})_{\sharp}\nu) - \sF_2((\theta^{\intercal}\bdelta_2)_{\sharp}\mu,(\theta^{\intercal})_{\sharp}\nu)\right|.
\end{align*}

Following a similar approach, we can bound
\[
    \begin{aligned}
    &\left| \sF_2((\theta^{\intercal}\bdelta_1)_{\sharp}\mu,(\theta^{\intercal})_{\sharp}\nu) - \sF_2((\theta^{\intercal}\bdelta_2)_{\sharp}\mu,(\theta^{\intercal})_{\sharp}\nu)\right| \\
    &\leq 2\sup_{\pi\in\Pi(\mu,\nu)}\left|\left(\int \theta^{\intercal}\bdelta_1 xy^{\intercal}\theta d\pi(x,y)\right)^2 - \left(\int \theta^{\intercal}\bdelta_2 xy^{\intercal}\theta d\pi(x,y)\right)^2\right| 
    \\
    &= 2\sup_{\pi\in\Pi(\mu,\nu)}\left|\int \theta^{\intercal}(\bdelta_1-\bdelta_2) xy^{\intercal}\theta d\pi(x,y)\right|\left|\int \theta^{\intercal}(\bdelta_1+\bdelta_2) xy^{\intercal}\theta d\pi(x,y)\right|\\
    &\leq 2\|\bdelta_1-\bdelta_2\|_{\mathrm{op}}\|\bdelta_1+\bdelta_2\|_{\mathrm{op}} M_2(\mu)M_2(\nu),
\end{aligned}
\]
and, 
\[
    \begin{aligned}
    &\left| \sF_1((\theta^{\intercal}\bdelta_1)_{\sharp}\mu,(\theta^{\intercal})_{\sharp}\nu) - \sF_1((\theta^{\intercal}\bdelta_2)_{\sharp}\mu,(\theta^{\intercal})_{\sharp}\nu)\right| \\
    &=\left|\int \left(\theta^{\intercal}\bdelta_1 x'x^{\intercal}\bdelta_1^{\intercal}\theta  \right)^2-\left(\theta^{\intercal}\bdelta_2 x'x^{\intercal}\bdelta_2^{\intercal}\theta  \right)^2d\mu\otimes \mu(x,x')\right|\\
    &= \left|\int \left(\theta^{\intercal}\bdelta_1 x'x^{\intercal}\bdelta_1^{\intercal}\theta  +\theta^{\intercal}\bdelta_2 x'x^{\intercal}\bdelta_2^{\intercal}\theta  \right)\left(\theta^{\intercal}\bdelta_1 x'x^{\intercal}\bdelta_1^{\intercal}\theta  -\theta^{\intercal}\bdelta_2 x'x^{\intercal}\bdelta_2^{\intercal}\theta  \right)d\mu\otimes \mu(x,x')\right|,
\end{aligned}
\]
where we observe that 
\[
\begin{aligned}
&\left|\left(\theta^{\intercal}\bdelta_1 x'x^{\intercal}\bdelta_1^{\intercal}\theta  +\theta^{\intercal}\bdelta_2 x'x^{\intercal}\bdelta_2^{\intercal}\theta  \right)\left(\theta^{\intercal}\bdelta_1 x'x^{\intercal}\bdelta_1^{\intercal}\theta  -\theta^{\intercal}\bdelta_2 x'x^{\intercal}\bdelta_2^{\intercal}\theta  \right)\right|
\\
&\leq \left(\|\bdelta_1\|^2_{\mathrm{op}}+\|\bdelta_2\|^2_{\mathrm{op}}\right)\|x\|^2\|x'\|^2\|\bdelta_1+\bdelta_2\|_{\mathrm{op}}\|\bdelta_1-\bdelta_2\|_{\mathrm{op}},
\end{aligned}
\]
so that \[
\left| \sF_1((\theta^{\intercal}\bdelta_1)_{\sharp}\mu,(\theta^{\intercal})_{\sharp}\nu) - \sF_1((\theta^{\intercal}\bdelta_2)_{\sharp}\mu,(\theta^{\intercal})_{\sharp}\nu)\right|\leq 2\left(\|\bdelta_1\|^2_{\mathrm{op}}+\|\bdelta_2\|^2_{\mathrm{op}}\right)M_2(\mu)^2\|\bdelta_1-\bdelta_2\|_{\mathrm{op}}.
\]
When $\bdelta_1,\bdelta_2\in\St(d_x,d_y)$, we readily obtain that  
\begin{align*}
&|\IGW\big((\theta^{\intercal}\bdelta_1)_{\sharp}\mu,(\theta^{\intercal})_{\sharp}\nu\big)^2 - \IGW\big((\theta^{\intercal}\bdelta_2)_{\sharp}\mu,(\theta^{\intercal})_{\sharp}\nu\big)^2\\
    &\leq (4M_2(\mu)^2 + 4M_2(\mu)M_2(\nu) )\|\bdelta_1-\bdelta_2\|_{\mathrm{op}} \leq (4M_2(\mu)^2 + 4M_2(\mu)M_2(\nu) )\|\bdelta_1-\bdelta_2\|_{\mathrm{F}}.
\end{align*}
    
\qed

\newpage

\section{\texorpdfstring{Formula for $\aIGW$ between Gaussians}{Formula for sliced IGW between Gaussians}} 
\label{app:aIGWGaussians}
To derive the claimed formula, we first observe that, for any $\bdelta\in\St(d_x,d_y)$ and $\theta \in \mathbb S^{d_y-1}$,  
 \[
\IGW\left((\theta^{\intercal}\bdelta)_{\sharp}\mu,(\theta^{\intercal})_{\sharp}\nu\right)^2 = (\theta^{\intercal}\bdelta \bsigma_\mu \bdelta^{\intercal}\theta)^2 + (\theta^{\intercal}\bsigma_\nu\theta)^2 - 2\theta^{\intercal}\bdelta \bsigma_\mu \bdelta^{\intercal}\theta \theta^{\intercal}\bsigma_\nu\theta,
 \]
 as follows from Corollary 3.2 in \cite{dandapanthula2025optimal}. To integrate this expression over $\mathbb S^{d_y-1}$, we 
underscore that each term is of the form $\theta^{\intercal}\mathbf A \theta \theta^{\intercal}\mathbf B\theta$ for some PSD matrices $\mathbf A,\mathbf B\in\mathbb R^{d_y\times d_y}$. As 
$
    \int_{\mathbb S^{d_y-1}} \theta^{\intercal}\mathbf A \theta \theta^{\intercal}\mathbf B\theta \ d\sigma_{d_y}(\theta) = \sum_{i,j,k,l=1}^{d_y} A_{ij} B_{kl} \int_{\mathbb S^{d_y-1}}\theta_i \theta_j\theta_k \theta_l \ d\sigma_{d_y}(\theta),
$
we apply the following formula for integrating polynomials over the sphere \cite[p.447]{folland2001integrate}, 
$$
\int_{\mathbb S^{d_y-1}}\!\!\!\!\!\!\theta_i \theta_j\theta_k \theta_l \ d\sigma_{\!d_y} \!\!=\! \begin{cases}
    \frac{2\Gamma(1/2)^{d_y-1}\Gamma(5/2)}{\Gamma(2+d_y/2)|\mathbb S^{d_y-1}|} \!=\! \frac{3}{d_y(d_y+2)} & \!\!\text{if } i\!=\!j\!=\!k\!=\!l\\
    \frac{2\Gamma(1/2)^{d_y-2}\Gamma(3/2)^2}{\Gamma(2+d_y/2)|\mathbb S^{d_y-1}|}\!=\! \frac{1}{d_y(d_y+2)} & \!\!\text{if } i\!=\!j\!\neq\! k\!=\!l \text{ up to permuting } (i,j,k,l),
    \\0&\!\!\text{otherwise,}
\end{cases}
$$
whereby,
\begin{align*}
    \int_{\mathbb S^{d_y-1}} \theta^{\intercal}\mathbf A \theta \theta^{\intercal}\mathbf B\theta \ d\sigma_{d_y}(\theta)\hspace{-5em} \\
    &= \frac{3}{d_y(d_y+2)}\sum_{i=1}^{d_y} A_{ii}B_{ii}  + \frac{1}{d_y(d_y+2)}\left(\sum_{i=1}^{d_y}\sum_{\substack{j=1\\j\neq i}}^{d_y}A_{ii}B_{jj} + A_{ij}B_{ij}+A_{ij}B_{ji} \right)\\
    &=\frac{1}{d_y(d_y+2)}\left(\sum_{i,j=1}^{d_y}A_{ii}B_{jj} + A_{ij}B_{ij}+A_{ij}B_{ji} \right)\\
    &=\frac{1}{d_y(d_y+2)}\big(\Tr(\mathbf A)\Tr(\mathbf B)+\Tr(\mathbf A^{\intercal}\mathbf B)+\Tr(\mathbf A\mathbf B) \big).
\end{align*}
Using the above formula and accounting for the minimization over $\St(d_x,d_y)$, we have that 
\begin{align*}
    \aIGW(\mu,\nu)^2  
    &=\!\!\inf_{\bdelta\in \St(d_x,d_y)}\frac{1}{d_y(d_y\!+\!2)}\left(\Tr(\bdelta \bsigma_\mu \bdelta^{\!\intercal})^2\!+\!2\Tr((\bdelta \bsigma_\mu \bdelta^{\!\intercal})^2) \!+\! \Tr(\bsigma_\nu)^2 \!+\! 2\Tr(\bsigma_\nu^2)\right)\\
    &-\frac{2}{d_y(d_y+2)}\left(\Tr(\bdelta \bsigma_\mu \bdelta^{\intercal})\Tr(\bsigma_\nu)+2\Tr(\bdelta \bsigma_\mu \bdelta^{\intercal}\bsigma_\nu) \right)\\
    &=\frac{1}{d_y(d_y+2)}\left(\Tr( \bsigma_\mu )^2+2\Tr(\bsigma_\mu ^2) + \Tr(\bsigma_\nu)^2 + 2\Tr(\bsigma_\nu^2)-2\Tr(\bsigma_\mu)\Tr(\bsigma_\nu)\right)\\
    &-\frac{4}{d_y(d_y+2)}\sup_{\bdelta\in \St(d_x,d_y)}\Tr(\bdelta \bsigma_\mu \bdelta^{\intercal}\bsigma_\nu), 
\end{align*}
where we have used the cyclic property of the trace and the fact that $\bdelta^{\intercal}\bdelta = \bI_{d_x}$. Now, by Von Neumann's trace inequality \cite{neumann1937matrix}, 
$
\Tr(\bdelta \bsigma_\mu \bdelta^{\intercal}\bsigma_\nu) \leq \sum_{i=1}^{d_x}\lambda_i(\bdelta \bsigma_\mu \bdelta^{\intercal})\lambda_i(\bsigma_\nu)=\sum_{i=1}^{d_x}\lambda_i(\bsigma_\mu )\lambda_i(\bsigma_\nu)
$. $\bsigma_{\mu}$ and $\bdelta \bsigma_{\mu} \bdelta^{\intercal}$. Diagonalizing both covariance matrices as $\bsigma_{\mu}=\bm U\bm D_{\mu}\bm U^{\intercal}$ and $\bsigma_{\nu}=\bm V\bm D_{\nu}\bm V^{\intercal}$, we see that the trace inequality is saturated for 
 $\bdelta =\bV \bJ\bU^{\intercal}$ where $\bJ = [\bI_{d_x}\  \mathbf{0}]^\intercal$. Inserting this formula in the above display proves the assertion.

\newpage

\section{Proof of Equivalence for Distributions on Lattices}
\label{sec:rem:lattice}

Let $\nu$ be supported in a full rank lattice $\Lambda\coloneqq \{\mathbf{B}x:x\in\mathbb{Z}^d\}$ where $\mathbf{B}\in \mathbb{R}^{d\times d}$ is invertible and that $\aIGW(\mu,\nu)=0$. We remove the full-rank assumption at the end of the proof. Following the proof of \cref{prop:metric}, there exists $\bdelta\in\St(d,d)$ and a set $C\subseteq \mathbb{R}^d$ satisfying the property that if $x\in C$, then $tx \in C$ for each $t\in\mathbb R$ for which $\Phi_{(\bdelta)_{\sharp}\mu}=\Phi_\nu \text{ on }C,$ $\Phi_{(\bdelta)_{\sharp}\mu}=\overline{\Phi_\nu} \text{ on } C^c$. 
One can check $C^c\cup \{0\}$ also  satisfies the property that if $x\in C^c\cup \{0\}$, then $tx \in C^c\cup \{0\}$ for each $t\in\mathbb R$.

If either $C = \emptyset$ or $C^c= \emptyset$, the conclusion is immediate. 
Suppose that $C\neq \emptyset$ and $C^c\neq \emptyset$ and fix $a\in C$ for which the imaginary part of $\Phi_{\nu}(a)$ is  not zero. If such an $a$ does not exist, $\Phi_{\nu}(x)$ is real for each $x\in C$ and  $\Phi_\nu(x)=\overline{\Phi_\nu(x)}$ for each $x\in C$ whereby $\Phi_{(\bdelta)_{\sharp}\mu}=\overline{\Phi_{\nu}}$ and the proof is complete. The existence of such $a$ also implies $C$ contains a cone with an open base (i.e. there exists $U$ open in $\mathbb S^{d-1}$ such that
$
\{tx:t\in \mathbb{R}, x \in U \} \subseteq C
$), as in proof of \cref{prop:metric}. By similar argument, $C^c \cup \{0\}$ also contains such a cone.

We now prove that there exists a $b\in C^c$ with $\Phi_{\nu}(b)=1$. To this end, let $\Lambda^*$ be the dual lattice to $\Lambda$, i.e., the set of all $y\in\mathbb R^d$ for which $\{y\in\mathrm{span}(\mathbf B\mathbb Z^d):y^{\intercal}x\in\mathbb Z,\text{ for all } x\in \Lambda\}$. Since $\Lambda$ is full rank, $\Lambda^{*}=\mathbf B^{-\intercal}\mathbb Z^d$ is also a nonempty full-rank lattice and, for each $b\in2\pi\Lambda^{*}$,
\[
    \Phi_{\nu}(b) = \int_{\Lambda} e^{i b^{\intercal} x} d\nu (x)=\nu (\Lambda)=1,
\] 
since $b^{\intercal} x \in 2\pi\mathbb Z$ for each $x\in\Lambda$. It remains to show that $\Lambda^*\cap C^c\neq \emptyset$. To this end,
recall $C^c\cup\{0\}$ contains a cone $K$ and $-K$. By Minkowski's theorem (see Theorem 2.3.4 in \cite{hans2024lecture}), $(K\cup (-K))\backslash\{0\}\subset C^c$ contains a point $\lambda\in\Lambda^*$. It follows that there exists $z\in\mathbb Z^n$ with $\lambda=\mathbf B^{-\intercal}z$ and so $k\lambda \in \Lambda^*$ for any $k\in\mathbb Z$ so that $2\pi k\lambda \in 2\pi \Lambda^*\cap C^c$ for each $k\in\mathbb Z\backslash\{0\}$. Since $\lambda \in (K\cup (-K))\backslash\{0\}$ lies in the interior of the cone, there exists an open ball of radius $r>0$ centered at $2\pi\lambda$, denoted $B(2\pi\lambda,r)$, which is contained in $C^c$. Thanks to the scaling property, for any positive integer $k$, $2\pi k\lambda\in C^c$, and $B(2\pi k\lambda,kr)\subset C^c$ as if $\|l-2\pi k\lambda\|\leq kr$, then $\|l/k-2\pi\lambda\|\leq r$ so that $l/k \in B(2\pi\lambda,r)\subset C^c$. Letting $k$ be a positive integer greater than $\|a\|$ and setting $b=2\pi k\lambda$ it holds that $a\in C,b\in C^c,a+b\in C^c$, $\Phi_{\nu}(b)=1$, and $\Phi_{\nu}(a)$ has non-zero imaginary part.  

With this, we consider the Gram matrix of $\Phi_{(\bdelta)_{\sharp}\mu}$ at the points $0,a,a+b$, given by 
\[\begin{aligned}
    \begin{pmatrix}
        \Phi_{(\bdelta)_{\sharp}\mu}(0) & \Phi_{(\bdelta)_{\sharp}\mu}(-a) & \Phi_{(\bdelta)_{\sharp}\mu}(-(a+b))
        \\ 
        \Phi_{(\bdelta)_{\sharp}\mu}(a) & \Phi_{(\bdelta)_{\sharp}\mu}(0) & \Phi_{(\bdelta)_{\sharp}\mu}(-b)
        \\
        \Phi_{(\bdelta)_{\sharp}\mu}(a+b) & \Phi_{(\bdelta)_{\sharp}\mu}(b) & \Phi_{(\bdelta)_{\sharp}\mu}(0)
    \end{pmatrix}&
    \\
    &\hspace{-8em}= \begin{pmatrix}
        1 & \Phi_{\nu}(-a) & \overline{\Phi_{\nu}(-(a+b))}
        \\ 
        \Phi_{\nu}(a) & 1 & 1
        \\
        \overline{\Phi_{\nu}(a+b)} & 1 & 1
    \end{pmatrix},
\end{aligned}
\]
where we note that 
\[
    \Phi_{\nu}(a+b) = \int_{\Lambda} e^{ia^{\intercal}x}e^{ib^{\intercal}x}d\nu(x) = \int_{\Lambda} e^{ia^{\intercal}x}d\nu(x)=\Phi_{\nu}(a)
\]
since $b^{\intercal} x \in 2\pi\mathbb Z$ for each $x\in \Lambda$ by construction. Similarly, $\Phi_{\nu}(-(a+b))=\Phi_{\nu}(-a)$. It follows that the determinant of the above matrix is 
\[
\begin{aligned}
   & 1+\Phi_{\nu}(-a)\overline{\Phi_{\nu}(a)}+\Phi_{\nu}(a)\overline{\Phi_{\nu}(-a)}-\overline{\Phi_{\nu}(-a)}\overline{\Phi_{\nu}(a)}-{\Phi_{\nu}(-a)}{\Phi_{\nu}(a)}-1
   \\
   & =\Phi_{\nu}(-a)^2+\Phi_{\nu}(a)^2-2{\Phi_{\nu}(-a)}{\Phi_{\nu}(a)}=\left(\Phi_{\nu}(-a)-{\Phi_{\nu}(a)}\right)^2,
\end{aligned}
\]
where we have used the fact that $\Phi_{\nu}(-t) = \overline{\Phi_{\nu}(t)}$ for any $t\in\mathbb R^d$. 
Since $\Phi_{\nu}(a)=c+di$ with $d\neq 0$ by construction, $\left(\Phi_{\nu}(-a)-{\Phi_{\nu}(a)}\right)^2=(-2di)^2=-4d^2<0$. However, Bochner's theorem (cf. e.g., Theorem 7.13.1 \cite{bogachev2007measure}) asserts that the characteristic function of a measure must be positive definite, i.e., each of its Gram matrices are positive semidefinite. As the above Gram matrix is not positive semidefinite, $\Phi_{(\bdelta)_{\sharp}\mu}$ is not a characteristic function unless $\Phi_{(\bdelta)_{\sharp}\mu}=\Phi_{\nu}$ or $\Phi_{(\bdelta)_{\sharp}\mu}=\overline{\Phi_{\nu}}$ on the entire space.  

Now we return to the case when the lattice $\Lambda$ does not have full rank. In this case, let the rank be $d'<d$, and basis $\mathbf{B} = [\mathbf{b}_1, \ldots, \mathbf{b}_{d'}]$ with linearly independent columns. Since $\nu$ is supported on $\Lambda= \{\mathbf{B}x:x\in\mathbb{Z}^{d'}\}$, 
\[
\Phi_{\nu}(x)
= \int_{\Lambda} e^{i \langle x, y\rangle}\, d\nu(y)
= \int_{\Lambda} e^{i \langle \mathrm{proj}_{\mathrm{span}(\mathbf B)} x,\, y\rangle}\, d\nu(y)
= \Phi_{\nu}\bigl(\mathrm{proj}_{\mathrm{span}(\mathbf B)} x\bigr).
\]
Therefore, the characteristic function $\Phi_{\nu}$ is completely determined by its values on $\mathrm{span}(\mathbf B)$. So we can restrict the analysis to the subspace $\mathrm{span}(\mathbf B)$. Notice that for the general case, the dual lattice can be written as
$\Lambda^{*}=\mathbf B(\mathbf B^{\intercal}\mathbf B)^{-1}\mathbb Z^{d'}$, which is also a full rank lattice when restricted to $\mathrm{span}(\mathbf B)$.
The same argument as the full rank case then applies, with 
$C$ and $C^c$ being replaced by $C\in \mathrm{span}(\mathbf B)$ and $\mathrm{span}(\mathbf B)\setminus C$,
and with the notion of interior and openness interpreted relative to $\mathrm{span}(\mathbf B)$.

\newpage

\section{\texorpdfstring{Convergence of \cref{alg:sgm}}{Convergence of the subgradient method}}
\label{sec:alg}

Given that the proof of \cref{thm:convergenceSGM} requires a number of auxiliary results, we provide additional exposition in this section.

With \cref{prop:subgradient} in hand, it is relatively straightforward to derive bounds on the (local) Lipschitz constant of $F$. It will be helpful, in the sequel, to establish estimates on this constant over the set of all points $\bdelta\in\mathbb R^{d_y\times d_x}$ which are close to the Stiefel manifold in the sense that $\|\bdelta^{\intercal}\bdelta-\bI_{d_x}\|_{\mathrm{F}}\leq R$ for some fixed $R>0$.  
\begin{corollary}[Subgradient bound]  
\label{cor:LipschitzConstant}  In the setting of \cref{prop:subgradient}, for any $\bm \Delta\in\mathbb R^{d_y\times d_x}$, \[\begin{aligned}\sup_{\bm S\in\partial F(\bm\Delta)}\|\bm S\|_{\mathrm{F}}&\leq 4M_2(\mu)M_2(\nu)\|\bm \Delta\|_{\mathrm{op}}+4M_2(\mu)^2\|\bm\Delta\|_{\mathrm{op}}^3\text{ and }\|\bm\Delta\|_{\mathrm{op}}\leq \sqrt{1+\|\bm \Delta^{\intercal}\bm \Delta-\bm{I}_{d_x}\|_{\mathrm F}}.
    \end{aligned} 
  \]
  If
   $\bm \Delta,\bar{\bm \Delta}\in\mathbb R^{d_y\times d_x}$ are such that $\max\{\|\bm \Delta^{\intercal}\bm \Delta-\mathbf{I}_{d_x}\|_{\mathrm{F}},\|\bar {\bm \Delta}^{\intercal}\bar{\bm \Delta}-\mathbf{I}_{d_x}\|_{\mathrm{F}}\}\leq R$, then
  \[
    \left|F(\bm \Delta)-F(\bar{\bm \Delta})\right|\leq L_R\|\bm\Delta-\bar{\bm \Delta}\|_{\mathrm{F}},
  \]
  for $L_R\coloneqq 4M_2(\mu)\sqrt{1+R}\left(M_2(\nu)+M_2(\mu)(1+R)\right).$ 
   \end{corollary}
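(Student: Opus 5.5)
We bound each summand of the subgradient formula in \cref{prop:subgradient} separately and then average over $i$, using $\|\theta^{(i)}\|=1$ throughout.

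\textbf{The smooth summand.} Write $\mathbf R_\mu=\int zz^{\intercal}d\mu(z)$. For each $\theta$ the term is $4(\theta^{\intercal}\bdelta\mathbf R_\mu\bdelta^{\intercal}\theta)(\mathbf R_\mu\bdelta^{\intercal}\theta\theta^{\intercal})^{\intercal}$. The scalar factor satisfies
\[
0\le \theta^{\intercal}\bdelta\mathbf R_\mu\bdelta^{\intercal}\theta=\int(\theta^{\intercal}\bdelta z)^2d\mu(z)\le\|\bdelta\|_{\op}^2M_2(\mu).
\]
The matrix factor is a rank-one matrix, so
\[
\|\mathbf R_\mu\bdelta^{\intercal}\theta\theta^{\intercal}\|_{\F}=\|\mathbf R_\mu\bdelta^{\intercal}\theta\|\le\|\mathbf R_\mu\|_{\op}\|\bdelta\|_{\op}\le M_2(\mu)\|\bdelta\|_{\op}.
\]
Multiplying gives the bound $4M_2(\mu)^2\|\bdelta\|_{\op}^3$.

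\textbf{The nonsmooth summand.} A generator of $\partial G^{\theta}(\bdelta)$ has the form $-4(\theta^{\intercal}\bdelta\mathbf C_\pi\theta)(\mathbf C_\pi\theta\theta^{\intercal})^{\intercal}$ with $\pi\in\Pi(\mu,\nu)$. By Cauchy--Schwarz,
\[
|\theta^{\intercal}\bdelta\mathbf C_\pi\theta|\le\int|\theta^{\intercal}\bdelta x||y^{\intercal}\theta|\,d\pi\le\|\bdelta\|_{\op}\sqrt{M_2(\mu)M_2(\nu)}.
\]
Similarly $\|\mathbf C_\pi\theta\theta^{\intercal}\|_{\F}=\|\mathbf C_\pi\theta\|=\sup_{\|u\|=1}\int u^{\intercal}x\,y^{\intercal}\theta\,d\pi\le\sqrt{M_2(\mu)M_2(\nu)}$. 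So each generator has Frobenius norm at most $4M_2(\mu)M_2(\nu)\|\bdelta\|_{\op}$. The Frobenius ball is closed and convex, so the bound passes to the closed convex hull. The sum and average over $i$ then give the first inequality.

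\textbf{The operator-norm bound.} We have $\|\bdelta\|_{\op}^2=\lambda_{\max}(\bdelta^{\intercal}\bdelta)\le 1+\|\bdelta^{\intercal}\bdelta-\bI_{d_x}\|_{\op}\le 1+\|\bdelta^{\intercal}\bdelta-\bI_{d_x}\|_{\F}$.

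\textbf{The Lipschitz estimate.} This is the one genuinely delicate step, because the Frobenius-close set $\{\|\bdelta^{\intercal}\bdelta-\bI\|_{\F}\le R\}$ is not convex. A Lebourg mean-value argument along the segment $[\bdelta,\bar\bdelta]$ therefore cannot directly use the gradient bound on the set. Instead we use the explicit Lipschitz estimates from the proof of \cref{lemma:IGW_lip}, which hold for arbitrary matrices $\bdelta_1,\bdelta_2$:
\[
|\sF_2\text{-difference}|\le 2\|\bdelta_1-\bdelta_2\|_{\op}\|\bdelta_1+\bdelta_2\|_{\op}M_2(\mu)M_2(\nu),
\]
\[
|\sF_1\text{-difference}|\le 2\big(\|\bdelta_1\|_{\op}^2+\|\bdelta_2\|_{\op}^2\big)M_2(\mu)^2\|\bdelta_1-\bdelta_2\|_{\op}.
\]
These only need the operator norms at the two endpoints. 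If needed, the $\sF_1$ term can be rechecked via the factorization $a^2-b^2=(a+b)(a-b)$ applied to $\theta^{\intercal}\bdelta\mathbf R_\mu\bdelta^{\intercal}\theta$.

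Insert $\|\bdelta_j\|_{\op}\le\sqrt{1+R}$, so that $\|\bdelta_1+\bdelta_2\|_{\op}\le 2\sqrt{1+R}$ and $\|\bdelta_1\|^2_{\op}+\|\bdelta_2\|^2_{\op}\le 2(1+R)$. Then bound $\|\cdot\|_{\op}\le\|\cdot\|_{\F}$ and average over $i$. The result is
\[
4M_2(\mu)M_2(\nu)\sqrt{1+R}+4M_2(\mu)^2(1+R)\le 4M_2(\mu)\sqrt{1+R}\big(M_2(\nu)+M_2(\mu)(1+R)\big)=L_R,
\]
where the inequality uses $1+R\le(1+R)^{3/2}$.
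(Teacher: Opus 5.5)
Your bounds on $\sup_{\mathbf S\in\partial F(\bdelta)}\|\mathbf S\|_{\F}$ and on $\|\bdelta\|_{\op}$ are correct. The operator-norm bound is the paper's eigenvalue argument in other words. The subgradient bound, however, follows a different route from the paper's.

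The paper never bounds the explicit elements of $\partial F$. It reads off a local Lipschitz rank of $F$ near $\bdelta$ from the two-point estimates in the proof of \cref{lemma:IGW_lip}. It then applies Clarke's Proposition 2.1.2, which says Clarke subgradients are bounded by the local Lipschitz rank. Your term-by-term estimate of the formula in \cref{prop:subgradient} is more explicit and uses only the inclusion of $\partial F(\bdelta)$ in that set. The paper's route needs no subdifferential formula at all, only Lipschitz estimates.

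The Lipschitz step, however, rests on a false inequality. The $\sF_1$ estimate you quote, $2(\|\bdelta_1\|_{\op}^2+\|\bdelta_2\|_{\op}^2)M_2(\mu)^2\|\bdelta_1-\bdelta_2\|_{\op}$, does not hold for arbitrary matrices.
\begin{itemize}
\item Its factor $2$ comes from bounding $\|\bdelta_1+\bdelta_2\|_{\op}\le 2$. That requires $\|\bdelta_j\|_{\op}\le 1$, as on $\St(d_x,d_y)$, which is the only place the paper applies it.
\item For $d_x=d_y=1$ with $\bdelta_1=a$ and $\bdelta_2=b$, the $\sF_1$ difference equals $(a^2+b^2)|a+b||a-b|M_2(\mu)^2$. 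This exceeds your bound whenever $|a+b|>2$. For example, $a=1.4$ and $b=1.41$ are both admissible with $R=1$.
\item The constraint set contains matrices of operator norm up to $\sqrt{1+R}>1$, so this failure is relevant to your argument.
\end{itemize}

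The correct general estimate is
\[
(\|\bdelta_1\|_{\op}^2+\|\bdelta_2\|_{\op}^2)\,\|\bdelta_1+\bdelta_2\|_{\op}\,\|\bdelta_1-\bdelta_2\|_{\op}\,M_2(\mu)^2.
\]
It is the line preceding that bound in the paper's proof. Your own factorization gives it as well: set $q_j=\theta^{\intercal}\bdelta_j\mathbf R_\mu\bdelta_j^{\intercal}\theta$ and note $q_1-q_2=(\bdelta_1^{\intercal}\theta+\bdelta_2^{\intercal}\theta)^{\intercal}\mathbf R_\mu(\bdelta_1^{\intercal}\theta-\bdelta_2^{\intercal}\theta)$. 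With $\|\bdelta_1+\bdelta_2\|_{\op}\le 2\sqrt{1+R}$, it yields $4(1+R)^{3/2}M_2(\mu)^2$. Added to your (correct) $\sF_2$ term, this is exactly $L_R$.

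So the conclusion survives, but the step $1+R\le(1+R)^{3/2}$ was not spare slack. The corrected estimate uses it up completely.

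Your objection to Lebourg's theorem also does not apply, which gives a cleaner fix. Your first bound depends on $\bdelta$ only through $\|\bdelta\|_{\op}$, and the operator norm is convex. Hence every point of $[\bdelta,\bar{\bdelta}]$ has operator norm at most $\sqrt{1+R}$, even where the segment leaves the constraint set. The mean-value theorem combined with your first bound then gives $L_R$ directly.
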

 
\begin{proof}
   We first establish the inequality  $\|\bm\Delta\|_{\mathrm{op}}\leq \sqrt{1+\|\bm \Delta^{\intercal}\bm \Delta-\bm{I}_{d_x}\|_{\mathrm F}}$ as follows,
    \[
        \|\bm \Delta^{\intercal}\bm \Delta-\bm{I}_{d_x}\|_{\mathrm F}^2=\sum_{i=1}^{d_x}\left(\lambda_i(\bm \Delta^{\intercal}\bm\Delta)-1\right)^2\geq\left( \lambda_1(\bm\Delta^{\intercal}\bm \Delta)-1\right)^2=\left(\|\bm\Delta\|_{\mathrm{op}}^2-1\right)^2.
    \]
The bound on the norm of the subgradients is a direct consequence of Proposition 2.1.2 in \cite{clarke1990optimization} and  the proof of \cref{lemma:IGW_lip}, see \cref{appen:IGW_lip_proof}. The final claim also follows directly from the proof of \cref{lemma:IGW_lip}. 
\end{proof}

\begin{proposition}[Propositions 3.1, 3.2, and 3.5 %
in \cite{hu2024constraint}]
\label{thm:diffH}
Let $H: \mathbf{\Delta} \in\mathbb R^{d_y\times d_x}\mapsto \mathbb R$ 
be given as 
\[
\begin{aligned}
    H(\mathbf \Delta) =  F(A(\mathbf{\Delta})) + \frac{\beta}{4}\left\| \mathbf{\Delta}^{\intercal}\mathbf{\Delta}-\mathbf{I}_{d_x}\right\|_{\mathrm{F}}^2,\text{ where } 
    A(\mathbf{\Delta}) = \frac{1}{8}\mathbf{\Delta}\left(  15\mathbf{I}_{d_x} - 10\mathbf{\Delta}^{\intercal}\mathbf{\Delta}+3(\mathbf{\Delta}^{\intercal}\mathbf{\Delta})^2 \right),
\end{aligned}
\]
and $\beta >0$ is a constant. Then, for any $\bdelta\in\mathbb R^{d_y\times d_x}$,
\begin{enumerate}[leftmargin=*]
    \item $\|\bdelta^{\intercal}\bdelta-\bI_{d_x}\|^3_{\mathrm{F}}\geq \|A(\bdelta)^{\intercal}A(\bdelta)-\bI_{d_x}\|_{\mathrm{F}}$, 
    \item $
\partial H(\mathbf{\Delta}) = DA_{[\mathbf \Delta]}(\partial F(A(\bm \Delta))) + \beta\mathbf \Delta(\bm \Delta^{\intercal}\bm\Delta-\bm I_{d_x}),$
\item $
DA_{[\mathbf \Delta]}(\bm \Xi)
\mspace{-2mu}=\mspace{-2mu}\frac{1}{8}\bm \Xi\left(15 \bm I_{d_x}\mspace{-2mu}-\mspace{-2mu}10\bm\Delta^{\intercal}\bm \Delta\mspace{-2mu}+\mspace{-2mu} 3(\bm \Delta^{\intercal}\bm \Delta)^2\right)\mspace{-2mu}-\mspace{-2mu}\bm \Delta\Phi\left( \mathbf \Delta^{\intercal}\mathbf \Xi \right)\mspace{-1mu} + \mspace{-1mu}\frac 32 \mathbf \Delta\Phi\left(\Phi(\mathbf \Delta^{\intercal}\mathbf \Xi)\mspace{-2mu} \left(\mathbf \Delta^{\intercal}\mathbf \Delta\mspace{-2mu}-\mspace{-2mu}\mathbf I_{d_x}\right)\right)$ where, for a matrix $\bm M\in\mathbb R^{d_y\times d_x}, \Phi(\bm M)=\frac{1}2 (\mathbf M+\mathbf {M}^{\intercal})$.
\end{enumerate}
\end{proposition}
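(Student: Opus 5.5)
We will verify the three items of \cref{thm:diffH} directly. Each is a deterministic statement about the polynomial map $A$ and the smooth penalty term, together with a chain rule for the locally Lipschitz $F$. We treat them in the order (iii), (i), (ii), since (ii) uses (iii).

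\textbf{Item (iii).} Write $\mathbf S=\bdelta^{\intercal}\bdelta$, so that $A(\bdelta)=\frac18\bdelta(15\bI-10\mathbf S+3\mathbf S^2)$. Differentiate along a direction $\bm\Xi$ using $D\mathbf S[\bm\Xi]=\bdelta^{\intercal}\bm\Xi+\bm\Xi^{\intercal}\bdelta=2\Phi(\bdelta^{\intercal}\bm\Xi)$ and $D(\mathbf S^2)[\bm\Xi]=2\Phi(\bdelta^{\intercal}\bm\Xi)\mathbf S+2\mathbf S\,\Phi(\bdelta^{\intercal}\bm\Xi)$. The product rule then gives
\[
\tfrac18\bm\Xi(15\bI-10\mathbf S+3\mathbf S^2)-\tfrac{5}{2}\bdelta\Phi(\bdelta^{\intercal}\bm\Xi)+\tfrac34\bdelta\bigl(\Phi(\bdelta^{\intercal}\bm\Xi)\mathbf S+\mathbf S\Phi(\bdelta^{\intercal}\bm\Xi)\bigr).
\]
Next substitute $\mathbf S=(\mathbf S-\bI)+\bI$ in the last bracket. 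This gives $\tfrac34\cdot 2\Phi(\bdelta^{\intercal}\bm\Xi)+\tfrac32\Phi\bigl(\Phi(\bdelta^{\intercal}\bm\Xi)(\mathbf S-\bI)\bigr)$, which yields the stated formula with coefficient $-\tfrac52+\tfrac32=-1$ on $\bdelta\Phi(\bdelta^{\intercal}\bm\Xi)$. Since the formula we will use is the adjoint form (it acts on subgradients), we also check self-adjointness-type compatibility by pairing with $\langle\cdot,\cdot\rangle_{\F}$. Alternatively, we note the formula as written is what \cite{hu2024constraint} denotes by $DA$ applied to the gradient side.

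\textbf{Item (i).} Take the SVD $\bdelta=\bU\,\mathrm{diag}(s_i)\bV^{\intercal}$. Then $A(\bdelta)=\bU\,\mathrm{diag}(p(s_i))\bV^{\intercal}$ with $p(s)=\frac18 s(15-10s^2+3s^4)$, and both sides diagonalize. The inequality reduces to the scalar claim $|p(s)^2-1|\le|s^2-1|^3$ for $s\ge 0$, after which we sum squares: $\sum_i(p(s_i)^2-1)^2\le\sum_i(s_i^2-1)^6\le(\sum_i(s_i^2-1)^2)^3$. For the scalar claim, set $t=s^2-1$ and compute $p(s)^2-1$ as a polynomial in $t$. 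One has $p(1)=1$ and $p'(1)=p''(1)=0$, since $p'(s)=\frac{15}{8}(1-s^2)^2$. Hence $p(s)-1=\frac{15}{8}\int_1^s(1-u^2)^2du$ vanishes to third order, and $p(s)^2-1=(p-1)(p+1)$ is a multiple of $t^3$. The main obstacle is the remaining global comparison $|p(s)^2-1|\le|t|^3$ for all $s\ge0$, since the cofactor is a polynomial. We handle it by an explicit expansion: $p(s)^2-1=t^3q(t)$ with $q$ explicit, and we check $|q|\le1$ on $t\ge -1$. Failing a clean global bound, we only need the claim on the region $\|\bdelta^{\intercal}\bdelta-\bI\|_{\F}\le 1/6$ maintained by the algorithm, where $|q|\le1$ follows from continuity near $t=0$.

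\textbf{Item (ii).} $A$ is $C^\infty$ and the penalty is smooth, with gradient $\beta\bdelta(\bdelta^{\intercal}\bdelta-\bI)$ obtained by differentiating $\frac\beta4\|\mathbf S-\bI\|_{\F}^2$. By \cref{cor:LipschitzConstant}, $F$ is locally Lipschitz. From the proof of \cref{prop:subgradient}, $F$ is also Clarke regular: it is a finite sum of smooth terms and minima of finitely many smooth functions composed linearly. The Clarke chain rule for a regular function composed with a smooth map then gives equality, namely $\partial(F\circ A)(\bdelta)=DA_{[\bdelta]}^*\partial F(A(\bdelta))$ (Theorem 2.3.10 of \cite{clarke1990optimization}), and the sum rule with a smooth term is exact. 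It remains to identify the adjoint $DA^*_{[\bdelta]}$ with the operator of item (iii). We check this by verifying $\langle DA_{[\bdelta]}\bm\Xi,\bm Z\rangle_{\F}=\langle\bm\Xi,\mathcal T\bm Z\rangle_{\F}$, using symmetry of $\mathbf S$ and $\Phi$. If this identification fails, we instead cite Proposition 3.5 of \cite{hu2024constraint} directly, which records exactly this formula.
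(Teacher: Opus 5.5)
\textbf{Item (i) is false as stated, and the step you rely on fails.} Carry out your expansion. With $t=s^2-1$ one has $15-10s^2+3s^4=8-4t+3t^2$, hence
\[
p(s)^2-1=\tfrac{1}{64}(1+t)\bigl(8-4t+3t^2\bigr)^2-1=\tfrac{t^3}{64}\bigl(9t^2-15t+40\bigr).
\]
So $q(t)=\frac{1}{64}(9t^2-15t+40)$, which is positive, and $q(t)\le 1$ if and only if $(3t-8)(t+1)\le 0$. Thus $|q|\le1$ holds on $[-1,8/3]$ but not on all of $t\ge-1$.

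The inequality in (i) genuinely fails for large deviations. Take $d_x=1$ and $\bdelta=2e_1$. Then $\|\bdelta^{\intercal}\bdelta-1\|^3=27$, while $A(\bdelta)=\frac{23}{4}e_1$ gives $\|A(\bdelta)^{\intercal}A(\bdelta)-1\|=\frac{513}{16}>27$.

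What your argument does prove is (i) on the set $\|\bdelta^{\intercal}\bdelta-\bI_{d_x}\|_{\F}\le 8/3$. It combines the SVD reduction, $\sum_i t_i^6\le(\sum_i t_i^2)^3$, and the facts that $t_i\ge -1$ and $|t_i|\le\|\bdelta^{\intercal}\bdelta-\bI_{d_x}\|_{\F}$. This restricted version covers every use in the paper: the $1/6$-neighbourhood in \cref{prop:boundedIterates} and the $\beta/(2\beta+8L_1)$-neighbourhood in \cref{thm:stationaryPoints}. Replace your ``continuity near $t=0$'' fallback by this explicit factorization, since continuity alone does not deliver the radius $1/6$.

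\textbf{In item (ii), $F$ is not Clarke regular}, so your justification for equality in the chain rule is wrong. Each $G^{\theta}$ is a minimum of finitely many smooth concave quadratics, hence concave. A concave function is regular only where it is differentiable. Wherever two active couplings give different gradients, $F$ has a concave kink with $F'(\bdelta;\cdot)\neq F^{\circ}(\bdelta;\cdot)$.

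What is true is that $-F$ is regular. It is a smooth function plus $\frac1R\sum_i(-G^{\theta^{(i)}})$, and $-G^{\theta}(\bdelta)=\max_{\pi\in\mathcal V}2(\theta^{\intercal}\bdelta\mathbf C_{\pi}\theta)^2$ is a maximum of smooth (indeed convex) functions. Clarke's Theorem 2.3.10 gives equality when either $g$ or $-g$ is regular, so formula (ii) survives with this corrected reason. Surjectivity of $DA_{[\bdelta]}$ is not an alternative route: on $\St(d_x,d_y)$ it reduces to the tangent projection $\bm\Xi\mapsto\bm\Xi-\bdelta\Phi(\bdelta^{\intercal}\bm\Xi)$.

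Your derivative computation in (iii) is correct. The adjoint identification you leave conditional also holds. Indeed $A=\nabla\phi$ for $\phi(\bdelta)=\frac1{16}\bigl(15\Tr(\mathbf S)-5\Tr(\mathbf S^2)+\Tr(\mathbf S^3)\bigr)$ with $\mathbf S=\bdelta^{\intercal}\bdelta$. Hence $DA_{[\bdelta]}$ is a Hessian and is self-adjoint for $\langle\cdot,\cdot\rangle_{\F}$.
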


The main insight in \cite{hu2024constraint}, which we exposit below, is that applying unconstrained optimization procedures to $H$ can yield, in certain cases, a (Riemannian) Clarke critical point for $F$ and even local minimizers of $F$ over $\St(d_x,d_y)$.
\cref{thm:diffH} endows us with a formula for the subdifferential of $H$ at $\bm\Delta$ in terms of the subdifferential for $F$ at $A(\bdelta)$ so that subgradient methods can easily be implemented and shows that, if $\bdelta\in\mathbb R^{d_y\times d_x}$ $\|\bdelta^{\intercal}\bdelta-\bm{I}_{d_x}\|_{\mathrm{F}}< 1$, then $A(\bdelta)$ violates the Stiefel manifold constraint less than $\bdelta$ so that $A(\bdelta)$ can be thought of as a correction term that whereas $\frac{\beta}{4}\|\bdelta^{\intercal}\bdelta-\bm{I}_{d_x}\|_{\mathrm{F}}$ serves as a penalty term. The following result establishes that, under certain conditions, a critical point of $H$ is in fact a Riemannian critical for $F$ over the Stiefel manifold.     

\begin{proposition}[Theorems 3.8 and 3.12 and Remark 3.6 %
in \cite{hu2024constraint}]
\label{thm:stationaryPoints}
Suppose that ${\mathbf \Delta}\in\mathbb R^{d_y\times d_x}$ satisfies  $\left\|{\mathbf \Delta}^{\intercal}{\mathbf \Delta}-\mathbf{I}_{d_x}\right\|_{\mathrm{F}}\leq \frac{\beta}{2\beta +8 L_1}$ for $L_1$ as defined in \cref{cor:LipschitzConstant}, then 
\[
\inf_{\mathbf W\in\partial H(\bf {\Delta})}\|\bm W\|_{\mathrm F} \geq \frac \beta 4\left\|{\mathbf \Delta}^{\intercal}{\mathbf \Delta}-\mathbf{I}_{d_x}\right\|_{\mathrm{F}}\geq \frac{\beta}{4}\|A(\bm \Delta)^{\intercal}A(\bm \Delta)-\bm I_{d_x}\|_{\mathrm{F}}^{1/3}.
\]
Moreover, if $\mathbf \Delta\in \St(d_x,d_y)$, then $A(\bdelta)=\bdelta$ and $\partial H(\bm \Delta)=\partial_RF(A(\bm \Delta))=\partial_RF(\bm \Delta)$.
Finally, if $\bdelta^{\star}$  is a local minimizer of $H$ satisfying $\left\|{(\mathbf \Delta^{\star})}^{\intercal}{\mathbf \Delta^{\star}}-\mathbf{I}_{d_x}\right\|_{\mathrm{F}}\leq \frac{\beta}{2\beta+8L_1}$, then $\bdelta^{\star}$ is a local minimizer of $F$ over $\St(d_x,d_y)$.
\end{proposition}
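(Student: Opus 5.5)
The plan is to reduce the statement to the cited results of \cite{hu2024constraint}. The work is to check that $F$ in \eqref{eq:stiefelOptimization} meets their standing assumptions, and to verify the explicit identities directly. Those assumptions are that $F$ is locally Lipschitz and Clarke regular on a neighbourhood of $\St(d_x,d_y)$, with a uniform Lipschitz constant there. Local Lipschitz continuity and regularity come from the proof of \cref{prop:subgradient}, where $F$ is a finite sum of a smooth term and a minimum of finitely many smooth functions $\ell_{i,\theta}$, combined with Proposition 2.3.6 in \cite{clarke1990optimization}. The uniform constant comes from \cref{cor:LipschitzConstant} with $R=1$, so $L_1=4M_2(\mu)\sqrt2\,(M_2(\nu)+2M_2(\mu))$. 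By item (1) of \cref{thm:diffH}, $\|\bdelta^\intercal\bdelta-\bI_{d_x}\|_\F\le \frac{\beta}{2\beta+8L_1}<1$ gives $\|A(\bdelta)^\intercal A(\bdelta)-\bI_{d_x}\|_\F\le 1$. Hence every $\bm S\in\partial F(A(\bdelta))$ satisfies $\|\bm S\|_\F\le L_1$, which is exactly the bound needed to feed into the chain rule of \cref{thm:diffH}.

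For the first display I would argue directly; this is the step I expect to be the main obstacle. Write $\bm E=\bdelta^\intercal\bdelta-\bI_{d_x}$ and take any $\bm W=DA_{[\bdelta]}(\bm S)+\beta\bdelta\bm E\in\partial H(\bdelta)$. I would test $\bm W$ against the normal-type direction $\bdelta\bm E$. Using the formula for $DA_{[\bdelta]}$, the symmetric part of $\bdelta^\intercal DA_{[\bdelta]}(\bm S)$ should reduce to terms of order $\|\bm E\|_\F\,\|\bm S\|_\F$. The leading terms $\Phi(\bdelta^\intercal\bm S)$ from $\tfrac18\bm S(15\bI-10\bdelta^\intercal\bdelta+3(\bdelta^\intercal\bdelta)^2)$ and from $-\bdelta\Phi(\bdelta^\intercal\bm S)$ cancel to zeroth order in $\bm E$. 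Hence $|\langle DA_{[\bdelta]}(\bm S),\bdelta\bm E\rangle_\F|\lesssim L_1\|\bm E\|_\F^2$. Meanwhile $\langle\beta\bdelta\bm E,\bdelta\bm E\rangle_\F\ge\beta\lambda_{\min}(\bdelta^\intercal\bdelta)\|\bm E\|_\F^2\ge\beta(1-\|\bm E\|_\F)\|\bm E\|_\F^2$. Dividing by $\|\bdelta\bm E\|_\F\le\sqrt{1+\|\bm E\|_\F}\,\|\bm E\|_\F$ and using $\|\bm E\|_\F\le\beta/(2\beta+8L_1)$ should leave at least $\frac\beta4\|\bm E\|_\F$. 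The main risk is the constant bookkeeping in this cancellation, which is precisely Theorem 3.8 of \cite{hu2024constraint}; I would follow their computation and only substitute our $L_1$. The second inequality of the display is item (1) of \cref{thm:diffH}, after taking cube roots.

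For the Stiefel case, if $\bdelta^\intercal\bdelta=\bI_{d_x}$ then $A(\bdelta)=\tfrac18\bdelta(15-10+3)=\bdelta$ and $\bm E=0$. The formula in \cref{thm:diffH} then collapses to $DA_{[\bdelta]}(\bm\Xi)=\bm\Xi-\bdelta\Phi(\bdelta^\intercal\bm\Xi)$, which is the orthogonal projection onto the tangent space of $\St(d_x,d_y)$ at $\bdelta$. Comparing with the formula for $\partial_RF$ from \cite[Theorem 5.1]{yang2014optimality}, which is valid since $F$ is regular, gives $\partial H(\bdelta)=\partial_RF(\bdelta)$.

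For local minimizers, the first display shows that a local minimizer $\bdelta^\star$ in the stated region has $0\in\partial H(\bdelta^\star)$, which forces $\bm E=0$, so $\bdelta^\star\in\St(d_x,d_y)$. On the manifold, $H=F\circ A=F$ because $A$ is the identity there. Therefore local minimality of $H$ in $\mathbb R^{d_y\times d_x}$ restricts to local minimality of $F$ on $\St(d_x,d_y)$, which is Theorem 3.12 of \cite{hu2024constraint}.
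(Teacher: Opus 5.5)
The paper simply imports this statement from \cite{hu2024constraint}, so falling back on Theorem 3.8 there is legitimate. The direct argument you sketch for the first display, however, does not close, and the obstruction is the power of $\|\bm E\|_{\mathrm F}$, not constant bookkeeping. Your estimate $|\langle DA_{[\bdelta]}(\bm S),\bdelta\bm E\rangle_{\mathrm F}|\lesssim L_1\|\bm E\|_{\mathrm F}^2$ is true but too weak. After dividing by $\|\bdelta\bm E\|_{\mathrm F}\approx\|\bm E\|_{\mathrm F}$, the penalty and the cross term are both linear in $\|\bm E\|_{\mathrm F}$, so you would need $\beta\gtrsim L_1$. The hypothesis does not provide this: it permits $\beta\ll L_1$ as long as $\|\bm E\|_{\mathrm F}$ is small enough. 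The hypothesis $\|\bm E\|_{\mathrm F}\le\beta/(2\beta+8L_1)$ is equivalent to $8L_1\|\bm E\|_{\mathrm F}\le\beta(1-2\|\bm E\|_{\mathrm F})$, so it can only absorb a cross term of order $L_1\|\bm E\|_{\mathrm F}^3$. You therefore need the cancellation to persist through second order, and it does. Write $15\bI_{d_x}-10\bdelta^{\intercal}\bdelta+3(\bdelta^{\intercal}\bdelta)^2=8\bI_{d_x}-4\bm E+3\bm E^2$ and expand. The coefficients of $\langle\Phi(\bdelta^{\intercal}\bm S),\bm E\rangle_{\mathrm F}$ and of $\langle\Phi(\bdelta^{\intercal}\bm S),\bm E^2\rangle_{\mathrm F}$ are $1-1$ and $-\frac12-1+\frac32$, which leaves the exact identity
\[
\langle DA_{[\bdelta]}(\bm S),\bdelta\bm E\rangle_{\mathrm F}=\frac{15}{8}\langle\bdelta^{\intercal}\bm S,\bm E^3\rangle_{\mathrm F}.
\]
Equivalently, along the direction $\bdelta\bm E$ only the singular values of $\bdelta$ move, and $A$ acts on a singular value by $s\mapsto s(1-\frac e2+\frac{3e^2}{8})$ with $e=s^2-1$, whose derivative is $\frac{15}{8}e^2$. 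Setting $x=\|\bm E\|_{\mathrm F}\le\frac12$, this gives $\langle\bm W,\bdelta\bm E\rangle_{\mathrm F}\ge\beta(1-x)x^2-\frac{15}{8}\sqrt{1+x}\,L_1x^3$. Then $8L_1x\le\beta(1-2x)$ together with $\|\bdelta\bm E\|_{\mathrm F}\le\sqrt{1+x}\,x$ yields $\|\bm W\|_{\mathrm F}\ge\beta x\bigl(\frac{1-x}{\sqrt{1+x}}-\frac{15}{64}(1-2x)\bigr)\ge\beta x/\sqrt6$, which is stronger than the claimed $\frac{\beta}{4}x$.

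A smaller correction concerns your justification of Clarke regularity of $F$, which is not valid; the paper makes the same claim. $G^{\theta}$ is concave and in general not differentiable, and a concave function is regular only at points where it is differentiable (think of $-|t|=\min\{t,-t\}$ at $0$). So Proposition 2.3.6 of \cite{clarke1990optimization} does not give regularity of $F$. It does give regularity of $-F$, which is smooth plus convex, and that is all you need. Clarke's chain rule for a Lipschitz function composed with a smooth map holds with equality when the outer function or its negative is regular. Since $\partial(-F)=-\partial F$ and likewise $\partial_R(-F)=-\partial_RF$, the tangent-projection formula for $\partial_R F$ follows from the one for $-F$. The first display only uses the inclusion $\partial H(\bdelta)\subseteq DA_{[\bdelta]}(\partial F(A(\bdelta)))+\beta\bdelta\bm E$ anyway. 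The rest of your proposal is fine as written: the bound $\|\bm S\|_{\mathrm F}\le L_1$ via item (1) of \cref{thm:diffH}, the Stiefel-case reduction $DA_{[\bdelta]}(\bm\Xi)=\bm\Xi-\bdelta\Phi(\bdelta^{\intercal}\bm\Xi)$, and the local-minimizer argument. Once the first display is available, that last argument is elementary and does not need Theorem 3.12 of \cite{hu2024constraint}.
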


\cref{thm:stationaryPoints} implies, in particular, that if $H$ admits a critical point, $\bdelta\in\mathbb R^{d_y\times d_x}$, which violates the Stiefel manifold condition by less than a prescribed tolerance, $\bdelta$ must actually be an element of the Stiefel manifold and is critical in the Riemannian sense. Local minimality also transfers over in a similar manner.    

As \cref{thm:stationaryPoints} requires a certain \emph{a priori} bound on how much $\mathbf \Delta$ violates the Stiefel manifold constraint, it is convenient to establish sufficient conditions for the iterates of a minimization procedure applied to $H$ to satisfy these conditions. Precisely, we consider iterates generated by \cref{alg:sgm} which can be shown to satisfy $\|\bm \Delta_k^{\intercal}\bm \Delta_k-\bm I_{d_x}\|\leq \frac 16$ for all $k\in\mathbb N$ provided that the algorithm is initialized at a point, $\bdelta_1\in\mathbb R^{d_x\times d_y}$, satisfying the same constraint.

\begin{proposition}[Stiefel violation]
\label{prop:boundedIterates}
   Fix $\alpha=4 M_2(\mu)\sqrt{\frac{217}{216}}\left(M_2(\nu)+\frac{217}{216}M_2(\mu)\right),\beta\geq 162\alpha,$ and let $\eta_k\leq \frac{1}{2\beta}$ for all $k\in\mathbb N$. Then, the iterates of \cref{alg:sgm} satisfy $\|\bdelta_k^{\intercal}\bdelta_k-\bI_{d_x}\|_{\mathrm{F}}\leq 1/6$. 
\end{proposition}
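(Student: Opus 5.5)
The argument is an induction on $k$. Write $\bdelta=\bdelta_k$ and $\mathbf E\coloneqq\bdelta^{\intercal}\bdelta-\bI_{d_x}$, and assume $\|\mathbf E\|_{\mathrm F}\le 1/6$ (true for $k=1$ by initialization). The goal is to show $\|\bdelta_{k+1}^{\intercal}\bdelta_{k+1}-\bI_{d_x}\|_{\mathrm F}\le 1/6$.

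\textbf{Expanding the new violation.} The update is $\bdelta_{k+1}=\bdelta-\eta\mathbf G$ with
\[
\mathbf G=DA_{[\bdelta]}(\bS)+\beta\bdelta\mathbf E,\qquad \bS\in\partial F(A(\bdelta)).
\]
This gives
\[
\bdelta_{k+1}^{\intercal}\bdelta_{k+1}-\bI_{d_x}=\mathbf E-\eta(\bdelta^{\intercal}\mathbf G+\mathbf G^{\intercal}\bdelta)+\eta^2\mathbf G^{\intercal}\mathbf G .
\]
The penalty part of $\mathbf G$ contributes $-2\eta\beta\,\bdelta^{\intercal}\bdelta\mathbf E=-2\eta\beta(\bI_{d_x}+\mathbf E)\mathbf E$, which contracts $\mathbf E$. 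Since $\bdelta^{\intercal}\bdelta$ has eigenvalues in $[5/6,7/6]$, the matrix $\bI_{d_x}-2\eta\beta(\bI_{d_x}+\mathbf E)$ is symmetric, commutes with $\mathbf E$, and has eigenvalues in $[1-\tfrac{7}{3}\eta\beta,\,1-\tfrac{5}{3}\eta\beta]$. With $\eta\beta\le 1/2$ these lie in $[-1/6,1]$. So the main linear term is bounded by $(1-\tfrac53\eta\beta)\|\mathbf E\|_{\mathrm F}$ in the normal case. There is a regime where $\|\mathbf E\|_{\mathrm F}$ is tiny, in which the factor may exceed the bound only by a small amount, and there it suffices that everything is $O(\eta)$.

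\textbf{Bounding the remaining terms.} Two perturbations remain: the cross term $-\eta(\bdelta^{\intercal}DA_{[\bdelta]}(\bS)+\text{transpose})$ and the quadratic term $\eta^2\mathbf G^{\intercal}\mathbf G$.
\begin{itemize}
\item By \cref{thm:diffH}(1), $\|A(\bdelta)^{\intercal}A(\bdelta)-\bI_{d_x}\|_{\mathrm F}\le 6^{-3}=1/216$. Then \cref{cor:LipschitzConstant} with $R=1/216$ gives $\|\bS\|_{\mathrm F}\le\alpha$, which is exactly where the constant $\sqrt{217/216}$ comes from.
\item Using $\|\bdelta\|_{\mathrm{op}}\le\sqrt{7/6}$ and the explicit formula for $DA$, I expect $\|DA_{[\bdelta]}(\bS)\|_{\mathrm F}\le c\,\alpha$ for a modest absolute $c$.
\item The key structural fact is that on the manifold $\bdelta^{\intercal}DA_{[\bdelta]}(\bS)$ is skew-symmetric, because $DA$ projects onto the tangent space. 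Its symmetric part should therefore be $O(\alpha\|\mathbf E\|_{\mathrm F})$, by expanding $\bdelta^{\intercal}DA$ and checking that the terms of order zero in $\mathbf E$ cancel in $\Phi(\cdot)$.
\end{itemize}

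Combining these yields
\[
\|\mathbf E_{k+1}\|_{\mathrm F}\le \bigl(1-\tfrac53\eta\beta+C_1\eta\alpha\bigr)\|\mathbf E\|_{\mathrm F}+\eta^2\bigl(c\alpha+\sqrt{7/6}\,\beta\|\mathbf E\|_{\mathrm F}\bigr)^2 .
\]

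\textbf{Closing the induction.} There are two cases.
\begin{itemize}
\item If $\|\mathbf E\|_{\mathrm F}$ is not tiny (say $\ge\alpha/\beta$ up to constants), then $\beta\ge162\alpha$ makes $C_1\eta\alpha$ and the $\eta^2$ term dominated by $\tfrac53\eta\beta\|\mathbf E\|_{\mathrm F}$. Here I use $\eta^2\beta^2\|\mathbf E\|_{\mathrm F}^2\le\tfrac12\eta\beta\|\mathbf E\|_{\mathrm F}\cdot\tfrac16$.
\item If $\|\mathbf E\|_{\mathrm F}$ is small, the total change is $O(\eta\alpha)\le O(\alpha/\beta)\le O(1/162)$, which keeps the violation below $1/6$.
\end{itemize}

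The main obstacle is the cancellation showing that the symmetric part of $\bdelta^{\intercal}DA_{[\bdelta]}(\bS)$ is $O(\|\mathbf E\|_{\mathrm F})$. I will verify it algebraically by setting $\bdelta^{\intercal}\bdelta=\bI_{d_x}+\mathbf E$ in the formula for $DA$ from \cref{thm:diffH}(3). Tracking the constants so that $162$ suffices is the other delicate point. If the cancellation fails to be clean, I would fall back on the cruder bound $\|\bdelta^{\intercal}DA\|\le c'\alpha$ together with the case split above, which already gives the claim for $\beta$ a large multiple of $\alpha$.
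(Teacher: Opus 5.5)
Your argument is correct, but it takes a different route from the paper's. The paper never expands $\bdelta_{k+1}^{\intercal}\bdelta_{k+1}-\bI_{d_x}$. It proves $\|\bS_k\|_{\mathrm F}\le\alpha$ exactly as you do, via \cref{thm:diffH}(1) and \cref{cor:LipschitzConstant} with $R=1/216$. It then bounds the three terms of the explicit formula for $DA_{[\bdelta_k]}(\bS_k)$ to get $\|DA_{[\bdelta_k]}(\bS_k)\|_{\mathrm F}\le 2.7\alpha$. (The terms give $\tfrac{39}{32}+\tfrac76+\tfrac{7}{24}=\tfrac{257}{96}$; the paper writes $\tfrac{743}{288}$, a harmless slip.) Finally it cites Lemma 4.12 of \cite{hu2024constraint}. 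That lemma gives invariance of $\{\|\bdelta^{\intercal}\bdelta-\bI_{d_x}\|_{\mathrm F}\le 1/6\}$ under $\eta_k\le 1/(2\beta)$ once $\beta$ is at least $60$ times such a bound, which is where $162=60\cdot 2.7$ comes from.

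You are in effect reproving that lemma, and the constant $c$ you leave open is exactly the paper's $2.7$. Your version is self-contained and shows why the set is invariant. The penalty contributes $(\bI_{d_x}-2\eta\beta(\bI_{d_x}+\mathbf E))\mathbf E$, and this factor has operator norm at most $1-\tfrac53\eta\beta$ for every $\eta\beta\le\tfrac12$, so there is no exceptional "tiny" regime. It also shows how much slack $162$ leaves. The paper's version is shorter but opaque on both points.

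Two simplifications to your write-up:

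\emph{The cancellation is true but not needed.} With $\mathbf M=\bdelta^{\intercal}\bS$ and $\Phi$ the symmetrization, one gets $\bdelta^{\intercal}DA_{[\bdelta]}(\bS)=(\mathbf M-\Phi(\mathbf M))-\tfrac12\mathbf M\mathbf E+\tfrac38\mathbf M\mathbf E^2-\mathbf E\Phi(\mathbf M)+\tfrac32(\bI_{d_x}+\mathbf E)\Phi(\Phi(\mathbf M)\mathbf E)$, whose first bracket is skew. However, your crude fallback already bounds the cross term by $2\sqrt{7/6}\cdot 2.7\,\eta\alpha\le 5.9\,\eta\alpha$, and that suffices for $\beta\ge 162\alpha$.

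\emph{The case split can be replaced by a convexity check.} Set $t=\eta\beta\in[0,\tfrac12]$, $r=\alpha/\beta\le\tfrac1{162}$ and $e=\|\mathbf E\|_{\mathrm F}$. Your bound becomes $(1-\tfrac53t)e+5.9\,tr+t^2(2.7r+\sqrt{7/6}\,e)^2$, which is convex in $e$. At $e=0$ it is at most $0.02$. At $e=\tfrac16$ it is at most $\tfrac16-0.24t+0.04t^2\le\tfrac16$. Hence it stays below $\tfrac16$ on all of $[0,\tfrac16]$.
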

\begin{proof}
  The proof of \cref{prop:boundedIterates} follows directly from Lemma 4.12 in \cite{hu2024constraint} once we establish that, at any $\bm \Delta_k\in\mathbb R^{d_x\times d_y}$ with $\|\bdelta_k^{\intercal}\bdelta_k-\bI_{d_x}\|_{\mathrm F}\leq \frac 16$ it holds that $\|D A_{[\bdelta_k]}(\bm S_k)\|_{\mathrm F}\leq 2.6\alpha$. First, we have from \cref{cor:LipschitzConstant} that for any $\bm S_k\in\partial F(A(\bm \Delta_k))$, 
  \[
  \begin{aligned}
        \|\bm S_k\|_{\mathrm{F}}\mspace{-1mu}\leq\mspace{-1mu} 4M_2(\mu){\sqrt{1\mspace{-1mu}+\mspace{-1mu}\|A(\bm \Delta_k)^{\intercal}A(\bm \Delta_k)\mspace{-1mu}-\mspace{-1mu}\bI_{d_x}\|_{\mathrm{F}}}}\times&\\&\hspace{-4em}\left( M_2(\nu)\mspace{-1mu}+\mspace{-1mu}M_2(\mu)\left(1\mspace{-1mu}+\mspace{-1mu}\|A(\bm \Delta_k)^{\intercal}A(\bm \Delta_k)\mspace{-1mu}-\mspace{-1mu}\bI_{d_x}\|_{\mathrm{F}}\right)\right).
 \end{aligned} 
  \]
  By \cref{thm:diffH}, if $\|\bdelta_k^{\intercal}\bdelta_k-\bI_{d_x}\|_{\mathrm{F}}\leq \frac 16$,  $\|A(\bdelta_k)^{\intercal}A(\bdelta_k)-\bI_{d_x}\|_{\mathrm{F}}\leq\|\bdelta_k^{\intercal}\bdelta_k-\bI_{d_x}\|^3_{\mathrm{F}}\leq \frac 1{216}$ so that $\|\bm S_k\|_{\mathrm{F}}\leq \alpha$. Now, recall that 
  \[
  DA_{[\mathbf \Delta]}(\bm \Xi)
\mspace{-1mu}=\mspace{-1mu}\frac{1}{8}\bm \Xi\left(15 \bm I_{d_x}-10\bm\Delta^{\intercal}\bm \Delta\mspace{-1mu}+\mspace{-1mu} 3(\bm \Delta^{\intercal}\bm \Delta)^2\right)\mspace{-1mu}-\mspace{-1mu}\bm \Delta\Phi\left( \mathbf \Delta^{\intercal}\mathbf \Xi \right)\mspace{-1mu} +\mspace{-1mu} \frac 32 \mathbf \Delta\Phi\left(\Phi(\mathbf \Delta^{\intercal}\mathbf \Xi) \left(\mathbf \Delta^{\intercal}\mathbf \Delta\mspace{-1mu}-\mspace{-1mu}\mathbf I_{d_x}\right)\right)\] 
for $\Phi(\mathbf M)=\frac 12 (\mathbf M+\mathbf M^{\intercal})$. We now simplify the various expressions. First, 
\[
\begin{aligned}
&\frac{1}{8}\bm \Xi\left(15 \bm I_{d_x}-10\bm\Delta^{\intercal}\bm \Delta+ 3(\bm \Delta^{\intercal}\bm \Delta)^2\right)
\\
&=\frac{1}{8}\bm \Xi\left(8\mathbf I_{d_x}+7\left(\bm I_{d_x}-\bm\Delta^{\intercal}\bm \Delta\right)+ 3(\bm \Delta^{\intercal}\bm \Delta)\left(\bm \Delta^{\intercal}\bm \Delta-\mathbf I_{d_x}\right)\right)
\\
&=\bm \Xi+\frac78\bm \Xi\left(\bm I_{d_x}-\bm\Delta^{\intercal}\bm \Delta\right)+ \frac 38\bm \Xi \left(\bm \Delta^{\intercal}\bm \Delta-\mathbf I_{d_x}\right)^2+\frac38\bm \Xi\left(\bm \Delta^{\intercal}\bm \Delta-\mathbf I_{d_x}\right),
\end{aligned}
\]
whereby $\frac{1}{8}\bm S_k\left(15 \bm I_{d_x}-10\bm\Delta_k^{\intercal}\bm \Delta_k+ 3(\bm \Delta_k^{\intercal}\bm \Delta_k)^2\right)\leq \alpha +\frac{10}{48}\alpha+\frac{3}{288}\alpha=\frac{39}{32}\alpha$. 

For the next term, it holds that
\[
\|\bm\Delta^{\intercal}_k\bm S_k\|_{\mathrm{F}}\leq \|\bm\Delta^{\intercal}_k\|_{\mathrm{op}}\|\bm S_k\|_{\mathrm{F}}\leq \alpha\sqrt{1+\|\bm\Delta_k^{\intercal}\bm \Delta-\bI_{d_x}\|_{\mathrm{F}}}\leq \alpha \sqrt{\frac{7}{6}}
\]
so that $\|\bm\Delta_k\Phi(\bm\Delta^{\intercal}_k\bm S_k)\|_{\mathrm{F}}\leq \|\bm\Delta_k\|_{\mathrm{op}}\|\Phi(\bm\Delta^{\intercal}_k\bm S_k)\|_{\mathrm{F}}\leq \frac 76 \alpha$ noting that, for any matrix $\bm M\in\mathbb R^{d_y\times d_x}$, $\|\Phi(\bm M)\|_{\mathrm{F}}\leq \frac{1}{2}(\|\bm M^{\intercal}\|_{\mathrm{F}}+\|\bm M\|_{\mathrm{F}})=\|\bm M\|_{\mathrm{F}}$.  

Using this logic for the final term we have 
\[
\begin{aligned}
\frac 32 
\|\mathbf \Delta_k\Phi\left(\Phi(\mathbf \Delta_k^{\intercal}\mathbf S_k) \left(\mathbf \Delta_k^{\intercal}\mathbf \Delta_k-\mathbf I_{d_x}\right)\right)\|_{\mathrm{F}}
&\leq\frac 32 
\|\mathbf \Delta_k\|_{\mathrm{op}}\|\Phi(\mathbf \Delta_k^{\intercal}\mathbf S_k) \left(\mathbf \Delta_k^{\intercal}\mathbf \Delta_k-\mathbf I_{d_x}\right)\|_{\mathrm{F}}
\\
&\leq\frac 32 
\|\mathbf \Delta_k\|_{\mathrm{op}}\|\mathbf \Delta_k^{\intercal}\mathbf S_k\|_{\mathrm{F}}\|\mathbf \Delta_k^{\intercal}\mathbf \Delta_k-\mathbf I_{d_x}\|_{\mathrm{F}}\leq \frac 7{24}\alpha.
\end{aligned}
\]
Applying the triangle inequality, we obtain that 
$
    \|DA_{[\bm \Delta_k]}(\bm S_k)\|_{\mathrm{F}}\leq \left(\frac{39}{32}+\frac{7}{6}+ \frac{7}{24}\right)\alpha=\frac{743}{288}\alpha\leq 2.7 \alpha, 
$
so that we may apply Lemma 4.12 in \cite{hu2024constraint} which asserts that if $\beta\geq 60\cdot 2.7 \alpha$ and $\eta_k\leq \frac 1{2\beta}$, the subgradient method generates iterates $\bm \Delta_k$ satisfying $\|\bm \Delta_k^{\intercal}\bm \Delta_k-\bI_{d_x}\|_{\mathrm{F}}\leq 1/6$ provided that $\bm \Delta_0$ also satisfies this condition. 
\end{proof}

With \cref{prop:boundedIterates} in hand, it remains to show that the iterates generated by \cref{alg:sgm} converge in a suitable sense. To this end, we leverage Corollary 5.9 in \cite{davis2020stochastic} (see also Corollary 4.16 in \cite{hu2024constraint}) for which it suffices to prove that $H$ is a semialgebraic function, that is, the graph of $H$, $\mathrm{grp}(H)\coloneqq \left\{\left(\mathrm{vec}(\mathbf \Delta),y\right)\in\mathbb R^{d_xd_y}\times \mathbb R:H(\mathbf \Delta)=y \right\}$, can be written as $\{(\mathrm{vec}(\bdelta),y)\in\mathbb R^{d_xd_y}\times \mathbb R:p_{i}(\mathrm{vec}(\bdelta),y)\leq 0 \text{ for } i=1,\dots, \ell\}$ where $(p_i)_{i=1}^{\ell}$ is a finite collection of polynomials; any set of this form is said to be  semialgebraic. 

\begin{lemma}[Semialgebraicity of $H$]
\label{lem:semialgebraic}
   The function $\bm \Delta\in\mathbb R^{d_y\times d_x}\mapsto H(\bm \Delta)$ is semialgebraic. 
\end{lemma}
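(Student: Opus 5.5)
We will use the standard closure properties of semialgebraic functions: sums, products and compositions of semialgebraic functions are semialgebraic, as are pointwise minima and maxima of finitely many semialgebraic functions. By the Tarski--Seidenberg theorem, these properties follow because every such operation can be written with a first-order formula over the reals. The penalty term $\frac{\beta}{4}\|\bdelta^{\intercal}\bdelta-\bI_{d_x}\|_{\mathrm F}^2$ is a polynomial in the entries of $\bdelta$, and $A(\bdelta)$ is a polynomial map. It therefore suffices to show that $F$ is semialgebraic on $\mathbb R^{d_y\times d_x}$, since $H=F\circ A+\text{polynomial}$.

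In the setting of \cref{prop:subgradient}, $\mu$ and $\nu$ are finitely supported, so $\Pi(\mu,\nu)$ is a polytope with finitely many vertices $\pi_1,\dots,\pi_{|\mathcal V|}$. By the identity preceding \eqref{eq:stiefelOptimization} (from \cref{lem:pushforwardMeasures} and the decomposition of $\IGW$), each summand of $F$ has the form
\[
\left((\theta^{(i)})^{\intercal}\bdelta\bR_\mu\bdelta^{\intercal}\theta^{(i)}\right)^2+\left((\theta^{(i)})^{\intercal}\bR_\nu\theta^{(i)}\right)^2+\inf_{\pi\in\Pi(\mu,\nu)}-2\left((\theta^{(i)})^{\intercal}\bdelta\mathbf C_\pi\theta^{(i)}\right)^2 .
\]
The first two terms are polynomials in $\bdelta$; the $\theta^{(i)}$, $\bR_\mu$ and $\bR_\nu$ are fixed real constants, and polynomials with real coefficients are allowed in the definition. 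For the third term, $\pi\mapsto-2(\theta^{\intercal}\bdelta\mathbf C_\pi\theta)^2$ is concave on the polytope, since $\mathbf C_\pi$ is linear in $\pi$. As in the proof of \cref{prop:subgradient}, the infimum is therefore attained at a vertex, and
\[
\inf_{\pi}-2(\theta^{\intercal}\bdelta\mathbf C_\pi\theta)^2=\min_{1\le j\le|\mathcal V|}\ell_{j,\theta}(\bdelta),
\]
which is a finite minimum of polynomials in $\bdelta$.

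A finite minimum of polynomials is semialgebraic. Concretely, the graph of $\bdelta\mapsto\min_j p_j(\bdelta)$ is
\[
\Bigl\{(\bdelta,y):\ y\le p_j(\bdelta)\ \forall j\Bigr\}\cap\bigcup_j\{y=p_j(\bdelta)\},
\]
a finite union of sets defined by finitely many polynomial (in)equalities. Each summand of $F$ is then semialgebraic, and so is the finite average $F$. Composing with the polynomial map $A$ and adding the polynomial penalty gives semialgebraicity of $H$.

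The main obstacle is definitional. The paper's stated definition, a single system $p_i\le0$, is narrower than the standard notion, which allows finite unions and equalities; the min-of-polynomials graph is a finite union, not a single basic set. I would handle this by noting that equalities can be written as two inequalities, and by adopting the standard notion of semialgebraic sets, which is what Corollary 5.9 of \cite{davis2020stochastic} requires. Closure under composition then follows from Tarski--Seidenberg, because the graph of $F\circ A$ is the projection of $\{(\bdelta,\bm Z,y):\bm Z=A(\bdelta),\ (\bm Z,y)\in\mathrm{grp}(F)\}$.
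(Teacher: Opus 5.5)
Your proposal is correct and follows the same route as the paper. You reduce to $F$ because $A$ and the penalty are polynomial, use concavity in $\pi$ to replace the infimum over $\Pi(\mu,\nu)$ by a minimum over its finitely many vertices (as in the proof of \cref{prop:subgradient}), and conclude by closure of semialgebraic functions under finite minima, sums and composition. The differences are cosmetic: you describe the graph of a finite minimum of polynomials directly where the paper uses $\min\{f,g\}=\tfrac12(f+g-|f-g|)$, and your point that the paper's single-system definition of a semialgebraic set should be replaced by the standard finite-union notion is a fair correction.
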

\begin{proof}
We first compile some useful properties of semialgebraic maps. The first two points can be found on p.29 in \cite{coste2002introduction} and the last two in Corollary 2.9 of the same reference,  
\begin{enumerate}[(1),leftmargin=*]
    \item any map with polynomial coordinates is semialgebraic, 
    \item the absolute value of a semialgebraic function is semialgebraic,
    \item  the composition of two semialgebraic mappings is semialgebraic,
    \item semialgebraic functions are closed under addition and scalar multiplication.
\end{enumerate}
As a direct corollary to items (2) and  (4), we obtain that the pointwise minimum of two semialgebraic functions $f,g:\mathbb R^{d_y\times d_x}\to \mathbb R$ is semialgebraic as 
$
\min\{f(x),g(x)\}=\frac{1}{2}\left(f(x)+g(x)-|f(x)-g(x)|\right)
$. Functions defined as the pointwise minimum of finitely many semialgebraic functions can be seen to be semialgebraic by iteratively applying the above result.

With these points in mind, we note that $A$ is a semialgebraic mapping and that
\[
\begin{aligned}
P:
\bdelta\in\mathbb R^{d_y\times d_x}&\mapsto \frac{\beta}{4}\|\bdelta^{\intercal}\bdelta-\bI_{d_x}\|_{\mathrm{F}}^2, 
\\
U_i:\bdelta\in\mathbb R^{d_y\times d_x}&\mapsto \left(\left(\theta^{(i)}\right)^{\intercal}\bdelta\int zz^{\intercal}d\mu(z)\bdelta^{\intercal}\theta^{(i)}\right)^2,\; (i=1,\dots,R)
\\
V_{i,\pi}:\bdelta\in\mathbb R^{d_y\times d_x}&\mapsto -2\left(\left(\theta^{(i)}\right)^{\intercal}\bdelta \int xy^{\intercal}d\pi(x,y)\theta^{(i)}\right)^2,\; (i=1,\dots,R, \pi\in\Pi(\mu,\nu))
\end{aligned}
\]
are all polynomials of $\mathrm{vec}(\bm\Delta)$ and are hence semialgebraic. Since
\[
F(\bdelta)= \frac{1}{R}\sum_{i=1}^{R}\left(U_i(\bdelta) +M_2(((\theta^{(i)})^{\intercal})_{\sharp}\nu)+\min_{j=1}^{|\mathcal V|}V_{i,\pi_j}(\bm \Delta)\right),
\]
where $\mathcal V$ is the set of vertices of $\Pi(\mu,\nu)$ (recall the proof of \cref{prop:subgradient}), $F$ is semialgebraic as follows from the previous deliberations. Now, $H(\bm\Delta)=F(A(\bdelta))+P(\bdelta)$, proving that $H$ is semialgebraic.
\end{proof}

With these preparations in hand, we readily obtain the convergence of \cref{alg:sgm}.

\begin{proof}[Proof of \cref{thm:convergenceSGM}]
Given the previous deliberations
\cref{thm:convergenceSGM} is a direct consequence of Corollary 5.9 in \cite{davis2020stochastic}. Notably, while their result is stated for locally Lipschitz Whitney stratifiable functions, \cref{lem:semialgebraic} proves that  $H$ is a semialgebraic function and hence satisfies the conditions of Corollary 5.9 in \cite{davis2020stochastic}; see the discussion following that result. 

It remains to show that Assumption C in the same reference holds. Given the choice of step sequence and the fact that the subgradients are not stochastic, it suffices to show that $\sup_{k\in\mathbb N}\|\bdelta_k\|_{\mathrm{F}}<\infty$. However, since $\eta_k\leq \frac{1}{2\beta}$, \cref{prop:boundedIterates} yields that $\|\bdelta_k\|_{\mathrm{op}}\leq \sqrt{1+\|\bdelta_k^{\intercal}\bdelta_k-\bI_{d_x}\|_{\mathrm{F}}}\leq \sqrt{\frac76}$, as all norms on $\mathbb R^{d_y\times d_x}$ are equivalent, this proviso is met.

Applying Corollary 5.9 in \cite{davis2020stochastic}, we obtain that the iterates of \cref{alg:sgm} admit cluster points which are critical points of $H$.
Finally, since $\beta \geq 2L_1$, \cref{thm:stationaryPoints} asserts that any critical point $\bar\bdelta$ of $H$ satisfying $\|\bar\bdelta^{\intercal}\bar\bdelta-\bI_{d_x}\|_{\mathrm{F}}\leq \frac{1}{6}\leq \frac{\beta}{2\beta+8L_1}$ lies on the Stiefel manifold and is a Riemannian critical point for $F$ over the Stiefel manifold. However, all iterates of \cref{alg:sgm} satisfy $\|\bdelta_k^{\intercal}\bdelta_k-\bI_{d_x}\|_{\mathrm{F}}\leq {\frac16}$, see \cref{prop:boundedIterates}.  
\end{proof}

\newpage
\section{Additional Details on Experiments}
\label{app:sec:experiments_details}

The code used to generate the experimental results is available in the supplement to the submission.

\subsection{Monte-Carlo Error and Empirical Convergence Validation}
\label{app:sec:validation}

We provide additional details for the validation experiments from \cref{sec:validation}. Recall that we fix $d_x=5$ and $d_y=10$, and consider centered Gaussian distributions $\mu=\mathcal N(0,\bsigma_\mu)$ and $\nu=\mathcal N(0,\bsigma_\nu)$. The covariance matrices are generated as
\[
    \bsigma_\mu=\mathbf S_\mu^{\intercal}\mathbf S_\mu,
    \qquad
    \bsigma_\nu=\mathbf S_\nu^{\intercal}\mathbf S_\nu,
\]
where the entries of $\mathbf S_\mu\in\mathbb R^{d_x\times d_x}$ and $\mathbf S_\nu\in\mathbb R^{d_y\times d_y}$ are sampled uniformly from $[0,1]$ and are given respectively by  
\[
\begin{pmatrix}
0.3236 & 0.2191 & 0.6560 & 0.5031 & 0.8479 \\
0.9789 & 0.9872 & 0.9049 & 0.8046 & 0.1269 \\
0.0338 & 0.0987 & 0.1037 & 0.5406 & 0.2284 \\
0.5851 & 0.4209 & 0.8828 & 0.3423 & 0.2496 \\
0.7108 & 0.2808 & 0.6270 & 0.4175 & 0.8096
\end{pmatrix}
\]
and
\[
\begin{pmatrix}
0.9161 & 0.2135 & 0.7142 & 0.8724 & 0.2980 & 0.6841 & 0.4130 & 0.5256 & 0.2159 & 0.3184 \\
0.7531 & 0.5333 & 0.7351 & 0.6972 & 0.1939 & 0.8199 & 0.8531 & 0.6292 & 0.6706 & 0.8047 \\
0.9606 & 0.8050 & 0.5413 & 0.1567 & 0.3258 & 0.7775 & 0.0108 & 0.0359 & 0.5499 & 0.3627 \\
0.6861 & 0.3847 & 0.6097 & 0.7144 & 0.2869 & 0.7862 & 0.0350 & 0.9600 & 0.5924 & 0.4226 \\
0.9395 & 0.2172 & 0.7135 & 0.4205 & 0.5785 & 0.1037 & 0.5993 & 0.9797 & 0.8764 & 0.0343 \\
0.9353 & 0.6064 & 0.1920 & 0.2301 & 0.8376 & 0.5580 & 0.3768 & 0.5624 & 0.8435 & 0.3506 \\
0.7907 & 0.2041 & 0.4826 & 0.1531 & 0.5318 & 0.4466 & 0.8603 & 0.7347 & 0.2604 & 0.3680 \\
0.2773 & 0.0817 & 0.3731 & 0.1265 & 0.8599 & 0.5523 & 0.4136 & 0.7552 & 0.1519 & 0.4664 \\
0.4340 & 0.9845 & 0.7846 & 0.2799 & 0.7939 & 0.6587 & 0.5673 & 0.2789 & 0.7801 & 0.0522 \\
0.2094 & 0.5340 & 0.9646 & 0.4347 & 0.9174 & 0.6073 & 0.9301 & 0.0570 & 0.4485 & 0.8742
\end{pmatrix}
\]
where the displayed values are truncated to four digits after the decimal point.

The value of $\aIGW(\mu,\nu)$ and the corresponding population optimizer over $\St(d_x,d_y)$ are provided in closed form in \cref{ex:aIGWGaussian}. Furthermore, the IGW cost for the projected distributions can be evaluated directly from the covariance matrices. Specifically, for each $\bdelta\in\mathbb R^{d_y\times d_x}$ and $\theta\in\unitsphy$,
\[
\IGW\left((\theta^{\intercal}\bdelta)_{\sharp}\mu,(\theta^{\intercal})_{\sharp}\nu\right)^2
=
(\theta^{\intercal}\bdelta \bsigma_\mu \bdelta^{\intercal}\theta)^2
+
(\theta^{\intercal}\bsigma_\nu\theta)^2
-
2\theta^{\intercal}\bdelta \bsigma_\mu \bdelta^{\intercal}\theta \, \theta^{\intercal}\bsigma_\nu\theta,
\]
by \cite[Corollary 3.2]{dandapanthula2025optimal}. Thus, the MC validation experiment does not require discretizing $\mu$ and $\nu$. In this experiment, we set $\beta=100$, a maximum number of iterations of $2500$, and take the stepsize sequence $\eta_k = \min\{.01/\max\{\|\mathbf G_k\|,1\},5000/(k+1) \}$ for the standard subgradient method \cref{alg:sgm}. While this choice of $\beta$ and stepsize do not fall within the regime required to apply \cref{alg:sgm}, they are chosen as they were seen to lead to a good tradeoff between performance and compute time. For the Riemannian subgradient method, \cref{alg:Riemanniansgm}, we set a maximum number of iterations of $500$, a gradient tolerance of $5\times 10^{-6}$, and a maximum number of backtracking steps of $12$. The average runtime for this experiment over the $25$ runs  as a function of the number of slices with the different choices of initialization and algorithm is provided in \cref{fig:MCtimes}.   

For the validation of the sample complexity, we compute the sliced IGW distance between empirical measures. The complexity of computing one subgradient of $F$ in this setting is as follows. 

For a fixed $\theta\in\mathbb S^{d_y-1}$ and $\bm \Delta \in \St(d_x,d_y)$ when $\mu$ and $\nu$ are both uniformly distributed on $N$ points $(X_i)_{i=1}^N$ and  $(Y_j)_{j=1}^N$, respectively . Following \cref{thm:UnivariateIGW}, the optimal coupling is obtained by performing the following routine under the assumption that the projected distributions are still uniformly supported on the same number of points. 
\begin{enumerate}[leftmargin=*]
    \item Compute the projected support points for $\mu$, $x^{\theta}_i=\theta^{\intercal}\bm \Delta X_i$ for $_{i=1}$ to $N$: $O(d_xd_y + Nd_x)$, 
    \item Compute the projected support points for $\nu$, $y^{\theta}_i=\theta^{\intercal}Y_i$ for ${i=1}$ to $N$: $O(Nd_y)$,
   \item Sort the projected support points $x_i^{\theta}$ and $y_i^{\theta}$ in ascending order: $O(N\log(N))$, 
   \item Compute $v_{\id}=\left(\sum_{i=1}^Nx_i^{\theta}y_i^{\theta} \right)^2$ and $v_{\check{\id}}=\left(\sum_{i=1}^Nx_i^{\theta}y_{N+1-i}^{\theta} \right)^2$: $O(N)$, 
   \item If $v_{\id}\geq v_{\check \id}$,  $\pi^{\star}=\pi_{\id}$ is optimal for $\mathsf{IGW}((\theta^{\intercal}\bdelta)_{\sharp}\mu,(\theta^{\intercal})_{\sharp}\mu)$, else $\pi^{\star}=\pi_{\check\id}$ is optimal (see \cref{thm:UnivariateIGW})
   \item Compute an element of the subdifferential of $G^{\theta}$: $\mathbf T_{\theta}=-4  \theta^{\intercal}\bdelta \mathbf C_{\pi^{\star
   }}\theta  \left(\mathbf C_{\pi^{\star}}\theta\theta^{\intercal}\right)^{\intercal}$, where $\bm C_{\pi^{\star}} = \int xy^{\intercal} d\pi^{\star}(x,y)$, $\theta^{\intercal}\bdelta \mathbf C_{\pi^{\star
   }}\theta$ can be computed in $O(N)$ whereas $\left(\mathbf C_{\pi^{\star}}\theta\theta^{\intercal}\right)^{\intercal}$ can be computed in $O(Nd_x+d_xd_y)$
   \item Compute $\mathbf L_{\theta}=4\left((\theta)^{\intercal} \bdelta \mathbf R_{\mu}\bdelta^{\intercal} \theta\right) \left(\mathbf{R}_{\mu}\bdelta^{\intercal} \theta(\theta)^{\intercal}\right)^{\intercal}$ where $\bm R_{\mu} = \int zz^{\intercal} d\mu(z)$. Again, the constant $\left((\theta)^{\intercal} \bdelta \mathbf R_{\mu}\bdelta^{\intercal} \theta\right)$ can be computed in $O(N)$ whereas $\left(\mathbf{R}_{\mu}\bdelta^{\intercal} \theta(\theta)^{\intercal}\right)^{\intercal}$ can be computed in $O(Nd_x+d_xd_y)$.   
\end{enumerate}
In sum, the overall complexity of computing $\mathbf T_{\theta}+\mathbf R_{\theta}$ is $O(N(d_x+d_y)+d_xd_y +N\log(N))$ and, since $d_x\leq d_y$ and $\partial F(\bdelta)=\frac 1 R\sum_{i=1}^R (\mathbf T_{\theta^{(i)}}+\mathbf R_{\theta^{(i)}})$ for some collection of unit vectors $(\theta^{(i)})_{i=1}^R$, the overall per-iteration complexity for computing one subgradient is given by $O(R\left(N(d_y+\log(N))+d_xd_y \right))$. With this, computing a subgradient of $H$ at $\bm\Delta$ via \cref{prop:subgradient} requires first computing $ A(\mathbf{\Delta}) = \frac{1}{8}\mathbf{\Delta}\left(  15\mathbf{I}_{d_x} - 10\mathbf{\Delta}^{\intercal}\mathbf{\Delta}+3(\mathbf{\Delta}^{\intercal}\mathbf{\Delta})^2 \right)$ whose complexity, $O(d_x^2d_y)$, is dominated by forming $\bm \Delta^{\intercal}\bm \Delta$ (since $d_x\leq d_y$), then computing a subgradient of $F$ at $A(\bm \Delta)$ which we have just analyzed and composing that matrix with $D A_{[\bm \Delta]}$ (which consists of multiplying matrices of shape $d_y\times d_x$ or $d_x\times d_y$). Finally, the term $\beta \bm\Delta(\bm \Delta^{\intercal}\bm \Delta-\mathbf I_{d_x})$ can be computed in $O(d_yd_x^2)$ as argued above. Altogether, the complexity for computing the subgradient of $H$ for distributions on $N$ points satisfying the conditions described previously based on $R$ slices is given by $O(R(N(d_y+\log(N)))+d_x^2d_y)$.

For the sample complexity experiment, we compute the IGW distance between the one-dimensional projections of the empirical measures. If $\hat \mu_n = \frac 1n\sum_{i=1}^n \delta_{X_i}$ where $X_1,\dots, X_n\stackrel{i.i.d.}{\sim}\mu$, then for any $\theta \in \mathbb S^{d_y-1}$ and $\bdelta \in \mathrm{St}(d_x,d_y)$, $(\theta^{\intercal}\bm \Delta)_{\sharp}\hat \mu_n = \frac 1n\sum_{i=1}^n \delta_{\theta^{\intercal}\bm \Delta X_i}$. Thus, as long $\theta^{\intercal}\bm \Delta X_i\neq \theta^{\intercal}\bm \Delta X_j$ for each $i\neq j$ and similarly $\theta^{\intercal}Y_i\neq \theta^{\intercal}Y_j$ for each $i\neq j$ for the samples $Y_1,\dots,Y_n\stackrel{i.i.d.}\sim \nu$,  $\mathsf{IGW}\left((\theta^{\intercal}\bm \Delta)_{\sharp}\hat \mu_n,(\theta^{\intercal})_{\sharp}\hat \nu_n\right)$ can be computed as above. In our numerical implementation, we implement a check to see if the sorted values are pairwise distinct and warn the user if they are not -- in which case the 1d sorting formula may fail to accurately compute the IGW distance between the projected measures. In our experiments this check always passes.  

We use the same hyperparameters as the MC validation experiment for the validation of the sample complexity with a fixed number of slices, $m=3000$. The average runtime for this experiment over the $25$ runs  as a function of the number of samples with the different choices of initialization and algorithm is provided in \cref{fig:emptimes}. For practical purposes, the data was generated by running $5$ instances of the code in parallel, each running $5$ runs with different seeds to generate the samples.

\begin{figure}[!t]
    \centering
    \includegraphics[width=\linewidth]{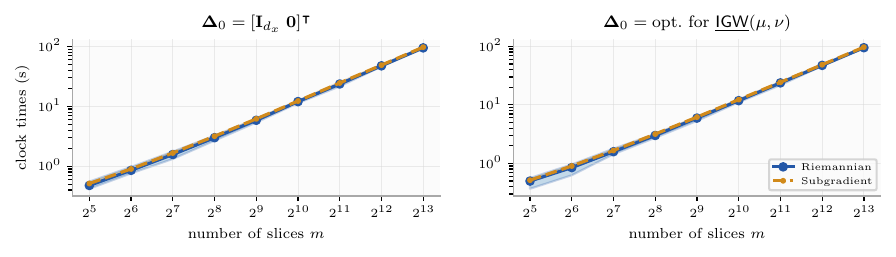}
    \vspace{-2em} 
    \caption{Average time required for the $25$ runs used to generate the data for the Monte Carlo error plot \cref{fig:MCOptim} along with a shaded region for the maximum and minimum runtime over the 25 runs. The total compute time required to generate this data is $19243$ seconds on a laptop with 64 GB of ram and an Intel i7-13620H processor.}
    \label{fig:MCtimes}
\end{figure}

\begin{figure}[!t]
    \centering
    \includegraphics[width=\linewidth]{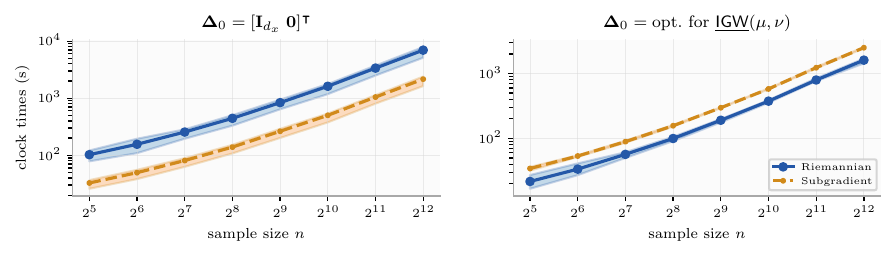}
    \vspace{-2em} 
    \caption{Average time required for the $25$ runs used to generate the data for the sample complexity plot \cref{fig:sampleCompOptim} along with a shaded region for the maximum and minimum runtime over the 25 runs. The total compute time required to generate this data is $617932$ seconds on a laptop with 32 GB of ram and a Ryzen 7 PRO 4750U processor.}
    \label{fig:emptimes}
\end{figure}

\subsection{Riemannian Subgradient Method}
\label{sec:Riemannian}

The practical Riemannian subgradient method used in the experiments is presented in \cref{alg:Riemanniansgm}. Neglecting the additional backtracking step, the per-iteration complexity of this method scales in the same way as the standard subgradient method in $d_x,d_y,N,$ and $R$; see the  discussion in the previous section and note that the $QR$ decomposition can be implemented in $O(d_yd_x^2)$. Though the convergence of this algorithm is not guaranteed, it is seen to yield similar values to \cref{alg:sgm}.

\begin{algorithm}[!h]
\caption{Riemannian subgradient method for sliced IGW}
\label{alg:Riemanniansgm}
\begin{algorithmic}[1]
\Statex Fix initialization $\bdelta_1\in\St(d_x,d_y)$, total number of iterations $0\leq K\in\mathbb N$, maximum number of backtracking steps $0\leq L\in\mathbb N$, backtracking parameter $\alpha>0$, and tolerance $\epsilon>0$.
\For{$k=1,\dots,K$}
\State Choose $\bS_k\in\partial F(\bdelta_k)$
\State $\bG_k\gets \bS_k-\frac{1}{2}\bdelta_k(\bdelta_k^{\intercal}\bS_k+\bS_k^{\intercal}\bdelta_k)$ \hfill(element of $\partial_R F(\bdelta_k)$)
\If{$\|\bG_k\|_{\mathrm F}<\epsilon$}
\State \Return Riemannian $\epsilon$-critical point $\bdelta_k$
\EndIf
\State $\eta\gets 1$ and $\mathrm{accepted}\gets \mathrm{false}$
\For{$l=1,\dots,L$}
    \State $\tilde\bdelta\gets \mathrm{proj}_{\St(d_x,d_y)}(\bdelta_k-\eta\bG_k)$
    \If{$F(\tilde\bdelta)\leq F(\bdelta_k)-\alpha\eta\|\bG_k\|_{\mathrm F}^2$}
        \State $\mathrm{accepted}\gets \mathrm{true}$
        \State \textbf{break}
    \Else
        \State $\eta\gets \eta/2$
    \EndIf
\EndFor
\If{$\mathrm{accepted}=\mathrm{false}$}
\State \Return Optimizer stuck at non-critical point $\bdelta_k$
\EndIf
\State $\bdelta_{k+1}\gets \tilde\bdelta$
\EndFor
\State \Return $\bdelta_{K+1}$
\end{algorithmic}
\end{algorithm}

The operation $\mathrm{proj}_{\St(d_x,d_y)}$ takes a matrix of the form $\bdelta+\bP$, where $\bdelta\in\St(d_x,d_y)$ and $\bP$ is in the tangent space to $\St(d_x,d_y)$ at $\bdelta$, and returns an element of $\St(d_x,d_y)$. This corresponds to the standard QR-based retraction onto the Stiefel manifold; see, for instance, Example 4.1.3 in \cite{absil2009optimization}. The implementation used in our experiments is given in \cref{alg:projection}.

\begin{algorithm}[!h]
\caption{$\mathrm{proj}_{\St(d_x,d_y)}(\bdelta+\bP)$}
\label{alg:projection}
\begin{algorithmic}[1]
\State Compute a QR decomposition $\bdelta+\bP=\mathbf Q\mathbf R$, with $\mathbf R$ having strictly positive diagonal
\State \Return $\mathbf Q$
\end{algorithmic}
\end{algorithm}

\subsection{LLM Representation Comparison}
\label{app:sec:LLM}

The representation-comparison experiment loads the two text datasets, \texttt{ag\_news} and \texttt{amazon\_polarity}, from the Hugging Face Datasets package for Python \cite{lhoest2021datasets}. The former dataset includes no licensing information on Hugging Face, but is stated to be for non-comercial use only on the original dataset's website \cite{agnews2}. The latter dataset is listed under the apache-2.0 license.   

As noted in the text, we use \texttt{bert-base-uncased} as the teacher model and the $23$ distilled BERT student models from \cite{turc2020wellread} as provided in the Hugging Face transformers package for Python \cite{wolf-etal-2020-transformers}. All of these models are listed under the apache-2.0 license on Hugging Face. For each model and each dataset, we embed the same collection of $800$ text passages. Texts are tokenized using the corresponding model tokenizer and represented by attention-masked mean pooling of the last hidden layer. The resulting embeddings are centered before computing the pairwise distances.

For a student-teacher pair, let $X^S$ and $Y^T$ denote the centered embedding matrices, whose rows correspond to the same text passages embedded by the student and teacher models, respectively. We let $\mu^S$ and $\nu^T$ denote the empirical distributions over the rows of $X^S$ and $Y^T$. The empirical sliced IGW value $\aIGW(\mu^S,\nu^T)$ is computed using $m=3000$ slices via the Riemannian subgradient method described above. We choose this method over the standard subgradient method from the text, \cref{alg:sgm}, as it does not require a choice of regularization parameter $\beta$ and does not require a choice of stepsize. The algorithm uses the same implementation for the gradient and distance computations as described in \cref{app:sec:validation} with $m=500$ slices and a maximum number of iterations of $200$.   The initialization is the Gaussian alignment from \cref{ex:aIGWGaussian}, applied to the empirical covariance matrices $\bsigma_S$ and $\bsigma_T$.

We compare the empirical sliced distance with $\aIGW(\cN(0,\bsigma_S),\cN(0,\bsigma_T))$ and $\IGW(\cN(0,\bsigma_S),\cN(0,\bsigma_T))$, which admit the closed form expressions
\[
\aIGW(\cN(0,\bsigma_S),\cN(0,\bsigma_T))^2 = \frac{\big(\Tr( \bsigma_S) - \Tr( \bsigma_T)\big)^2+2\sum_{i=1}^{d_y}\big(\lambda_i(\bsigma_S)-\lambda_i(\bsigma_T)\big)^2}{d_y(d_y+2)},
\]
see \cref{ex:aIGWGaussian}, and 
\[
 \IGW(\mathcal N(0,\bsigma_S),\mathcal N(0,\bsigma_T))^2 = \sum_{i=1}^{d_y}\big(\lambda_i(\bsigma_S)-\lambda_i(\bsigma_T)\big)^2,  
\]
see Corollary 3.2 in \cite{dandapanthula2025optimal}. We recall that, by convention,  $\lambda_i(\bsigma_{\mu}) =0 $ for each $i>d_x$ above. We underscore that $\bsigma_S$ and $\bsigma_T$ are the empirical covariance matrices from the data and so work well when the datasets are approximately normal.  
We also compare with the  centered kernel alignment,
\begin{equation}
\label{eq:CKA}
    \mathrm{CKA}(X^S,Y^T)
    =
    \frac{\|(Y^T)^{\intercal}X^S\|_{\mathrm F}^2}
    {\|(X^S)^{\intercal}X^S\|_{\mathrm F}\|(Y^T)^{\intercal}Y^T\|_{\mathrm F}},
\end{equation}
and report $1-\mathrm{CKA}(X^S,Y^T)$ so that smaller values correspond to stronger agreement with the teacher representation. We report the runtime of each metric for each student-teacher pair and for each choice of dataset in \cref{fig:LLMruntime}.

\begin{figure}
    \centering
    \includegraphics[width=\linewidth]{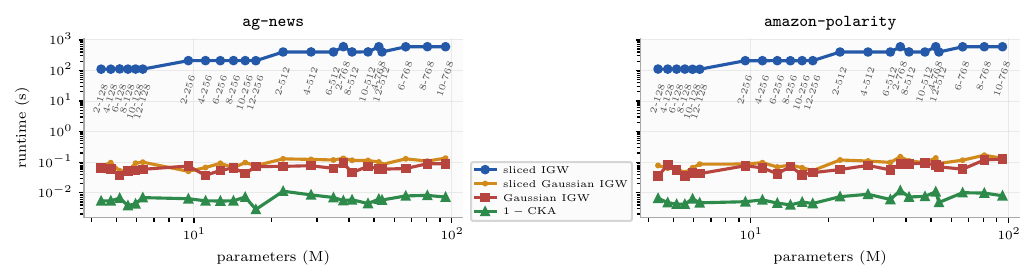}
    \vspace{-2em}
    \caption{Time required to compute the various metrics between the student  and teacher embeddings to generate the data presented in \cref{fig:LLM}. The total compute time required to generate this data is $15643$ seconds on a laptop with 64 GB of ram and an Intel i7-13620H processor; generating the embeddings took $1016$ seconds whereas computing the values of the metrics took $14627$ seconds in total.}
    \label{fig:LLMruntime}
\end{figure}

\subsection{Heterogeneous User Clustering}
\label{app:sec:clustering}

In addition to the \texttt{ag\_news} dataset considered previously, this experiment also uses the \texttt{imdb} and \texttt{arXiv-summarization} text datasets loaded using the  Hugging Face Datasets package for Python \cite{lhoest2021datasets}. The \texttt{arXiv-summarization}   dataset entry on Hugging Face lists no license details whereas the \texttt{imdb} dataset entry lists its license as ``other''. We note that the original repo for \texttt{arXiv-summarization} and the accompanying paper \cite{arxiv}, \url{https://github.com/armancohan/long-summarization}, downloads the datasets directly from the arXiv OpenAcess repository and that all metadata  from arXiv (including the abstracts in the dataset) have the CC0 1.0 license. As for the \texttt{imdb} dataset, the original paper \cite{imdb} states that ``In the interest of providing a benchmark for future work in this area, we release this dataset to the public''; we understand our use of the dataset to be in line with this declaration.  

Regarding the models used to generate the embeddings, we utilize \texttt{all-MiniLM-L6-v2} and \texttt{all-mpnet-base-v2} from the Hugging Face transformers Python package \cite{wolf-etal-2020-transformers}, which generate $384$ and $768$-dimensional embeddings respectively. These models are both listed under apache-2.0 licenses on Hugging Face. The embeddings are generated in the same manner as described in \cref{app:sec:LLM} following the homogeneous and heterogeneous settings described in the text. 

We underscore that the styles are ordered according to an index $0,1,2$ and, to form the users for the style at index $0$, say, the primary-style documents are assigned from a seeded random permutation without replacement. Then, a number, $k$, between $0$ and $10$ is selected uniformly at random which corresponds to the number of samples from the non-primary style $1$, and finally $10-k$ documents are selected from the non-primary style $2$.

With the simulated users in hand, we compute the pairwise distributions between the embedded datasets again using the same implementation of the Riemannian subgradient method described before, but with  
 $m=200$ slices and a maximum number of iterations of $200$ to speed up computation. This enables us to construct a pairwise distance matrix $\mathbf D\in \mathbb R^{24\times 24}$ to which we apply self-tuning spectral clustering as described in the main text to cluster the data. To measure the accuracy of the clustering, we report the adjusted Rand index \cite{hubert1985comparing} which is a standard chance-corrected agreement score between the predicted and ground-truth partitions. We also report purity which is the fraction of users assigned to the majority ground-truth label within their predicted cluster summed over predicted clusters. The MDS coordinates in \cref{fig:hetero_clustering} are used only for visualization and are computed from the sliced-IGW distance matrix in each setting. Clustering is performed from each method's own pairwise distance matrix, not from the MDS coordinates. Prediction labels are permutation-aligned to the ground-truth style labels for display.

 We perform the same procedure using the pairwise distances generated by the sliced IGW between Gaussians whose matrices are given by the empirical covariance matrices by complete analogy with \cref{app:sec:LLM}.

These experiments were performed on a  laptop with 64 GB of ram and an Intel i7-13620H
processor. We note that the time required to compute each distance is not as straightforward to plot as in the previous examples since the dimensions vary, leading to drastically different runtimes. For the three seeds considered, the pairwise sliced $\mathrm{IGW}$ distances took a total of $10453, 34423,$ and $34696$ seconds to compute, respectively, in the homogeneous setting and $84504,71983,$ and $62987$ seconds in the heterogeneous. The corresponding Gaussian IGW distances took a total of   
$6, 39,$ and $36$ seconds in the homogeneous case versus 
$41,
58,$ and $45$ seconds in the heterogeneous case. In addition to these runs, the embeddings took 1344 seconds to generate. In sum, this experiment required a total of $300615$ seconds of compute time. As in the validation of the sample complexity, we run 10 Python~instances in parallel to compute the pairwise distances in order to reduce the  real world time spent on this experiment.
\end{document}